\useunder{\uline}{\ul}{}
\def\eqref#1{equation~\ref{#1}}
\def\1{\bm{1}}
\def\ry{{\textnormal{y}}}
\def\gG{{\mathcal{G}}}
\def\gH{{\mathcal{H}}}
\def\gL{{\mathcal{L}}}
\def\gN{{\mathcal{N}}}
\newcommand{\E}{\mathbb{E}}
\newcommand{\KL}{D_{\mathrm{KL}}}
\newtheorem{theorem}{Theorem}
\newtheorem{assumption}{Assumption}
\newtheorem{lemma}{Lemma}
\newtheorem{definition}{Definition}
\newtheorem{remark}{Remark}
\DeclareMathOperator*{\argmax}{argmax}
\newcommand{\sbra}{\left(}
\newcommand{\sket}{\right)}
\newcommand{\reals}{\mathbb{R}}
\newcommand{\ind}[1]{\mathbbm{1}\left[#1\right]}
\newcommand{\hy}{\hat{y}}
\newcommand{\hL}{\widehat{\gL}}
\newcommand{\hLg}{\hL^{\gamma}}
\newcommand{\Lg}{\gL^{\gamma}}
\newcommand{\hLgh}{\hL^{\gamma/2}}
\newcommand{\Lgh}{\gL^{\gamma/2}}
\newcommand{\Lgq}{\gL^{\gamma/4}}
\newcommand{\risk}{\gL^0}
\newcommand{\hrisk}{\hL^0}
\newcommand{\vpara}[1]{\vspace{0.02in}\noindent\textbf{#1 }}
\title{Subgroup Generalization and Fairness of Graph Neural Networks}
\author{%
    Jiaqi Ma \thanks{School of Information, University of Michigan, Ann Arbor, Michigan, USA} \thanks{Equal contribution.}\\
    \texttt{jiaqima@umich.edu} \\
    \And
    Junwei Deng \footnotemark[1] \footnotemark[2]\\
    \texttt{junweid@umich.edu} \\
    \And
    Qiaozhu Mei\footnotemark[1] \thanks{Department of EECS, University of Michigan, Ann Arbor, Michigan, USA} \\
    \texttt{qmei@umich.edu} \\
}
\begin{document}

\maketitle

\begin{abstract}
Despite enormous successful applications of graph neural networks (GNNs), theoretical understanding of their generalization ability, especially for node-level tasks where data are not independent and identically-distributed (IID), has been sparse. The theoretical investigation of the generalization performance is beneficial for understanding fundamental issues (such as fairness) of GNN models and designing better learning methods. In this paper, we present a novel PAC-Bayesian analysis for GNNs under a non-IID semi-supervised learning setup. Moreover, we analyze the generalization performances on different subgroups of unlabeled nodes, which allows us to further study an accuracy-(dis)parity-style (un)fairness of GNNs from a theoretical perspective. Under reasonable assumptions, we demonstrate that the distance between a test subgroup and the training set can be a key factor affecting the GNN performance on that subgroup, which calls special attention to the training node selection for fair learning. Experiments across multiple GNN models and datasets support our theoretical results\footnote{Code available at \url{https://github.com/TheaperDeng/GNN-Generalization-Fairness}.}.
\end{abstract}

\section{Introduction}

Graph Neural Networks (GNNs)~\citep{gori2005new,scarselli2008graph,kipf2016semi} are a family of machine learning models that can be used to model non-Euclidean data as well as inter-related samples in a flexible way. In recent years, there have been enormous successful applications of GNNs in various areas, such as drug discovery~\citep{jin2018junction}, computer vision~\citep{monti2017geometric}, transportation forecasting~\citep{yu2018spatio}, recommender systems~\citep{ying2018graph}, etc. Depending on the type of prediction target, the application tasks can be roughly categorized into node-level, edge-level, subgraph-level, and graph-level tasks~\citep{wu2020comprehensive}. 

In contrast to the marked empirical success, theoretical understanding of the generalization ability of GNNs has been rather limited. Among the existing literature, some studies~\citep{du2019graph,garg2020generalization,liao2021pac} focus on the analysis of graph-level tasks where each sample is an entire graph and the samples of graphs are IID. A very limited number of studies~\citep{scarselli2018vapnik,verma2019stability} explore GNN generalization for node-level tasks but they assume the nodes (and their associated neighborhoods) are IID samples, which does not align with the commonly seen graph-based semi-supervised learning setups. \citet{baranwal2021graph} investigate GNN generalization without IID assumptions but under a specific data generating mechanism. 

In this work, our first contribution is to provide a novel PAC-Bayesian analysis for the generalization ability of GNNs on node-level tasks with non-IID assumptions. In particular, we assume the node features are fixed and the node labels are independently sampled from distributions conditioned on the node features. We also assume the training set and the test set can be chosen as arbitrary subsets of nodes on the graph. We first prove two general PAC-Bayesian generalization bounds (Theorem~\ref{thm:PAC-Bayes} and Theorem~\ref{thm:PAC-Bayes-deterministic}) under this non-IID setup. Subsequently, we derive a generalization bound for GNN (Theorem~\ref{thm:main}) in terms of characteristics of the GNN models and the node features. 

Notably, the generalization bound for GNN is influenced by the distance between the test nodes and the training nodes in terms of their aggregated node features. This suggests that, given a fixed training set, test nodes that are ``far away'' from all the training nodes may suffer from larger generalization errors. Based on this analysis, our second contribution is the discovering of a type of unfairness that arises from theoretically predictable accuracy disparity across some subgroups of test nodes. We further conduct a empirical study that investigates the prediction accuracy of four popular GNN models on different subgroups of test nodes. The results on multiple benchmark datasets indicate that there is indeed a significant disparity in test accuracy among these subgroups. 

We summarize the contributions of this work as follows:
\begin{enumerate}[label=(\arabic*)]
    \item We establish a novel PAC-Bayesian analysis for graph-based semi-supervised learning with non-IID assumptions.
    \item Under this setup, we derive a generalization bound for GNNs that can be applied to an arbitrary subgroup of test nodes.
    \item As an implication of the generalization bound, we predict that there would be an unfairness of GNN predictions that arises from accuracy disparity across subgroups of test nodes.
    \item We empirically verify the existence of accuracy disparity of popular GNN models on multiple benchmark datasets, as predicted by our theoretical analysis.
\end{enumerate}

\section{Related Work}

\subsection{Generalization of Graph Neural Networks}
The majority of existing literature that aims to develop theoretical understandings of GNNs have focused on the expressive power of GNNs (see \citet{sato2020survey} for a survey along this line), while the number of studies trying to understand the generalizability of GNNs is rather limited. Among them, some~\citep{du2019graph,garg2020generalization,liao2021pac} focus on graph-level tasks, the analyses of which cannot be easily applied to node-level tasks. As far as we know, \citet{scarselli2018vapnik,verma2019stability}, and \citet{baranwal2021graph} are the only existing studies investigating the generalization of GNNs on node-level tasks, even though node-level tasks are more common in reality. \citet{scarselli2018vapnik} present an upper bound of the VC-dimension of GNNs; \citet{verma2019stability} derive a stability-based generalization bound for a single-layer GCN~\citep{kipf2016semi} model. Yet, both \citet{scarselli2018vapnik} and \citet{verma2019stability} (implicitly) assume that the training nodes are IID samples from a certain distribution, which does not align with the common practice of node-level semi-supervised learning. \citet{baranwal2021graph} investigate the generalization of graph convolution under a specific data generating mechanism (i.e., the contextual stochastic block model~\citep{deshpande2018contextual}). Our work presents the first generalization analysis of GNNs for non-IID node-level tasks without strong assumptions on the data generating mechanism.

\subsection{Fairness of Machine Learning on Graphs}
The fairness issues of machine learning on graphs start to receive research attention recently. Following conventional machine learning fairness literature, the majority of previous work along this line~\citep{Chirag2102, compembedding, Maarten2103, saynodiscrimination, Charlotte2010, fairwalk, spinelli2021biased, Ziqian2104} concerns about fairness with respect to a given sensitive attribute, such as gender or race, which defines protected groups. In practice, the fairness issues of learning on graphs are much more complicated due to the asymmetric nature of the graph-structured data. However, only a few studies~\citep{khajehnejad2021crosswalk} investigate the unfairness caused by the graph structure without knowing a sensitive feature. Moreover, in a node-level semi-supervised learning task, the non-IID sampling of training nodes brings additional uncertainty to the fairness of the learned models. This work is the first to present a learning theoretic analysis under this setup, which in turn suggests how the graph structure and the selection of training nodes may influence the fairness of machine learning on graphs.

\subsection{PAC-Bayesian Analysis}
\label{sec:rel-pac-bayes}
PAC-Bayesian analysis~\citep{mcallester2003simplified} has become one of the most powerful theoretical framework to analyze the generalization ability of machine learning models. We will briefly introduce the background in Section~\ref{sec:prelim-pac-bayes}, and refer the readers to a recent tutorial~\citep{guedj2019primer} for a systematic overview of PAC-Bayesian analysis. We note that \citet{liao2021pac} recently present a PAC-Bayesian generalization bound for GNNs on IID graph-level tasks. Both \citet{liao2021pac} and this work utilize results from \citet{neyshabur2018pac}, a PAC-Bayesian analysis for ReLU-activated neural networks, in part of our proofs. Compared to \citet{neyshabur2018pac}, the key contribution of \citet{liao2021pac} is the derivation of perturbation bounds of two types of GNN architectures; while the key contribution of this work is the novel analysis under the setup of non-IID node-level tasks. There is also an existing work of PAC-Bayesian analysis for transductive semi-supervised learning~\citep{begin2014pac}. But it is different from our problem setup and, in particular, it cannot be used to analyze the generalization on subgroups.

\section{Preliminaries}

In this section, we first formulate the problem of node-level semi-supervised learning. We also provide a brief introduction of the PAC-Bayesian framework. 

\subsection{The Problem Formulation and Notations}
\label{sec:problem-setup}
\vpara{Semi-supervised node classification.} Let $G=(V, E) \in \gG_N$ be an undirected graph, with $V=\{1,2,\ldots,N\}$ being the set of $N$ nodes and $E\subseteq V\times V$ being the set of edges. And $\gG_N$ is the space of all undirected graphs with $N$ nodes. The nodes are associated with node features $X\in \reals^{N\times D}$ and node labels $y\in \{1, 2, \ldots, K\}^N$. 

In this work, we focus on the transductive node classification setting~\citep{yang2016revisiting}, where the node features $X$ and the graph $G$ are observed prior to learning, and every quantity of interest in the analysis will be conditioned on $X$ and $G$. Without loss of generality, we treat $X$ and $G$ as fixed throughout our analysis, and the randomness comes from the labels $y$. In particular, we assume that for each node $i \in V$, its label $y_i$ is generated from an unknown conditional distribution $\Pr(y_i\mid Z_i)$, where $Z = g(X, G)$ and $g:\reals^{N\times D} \times \gG_N \rightarrow \reals^{N\times D'}$ is an aggregation function that aggregates the features over (multi-hop) local neighborhoods\footnote{A simple example is $g_i(X, G) = \frac{1}{|\gN(i)| + 1}\sbra X_i + \sum_{j\in \gN(i)} X_j \sket$, where $g_i(X, G)$ is the $i$-th row of $g(X, G)$ and $\gN(i):= \{j\mid (i, j)\in E\}$ is the set of 1-hop neighbors of node $i$.}. 
We also assume that the node labels are generated independently conditional on their respective aggregated features $Z_i$'s.

Given a small set of the labeled nodes, $V_0\subseteq V$, the task of node-level semi-supervised learning is to learn a classifier $h:\reals^{N\times D} \times \gG_N \rightarrow \reals^{N\times K}$ from a function family $\gH$ and perform it on the remaining unlabeled nodes. Given a classifier $h$, the classification for a node $i$ is obtained by 
\[\hy_i = \argmax_{k\in \{1,\ldots,K\}} h_i(X, G)[k],\]
where $h_i(X, G)$ is the $i$-th row of $h(X, G)$ and $h_i(X, G)[k]$ refers to the $k$-th element of $h_i(X, G)$.

\vpara{Subgroups.} In Section~\ref{sec:analysis}, we will present an analysis of the GNN generalization performance on any subgroup of the set of unlabeled nodes, $V\setminus V_0$. Note that the analysis on any subgroup is a stronger result than that on the entire unlabeled set, as any set is a subset of itself. Later we will show that the analysis on subgroups (rather than on the entire set) further allows us to investigate the accuracy disparity across subgroups. We denote a collection of subgroups of interest as $V_1, V_2, \ldots, V_M\subseteq V\setminus V_0$. In practice, a subgroup can be defined based on an attribute of the nodes (e.g., a gender group), certain graph-based properties, or an arbitrary partition of the nodes. We also define the size of each subgroup as $N_m := |V_m|, m=0,\ldots, M$. 

\vpara{Margin loss on each subgroup.} Now we can define the \emph{empirical} and \emph{expected margin loss} of any classifier $h\in \gH$ on each subgroup $V_m, m=0,1,\ldots,M$. Given a sample of observed node labels $y_i$'s, the empirical margin loss of $h$ on $V_m$ for a margin $\gamma \ge 0$ is defined as
\begin{equation}
	\label{eq:emp-margin-loss}
	\hLg_m(h) := \frac{1}{N_m} \sum_{i\in V_m}\ind{h_i(X, G)[y_i] \le \gamma + \max_{k\neq y_i}h_i(X, G)[k]},
\end{equation}
where $\ind{\cdot}$ is the indicator function. The expected margin loss is the expectation of Eq.~(\ref{eq:emp-margin-loss}), i.e.,
\begin{equation}
	\label{eq:margin-loss}
	\Lg_m(h) := \E_{y_i\sim \Pr(\ry\mid Z_i), i\in V_m} \hLg_m(h).
\end{equation}
To simplify the notation, we define $y^m := \{y_i\}_{i\in V_m}, \forall m=0,\ldots,M$, so that Eq.~(\ref{eq:margin-loss}) can be written as $\Lg_m(h) = \E_{y^m} \hLg_m(h)$. We note that the classification \emph{risk} and \emph{empirical risk} of $h$ on $V_m$ are respectively equal to $\risk_m(h)$ and $\hrisk_m(h)$.

\subsection{The PAC-Bayesian Framework}
\label{sec:prelim-pac-bayes}
The PAC-Bayesian framework~\citep{mcallester2003simplified} is an approach to analyze the generalization ability of a stochastic predictor drawn from a distribution $Q$ over the predictor family $\gH$ that is learned from the training data. For any stochastic classifier distribution $Q$ and $m=0,\ldots, M$, slightly overloading the notation, we denote the empirical margin loss of $Q$ on $V_m$ as $\hLg_m(Q)$, and the corresponding expected margin loss as $\Lg_m(Q)$. And they are given by
\[\hLg_m(Q) := \E_{h\sim Q} \hLg_m(h), \quad \Lg_m(Q) := \E_{h\sim Q} \Lg_m(h).\]
In general, a PAC-Bayesian analysis aims to bound the generalization gap between $\Lg_m(Q)$ and $\hLg_m(Q)$. The analysis is usually done by first proving that, for any ``prior'' distribution\footnote{The distribution is called ``prior'' in the sense that it doesn't depend on training data. ``Prior'' and ``posterior'' in PAC-Bayesian are different with those in conventional Bayesian statistics. See \citet{guedj2019primer} for details.} $P$ over $\gH$ that is independent of the training data, the generalization gap can be controlled by the discrepancy between $P$ and $Q$; the analysis is then followed by careful choices of $P$ to get concrete upper bounds of the generalization gap. While the PAC-Bayesian framework is built on top of stochastic predictors, there exist standard techniques~\citep{langford2003pac} that can be used to derive generalization bounds for deterministic predictors from PAC-Bayesian bounds.

Finally, we denote the \emph{Kullback-Leibler (KL) divergence} as $\KL(Q\|P) := \int\ln\frac{dQ}{dP}dQ$, which will be used in the following analysis. 

\section{The Generalization Bound and Its Implications for Fairness}
\label{sec:analysis}

As we mentioned in Section~\ref{sec:rel-pac-bayes}, existing PAC-Bayesian analyses cannot be directly applied to the non-IID semi-supervised learning setup where we care about the generalization (and its disparity) across different subgroups of the unlabeled samples. In this section, we first present general PAC-Bayesian theorems for subgroup generalization under our problem setup; then we derive a generalization bound for GNNs and discuss fairness implications of the bound.

\subsection{General PAC-Bayesian Theorems for Subgroup Generalization}
\label{sec:general-theorems}

\vpara{Stochastic classifier bound.} We first present the general PAC-Bayesian theorem (Theorem~\ref{thm:PAC-Bayes}) for subgroup generalization of stochastic classifiers. The generalization bound depends on a notion of \emph{expected loss discrepancy} between two subgroups as defined below.

\begin{definition} [Expected Loss Discrepancy]
Given a distribution $P$ over $\gH$, for any $\lambda > 0$ and $\gamma \ge 0$, for any two subgroups $V_m$ and $V_{m'}$ ($0 \le m, m' \le M$), define the expected loss discrepancy between $V_m$ and $V_{m'}$ with respect to $(P, \gamma, \lambda)$ as
 \[
D^{\gamma}_{m, m'}(P;\lambda) := \ln \E_{h\sim P} e^{\lambda \sbra \Lgh_m(h) - \Lg_{m'}(h)\sket},
\]
where $\Lgh_m(h)$ and $\Lg_{m'}(h)$ follow the definition of Eq.~(\ref{eq:margin-loss}).
\end{definition}

Intuitively, $D^{\gamma}_{m, m'}(P;\lambda)$ captures the difference of the expected loss between $V_m$ and $V_{m'}$ in an average sense (over $P$). Note that $D^{\gamma}_{m, m'}(P;\lambda)$ is asymmetric in terms of $V_m$ and $V_{m'}$, and can be negative if the loss on $V_m$ is mostly smaller than that on $V_{m'}$. 

For stochastic classifiers, we have the following Theorem~\ref{thm:PAC-Bayes}. Proof can be found in Appendix~\ref{appendix:proof-PAC-Bayes}.
\begin{theorem}[Subgroup Generalization of Stochastic Classifiers]
	\label{thm:PAC-Bayes}
	For any $0< m \le M$, for any $\lambda > 0$ and $\gamma \ge 0$, for any ``prior'' distribution $P$ on $\gH$ that is independent of the training data on $V_0$, with probability at least $1-\delta$ over the sample of $y^0$, for any $Q$ on $\gH$, we have\footnote{Theorem~\ref{thm:PAC-Bayes} also holds when we substitute $\Lgh_m(h)$ and $\Lgh_m(Q)$ as $\Lg_m(h)$ and $\Lg_m(Q)$ respectively. But we state Theorem~\ref{thm:PAC-Bayes} in this form to ease the presentation of the later analysis.}
	\begin{equation}
		\label{eq:general}
		\Lgh_m(Q) \le \hLg_0(Q) + \frac{1}{\lambda}\sbra \KL(Q\|P) + \ln\frac{1}{\delta} + \frac{\lambda^2}{4N_0} + D^{\gamma}_{m, 0}(P;\lambda) \sket.
	\end{equation}
\end{theorem}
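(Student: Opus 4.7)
The plan is to follow the classical PAC-Bayes template (change of measure $+$ Markov), adapted to our non-IID setting where the empirical loss is measured on $V_0$ but we want to control the expected loss on a different subgroup $V_m$. The extra complication compared to the IID case is that $\hLg_0(Q)$ and $\Lgh_m(Q)$ are indexed by different node sets, so a standard Hoeffding-style concentration of $\hLg_0 - \Lg_0$ no longer directly produces $\Lgh_m - \hLg_0$; the gap between $\Lg_0$ and $\Lgh_m$ is precisely what $D^{\gamma}_{m,0}(P;\lambda)$ is designed to absorb.

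First I would invoke the Donsker--Varadhan change-of-measure inequality with the functional $\phi(h)=\lambda\bigl(\Lgh_m(h)-\hLg_0(h)\bigr)$, which gives, for every posterior $Q$ and any prior $P$ on $\gH$,
\begin{equation*}
\lambda\bigl(\Lgh_m(Q)-\hLg_0(Q)\bigr)\le \KL(Q\|P)+\ln \E_{h\sim P}e^{\lambda(\Lgh_m(h)-\hLg_0(h))}.
\end{equation*}
Since $P$ is chosen independently of the training labels $y^0$, I can exchange expectations via Fubini and factor out the $y^0$-independent term $e^{\lambda\Lgh_m(h)}$:
\begin{equation*}
\E_{y^0}\E_{h\sim P} e^{\lambda(\Lgh_m(h)-\hLg_0(h))} = \E_{h\sim P}\Bigl[e^{\lambda\Lgh_m(h)}\,\E_{y^0}e^{-\lambda\hLg_0(h)}\Bigr].
\end{equation*}

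Next I would bound the inner expectation using the conditional independence of the labels $\{y_i\}_{i\in V_0}$ given $\{Z_i\}$ (this is the crucial structural assumption of the setup). Writing $\hLg_0(h) = \frac{1}{N_0}\sum_{i\in V_0}\xi_i(h)$ with $\xi_i(h)\in[0,1]$, independence yields
\begin{equation*}
\E_{y^0}e^{-\lambda\hLg_0(h)}=\prod_{i\in V_0}\E_{y_i}e^{-\lambda\xi_i(h)/N_0}\le e^{-\lambda\Lg_0(h)+\lambda^{2}/(4N_0)},
\end{equation*}
where the last step applies Hoeffding's lemma to each $[0,1]$-bounded summand. Plugging this back in and recognising the definition of $D^{\gamma}_{m,0}(P;\lambda)$ gives
\begin{equation*}
\E_{y^0}\E_{h\sim P} e^{\lambda(\Lgh_m(h)-\hLg_0(h))}\le e^{D^{\gamma}_{m,0}(P;\lambda)+\lambda^{2}/(4N_0)}.
\end{equation*}

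Finally, a single application of Markov's inequality to the nonnegative random variable $\E_{h\sim P} e^{\lambda(\Lgh_m(h)-\hLg_0(h))}$ turns the expected bound into a statement that holds with probability at least $1-\delta$ over $y^0$, contributing the $\ln(1/\delta)$ term. Taking logarithms, substituting back into the change-of-measure inequality, and dividing by $\lambda$ yields exactly Eq.~(\ref{eq:general}).

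The main obstacle is conceptual rather than computational: one must set up the analysis so that the empirical loss on $V_0$ can be compared with the expected loss on $V_m$ without either set being IID samples from a common distribution. The trick is that after the change of measure, only terms evaluated under the data-independent prior $P$ remain on the right-hand side, which lets the discrepancy between subgroups be expressed as a purely $P$-dependent quantity $D^{\gamma}_{m,0}(P;\lambda)$; the concentration over $y^0$ then only needs to handle $\hLg_0(h)$ versus $\Lg_0(h)$, which is again IID-like because the labels in $V_0$ are conditionally independent.
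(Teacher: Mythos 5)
Your proposal is correct and follows essentially the same route as the paper's proof: Donsker--Varadhan change of measure applied to $\lambda(\Lgh_m(h)-\hLg_0(h))$, Markov's inequality on the $Q$-independent random variable $\E_{h\sim P}e^{\lambda(\Lgh_m(h)-\hLg_0(h))}$, Fubini to isolate the $y^0$-dependence, a sub-Gaussian moment bound for $\hLg_0(h)$ around $\Lg_0(h)$ using conditional independence of the labels, and absorption of the remaining $\Lgh_m(h)-\Lg_0(h)$ gap into $D^{\gamma}_{m,0}(P;\lambda)$. The only cosmetic difference is that you apply Hoeffding's lemma termwise to the product of conditional MGFs (which in fact yields the slightly sharper constant $\lambda^2/8N_0$), whereas the paper derives the same $e^{\lambda^2/4N_0}$ factor by first establishing a sub-Gaussian tail via Hoeffding's inequality and then converting it to an MGF bound.
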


Theorem~\ref{thm:PAC-Bayes} can be viewed as an adaptation of a result by \citet{alquier2016properties} from the IID supervised setting to our non-IID semi-supervised setting. The terms $\KL(Q\|P), \ln\frac{2}{\delta}$, and $\frac{\lambda^2}{4N_0}$ are commonly seen in PAC-Bayesian analysis for IID supervised setting. In particular, when setting $\lambda=\Theta(\sqrt{N_0})$, $\frac{1}{\lambda}\sbra \ln\frac{2}{\delta} + \frac{\lambda^2}{4N_0} \sket$ vanishes as the training size $N_0$ grows. The divergence between $Q$ and $P$, $\KL(Q\|P)$, is usually considered as a measurement of the model complexity~\citep{guedj2019primer}. And there will be a trade-off between the training loss, $\hLg_0(Q)$, and the complexity (how far can the learned ``posterior'' $Q$ go from the ``prior'' $P$).

Uniquely for the non-IID semi-supervised setting, there is an extra term $D^{\gamma}_{m, 0}(P;\lambda)$, which is the expected loss discrepancy between the target test subgroup $V_m$ and the training set $V_0$. Note that this quantity is independent of the training labels $y^0$. Not surprisingly, it is difficult to give generalization guarantees if the expected loss on $V_m$ is much larger than that on $V_0$ for any stochastic classifier $P$ independent of training data. We have to make some assumptions about the relationship between $V_m$ and $V_0$ to obtain a meaningful bound on $\frac{1}{\lambda}D^{\gamma}_{m, 0}(P;\lambda)$, which we will discuss in details in Section~\ref{sec:bound-GNN}.

\vpara{Deterministic classifier bound.} Utilizing standard techniques in PAC-Bayesian analysis~\citep{langford2003pac,mcallester2003simplified,neyshabur2018pac}, we can convert the bound for stochastic classifiers in Theorem~\ref{thm:PAC-Bayes} to a bound for deterministic classifiers as stated in Theorem~\ref{thm:PAC-Bayes-deterministic} below (see Appendix~\ref{appendix:proof-PAC-Bayes-deterministic} for the proof).

\begin{theorem} [Subgroup Generalization of Deterministic Classifiers]
	\label{thm:PAC-Bayes-deterministic}
	Let $\tilde{h}$ be any classifier in $\gH$. For any $0< m \le M$, for any $\lambda > 0$ and $\gamma \ge 0$, for any ``prior'' distribution $P$ on $\gH$ that is independent of the training data on $V_0$, with probability at least $1-\delta$ over the sample of $y^0$, for any $Q$ on $\gH$ such that $\Pr_{h\sim Q}\sbra \max_{i\in V_0\cup V_m}\|h_i(X, G) - \tilde{h}_i(X, G)\|_{\infty} < \frac{\gamma}{8}\sket > \frac{1}{2}$, we have
	\begin{equation}
		\label{eq:general-deterministic}
		\risk_m(\tilde{h}) \le \hLg_0(\tilde{h}) + \frac{1}{\lambda}\sbra 2(\KL(Q\|P)+1) + \ln\frac{1}{\delta} + \frac{\lambda^2}{4N_0} + D^{\gamma/2}_{m, 0}(P;\lambda) \sket.
	\end{equation}
\end{theorem}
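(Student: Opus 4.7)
The plan is to derandomize Theorem~\ref{thm:PAC-Bayes} by applying it to a carefully chosen posterior: $Q$ restricted to the ``closeness'' event
\[S := \left\{ h \in \gH \,:\, \max_{i\in V_0\cup V_m} \|h_i(X,G) - \tilde{h}_i(X,G)\|_\infty < \gamma/8 \right\}.\]
Let $Q' := Q(\,\cdot\mid S\,)$ denote the conditional of $Q$ on $S$; the hypothesis $\Pr_{h\sim Q}(S) > 1/2$ guarantees that $Q'$ is well defined. I will apply Theorem~\ref{thm:PAC-Bayes} with prior $P$, posterior $Q'$, and margin $\gamma/2$ in place of $\gamma$, and then convert the two stochastic margin losses appearing in that bound back into deterministic losses of $\tilde{h}$.

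The enabling observation is a standard margin-preservation fact: for any $h\in S$ and any $i\in V_0\cup V_m$, an $\ell_\infty$ perturbation of size $<\gamma/8$ can shift each of the two scores defining the classification margin by at most $\gamma/8$, so the margin of $h$ at $i$ and the margin of $\tilde{h}$ at $i$ differ by at most $\gamma/4$. Two pointwise comparisons follow. On the test side, if $\tilde{h}$ misclassifies $i\in V_m$ (margin $\le 0$) then every $h\in S$ has margin $<\gamma/4$ at $i$; averaging over $h\sim Q'$ and then over $y^m$ yields $\risk_m(\tilde{h}) \le \Lgq_m(Q')$. On the training side, if $h\in S$ suffers $\gamma/2$-margin loss at $i\in V_0$ then $\tilde{h}$ has margin $< 3\gamma/4 \le \gamma$ there, hence $\hLgh_0(Q') \le \hLg_0(\tilde{h})$ for every realization of $y^0$.

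Combining these two comparisons with Theorem~\ref{thm:PAC-Bayes} applied to $(P, Q')$ at margin $\gamma/2$ gives, with probability at least $1-\delta$ over $y^0$,
\[\risk_m(\tilde{h}) \,\le\, \hLg_0(\tilde{h}) + \frac{1}{\lambda}\sbra \KL(Q'\|P) + \ln\tfrac{1}{\delta} + \tfrac{\lambda^2}{4N_0} + D^{\gamma/2}_{m,0}(P;\lambda) \sket.\]
It then remains to relate $\KL(Q'\|P)$ to $\KL(Q\|P)$. Since $Q'$ is $Q$ restricted to $S$ and renormalized, a direct computation gives
\[\KL(Q'\|P) \,=\, -\ln \Pr_Q(S) \,+\, \frac{1}{\Pr_Q(S)} \int_S \ln \frac{dQ}{dP}\,dQ.\]
Using $\Pr_Q(S) > 1/2$ together with the standard estimate $\int_{\{dQ/dP<1\}} \ln(dP/dQ)\,dQ \le 1/e$ (which follows from $-t\ln t \le 1/e$ on $t\in(0,1)$) then yields $\KL(Q'\|P) \le 2(\KL(Q\|P) + 1)$, matching the coefficient in the stated bound.

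I expect this last KL estimate to be the main technical subtlety: the integrand $\ln(dQ/dP)$ is not sign-definite on $S$, so one must separate its positive and negative parts to control the truncated integral, and it is precisely this bookkeeping that produces the factor $2$ and the additive $+1$ in $2(\KL(Q\|P)+1)$. The margin-preservation step and the substitution into Theorem~\ref{thm:PAC-Bayes} are otherwise routine.
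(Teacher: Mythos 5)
Your proposal is correct and follows essentially the same route as the paper's proof: restrict $Q$ to the closeness event to form $Q'$, use the margin-preservation comparisons $\risk_m(\tilde{h}) \le \Lgq_m(h)$ and $\hLgh_0(h) \le \hLg_0(\tilde{h})$ for $h$ in that event, invoke Theorem~\ref{thm:PAC-Bayes} at margin $\gamma/2$ with posterior $Q'$, and finish with $\KL(Q'\|P) \le 2(\KL(Q\|P)+1)$. The only (cosmetic) difference is in that last step: the paper uses the decomposition $\KL(Q\|P) = Z_{Q'}\KL(Q'\|P) + (1-Z_{Q'})\KL(Q'^c\|P) - H(Z_{Q'})$ with $H(Z_{Q'})\le 1$ and $\KL(Q'^c\|P)\ge 0$, whereas you bound the truncated integral directly via $-t\ln t \le 1/e$; both arguments are valid and yield the same constant.
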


Theorem~\ref{thm:PAC-Bayes} and ~\ref{thm:PAC-Bayes-deterministic} are not specific to GNNs and hold for any (respectively stochastic and deterministic) classifier under the semi-supervised setup. In Section~\ref{sec:bound-GNN}, we will apply Theorem~\ref{thm:PAC-Bayes-deterministic} to obtain a subgroup generalization bound that explicitly depends on the characteristics of GNNs and the data.

\subsection{Subgroup Generalization Bound for Graph Neural Networks}
\label{sec:bound-GNN}

\vpara{The GNN model.} We consider GNNs where the node feature aggregation step and the prediction step are separate. In particular, we assume the GNN classifier takes the form of $h_i(X, G) = f(g_i(X, G); W_1,W_2,\ldots,W_L)$, where $g$ is an aggregation function as we described in Section~\ref{sec:problem-setup} and $f$ is a ReLU-activated $L$-layer Multi-Layer Perceptron (MLP) with $W_1,\ldots,W_L$ as parameters for each layer\footnote{SGC~\citep{wu2019simplifying} and APPNP~\citep{klicpera2019predict} are special cases of GNNs in this form.}. Denote the largest width of all the hidden layers as $b$. 

\begin{remark}
There is a technical restriction on the possible choice of the aggregation function $g$. For the following derivation of the generalization bound~(\ref{eq:main}) to be valid, we need the condition that the node labels $y_i$'s are independent conditional on their aggregated features $g_i(X, G)$'s, as introduced in the problem formulation in Section~\ref{sec:problem-setup}. However, we also note that this condition tends to hold when the aggregated features $g_i(X, G)$'s contain rich information about the labels.
\end{remark}

\vpara{Upper-bounding $D^{\gamma}_{m, 0}(P;\lambda)$.} To derive the generalization guarantee, we need to upper-bound the expected loss discrepancy $D^{\gamma}_{m, 0}(P;\lambda)$. It turns out that we have to make some assumptions on the data in order to get a meaningful upper bound.

So far we have not had any restrictions on the conditional label distributions $\Pr(y_i=k\mid g_i(X, G))$. If the label distributions on $V\setminus V_0$ can be arbitrarily different from those on $V_0$, the generalization can be arbitrarily poor. We therefore assume that the label distributions conditional on aggregated features are smooth (Assumption~\ref{assump:label-smooth}).

\begin{assumption} [Smoothness of Data Distribution]
	\label{assump:label-smooth}
	Assume there exist $c$-Lipschitz continuous functions $\eta_1, \eta_2, \ldots, \eta_K: \reals^{D'}\rightarrow [0, 1]$, such that, for any node $i\in V$, 
	\[\Pr(y_i=k\mid g_i(X, G)) = \eta_k(g_i(X, G)), \forall k=1,\ldots, K.\]
\end{assumption}

We also need to characterize the relationship between a target test subgroup $V_m$ and the training set $V_0$. For this purpose, we define the distance from $V_m$ to $V_0$ and the concept of \emph{near set} below.
\begin{definition} [Distance To Training Set and Near Set]
	For each $0<m\le M$, define the distance from the subgroup $V_m$ to the training set $V_0$ as 
	\[\epsilon_m := \max_{j\in V_m}\min_{i\in V_0} \|g_i(X, G) - g_j(X, G)\|_2.\]
	Further, for each $i\in V_0$, define the near set of $i$ with respect to $V_m$ as
	\[V_m^{(i)}:=\{j\in V_m\mid \|g_i(X, G) - g_j(X, G)\|_2\le \epsilon_m\}.\] 
	Clearly, 
	\[V_m = \cup_{i\in V_0} V_m^{(i)}.\]
\end{definition}

Then, with the Assumption~\ref{assump:equal-near-set}~and~\ref{assump:loss-diff} below, we can bound the expected loss discrepancy $D^{\gamma}_{m, 0}(P;\lambda)$ with the following Lemma~\ref{lemma:D-bound} (see the proof in Appendix~\ref{appendix:proof-D-bound}).
\begin{assumption} [Equal-Sized and Disjoint Near Sets]
	\label{assump:equal-near-set}
	For any $0<m\le M$, assume the near sets of each $i\in V_0$ with respect to $V_m$ are disjoint and have the same size $s_m\in \mathbb{N}^{+}$.
\end{assumption}

\begin{assumption} [Concentrated Expected Loss Difference]
    \label{assump:loss-diff}
    Let $P$ be a distribution on $\gH$, defined by sampling the vectorized MLP parameters from $\gN(0, \sigma^2 I)$ for some $\sigma^2 \le  \frac{\sbra \gamma/8\epsilon_m\sket^{2/L}}{2b(\lambda N_0^{-\alpha} + \ln 2bL)}$. For any $L$-layer GNN classifier $h\in \gH$ with model parameters $W_1^h,\ldots, W_L^h$, define $T_h:= \max_{l=1,\ldots,L}\|W_l^h\|_2$. Assume that there exists some $0 < \alpha < \frac{1}{4}$ satisfying
    \[\Pr_{h\sim P}\sbra \gL^{\gamma/4}_m(h) - \gL^{\gamma/2}_{0}(h) > N_0^{-\alpha} + cK\epsilon_m \mid T_h^L\epsilon_m > \frac{\gamma}{8} \sket \le e^{-N_0^{2\alpha}}.\]
\end{assumption}

\begin{lemma} [Bound for $D^{\gamma}_{m, 0}(P;\lambda)$]
	\label{lemma:D-bound}
	Under Assumptions~\ref{assump:label-smooth},~\ref{assump:equal-near-set}~and~\ref{assump:loss-diff}, for any $0< m \le M$, any $0 < \lambda \le N_0^{2\alpha}$ and $\gamma \ge 0$, assume the ``prior'' $P$ on $\gH$ is defined by sampling the vectorized MLP parameters from $\gN(0, \sigma^2 I)$ for some $\sigma^2 \le  \frac{\sbra \gamma/8\epsilon_m\sket^{2/L}}{2b(\lambda N_0^{-\alpha} + \ln 2bL)}$. We have
	\begin{equation}
	    \label{eq:D-bound}
	    D^{\gamma/2}_{m, 0}(P;\lambda) \le \ln 3 + \lambda cK\epsilon_m.
	\end{equation}
\end{lemma}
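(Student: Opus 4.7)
My plan is to bound the exponential moment $\E_{h\sim P} e^{\lambda(\gL^{\gamma/4}_m(h) - \gL^{\gamma/2}_0(h))}$ by $3 e^{\lambda cK\epsilon_m}$, so that taking logarithms yields the claim. The integral over the prior $P$ is split along the ``small-norm'' event $A := \{h \in \gH : T_h^L \epsilon_m \le \gamma/8\}$ and its complement $C$. On $A$ I will use a deterministic Lipschitz-style comparison between $\gL^{\gamma/4}_m(h)$ and $\gL^{\gamma/2}_0(h)$; on $C$ I will invoke Assumption~\ref{assump:loss-diff}; and the calibration of $\sigma^2$ in the statement will be used precisely to control $P(C)$.

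On $A$, the Lipschitz constant of the $L$-layer ReLU MLP in its input is at most $\prod_l \|W_l\|_2 \le T_h^L$, so for every $i \in V_0$ and $j \in V_m^{(i)}$, $\|h_j(X,G) - h_i(X,G)\|_\infty \le T_h^L \epsilon_m \le \gamma/8$. A pointwise comparison of margin indicators (the $\gamma/4$-to-$\gamma/2$ slack absorbs two $\gamma/8$ perturbations) then gives $\ind{h_j[k] \le \tfrac{\gamma}{4} + \max_{k'\neq k} h_j[k']} \le \ind{h_i[k] \le \tfrac{\gamma}{2} + \max_{k'\neq k} h_i[k']}$ for every label $k$. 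Weighting by $\eta_k(g_j)$ and using $|\eta_k(g_j) - \eta_k(g_i)| \le c\epsilon_m$ from Assumption~\ref{assump:label-smooth} yields a per-node bound that costs an additive $cK\epsilon_m$. Aggregating via Assumption~\ref{assump:equal-near-set} (so $V_m = \bigsqcup_{i \in V_0} V_m^{(i)}$ with $|V_m^{(i)}|=s_m$ and $N_m = s_m N_0$) gives $\gL^{\gamma/4}_m(h) - \gL^{\gamma/2}_0(h) \le cK\epsilon_m$ deterministically on $A$, so the contribution from $A$ to the exponential moment is at most $e^{\lambda cK\epsilon_m}$.

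On $C$, let $B := \{h : \gL^{\gamma/4}_m(h) - \gL^{\gamma/2}_0(h) \le N_0^{-\alpha} + cK\epsilon_m\}$. Assumption~\ref{assump:loss-diff} gives $P(C \cap B^c) \le P(C)\,e^{-N_0^{2\alpha}}$, and since margin losses lie in $[0,1]$ the difference never exceeds $1$, so the contribution from $C$ is at most $P(C)\bigl[e^{\lambda(N_0^{-\alpha} + cK\epsilon_m)} + e^{\lambda - N_0^{2\alpha}}\bigr]$. The crux is to bound $P(C) = \Pr_{h\sim P}[T_h^L \epsilon_m > \gamma/8]$ sharply. With threshold $(\gamma/(8\epsilon_m))^{1/L}$ and the calibrated variance $\sigma^2 \le \frac{(\gamma/(8\epsilon_m))^{2/L}}{2b(\lambda N_0^{-\alpha} + \ln 2bL)}$, a standard Gaussian operator-norm tail inequality (e.g., Lemma~1 of \citet{neyshabur2018pac}) gives $\Pr[\|W_l\|_2 > (\gamma/(8\epsilon_m))^{1/L}] \le \tfrac{1}{L}\,e^{-\lambda N_0^{-\alpha}}$ for each layer, and a union bound across the $L$ layers yields $P(C) \le e^{-\lambda N_0^{-\alpha}}$.

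Plugging this in, the $C$-contribution becomes at most $e^{\lambda cK\epsilon_m} + e^{\lambda(1 - N_0^{-\alpha}) - N_0^{2\alpha}}$, and the constraint $\lambda \le N_0^{2\alpha}$ makes the second exponent nonpositive, so that term is at most $1 \le e^{\lambda cK\epsilon_m}$. Summing with the $A$-contribution gives $\E_{h\sim P} e^{\lambda(\gL^{\gamma/4}_m(h) - \gL^{\gamma/2}_0(h))} \le 3 e^{\lambda cK\epsilon_m}$, which yields the lemma. The main obstacle is the exact calibration of $\sigma^2$: without the tight cancellation $P(C)\cdot e^{\lambda N_0^{-\alpha}} \le 1$, the $C$-contribution would grow with $\lambda$ and the clean $\ln 3$ constant could not be obtained.
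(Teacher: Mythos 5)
Your proposal is correct and follows essentially the same route as the paper: the same decomposition of $\gH$ along the event $T_h^L\epsilon_m \le \gamma/8$, the same deterministic Lipschitz/margin-indicator comparison on that event (the paper's Lemma~\ref{lemma:margin-loss-difference}), the same use of Assumption~\ref{assump:loss-diff} together with the trivial bound $\gL^{\gamma/4}_m - \gL^{\gamma/2}_0 \le 1$ on the complement, and the same Gaussian spectral-norm tail calibrated so that $\Pr(T_h^L\epsilon_m > \gamma/8) \le e^{-\lambda N_0^{-\alpha}}$, yielding $\E_{h\sim P}e^{\lambda(\gL^{\gamma/4}_m(h)-\gL^{\gamma/2}_0(h))} \le 3e^{\lambda cK\epsilon_m}$. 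The only cosmetic difference is that the paper bounds the bad sub-event by $e^{\lambda - N_0^{2\alpha}} \le 1$ using $P(C)\le 1$ rather than the sharper $P(C)\le e^{-\lambda N_0^{-\alpha}}$, which changes nothing.
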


Intuitively, what we need to bound $D^{\gamma}_{m, 0}(P;\lambda)$ is that the training set $V_0$ is ``representative'' for $V_m$. This is reasonable in practice as it is natural to select the training samples according to the distribution of the population. Specifically, Assumption~\ref{assump:equal-near-set} assumes that $V_m$ can be split into equal-sized partitions indexed by the training samples. The elements of each partition $V_m^{(i)}$ are close to the corresponding training sample~$i$ but not so close to training samples other than $i$. This assumption is stronger than needed to obtain a meaningful bound on $D^{\gamma}_{m, 0}(P;\lambda)$, and we can relax it by only assuming that most samples in $V_m$ have proportional ``close representatives'' in $V_0$.  But we keep Assumption~\ref{assump:equal-near-set} in this work, as it is intuitively clear and significantly eases the analysis and notations. Assumption~\ref{assump:loss-diff} essentially assumes that the expected margin loss on $V_m$ is not much larger than that on $V_0$ when the number of samples becomes large. We first note that this assumption becomes trivially true in the degenerate case that all samples in $V_m$ and $V_0$ are IID. In this case, $\gL^{\gamma/4}_m(h) = \gL^{\gamma/4}_0(h) < \gL^{\gamma/2}_{0}(h) \le 0$ for any classifier $h$. In Appendix~\ref{appendix:assump-loss-diff}, we further provide a simple non-IID example where Assumption~\ref{assump:loss-diff} holds.

The bound~(\ref{eq:D-bound}) suggests that the closer between $V_m$ and $V_0$ (smaller $\epsilon_m$), the smaller the expected loss discrepancy.

\vpara{Bound for GNNs.} Finally, with an additional technical assumption (Assumption~\ref{assump:bounded-feature}) that the maximum L2 norm of aggregated node features does not grow in terms of the number of training samples, we obtain a subgroup generalization bound for GNNs in Theorem~\ref{thm:main}. The proof of Theorem~\ref{thm:main} can be found in Appendix~\ref{appendix:proof-main}.

\begin{assumption}
	\label{assump:bounded-feature}
	Define $B_m:= \max_{i\in V_0\cup V_m}\|g_i(X,G)\|_2$. For any classifier $\tilde{h}\in \gH$ with parameters $\{\widetilde{W}_l\}_{l=1}^L$, assume $\|\widetilde{W}_l\|_F\le C$ for $l=1,\ldots,L$. Assume $B_m, C$ are constants with respect to $N_0$.
\end{assumption}

\begin{theorem} [Subgroup Generalization Bound for GNNs]
	\label{thm:main}
	Let $\tilde{h}$ be any classifier in $\gH$ with parameters $\{\widetilde{W}_l\}_{l=1}^L$. Under Assumptions~\ref{assump:label-smooth},~\ref{assump:equal-near-set},~\ref{assump:loss-diff},~and~\ref{assump:bounded-feature}, for any $0< m \le M$, $\gamma \ge 0$, and large enough $N_0$, with probability at least $1-\delta$ over the sample of $y^0$, we have
	\begin{equation}
		\label{eq:main}
		\risk_m(\tilde{h}) \le \hLg_0(\tilde{h}) + O\sbra cK\epsilon_m + \frac{b\sum_{l=1}^L\|\widetilde{W}_l\|_F^2}{(\gamma/8)^{2/L}N_0^{\alpha}}(\epsilon_m)^{2/L} + \frac{1}{N_0^{1-2\alpha}} + \frac{1}{N_0^{2\alpha}} \ln\frac{LC(2B_m)^{1/L}}{\gamma^{1/L}\delta} \sket.
	\end{equation}
\end{theorem}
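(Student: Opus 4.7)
The plan is to apply the deterministic PAC-Bayesian bound of Theorem~\ref{thm:PAC-Bayes-deterministic} with a Gaussian prior $P$ and a Gaussian posterior $Q$ centered at the learned weights, use Lemma~\ref{lemma:D-bound} to control the non-IID discrepancy term $D^{\gamma/2}_{m,0}(P;\lambda)$, verify the stochastic perturbation premise of Theorem~\ref{thm:PAC-Bayes-deterministic} via a Neyshabur-style ReLU-MLP perturbation bound, and finally tune $\lambda$ to get the stated rates.

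Concretely, I would set $P = \gN(0, \sigma^2 I)$ and $Q = \gN(\mathrm{vec}(\widetilde W_{1:L}), \sigma^2 I)$ on the vectorized MLP parameters, with $\sigma^2$ chosen to saturate the upper bound prescribed by Lemma~\ref{lemma:D-bound},
\[\sigma^2 = \frac{(\gamma/(8\epsilon_m))^{2/L}}{2b(\lambda N_0^{-\alpha} + \ln 2bL)}.\]
Under this choice, Lemma~\ref{lemma:D-bound} gives $D^{\gamma/2}_{m,0}(P;\lambda) \le \ln 3 + \lambda c K \epsilon_m$ whenever $\lambda \le N_0^{2\alpha}$, which after dividing by $\lambda$ directly yields the leading $cK\epsilon_m$ term in the statement. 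The KL divergence collapses to $\KL(Q\|P) = \sum_{l=1}^L \|\widetilde W_l\|_F^2/(2\sigma^2)$, and substituting the chosen $\sigma^2$ produces the advertised $b\sum_l\|\widetilde W_l\|_F^2(\epsilon_m)^{2/L}/((\gamma/8)^{2/L})$ factor (with $\ln 2bL$ absorbed into the logarithmic constant).

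The only nontrivial technical step is verifying the perturbation premise of Theorem~\ref{thm:PAC-Bayes-deterministic}: with probability strictly greater than $1/2$ over $h\sim Q$, $\max_{i\in V_0\cup V_m}\|h_i(X,G) - \tilde h_i(X,G)\|_\infty < \gamma/8$. Since the GNN factors as $h_i(X,G) = f(g_i(X,G);W_{1:L})$ with $f$ a ReLU MLP and $\|g_i(X,G)\|_2 \le B_m$, I can invoke the standard MLP perturbation lemma of \citet{neyshabur2018pac} (as also used by \citet{liao2021pac} for GNNs): after the ReLU-homogeneity trick of rescaling all spectral norms to their geometric mean (which leaves $f$ invariant), a Gaussian parameter perturbation with per-coordinate variance $\sigma^2$ changes $f$ simultaneously at every input of norm $\le B_m$ by at most $eL B_m \bigl(\prod_l \|\widetilde W_l\|_2\bigr)^{1-1/L}\sigma\sqrt{2b\ln 4b}$ with probability exceeding $1/2$. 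By Assumption~\ref{assump:bounded-feature} both $B_m$ and $\prod_l \|\widetilde W_l\|_2 \le C^L$ are $O(1)$, so for $N_0$ large enough the chosen $\sigma^2$ drives this upper bound below $\gamma/8$ uniformly on $V_0\cup V_m$, and the logarithmic term $\ln\bigl(LC(2B_m)^{1/L}/\gamma^{1/L}\delta\bigr)$ in the theorem absorbs the remaining constants from this verification.

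Substituting into Theorem~\ref{thm:PAC-Bayes-deterministic} yields
\[\risk_m(\tilde h) \le \hLg_0(\tilde h) + \frac{1}{\lambda}\Bigl(2\KL(Q\|P) + 2 + \ln\tfrac{1}{\delta} + \tfrac{\lambda^2}{4N_0} + \ln 3 + \lambda cK\epsilon_m\Bigr),\]
and the final step is to take $\lambda = N_0^{2\alpha}$, the largest value allowed by Lemma~\ref{lemma:D-bound}. This turns $\lambda/(4N_0)$ into $1/(4N_0^{1-2\alpha})$ and $(\ln(1/\delta)+\mathrm{const})/\lambda$ into $O(N_0^{-2\alpha}\ln(1/\delta))$; after plugging in $\sigma^2$, the $\KL(Q\|P)/\lambda$ contribution becomes $O\bigl(b\sum_l\|\widetilde W_l\|_F^2(\epsilon_m)^{2/L}/((\gamma/8)^{2/L}N_0^{\alpha})\bigr)$, noting that $1/\sigma^2 \propto N_0^{\alpha}$ while $1/\lambda \propto N_0^{-2\alpha}$. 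The hardest part is bookkeeping to check that the single prescribed $\sigma^2$ simultaneously satisfies the Lipschitz-type requirement baked into Lemma~\ref{lemma:D-bound} (controlling $T_h^L\epsilon_m$ under $P$) and the PAC-Bayes perturbation condition (controlling the output change of $h\sim Q$ at all $V_0\cup V_m$ inputs); fortunately the $(\gamma/(8\epsilon_m))^{2/L}$ scaling is exactly what both arguments demand once the ReLU homogeneity rescaling is applied, so everything meshes without further slack.
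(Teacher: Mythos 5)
Your overall route is exactly the paper's: combine Theorem~\ref{thm:PAC-Bayes-deterministic} with Lemma~\ref{lemma:D-bound}, verify the perturbation premise via the Neyshabur-style MLP perturbation bound, compute $\KL(Q\|P)=\sum_{l=1}^L\|\widetilde{W}_l\|_F^2/(2\sigma^2)$ for the Gaussian pair, and set $\lambda=N_0^{2\alpha}$. The choice of $\sigma^2$ and the rate bookkeeping match the paper's. Two steps are glossed over, however, and one of them is a genuine omission.

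First, the term $\ln\frac{LC(2B_m)^{1/L}}{\gamma^{1/L}\delta}$ does not appear by ``absorbing remaining constants from the verification.'' In the paper it is the price of a union bound over a grid of at most $\frac{LC(2B_m)^{1/L}}{\gamma^{1/L}}$ reference values $\beta$ of the geometric-mean spectral norm $\tilde{\beta}=\sbra\prod_{l=1}^L\|\widetilde{W}_l\|_2\sket^{1/L}$, introduced because the prior's variance is formally defined as a minimum involving $\beta$ and the prior must be independent of the (data-dependent) $\tilde{\beta}$. Your single data-independent $\sigma^2$ sidesteps the grid, and under Assumption~\ref{assump:bounded-feature} that is defensible for large enough $N_0$ (the paper itself argues the $\epsilon_m$-dependent branch of the minimum eventually dominates, uniformly over $\beta\le C$); but then you should either carry out the union bound or argue explicitly that one prior suffices for all admissible $\tilde{h}$. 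As written, you assert the theorem's logarithmic factor without any derivation that produces it.

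Second, and more substantively, your verification of the premise $\Pr_{h\sim Q}\sbra\max_{i\in V_0\cup V_m}\|h_i(X,G)-\tilde{h}_i(X,G)\|_\infty<\gamma/8\sket>\frac{1}{2}$ only checks that the perturbation's output magnitude falls below $\gamma/8$. The perturbation expansion behind Lemma~\ref{lemma:neyshabur} is only valid when each layer's perturbation is small relative to that layer's weights (after the ReLU homogeneity rescaling, roughly $\sigma\sqrt{2b\ln(4bL)}\le\tilde{\beta}/L$), and this fails when $\tilde{\beta}$ is small. The paper disposes of that regime separately: if $\tilde{\beta}<(\gamma/2B_m)^{1/L}$ then every logit of $\tilde{h}$ on $V_0$ has magnitude below $\gamma/2$, so $\hLg_0(\tilde{h})=1$ and the bound~(\ref{eq:main}) holds trivially; the perturbation argument is only invoked for $\tilde{\beta}\ge(\gamma/2B_m)^{1/L}$ (which is also what makes the $\beta$-grid finite). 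Without this case split, your invocation of the perturbation lemma is unjustified for low-norm classifiers, so you should add it.
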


Next, we investigate the qualitative implications of our theoretical results.

\subsection{Implications for Fairness of Graph Neural Networks}
\label{sec:implication-fairness}

\vpara{Theoretically predictable accuracy disparity.} One merit of our analysis is that we can apply Theorem~\ref{thm:main} on different subgroups of the unlabeled nodes and compare the subgroup generalization bounds. This allows us to study the accuracy disparity across subgroups from a theoretical perspective. 

A major factor that affects the generalization bound~(\ref{eq:main}) is $\epsilon_m$, the aggregated-feature distance (in terms of $g(X, G)$) from the target test subgroup $V_m$ to the training set $V_0$. The generalization bound~(\ref{eq:main}) suggests that there is a better generalization guarantee for subgroups that are closer to the training set. In other words, it is unfair for subgroups that are far away from the training set. While our theoretical analysis can only tell the difference among \emph{upper bounds} of generalization errors, we empirically verify that, in the following Section~\ref{sec:exp}, the aggregated-feature distance $\epsilon_m$ is indeed a strong predictor for the test accuracy of each subgroup $V_m$. More specifically, the test accuracy decreases as the distance increases, which is consistent with the theoretical prediction given by the bound~(\ref{eq:main}).

\vpara{Impact of the structural positions of nodes.} We further investigate if the aggregated-feature distance can be related to simpler and more interpretable graph characteristics, in order to obtain a more intuitive understanding of how the structural positions of nodes influence the prediction accuracy on them. We find that the \emph{geodesic distance} (length of the shortest path) between two nodes is positively related to the distance between their aggregated features in some scenarios\footnote{A more detailed discussion on such scenarios is provided in Appendix~\ref{sec:agg-geodesic}.}, such as when the node features exhibit homophily~\citep{mcpherson2001birds}. Empirically, we also observe that test nodes with larger geodesic distance to the training set tend to suffer from lower accuracy (see Figure~\ref{fig:raw_distance}).

In contrast, we find that common node centrality metrics (e.g., degree and PageRank) have less influence on the test accuracy (see Figure~\ref{fig:cen}). These centrality metrics only capture the graph characteristics of the test nodes alone, but do not take their relationship to the training set into account, which is a key factor suggested by our theoretical analysis.

\vpara{Impact of training data selection.} Another implication of the theoretical results is that the selection of the training set plays an important role in the fairness of the learned GNN models. First, if the training set is selected unevenly on the graph, leaving part of the test nodes far away, there will likely be a large accuracy disparity. Second, a key ingredient in the proof of Lemma~\ref{lemma:D-bound} is that the GNN predictions on two nodes tend to be more similar if they are closer in terms of the aggregated node features. 
This suggests that, if an individual training node is close to many test nodes, it may bias the predictions of the learned GNN on the test nodes towards the class it belongs to. 
\section{Experiments}
\label{sec:exp}
\vskip -10pt
In this section, we empirically verify the fairness implications suggested by our theoretical analysis. 

\begin{figure}
\centering
\subfloat[Cora.]{
\begin{minipage}[t]{0.33\linewidth}
\centering
\includegraphics[width=\linewidth]{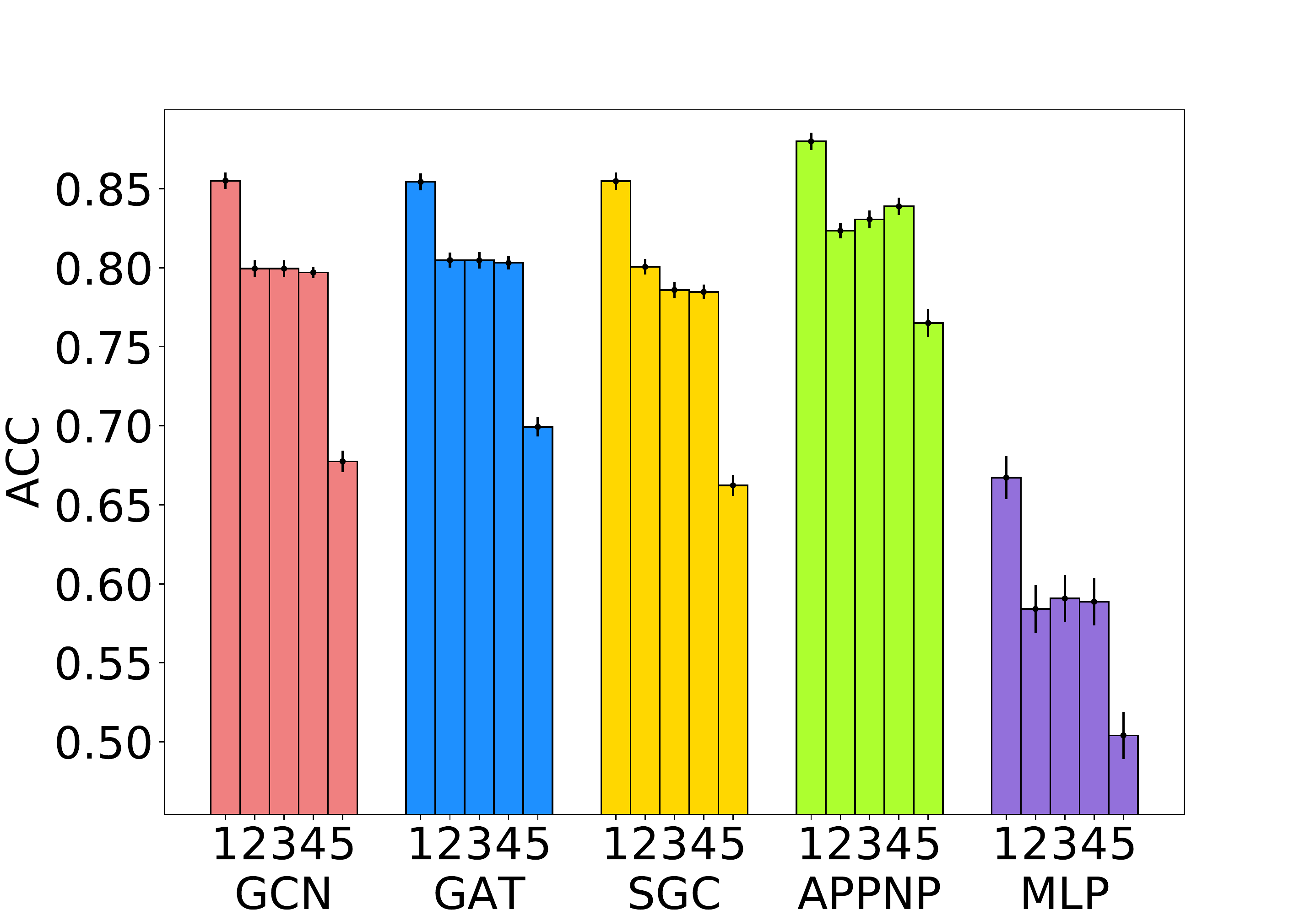}
\end{minipage}%
}%
\subfloat[Citeseer.]{
\begin{minipage}[t]{0.33\linewidth}
\centering
\includegraphics[width=\linewidth]{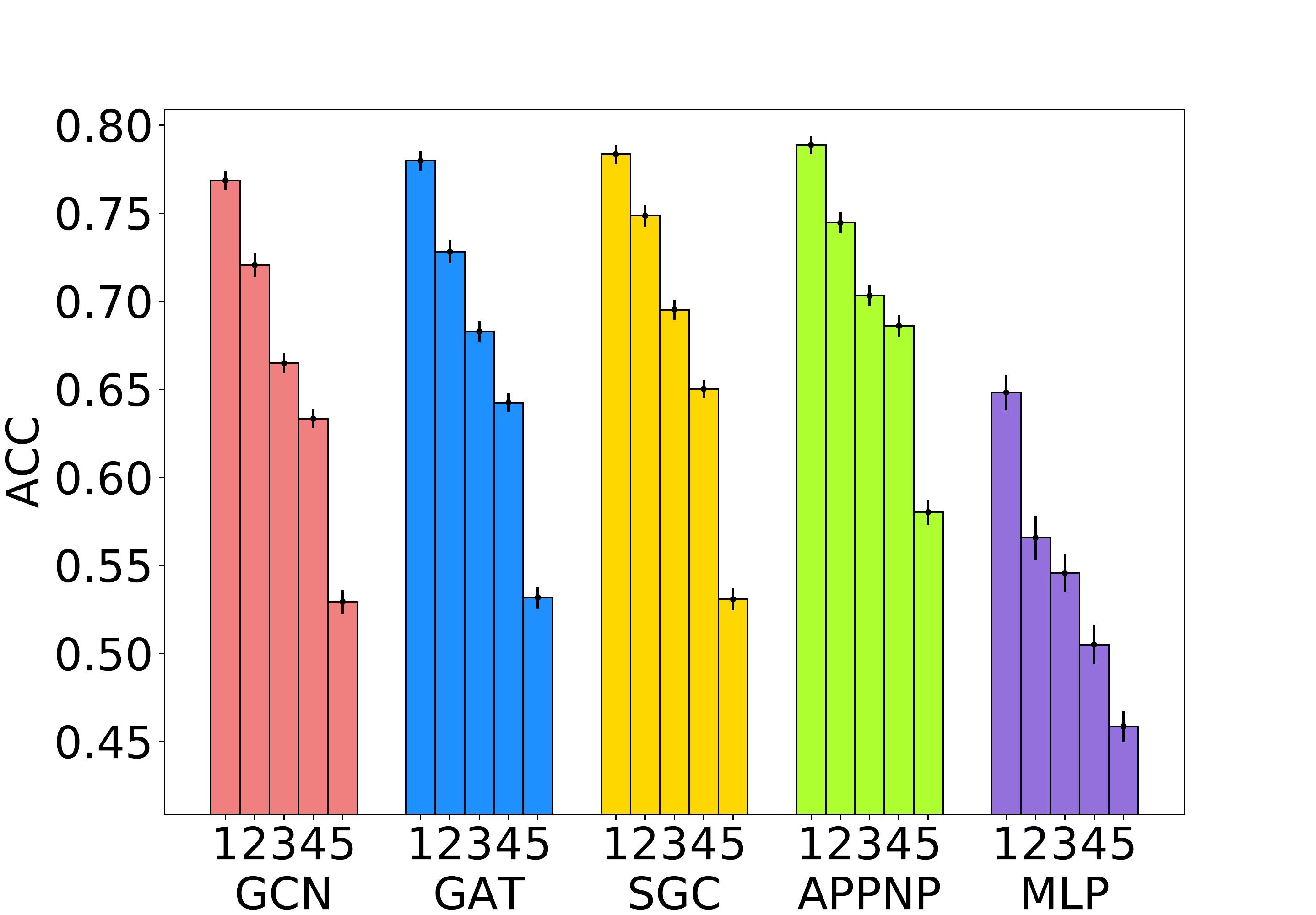}
\end{minipage}%
}%
\subfloat[Pubmed.]{
\begin{minipage}[t]{0.33\linewidth}
\centering
\includegraphics[width=\linewidth]{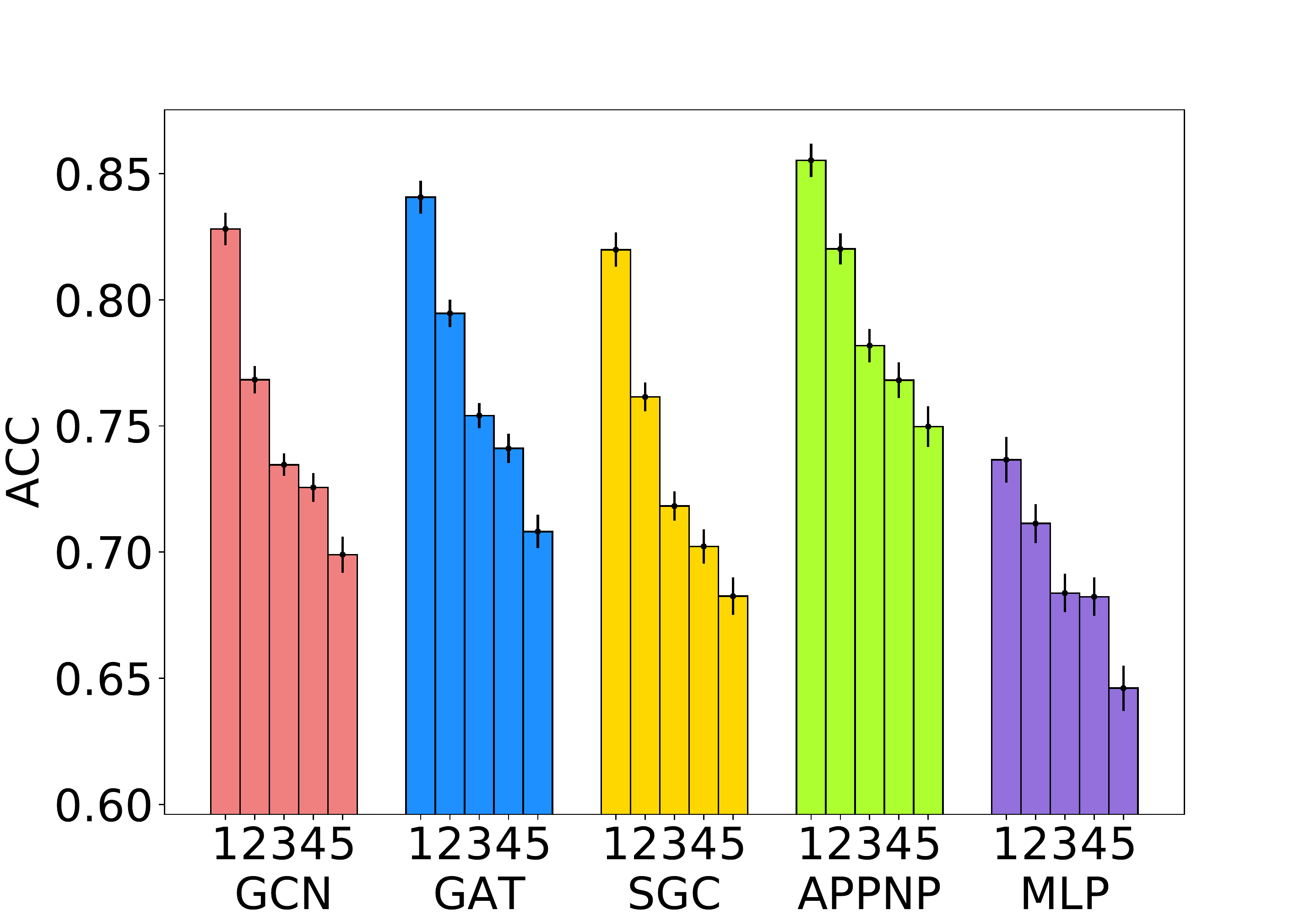}
\end{minipage}
}%
\vskip -7pt
\caption{Test accuracy disparity across subgroups by aggregated-feature distance. Each figure corresponds to a dataset, and each bar cluster corresponds to a model. Bars labeled 1 to 5 represent subgroups with increasing distance to training set. Results are averaged over 40 independent trials with different random splits of the data, and the error bar represents the standard error of the mean.}
\vskip -20pt
\label{fig:agg}
\centering
\end{figure}

\begin{figure}
\centering
\subfloat[Cora.]{
\begin{minipage}[t]{0.33\linewidth}
\centering
\includegraphics[width=\linewidth]{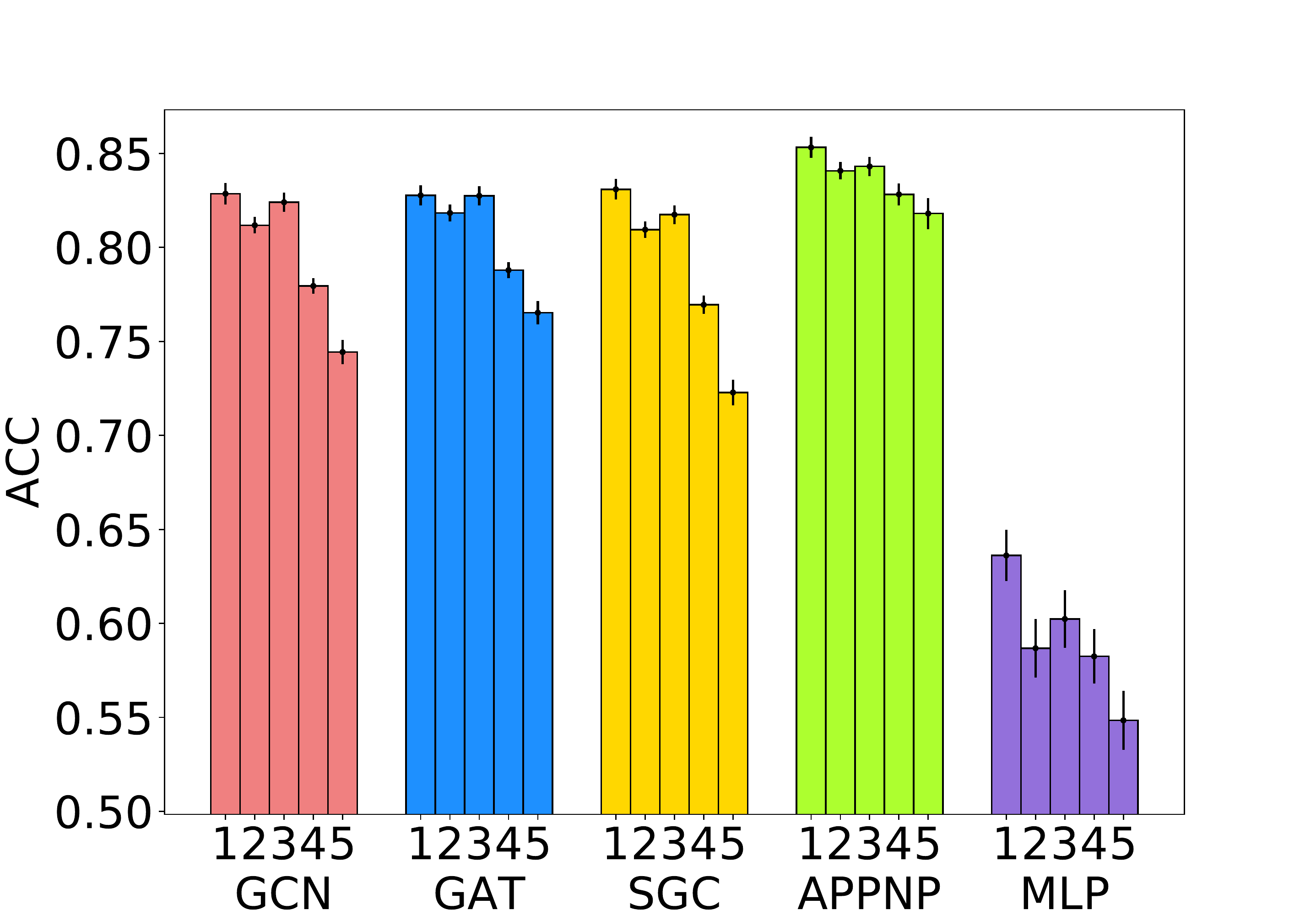}
\end{minipage}%
}%
\subfloat[Citeseer.]{
\begin{minipage}[t]{0.33\linewidth}
\centering
\includegraphics[width=\linewidth]{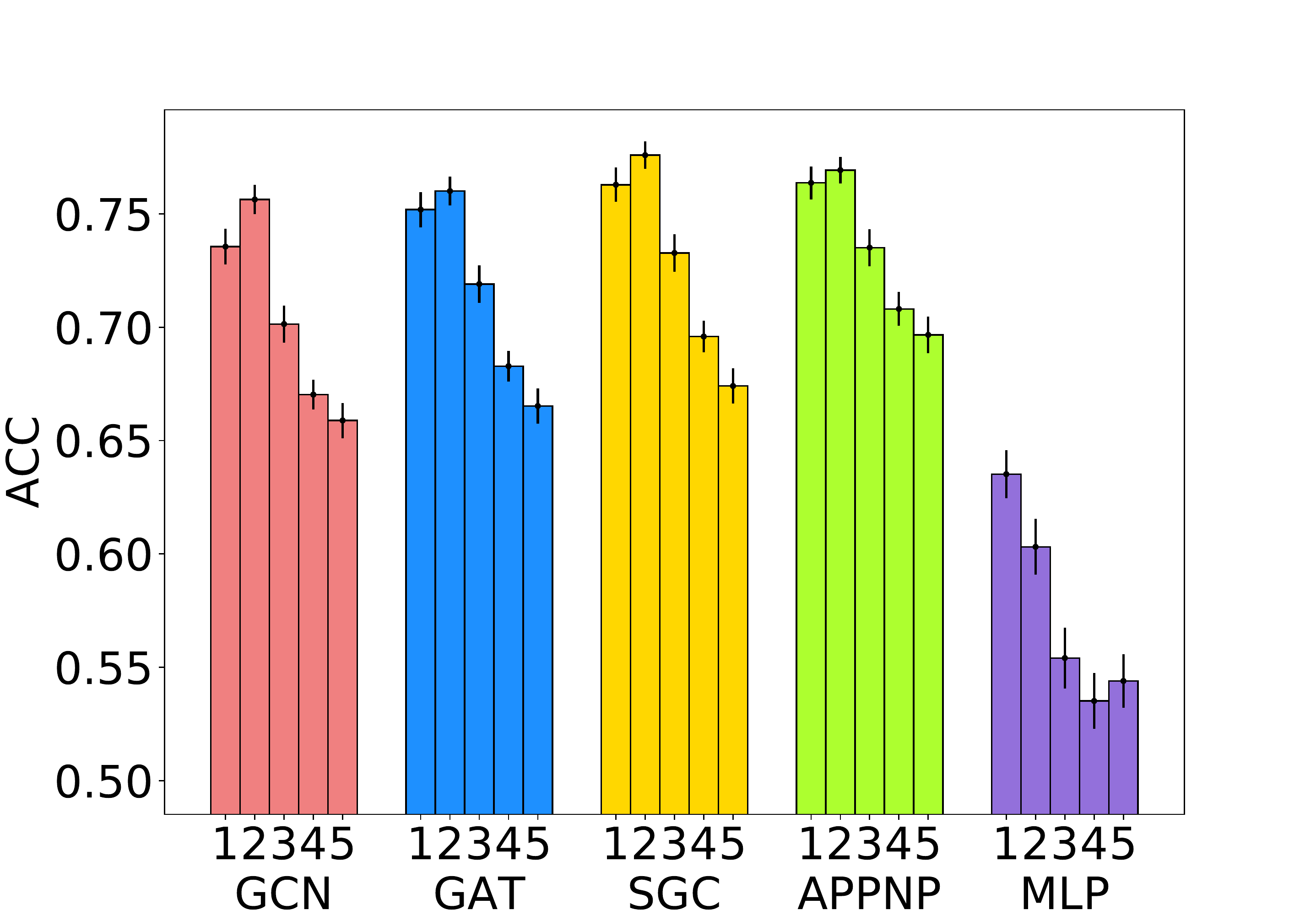}
\end{minipage}%
}%
\subfloat[Pubmed.]{
\begin{minipage}[t]{0.33\linewidth}
\centering
\includegraphics[width=\linewidth]{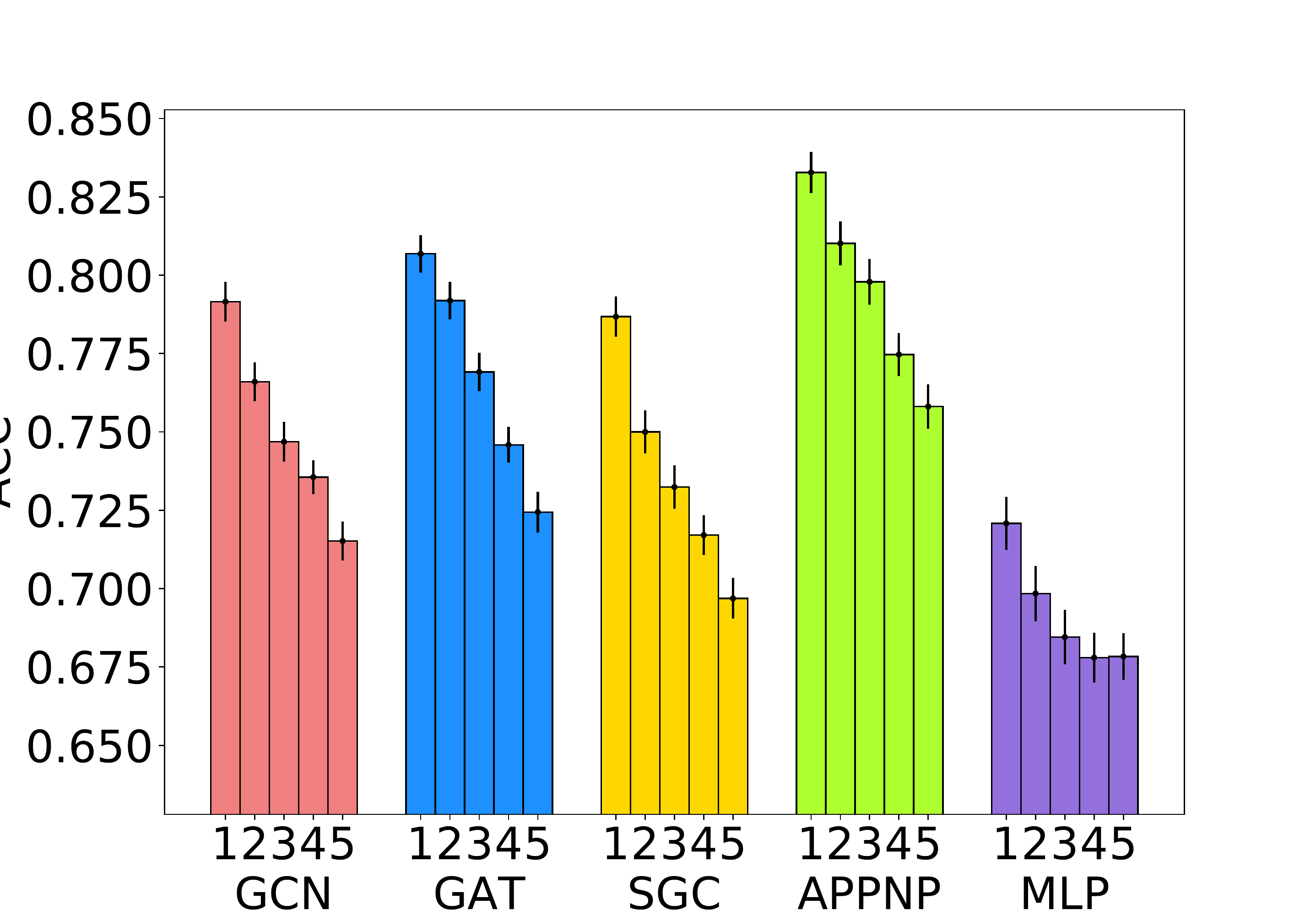}
\end{minipage}
}%
\vskip -7pt
\caption{Test accuracy disparity across subgroups by geodesic distance. The experiment and plot settings are the same as Figure~\ref{fig:agg}, except for the bars labeled from 1 to 5 here represent subgroups with increasing shortest-path distance to training set. }
\vskip -15pt
\label{fig:raw_distance}
\end{figure}

\vpara{General setup.} We experiment on 4 popular GNN models, GCN~\citep{kipf2016semi}, GAT~\citep{velivckovic2018graph}, SGC~\citep{wu2019simplifying}, and APPNP~\citep{klicpera2019predict}, as well as an MLP model for reference. For all models, we use the implementations by Deep Graph Library~\citep{wang2019deep}. For each experiment setting, 40 independent trials are carried out.

\subsection{Accuracy Disparity Across Subgroups}
\label{sec:exp-acc-disparity}
\vpara{Subgroups.} We examine the accuracy disparity with three types of subgroups as described below.

\emph{Subgroup by aggregated-feature distance.} In order to directly investigate the effect of $\epsilon_m$ on the generalization bound~(\ref{eq:main}), we first split the test nodes into subgroups by their distance to the training set in terms of the aggregated features. We use the two-step aggregated features to calculate the distance. In particular, denote the adjacency matrix of the graph $G$ as $A\in \{0,1\}^{N\times N}$ and the corresponding degree matrix as $D$, where $D$ is an $N\times N$ diagonal matrix with $D_{ii} = \sum_{j=1}^N A_{ij}, \forall i=1,\ldots,N$. Given the feature matrix $X\in \reals^{N\times D}$, the two-step aggregated features $Z$ are obtained by $Z = (D+I)^{-1}(A+I)(D+I)^{-1}(A+I)X$. For each test node $i$, we calculate its aggregated-feature distance to the training set $V_0$ as $d_i = \min_{j\in V_0} \|Z_i - Z_j\|_2$. Then we sort the test nodes according to this distance and split them into 5 equal-sized subgroups.

Strictly speaking, our theory does not directly apply to GCN and GAT as they are not in the form as we defined in Section~\ref{sec:bound-GNN}. Moreover, the two-step aggregated feature does not match exactly to the feature aggregation function of SGC and APPNP. Nevertheless, we find that even with such approximations, we are still able to observe the expected descending trend of test accuracy with respect to increasing distance in terms of the two-step aggregated features, on all four GNN models. 

\emph{Subgroup by geodesic distance.} As we discussed in Section~\ref{sec:implication-fairness}, geodesic distance on the graph may well relate to the aggregated-feature distance. So we also define subgroups based on geodesic distance. We split the subgroups by replacing the aggregated-feature distance $d_i$ of each test node $i$ with the minimum of the geodesic distances from $i$ to each training node on the graph. 

\emph{Subgroup by node centrality.} Lastly, we define subgroups based on 4 types of common node centrality metrics (degree, closeness, betweenness, and PageRank) of the test nodes. We split the subgroups by replacing the aggregated-feature distance $d_i$ of each test node $i$ with the centrality score of $i$. The purpose of this setup is to show that the common node centrality metrics are not sufficient to capture the monotonic trend of test accuracy.

\vpara{Experiment setup.} Following common GNN experiment setup~\citep{pitfall}, we randomly select 20 nodes in each class for training, 500 nodes for validation, and 1,000 nodes for testing. Once training is done, we report the test accuracy on subgroups defined by aggregated-feature distance, geodesic distance, and node centrality in Figure~\ref{fig:agg},~\ref{fig:raw_distance}, and~\ref{fig:cen} respectively\footnote{The main paper reports the results on a small set of datasets (Cora, Citeseer, and Pubmed~\citep{sen2008collective,yang2016revisiting}). Results on more datasets, including large-scale datasets from Open Graph Benchmark~\citep{hu2020open}, are shown in Appendix~\ref{appendix:exp}.}. 

\begin{figure}
\captionsetup[subfloat]{farskip=7pt,captionskip=1pt}
\centering
\subfloat[GCN on Cora.]{
\begin{minipage}[t]{0.33\linewidth}
\centering
\includegraphics[width=\linewidth]{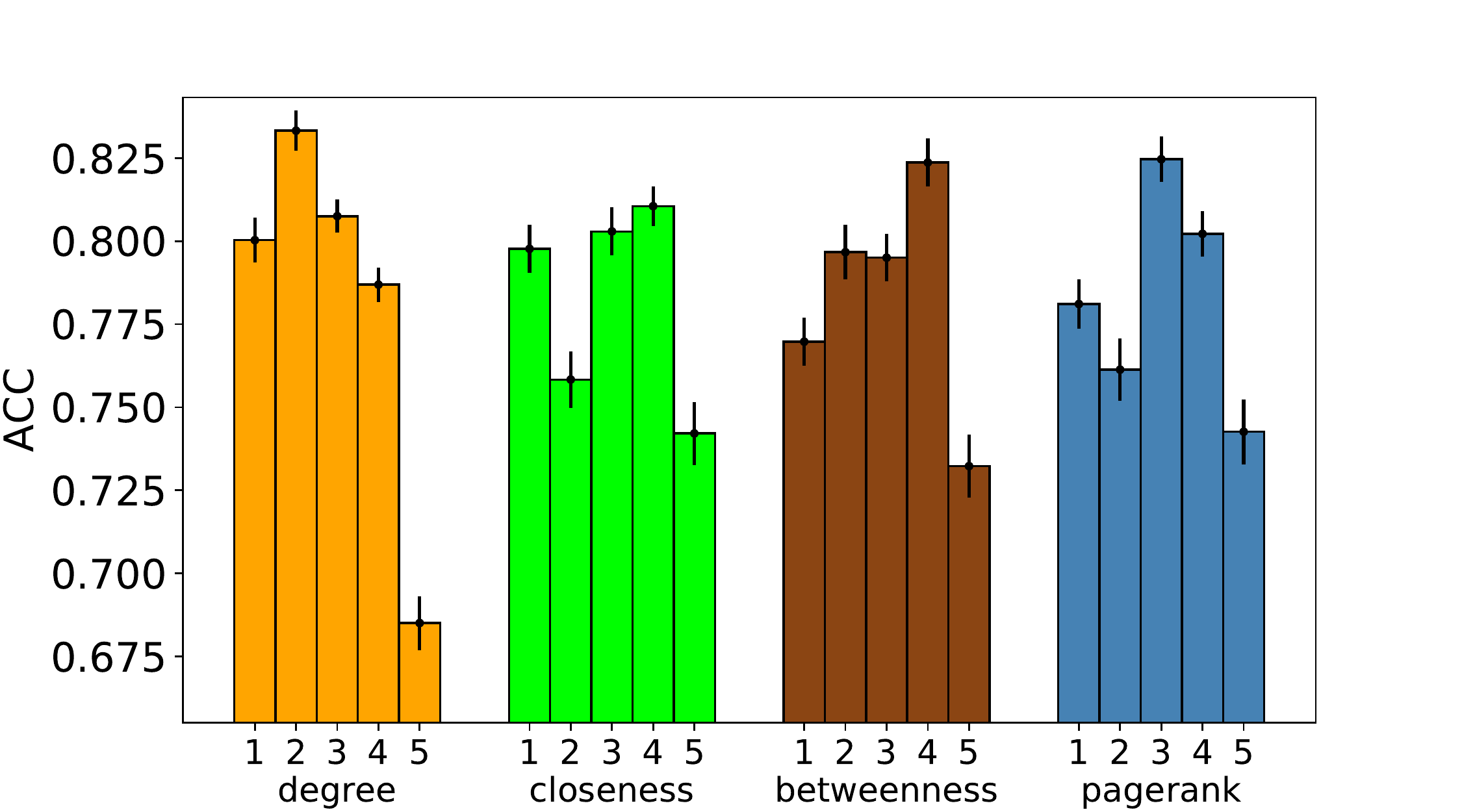}
\end{minipage}%
}%
\subfloat[GAT on Cora.]{
\begin{minipage}[t]{0.33\linewidth}
\centering
\includegraphics[width=\linewidth]{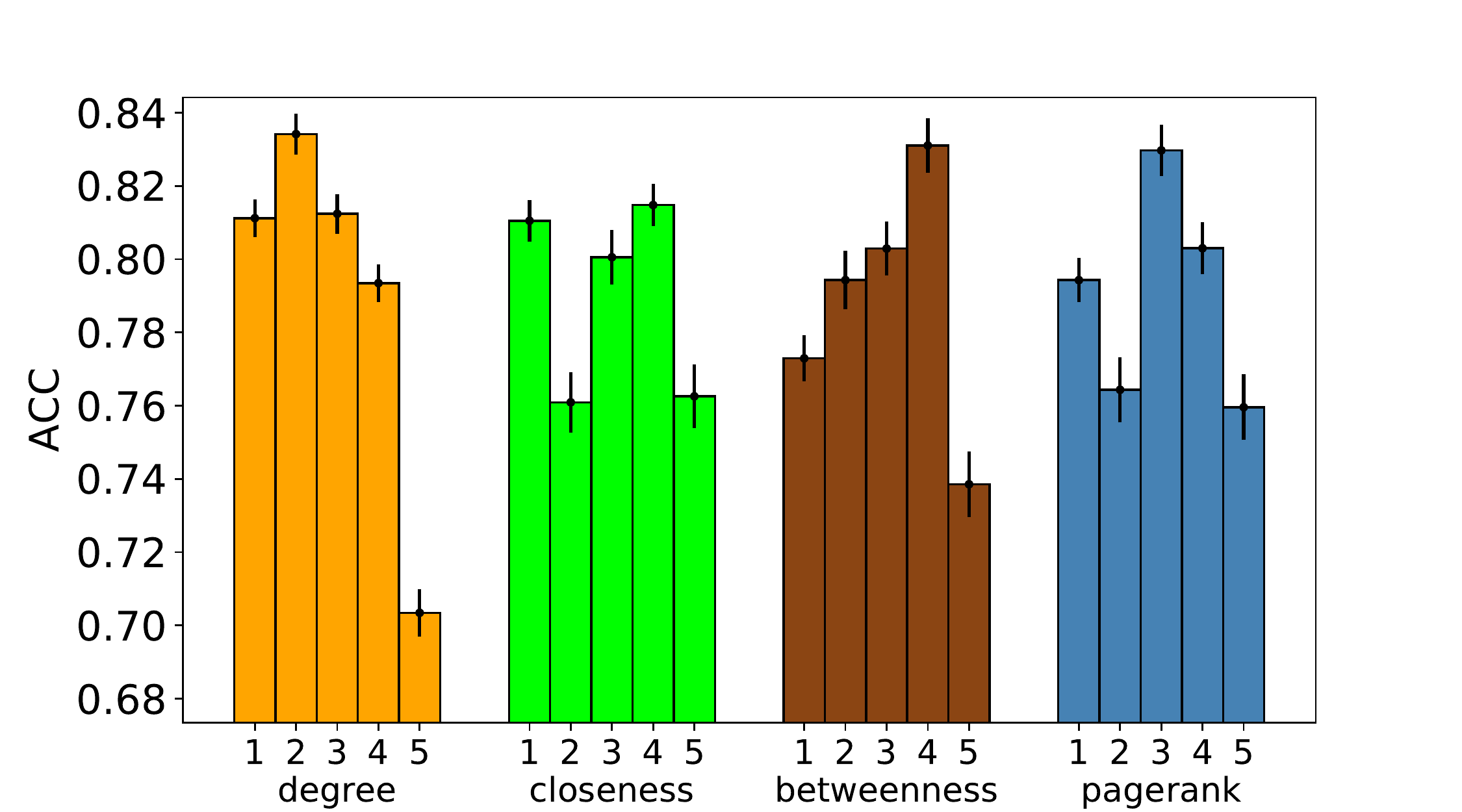}
\end{minipage}%
}%
\subfloat[APPNP on Cora.]{
\begin{minipage}[t]{0.33\linewidth}
\centering
\includegraphics[width=\linewidth]{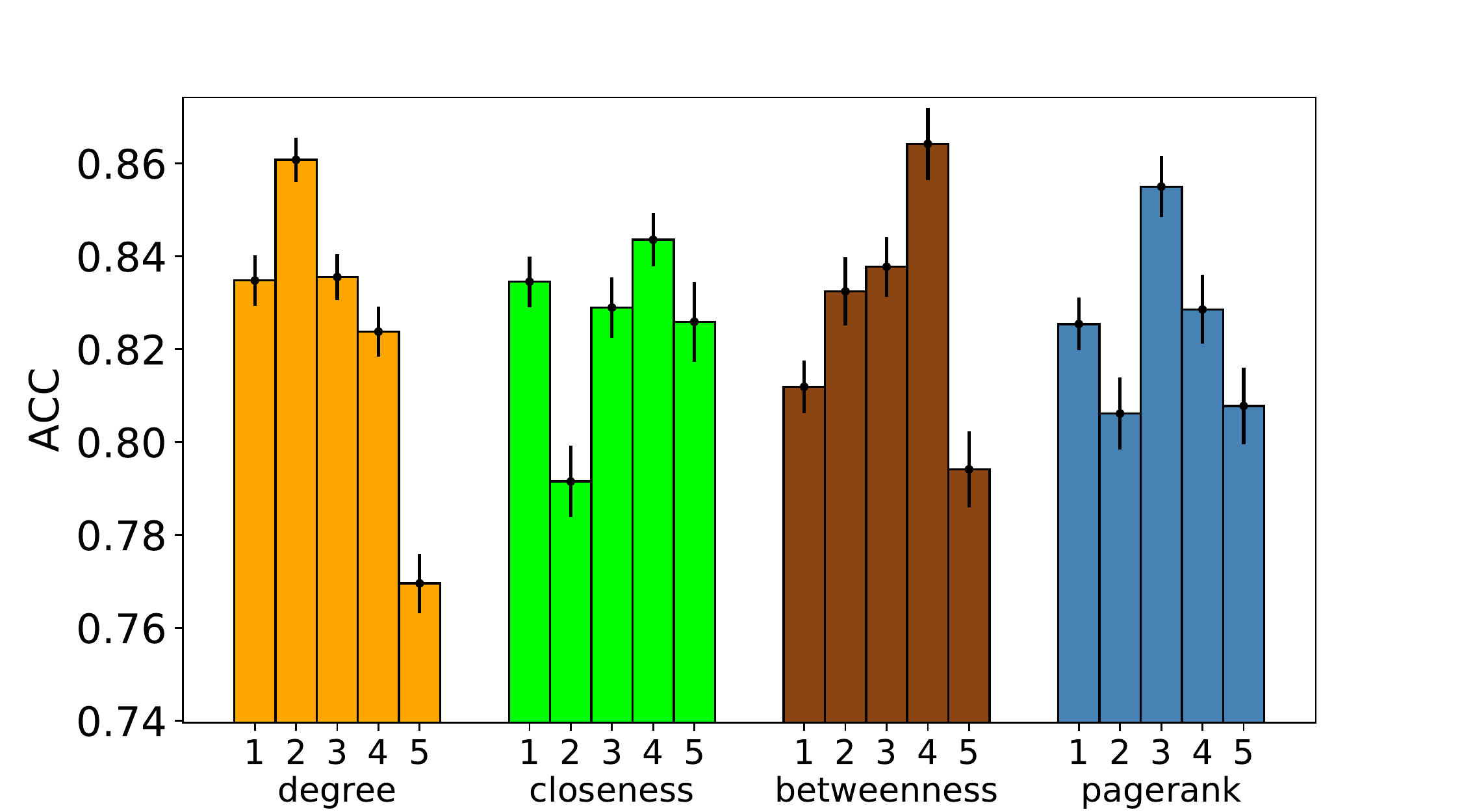}
\end{minipage}
}%
\vskip -10pt
\subfloat[GCN on Citeseer.]{
\begin{minipage}[t]{0.33\linewidth}
\centering
\includegraphics[width=\linewidth]{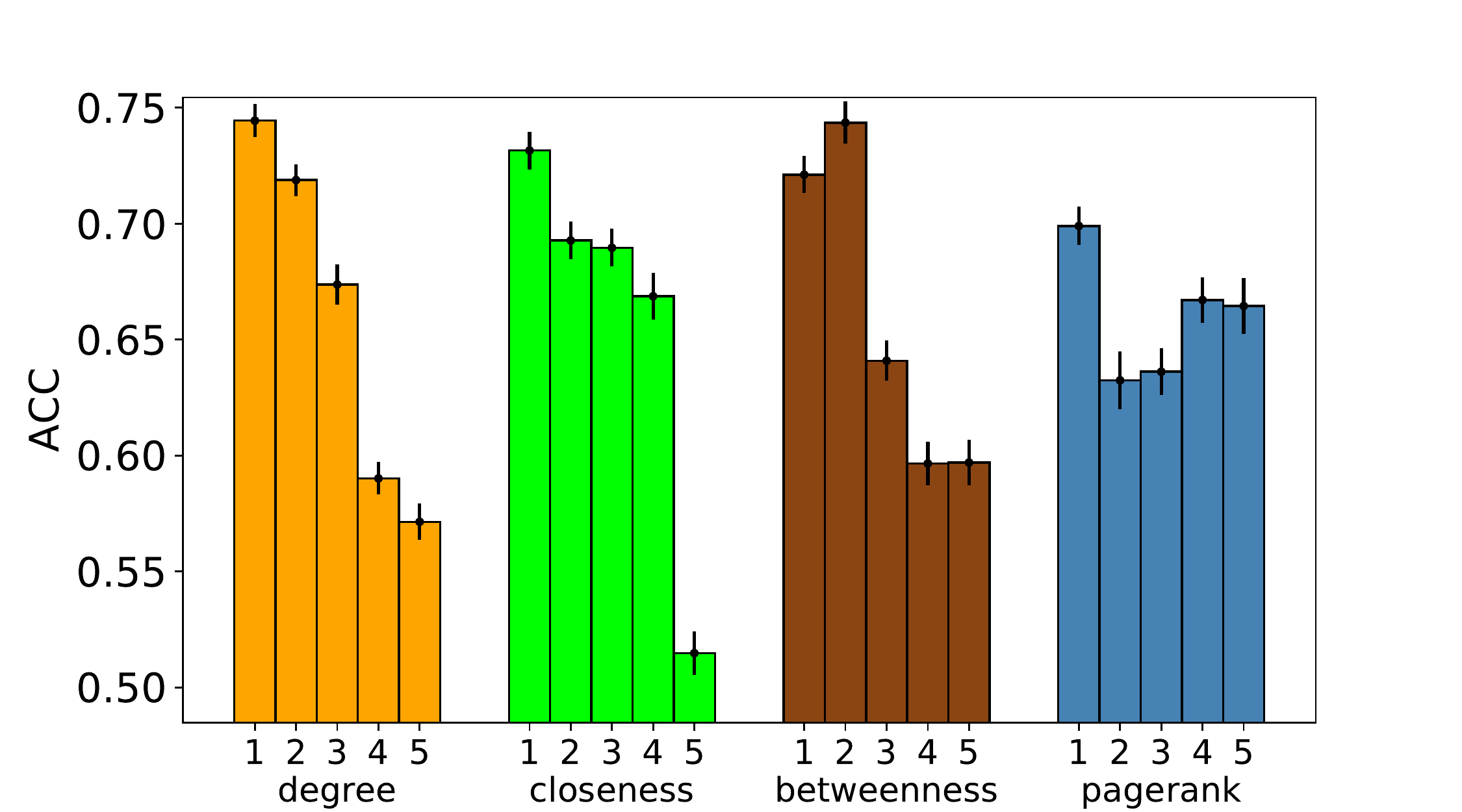}
\end{minipage}%
}%
\subfloat[GAT on Citeseer.]{
\begin{minipage}[t]{0.33\linewidth}
\centering
\includegraphics[width=\linewidth]{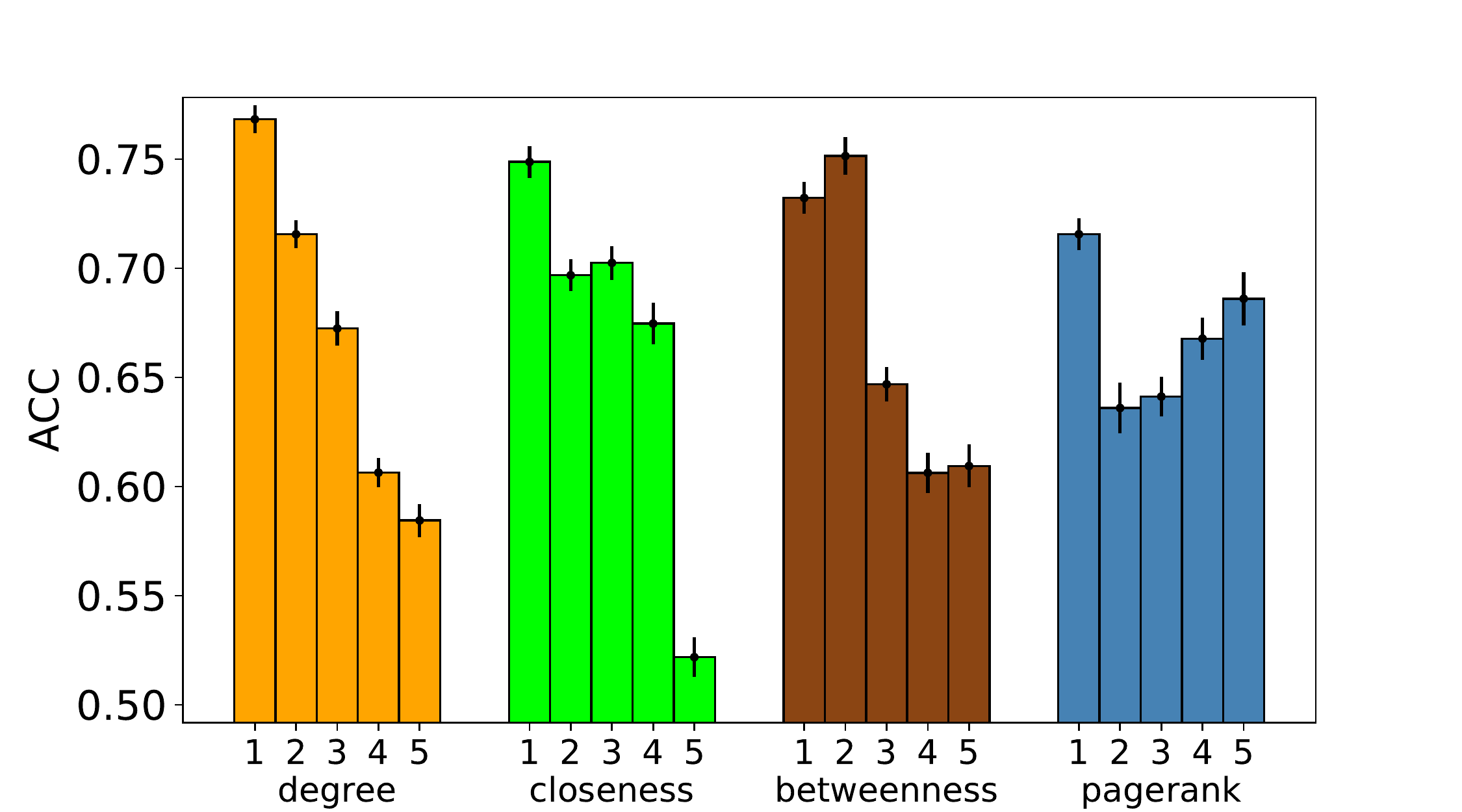}
\end{minipage}%
}%
\subfloat[APPNP on Citeseer.]{
\begin{minipage}[t]{0.33\linewidth}
\centering
\includegraphics[width=\linewidth]{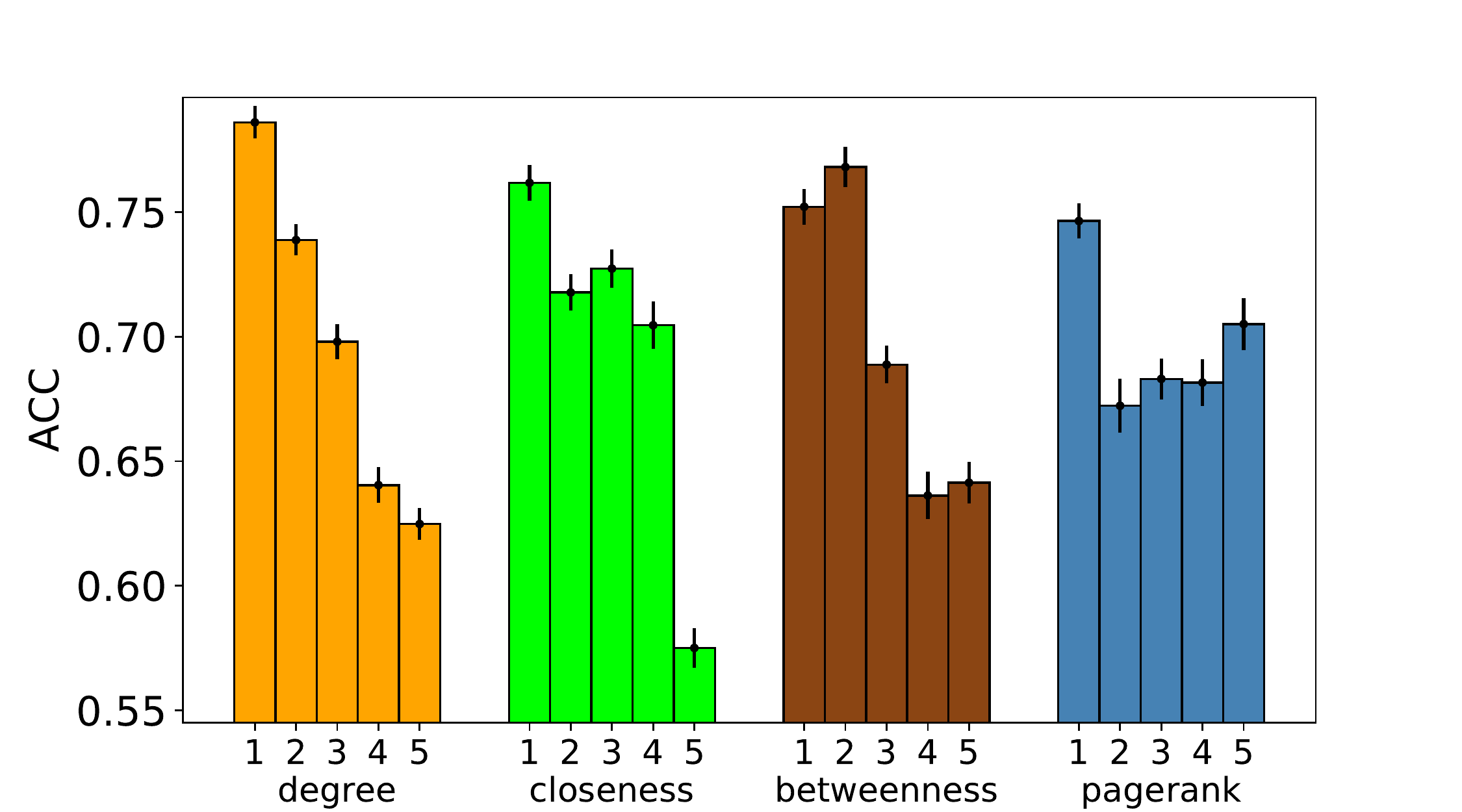}
\end{minipage}
}%
\vskip -10pt
\subfloat[GCN on Pubmed.]{
\begin{minipage}[t]{0.33\linewidth}
\centering
\includegraphics[width=\linewidth]{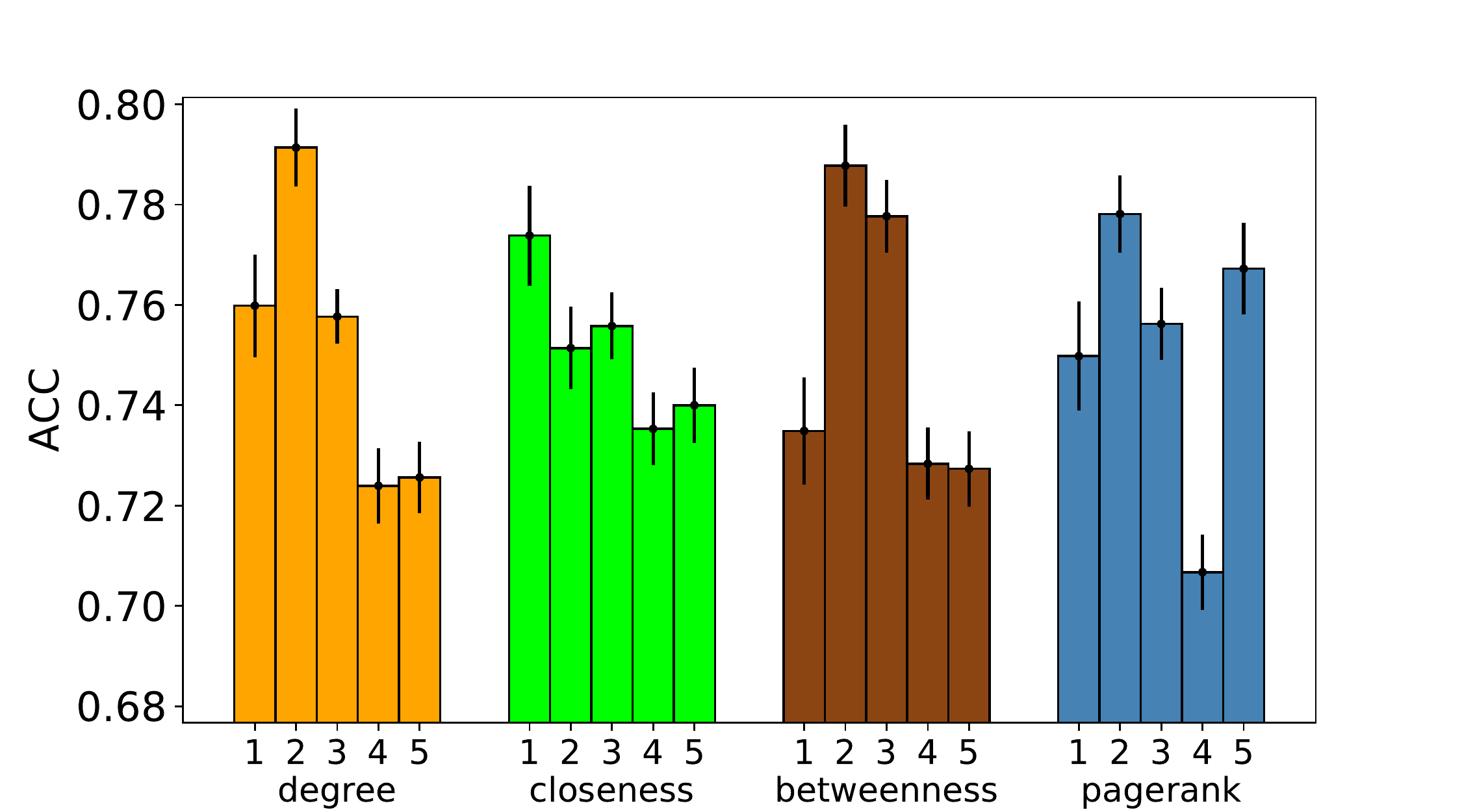}
\end{minipage}%
}%
\subfloat[GAT on Pubmed.]{
\begin{minipage}[t]{0.33\linewidth}
\centering
\includegraphics[width=\linewidth]{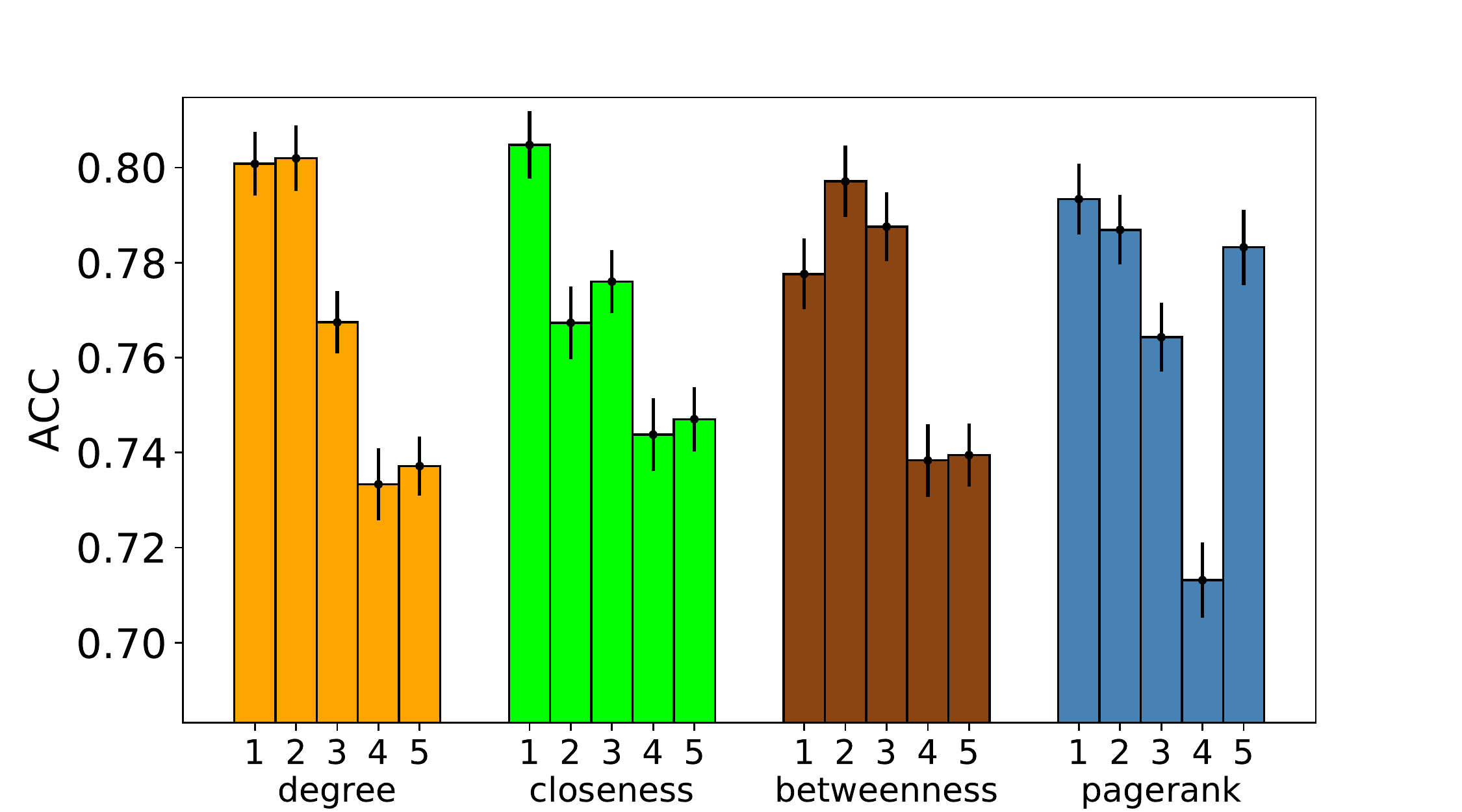}
\end{minipage}%
}%
\subfloat[APPNP on Pubmed.]{
\begin{minipage}[t]{0.33\linewidth}
\centering
\includegraphics[width=\linewidth]{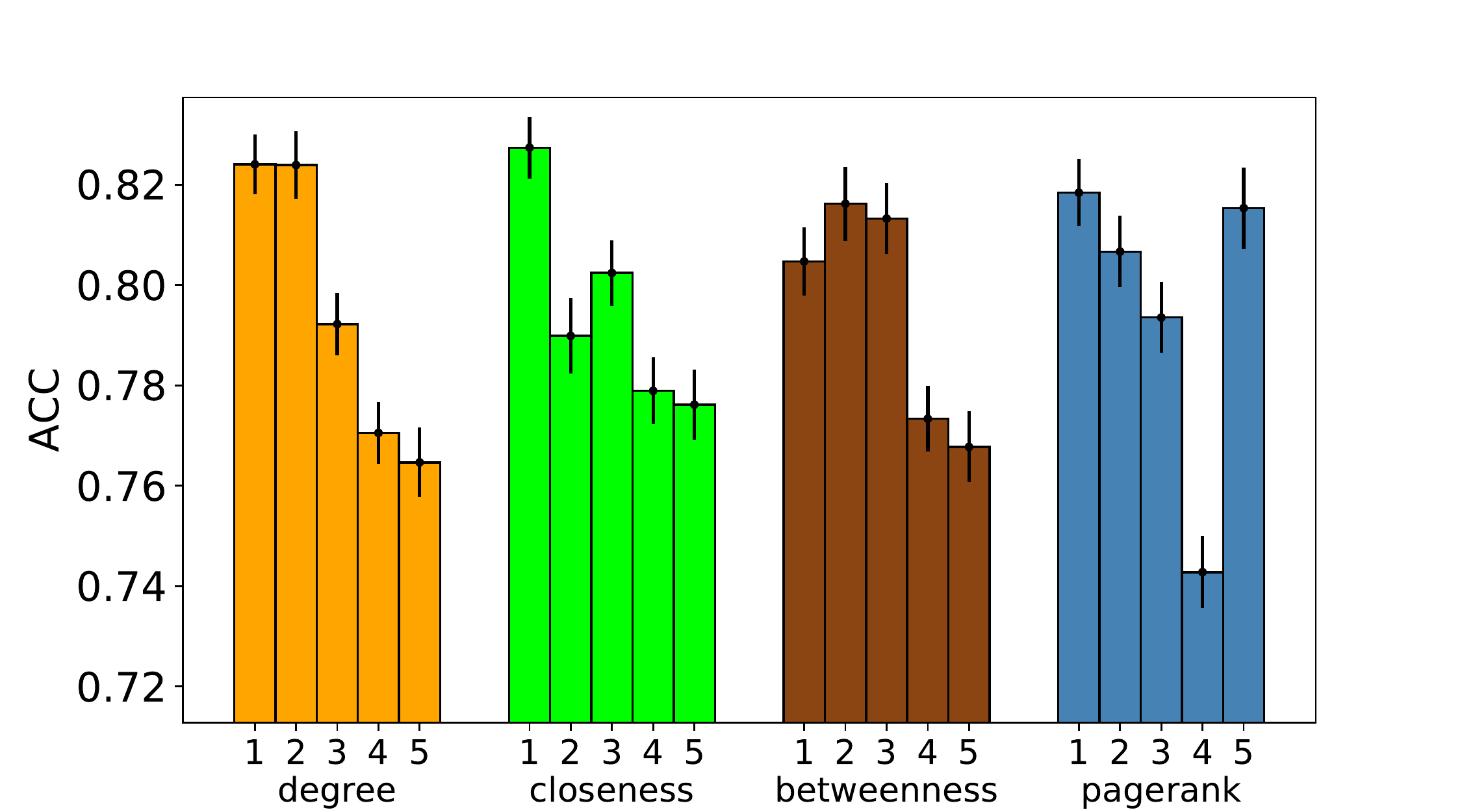}
\end{minipage}
}%
\vskip -5pt
\caption{Test accuracy disparity across subgroups by node centrality. Each figure corresponds to the results of a pair of model and dataset, and each bar cluster corresponds to the subgroups defined by a certain centrality metric. In each cluster, the bars labeled from 1 to 5 represent subgroups with decreasing node centrality. Other settings are the same as Figure~\ref{fig:agg}.}
\vskip -17pt
\label{fig:cen}
\end{figure}

\vpara{Experiment results.} First, as shown in Figure~\ref{fig:agg}, there is a clear trend that the accuracy of a test subgroup decreases as the aggregated-feature distance between the test subgroup and the training set increases. And the trend is consistent for all 4 GNN models on all the datasets we test on (except for APPNP on Cora). This result verifies the existence of accuracy disparity suggested by Theorem~\ref{thm:main}. 

Second, we observe in Figure~\ref{fig:raw_distance} that there is a similar trend when we split subgroups by the geodesic distance. This suggests that the geodesic distance on the graph can sometimes be used as a simpler indicator in practice for machine learning fairness on graph-structured data. Using such a classical graph metric as an indicator also helps us connect graph-based machine learning to network theory, especially to understandings about social networks, to better analyze fairness issues of machine learning on social networks, where high-stake decisions related to human subjects may be involved. 

Furthermore, as shown in Figure~\ref{fig:cen}, there is no clear monotonic trend for test accuracy when we split subgroups by node centrality, except for some particular combinations of GNN model and dataset. Empirically, the common node centrality metrics are not as good as the geodesic distance in terms of capturing the accuracy disparity. This contrast highlights the importance of the insight provided by our analysis: the ``distance'' to the training set, rather than some graph characteristics of the test nodes alone, is the key predictor of test accuracy. 

Finally, it is intriguing that, in both Figure~\ref{fig:agg}~and~\ref{fig:raw_distance}, the test accuracy of MLP (which does not use the graph structure) also decreases as the distance of a subgroup to the training set increases. This result is perhaps not surprising if the subgroups were defined by distance on the original node features, as MLP can be viewed as a special GNN where the feature aggregation function is an identity mapping, so the ``aggregated features'' for MLP essentially equal to the original features. Our theoretical analysis can then be similarly applied to MLP. The question is why there is also an accuracy disparity w.r.t. the aggregated-feature distance and the geodesic distance. We suspect this is because these datasets present homophily, i.e., original (non-aggregated) features of geodesically closer nodes tend to be more similar. As a result, a subgroup with smaller geodesic distance may also have closer node features to the training set. To verify this hypothesis, we repeat the experiments in Figure~\ref{fig:agg}, but with independent noises added to node features such that they become less homophilous. As in Figure~\ref{fig:agg_rand}, the decreasing pattern of test accuracy across subgroups remains for the 4 GNNs on all datasets; while for MLP, the pattern disappears on Cora and Pubmed and becomes less sharp on Citeseer.

\begin{figure}
\centering
\subfloat[Cora.]{
\begin{minipage}[t]{0.33\linewidth}
\centering
\includegraphics[width=\linewidth]{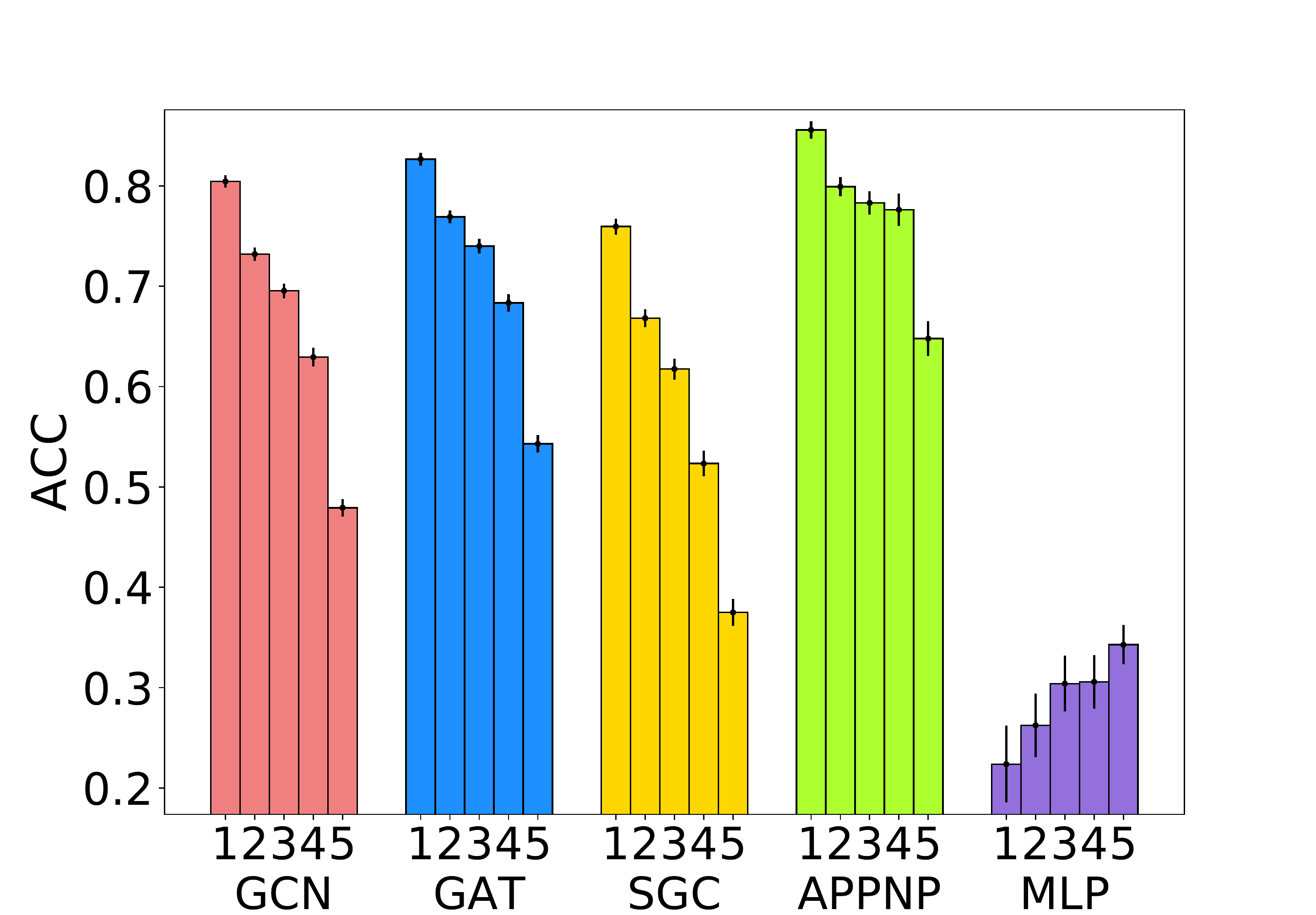}
\end{minipage}%
}%
\subfloat[Citeseer.]{
\begin{minipage}[t]{0.33\linewidth}
\centering
\includegraphics[width=\linewidth]{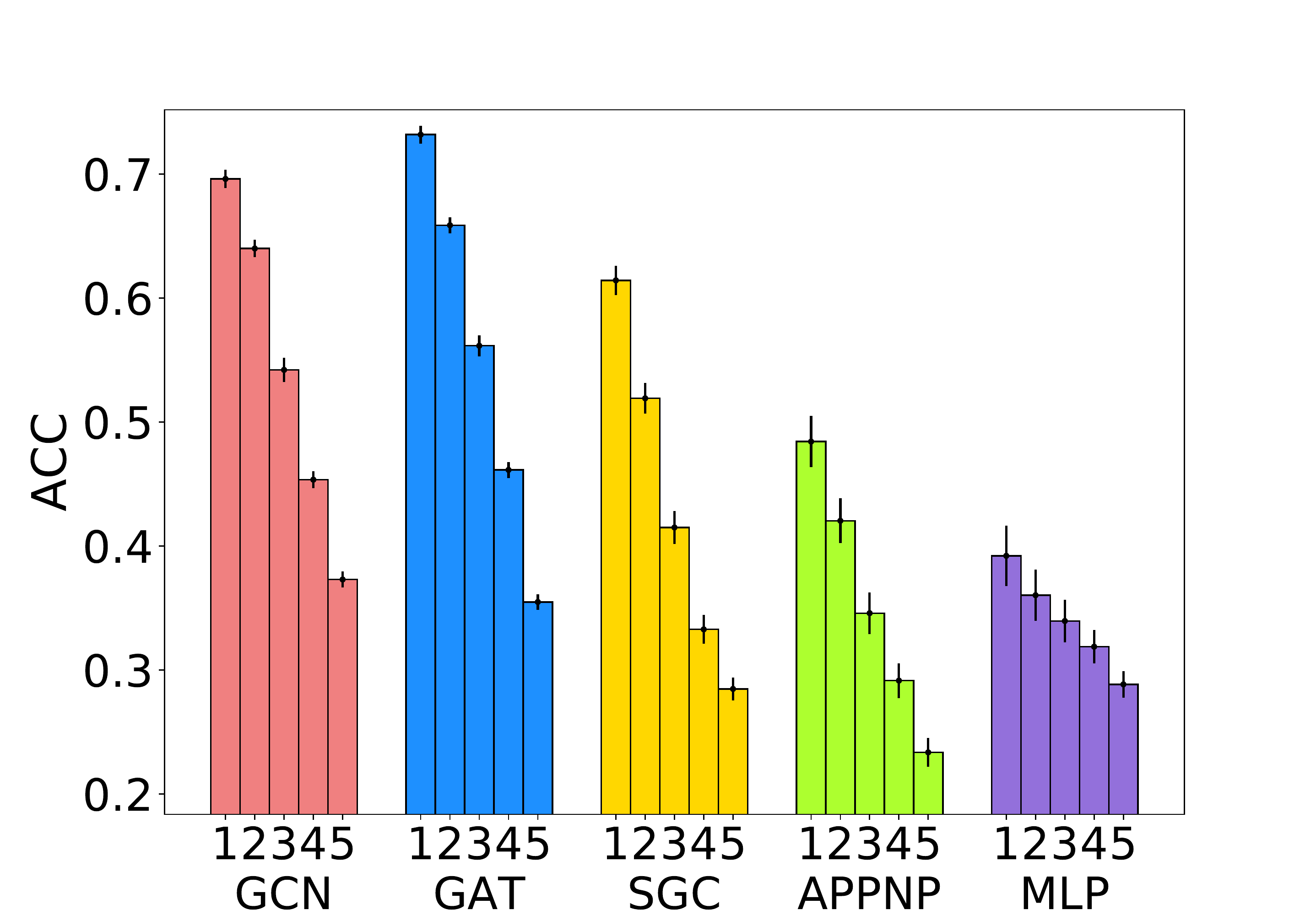}
\end{minipage}%
}%
\subfloat[Pubmed.]{
\begin{minipage}[t]{0.33\linewidth}
\centering
\includegraphics[width=\linewidth]{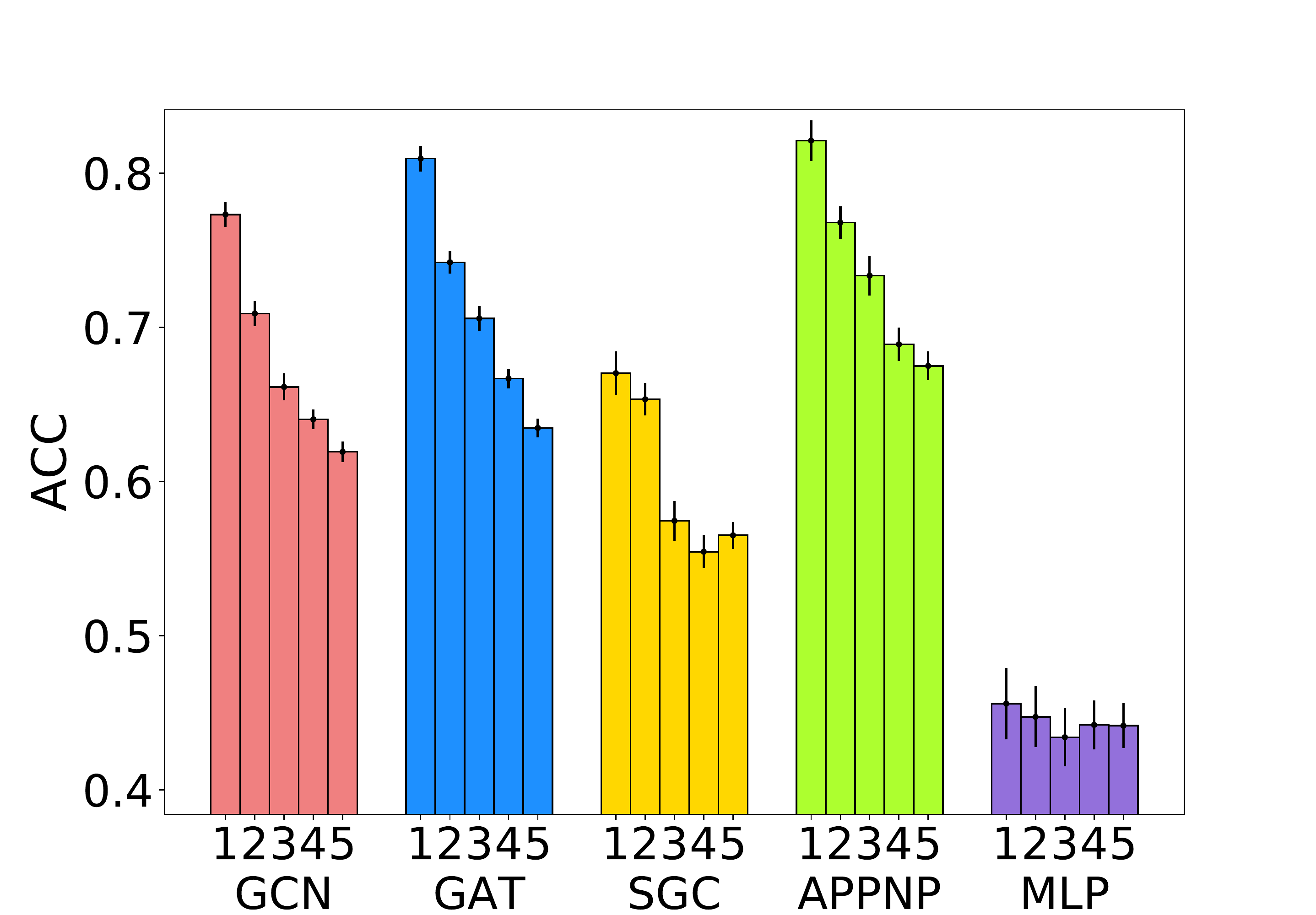}
\end{minipage}
}%
\vskip -7pt
\caption{Test accuracy disparity across subgroups by aggregated-feature distance, experimented with noisy features. The experiment and plot settings are the same as Figure~\ref{fig:agg}, except for the node features are perturbed by independent noises such that they are less homophilous.}
\vskip -20pt
\label{fig:agg_rand}
\end{figure}

\begin{figure}
\centering
\subfloat[GCN on Cora.]{
\begin{minipage}[t]{0.33\linewidth}
\centering
\includegraphics[width=\linewidth]{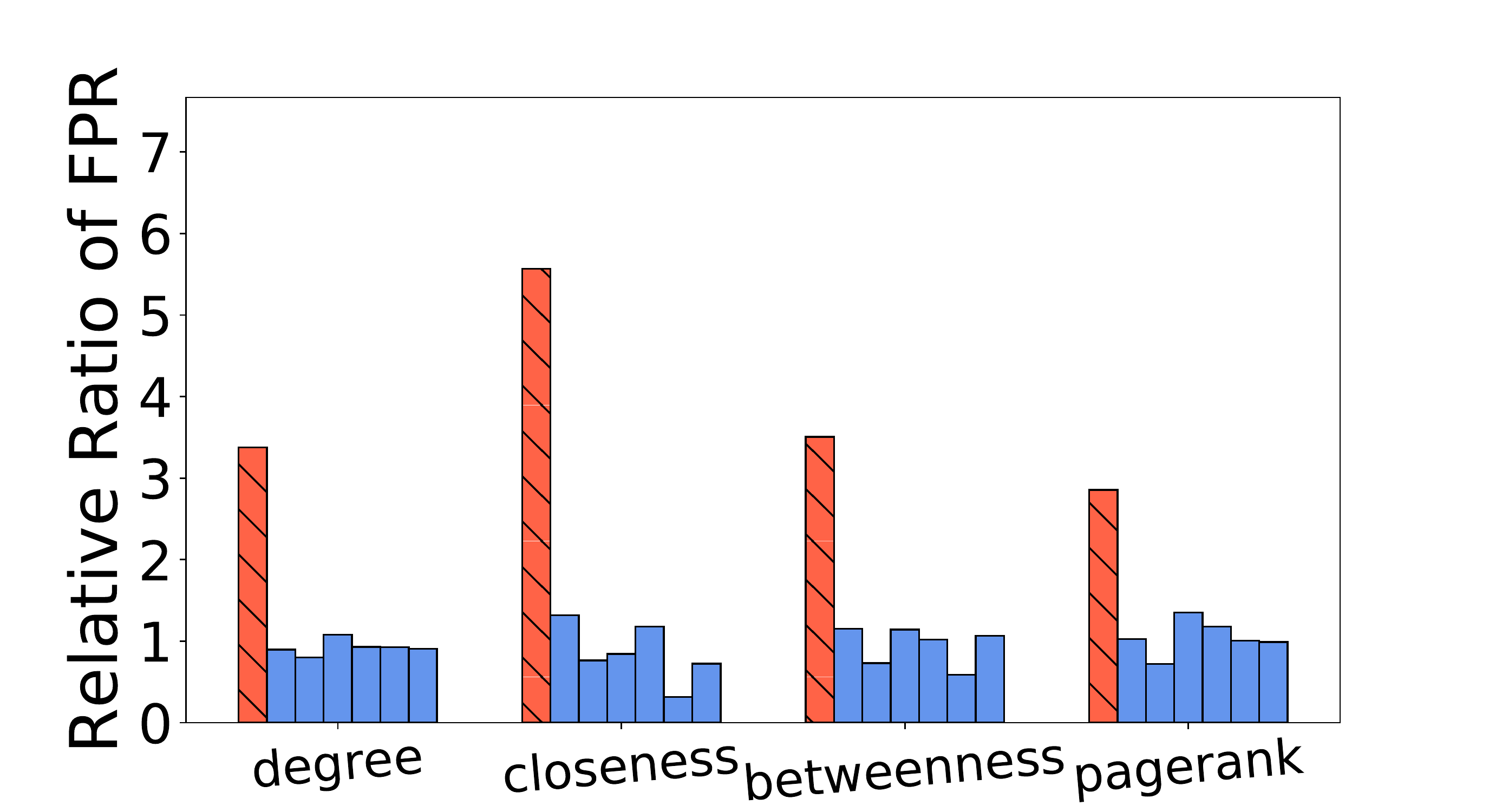}
\end{minipage}%
}%
\subfloat[GAT on Cora.]{
\begin{minipage}[t]{0.33\linewidth}
\centering
\includegraphics[width=\linewidth]{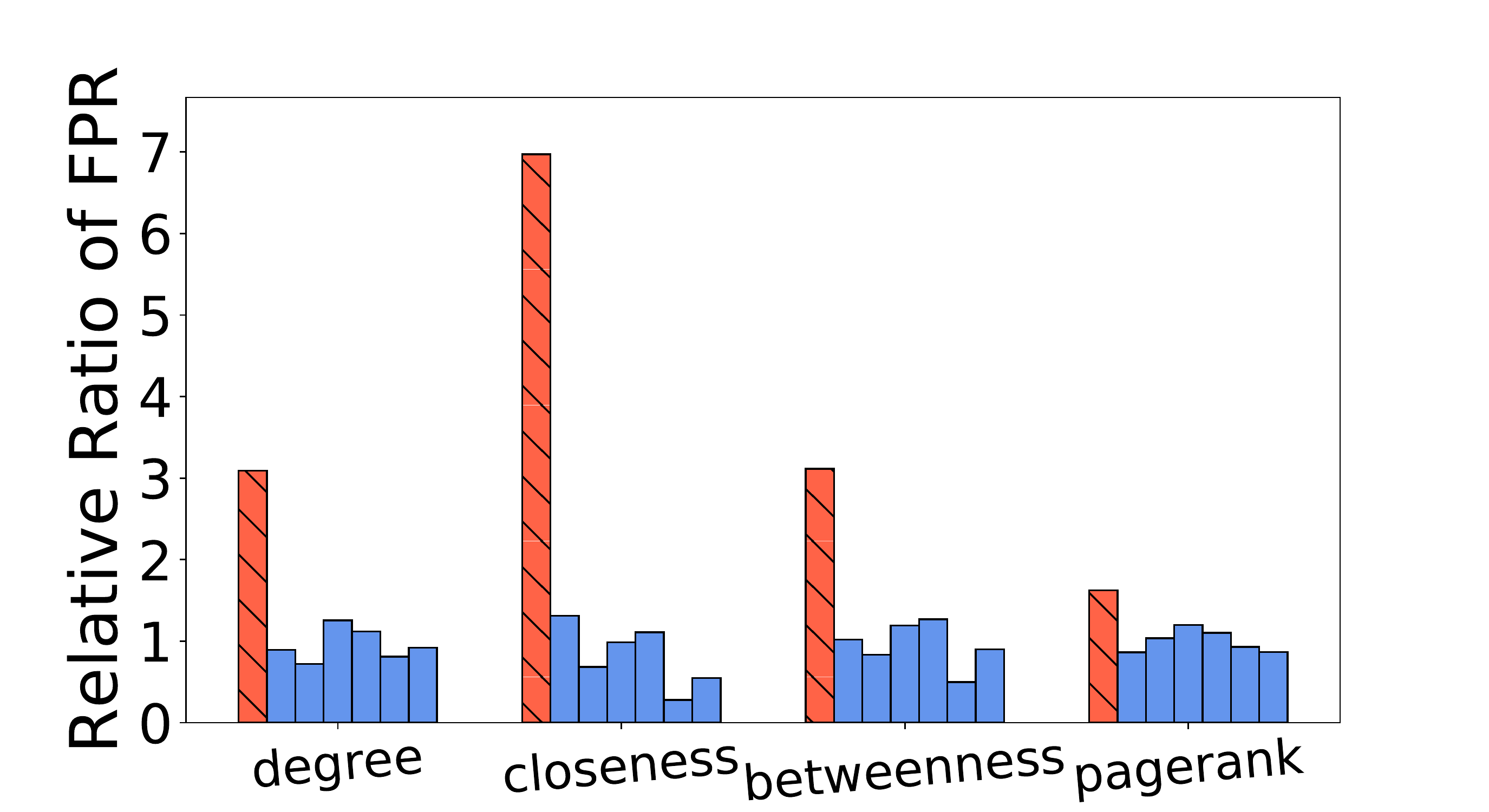}
\end{minipage}%
}%
\subfloat[MLP on Cora.]{
\begin{minipage}[t]{0.33\linewidth}
\centering
\includegraphics[width=\linewidth]{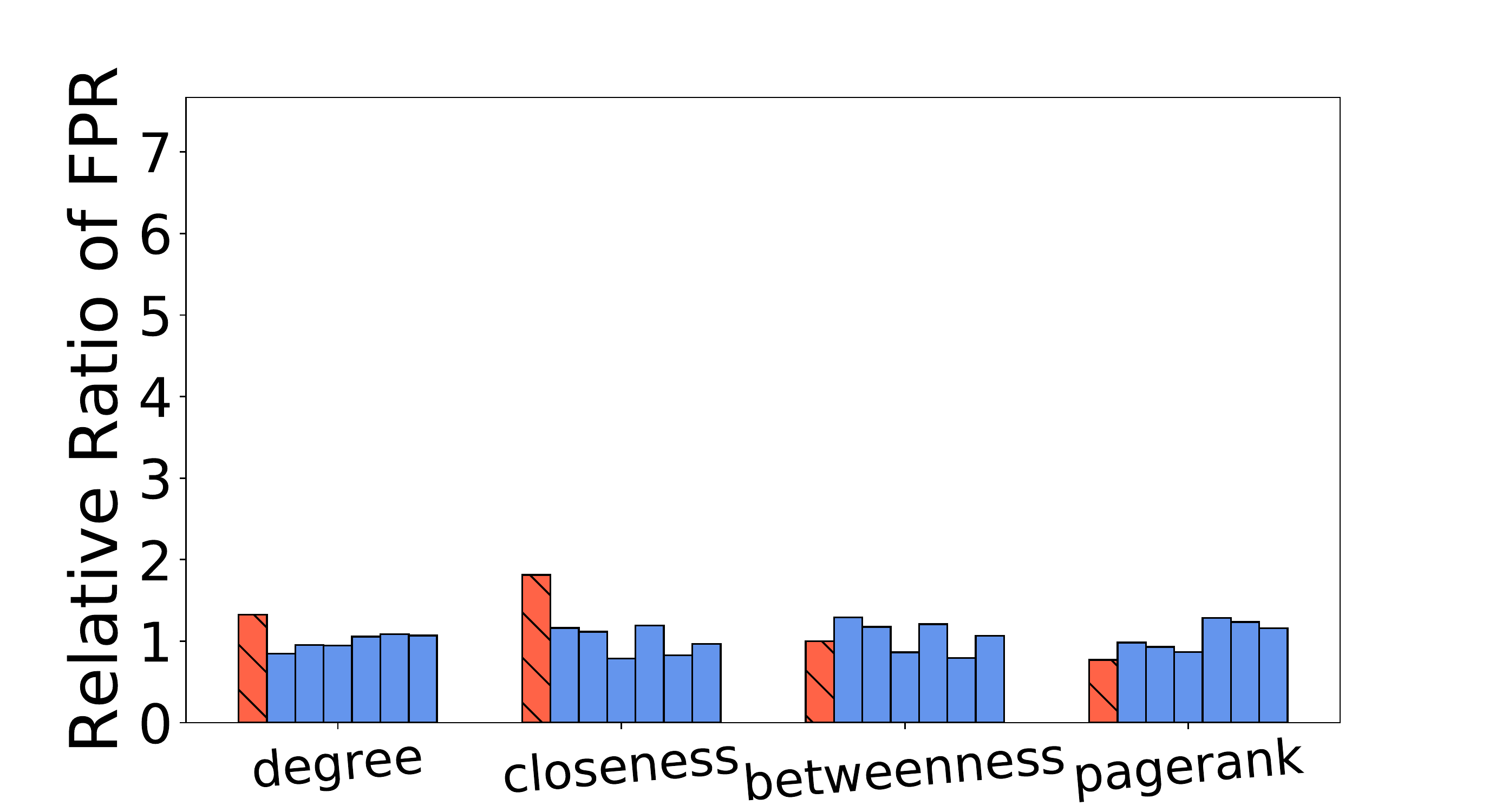}
\end{minipage}
}%
\vskip -7pt
\caption{Relative ratio between the FPR under biased training node selection and the FPR under uniform training node selection. Each bar in each cluster corresponds to a class (there are 7 classes in total). The red shaded bar indicates the class with high centrality training nodes under the biased setup. Each cluster corresponds to a centrality metric being used for the biased node selection.}
\vskip -17pt
\label{fig:label}
\end{figure}

\subsection{Impact of Biased Training Node Selection}
\label{sec:exp-biased}

In all the previous experiments, we follow the standard GNN training setup where 20 training nodes are uniformly sampled for each class. Next we investigate the impact if the selection of training nodes is biased, verifying our discussions in Section~\ref{sec:implication-fairness}. We will demonstrate that the node centrality scores of the training nodes play an important role in the learned GNN model. 

We choose a ``dominant class'' and construct a manipulated training set. For each class, we still sample 20 training nodes but in a biased way. For the dominant class, the sample is biased towards nodes of high centrality; while for other classes, the sample is biased towards nodes of low centrality. We evaluate the relative ratio of False Positive Rate (FPR) for each class between the setup using the manipulated training set and the setup using a uniformly sampled training set. 

As shown in Figure~\ref{fig:label}, compared to MLP, the GNN models have significantly worse FPR for the dominant class when the training nodes are biased. This is because, after feature aggregation, there will be a larger proportion of test nodes that are closer to the training nodes of higher centrality. And the learned GNN model will be heavily biased towards the training labels of these nodes.

\section{Conclusion}

We present a novel PAC-Bayesian analysis for the generalization ability of GNNs on node-level semi-supervised learning tasks. As far as we know, this is the first generalization bound for GNNs for non-IID node-level tasks without strong assumptions on the data generating mechanism. One advantage of our analysis is that it can be applied to arbitrary subgroups of the test nodes, which allows us to investigate an accuracy-disparity style of fairness for GNNs. Both the theoretical and empirical results suggest that there is an accuracy disparity across subgroups of test nodes that have varying distance to the training set, and nodes with larger distance to the training nodes suffer from a lower classification accuracy. In the future, we would like to utilize our theory to analyze the fairness of GNNs on real-world applications and develop principled methods to mitigate the unfairness. 

\subsection*{Acknowledgements}
This work was in part supported by the National Science Foundation under grant number 1633370. The authors claim no competing interests.

\bibliographystyle{plainnat}
\bibliography{citation}

\newpage
\tableofcontents
\newpage
\appendix
\section{Proofs}

\subsection{Proof of Theorem~\ref{thm:PAC-Bayes}}
\label{appendix:proof-PAC-Bayes}

We first introduce three lemmas whose proofs can be found in the referred literature.
\begin{lemma}[Hoeffding's Inequality for Bounded Random Variables~\citep{hoeffding1994probability}]
	\label{lemma:hoeffding}
	Suppose $X_1, X_2, \ldots, X_n$ are independent random variables with $a_i \le X_i\le b_i, \forall i = 1,2\ldots, n$. Let $\bar{X} = \frac{1}{n} \sum_{i=1}^n X_i$. Then, for any $t>0$,
	\[\Pr(|\bar{X} - \E\bar{X}| > t) \le 2 e^{-\frac{n^2t^2}{\sum_{i=1}^n(b_i-a_i)^2}}.\]
\end{lemma}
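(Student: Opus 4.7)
The plan is to use the standard Chernoff-bound approach combined with Hoeffding's lemma on the moment generating function of a bounded centered random variable. First I would reduce the two-sided bound to a one-sided bound by symmetry: it suffices to show $\Pr(\bar X - \E\bar X > t) \le e^{-2n^2 t^2 / \sum_i (b_i - a_i)^2}$, since applying the same inequality to $-X_i$ (with roles of $a_i, b_i$ swapped) gives the matching bound on $\Pr(\bar X - \E \bar X < -t)$, and a union bound yields the factor of $2$.

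For the one-sided bound I would apply the exponential Markov inequality: for any $s > 0$,
\[
\Pr(\bar X - \E \bar X > t) = \Pr\bigl(e^{s \sum_i (X_i - \E X_i)} > e^{s n t}\bigr) \le e^{-snt} \prod_{i=1}^n \E e^{s(X_i - \E X_i)},
\]
where the factorization uses independence of the $X_i$'s. The key technical step, Hoeffding's lemma, then bounds each factor: for a random variable $Y$ with $a \le Y \le b$ and $\E Y = 0$, one has $\E e^{sY} \le e^{s^2 (b-a)^2 / 8}$. I would prove this by writing $Y$ as a convex combination of the endpoints $a$ and $b$, using convexity of $y \mapsto e^{sy}$ to bound $e^{sY} \le \frac{b - Y}{b-a} e^{sa} + \frac{Y - a}{b-a} e^{sb}$, then taking expectations and analyzing the resulting function of $s$ via its second-derivative bound (the classical argument that reduces it to showing $\varphi''(u) \le 1/4$ for a particular log-MGF $\varphi$).

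Combining these pieces gives $\Pr(\bar X - \E \bar X > t) \le \exp\bigl(-snt + \tfrac{s^2}{8} \sum_i (b_i - a_i)^2\bigr)$ for every $s > 0$. I would then optimize over $s$, with the minimizer $s^* = 4 n t / \sum_i (b_i - a_i)^2$, producing the exponent $-2 n^2 t^2 / \sum_i (b_i - a_i)^2$. Doubling for the two-sided tail completes the proof.

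The main obstacle is the proof of Hoeffding's lemma, specifically the calculus argument bounding the second derivative of the cumulant generating function of the tilted Bernoulli-like distribution obtained from the endpoint convex combination; the rest of the proof is a routine Chernoff computation. Since this is a classical lemma with a well-known short proof, I expect no genuine difficulty, only the need to be careful with the centering and the explicit constant $1/8$ that produces the final factor of $2$ in the exponent.
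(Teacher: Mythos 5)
Your proof is correct, and it is the classical Chernoff-plus-Hoeffding's-lemma argument; the paper itself gives no proof of this lemma at all, simply citing Hoeffding and stating that the proof can be found in the referenced literature, so there is nothing internal to compare against step by step. Two remarks are worth making. First, your optimization is right: with the factor $e^{s^2(b-a)^2/8}$ from Hoeffding's lemma and the choice $s^* = 4nt/\sum_{i=1}^n(b_i-a_i)^2$, you obtain the exponent $-2n^2t^2/\sum_{i=1}^n(b_i-a_i)^2$, which is \emph{stronger} than the bound as stated in the lemma (whose exponent is $-n^2t^2/\sum_{i=1}^n(b_i-a_i)^2$, i.e.\ it is missing the usual factor of $2$); your version therefore implies the stated one a fortiori. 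Second, this discrepancy actually matters for the paper: in the proof of Theorem~\ref{thm:PAC-Bayes} the authors invoke the lemma for Bernoulli losses in $[0,1]$ and write $\Pr(|\Lg_0(h)-\hLg_0(h)|\ge t)\le 2e^{-2N_0t^2}$, which is the sharp form you prove, not the weakened form displayed in the lemma statement (that weaker form would only give $2e^{-N_0 t^2}$ and, after Lemma~\ref{lemma:subgaussian}, a factor $e^{\lambda^2/(2N_0)}$ rather than $e^{\lambda^2/(4N_0)}$). So your blind proof is not only valid but recovers exactly the constant the downstream argument relies on.
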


\begin{lemma}[Sub-Gaussianity]
	\label{lemma:subgaussian}
	If $X$ is a centered random variable, i.e., $\E X = 0$, and if $\exists \nu > 0$, for any $t > 0$, 
	\[\Pr(|X| > t)\le 2e^{-\nu t^2}.\]
	Then, for any $\lambda > 0$, 
	\[\E e^{\lambda X} \le e^{\frac{\lambda^2}{2\nu}}.\]
\end{lemma}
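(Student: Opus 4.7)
The statement is a standard converse direction relating a sub-Gaussian tail bound to a sub-Gaussian moment generating function bound, and the plan is to exploit the centering $\E X = 0$ together with a moment-series expansion controlled by the layer-cake formula. First I would write
\begin{align*}
\E e^{\lambda X} \;=\; 1 \;+\; \lambda\, \E X \;+\; \sum_{k=2}^{\infty} \frac{\lambda^k\, \E X^k}{k!} \;=\; 1 \;+\; \sum_{k=2}^{\infty} \frac{\lambda^k\, \E X^k}{k!},
\end{align*}
using $\E X = 0$ to remove the linear term. For each $k \ge 2$, I would bound $|\E X^k| \le \E |X|^k$ and apply the layer-cake identity together with the assumed tail bound to obtain
\begin{align*}
\E |X|^k \;=\; k\int_0^\infty t^{k-1}\Pr(|X| > t)\, dt \;\le\; 2k \int_0^\infty t^{k-1} e^{-\nu t^2}\, dt \;=\; \frac{k\,\Gamma(k/2)}{\nu^{k/2}},
\end{align*}
after the substitution $u = \nu t^2$. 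This reduces everything to combinatorial control of a power series in $\lambda^2/\nu$.

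Next I would compare the resulting moment series to the target Taylor expansion $e^{\lambda^2/(2\nu)} = \sum_{m} \lambda^{2m}/(2^m m!\, \nu^m)$. To avoid tracking signs of odd moments, I would symmetrize: take an independent copy $X'$ of $X$ and use Jensen's inequality, $\E e^{-\lambda X'} \ge e^{-\lambda\, \E X'} = 1$, so that
\begin{align*}
\E e^{\lambda X} \;\le\; \E e^{\lambda (X - X')}.
\end{align*}
Since $X - X'$ is symmetric, all odd powers vanish from its Taylor expansion, and its tail is controlled by that of $X$ up to an absolute constant (via a two-sided union bound). The even-moment comparison then reduces to elementary Gamma-function identities ($\Gamma(m) = (m-1)!$ and $(2m)! \ge 2^m m!\, m!$), which is exactly what is needed to dominate the bounded series by $e^{\lambda^2/(2\nu)}$ term by term.

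The part I expect to be the main obstacle is matching the tight constant $1/(2\nu)$ in the exponent. A direct layer-cake estimate on the MGF, of the form $\E e^{\lambda X} \le 1 + 2\lambda \int_0^\infty e^{\lambda u - \nu u^2}\, du$, completes the square to $e^{\lambda^2/(4\nu)}$ but leaves a $\sqrt{\pi/\nu}$ Gaussian integral and an inconvenient $O(\lambda)$ prefactor, yielding only a bound of the form $e^{c\lambda^2/\nu}$ with $c > 1/2$. The symmetrization-plus-moment-series route above is what removes the prefactor and delivers the sharper exponent, but one has to be careful about how much the effective tail parameter for $X - X'$ degrades relative to $\nu$ for $X$, since this degradation directly affects the final constant. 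Once that bookkeeping is handled, the remaining summation is routine.
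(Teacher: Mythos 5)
The paper does not actually prove this lemma --- it is imported wholesale from \citet{rivasplata2012subgaussian} (Theorem 3.1) --- so there is no in-paper argument to compare against. Your strategy (expand the moment generating function, control absolute moments via the layer-cake formula, symmetrize to remove the odd terms) is the standard route for tail-to-MGF implications of this kind, and the individual steps you write down are correct as far as they go: $\E|X|^k \le k\,\Gamma(k/2)\,\nu^{-k/2}$ and $\E e^{\lambda X}\le \E e^{\lambda(X-X')}$ both check out.

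The genuine gap sits exactly at the step you defer as ``bookkeeping.'' The two-sided union bound gives $\Pr(|X-X'|>t)\le 4e^{-\nu t^2/4}$: the tail parameter drops from $\nu$ to $\nu/4$ and the prefactor doubles. Feeding this into your even-moment series and using $(2m)!\ge 2^m (m!)^2$ yields $\E e^{\lambda(X-X')}\le 1+4\bigl(e^{2\lambda^2/\nu}-1\bigr)\le e^{8\lambda^2/\nu}$, a factor of $16$ off in the exponent, and no amount of care in this route recovers $1/(2\nu)$. In fact that constant is unattainable from the stated hypotheses: take $X$ symmetric with $\Pr(|X|>t)=\min\bigl(1,2e^{-\nu t^2}\bigr)$. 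This $X$ is centered and satisfies the tail condition, yet $\E X^2=(1+\ln 2)/\nu>1/\nu$, so $\E e^{\lambda X}-e^{\lambda^2/(2\nu)}=\tfrac{\lambda^2}{2}\bigl(\E X^2-\tfrac{1}{\nu}\bigr)+O(\lambda^4)>0$ for all sufficiently small $\lambda>0$. The lemma as stated is therefore false, and any correct argument of your type can only deliver $\E e^{\lambda X}\le e^{C\lambda^2/\nu}$ for some absolute constant $C>1/2$. This is harmless for the paper itself: in the one place the lemma is invoked, the variable is a centered average of $N_0$ independent $[0,1]$-valued terms, and Hoeffding's lemma applied directly to the summands gives $\E_{y^0} e^{\lambda(\Lg_0(h)-\hLg_0(h))}\le e^{\lambda^2/(8N_0)}$ with no tail-to-MGF conversion at all; a worse absolute constant would merely rescale the $\lambda^2/N_0$ term in Theorem~\ref{thm:PAC-Bayes}. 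But as a freestanding claim with the constant $1/(2\nu)$, neither your proof nor any other can establish it.
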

See \citet{rivasplata2012subgaussian} (Theorem 3.1) for the proof of Lemma~\ref{lemma:subgaussian}.

\begin{lemma} [Change-of-Measure Inequality, Lemma 17 in~\citet{germain2015risk}]
	\label{lemma:change-of-measure}
	For any two distributions $P$ and $Q$ defined on $\gH$, and any function $\psi: \gH\rightarrow \reals$,
	\[\E_{h\sim Q}[\psi(h)] \le \KL(Q\|P) + \ln\E_{h\sim P}[e^{\psi(h)}].\]
\end{lemma}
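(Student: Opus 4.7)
The plan is to use the classical Donsker--Varadhan variational argument, which rewrites $\E_{h\sim Q}[\psi(h)]$ as a KL term plus a log--moment generating piece controlled by Jensen's inequality. First, I would dispose of the degenerate cases: if $Q$ is not absolutely continuous with respect to $P$ then $\KL(Q\|P) = +\infty$ and the inequality is vacuous, and if $\E_{h\sim P}[e^{\psi(h)}] = +\infty$ the bound is again trivial. So I may assume $Q \ll P$ with Radon--Nikodym derivative $\frac{dQ}{dP}$, which is positive $Q$-almost surely.

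The central algebraic step is the tautological rewriting
\[\psi(h) \;=\; \ln\sbra e^{\psi(h)}\cdot \frac{dP}{dQ}(h)\cdot \frac{dQ}{dP}(h)\sket \;=\; \ln\frac{dQ}{dP}(h) \;+\; \ln\sbra e^{\psi(h)}\,\frac{dP}{dQ}(h)\sket,\]
valid $Q$-almost surely. Taking expectation under $Q$, the first summand is exactly $\KL(Q\|P)$ by definition of KL divergence. For the second summand, because $\ln$ is concave, Jensen's inequality gives
\[\E_{h\sim Q}\sbra \ln\sbra e^{\psi(h)}\,\frac{dP}{dQ}(h)\sket\sket \;\le\; \ln \E_{h\sim Q}\sbra e^{\psi(h)}\,\frac{dP}{dQ}(h)\sket.\]

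Finally, a change of measure collapses the remaining expectation: since $\E_{h\sim Q}\sbra f(h)\,\frac{dP}{dQ}(h)\sket = \E_{h\sim P}[f(h)]$ for any nonnegative measurable $f$, the right-hand side equals $\ln \E_{h\sim P}[e^{\psi(h)}]$. Adding the two pieces yields the claimed inequality. There is no substantive obstacle here---the proof is a textbook combination of Jensen plus a change of variables---and the only care required is measure-theoretic bookkeeping (existence of $\frac{dQ}{dP}$, handling $\pm\infty$ values, and integrability of $\psi\cdot \frac{dQ}{dP}$ under $Q$), all of which are absorbed into the tacit convention that both sides of the stated inequality are well defined.
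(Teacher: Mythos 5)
Your proof is correct: the paper itself does not prove this lemma but cites Lemma~17 of \citet{germain2015risk}, and your Donsker--Varadhan decomposition $\psi = \ln\frac{dQ}{dP} + \ln\bigl(e^{\psi}\frac{dP}{dQ}\bigr)$ followed by Jensen's inequality and a change of measure is exactly the standard argument behind that reference. The only cosmetic point is that your final change-of-measure step is an equality only when $P \ll Q$ as well; in general $\E_{h\sim Q}\bigl[e^{\psi(h)}\tfrac{dP}{dQ}(h)\bigr] = \int_{\{dQ/dP>0\}} e^{\psi}\,dP \le \E_{h\sim P}[e^{\psi(h)}]$, which is the direction you need anyway.
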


Then we can prove Theorem~\ref{thm:PAC-Bayes}. For convenience, we re-state it as Theorem~\ref{thm:PAC-Bayes-appendix} below.

\begin{theorem}[Subgroup Generalization of Stochastic Classifiers]
	\label{thm:PAC-Bayes-appendix}
	For any $0< m \le M$, for any $\lambda > 0$ and $\gamma \ge 0$, for any ``prior'' distribution $P$ on $\gH$ that is independent of the training data on $V_0$, with probability at least $1-\delta$ over the sample of $y^0$, for any $Q$ on $\gH$, we have\footnote{Theorem~\ref{thm:PAC-Bayes-appendix} also holds when we substitute $\Lgh_m(h)$ and $\Lgh_m(Q)$ as $\Lg_m(h)$ and $\Lg_m(Q)$ respectively. But we state the theorem in this form to ease the development of the later analysis.}
	\begin{equation}
		\label{eq:general-appendix}
		\Lgh_m(Q) \le \hLg_0(Q) + \frac{1}{\lambda}\sbra \KL(Q\|P) + \ln\frac{1}{\delta} + \frac{\lambda^2}{4N_0} + D^{\gamma}_{m, 0}(P;\lambda) \sket.
	\end{equation}
\end{theorem}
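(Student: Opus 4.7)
The plan is to follow the standard Alquier-style PAC-Bayesian scheme, adapted so that the loss we bound on the left-hand side (on the test subgroup $V_m$) is different from the empirical loss that drives the data-dependent fluctuations (on the training set $V_0$). The key idea is to insert the unobserved population loss $\Lg_0(h)$ as a bridge.

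First, I would decompose
\[
\lambda\bigl(\Lgh_m(h) - \hLg_0(h)\bigr) \;=\; \lambda\bigl(\Lgh_m(h) - \Lg_0(h)\bigr) \;+\; \lambda\bigl(\Lg_0(h) - \hLg_0(h)\bigr),
\]
exponentiate, and take expectation under $P$. The first summand collects into $\exp(D^\gamma_{m,0}(P;\lambda))$ exactly by the definition of the expected loss discrepancy; crucially, $\Lgh_m(h)$ is non-random in $y^0$ since the $y^m$-randomness has already been integrated out, so this factor is entirely deterministic in the training labels.

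For the second summand, I would use the problem setup, which guarantees that conditionally on the (fixed) features $X$ and graph $G$ the training labels $\{y_i\}_{i\in V_0}$ are independent; hence, for each fixed $h$, the margin loss $\hLg_0(h)$ is an average of $N_0$ independent $[0,1]$-valued indicators. Lemma~\ref{lemma:hoeffding} therefore shows that $\hLg_0(h) - \Lg_0(h)$ is sub-Gaussian, and Lemma~\ref{lemma:subgaussian} upgrades this to the moment-generating-function estimate $\E_{y^0}\exp\bigl(\lambda(\Lg_0(h)-\hLg_0(h))\bigr) \le \exp\bigl(\lambda^2/(4N_0)\bigr)$. Because $P$ is independent of the training labels, Fubini lets me interchange $\E_{y^0}$ and $\E_{h\sim P}$, yielding
\[
\E_{y^0}\E_{h\sim P}\exp\bigl(\lambda(\Lgh_m(h) - \hLg_0(h))\bigr) \;\le\; \exp\Bigl(D^\gamma_{m,0}(P;\lambda) \;+\; \lambda^2/(4N_0)\Bigr).
\]

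A single Markov step then turns this into a high-probability statement over $y^0$: with probability at least $1-\delta$,
\[
\ln \E_{h\sim P}\exp\bigl(\lambda(\Lgh_m(h) - \hLg_0(h))\bigr) \;\le\; \ln(1/\delta) \;+\; \lambda^2/(4N_0) \;+\; D^\gamma_{m,0}(P;\lambda).
\]
Finally, the change-of-measure inequality (Lemma~\ref{lemma:change-of-measure}) applied with $\psi(h) = \lambda(\Lgh_m(h) - \hLg_0(h))$ replaces $P$ by an arbitrary $Q$ at a cost of $\KL(Q\|P)$, on the same event; identifying $\E_{h\sim Q}\bigl[\lambda(\Lgh_m(h) - \hLg_0(h))\bigr] = \lambda(\Lgh_m(Q) - \hLg_0(Q))$ and dividing by $\lambda$ gives the claim. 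The main subtlety I expect is bookkeeping rather than technical difficulty: we must route all of the $y^0$-randomness into $\hLg_0(h) - \Lg_0(h)$ so that Hoeffding applies to independent bounded summands, invoke Markov \emph{before} the change-of-measure step so that the resulting bound is uniform in $Q$ on a single high-probability event, and use the independence of $P$ from $y^0$ to justify Fubini.
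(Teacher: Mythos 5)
Your proposal is correct and follows essentially the same route as the paper's proof: the same decomposition through the bridge term $\Lg_0(h)$, Hoeffding plus sub-Gaussianity for the moment generating function of $\hLg_0(h)-\Lg_0(h)$, Fubini via the independence of $P$ from $y^0$, a Markov step applied to a $Q$-independent quantity, and the change-of-measure inequality. The only cosmetic difference is the order in which Markov and change-of-measure are invoked, which does not affect the argument since the Markov step acts on $\E_{h\sim P}[\cdot]$ in both cases.
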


\begin{proof}
We prove the result by upper-bounding the quantity $\lambda (\Lgh_m(Q) - \hLg_0(Q))$. First, we have
\begin{align}
	& \lambda (\Lgh_m(Q) - \hLg_0(Q)) \nonumber \\
	\le & \E_{h\sim Q} \lambda (\Lgh_m(h) - \hLg_0(h)) \nonumber\\
	\le & \KL(Q\|P) + \ln\E_{h\sim P} e^{\lambda \sbra \Lgh_m(h) - \hLg_0(h)\sket}, \label{eq:change-of-measure}
\end{align}
where the first inequality is due to Jensen's inequality, and the last inequality is due to Lemma~\ref{lemma:change-of-measure}.

Next we would like to upper-bound the second term in the RHS of (\ref{eq:change-of-measure}). Note that the quantity $U := \E_{h\sim P} e^{\lambda \sbra \Lgh_m(h) - \hLg_0(h)\sket}$ is a random variable with the randomness coming from the sample of node labels $y^0$ for $V_0$. Also note that $P$ is independent of $y^0$. Applying Markov's inequality to $U$, we have for any $\delta > 0$, with probability at least $1-\delta$ over the sample of $y^0$, 
\[U \le \frac{1}{\delta}\E_{y^0}U,\]
and hence, 
\[\ln U \le \ln\frac{1}{\delta}\E_{y^0}U = \ln\frac{1}{\delta} + \ln\E_{y^0}U.\]

Then we need to upper-bound the quantity $\ln \E_{y^0} U$. We can re-write it as 
\begin{equation}
	\label{eq:change-expect}
	\ln \E_{y^0} U = \ln \E_{y^0} \E_{h\sim P} e^{\lambda \sbra \Lgh_m(h) - \hLg_0(h)\sket}
	= \ln \E_{h\sim P} \E_{y^0} e^{\lambda \sbra \Lgh_m(h) - \hLg_0(h)\sket}.
\end{equation}

For a fixed model $h$, 
\begin{align}
	& \E_{y^0} e^{\lambda \sbra \Lgh_m(h) - \hLg_0(h)\sket} \nonumber \\
	=& \E_{y^0} e^{\lambda \sbra \Lgh_m(h) - \Lg_0(h) + \Lg_0(h) - \hLg_0(h)\sket} \nonumber \\
	=& \E_{y^0} e^{\lambda \sbra \Lgh_m(h) - \Lg_0(h)\sket}  e^{\lambda \sbra \Lg_0(h) - \hLg_0(h)\sket} \nonumber \\
	=&  e^{\lambda \sbra \Lgh_m(h) - \Lg_0(h)\sket} \E_{y^0} e^{\lambda \sbra \Lg_0(h) - \hLg_0(h)\sket}. \label{eq:pmR0}
\end{align}

In the following we will give an upper bound on $\E_{y^0} e^{\lambda \sbra \Lg_0(h) - \hLg_0(h)\sket}$ that is independent of $h$. Recall that 
\[\hLg_0(h) = \frac{1}{N_0} \sum_{i\in V_0}\ind{h_i(X, G)[y_i] \le \gamma + \max_{k\neq y_i}h_i(X, G)[k]},\]
where the node labels are independently sampled (though not from the identical distribution), so $\hLg_0(h)$ is the empirical mean of $N_0$ independent Bernoulli random variables and $\Lg_0(h)$ is the expectation of $\hLg_0(h)$. By Lemma~\ref{lemma:hoeffding}, for any $t > 0$,
\[\Pr(|\Lg_0(h) - \hLg_0(h)| \ge t) \le 2e^{-2N_0 t^2},\]
and hence $\Lg_0(h) - \hLg_0(h)$ is sub-Gaussian. Further by Lemma~\ref{lemma:subgaussian}, we have
\[\E_{y^0}e^{\lambda \sbra \Lg_0(h) - \hLg_0(h)\sket} \le e^{\frac{\lambda^2}{4N_0}},\]
which implies that
\begin{equation}
	\label{eq:laplace-phi}
	\E_{y^0}e^{\lambda \sbra \Lg_0(h) - \hLg_0(h)\sket} \le e^{\frac{\lambda^2}{4N_0}},
\end{equation}
Therefore, plugging (\ref{eq:pmR0}) and (\ref{eq:laplace-phi}) back into (\ref{eq:change-expect}), we have
\begin{align*}
	& \ln \E_{y^0} U \\
	\le & \ln \E_{h\sim P} e^{\lambda \sbra \Lgh_m(h) - \Lg_0(h)\sket} e^{\frac{\lambda^2}{4N_0}} \\
	=& D^{\gamma}_{m,0}(P;\lambda) + \frac{\lambda^2}{4N_0}.
\end{align*}
Finally, plugging everything back into (\ref{eq:change-of-measure}), we get
\begin{align*}
	& \lambda (\Lgh_m(Q) - \hLg_0(Q)) \\
	\le & \KL(Q\|P) + \ln \E_{h\sim P} e^{\lambda \sbra \Lgh_m(h) - \hLg_0(h)\sket} \\
	\le & \KL(Q\|P) + \ln\frac{1}{\delta} + \frac{\lambda^2}{4N_0} + D^{\gamma}_{m,0}(P;\lambda).
\end{align*}
Rearranging the terms gives us the final result
\[\Lgh_m(Q) \le \hLg_0(Q) + \frac{1}{\lambda}\sbra \KL(Q\|P) + \ln\frac{1}{\delta} + \frac{\lambda^2}{4N_0} + D^{\gamma}_{m, 0}(P;\lambda) \sket.\]

\end{proof}

\subsection{Proof of Theorem~\ref{thm:PAC-Bayes-deterministic}}
\label{appendix:proof-PAC-Bayes-deterministic}
We re-state Theorem~\ref{thm:PAC-Bayes-deterministic} as Theorem~\ref{thm:PAC-Bayes-deterministic-appendix} below.
\begin{theorem} [Subgroup Generalization of Deterministic Classifiers]
	\label{thm:PAC-Bayes-deterministic-appendix}
	Let $\tilde{h}$ be any classifier in $\gH$. For any $0< m \le M$, for any $\lambda > 0$ and $\gamma \ge 0$, for any ``prior'' distribution $P$ on $\gH$ that is independent of the training data on $V_0$, with probability at least $1-\delta$ over the sample of $y^0$, for any $Q$ on $\gH$ such that $\Pr_{h\sim Q}\sbra \max_{i\in V_0\cup V_m}\|h_i(X, G) - \tilde{h}_i(X, G)\|_{\infty} < \frac{\gamma}{8}\sket > \frac{1}{2}$, we have
	\begin{equation}
		\label{eq:general-deterministic-appendix}
		\risk_m(\tilde{h}) \le \hLg_0(\tilde{h}) + \frac{1}{\lambda}\sbra 2(\KL(Q\|P)+1) + \ln\frac{1}{\delta} + \frac{\lambda^2}{4N_0} + D^{\gamma/2}_{m, 0}(P;\lambda) \sket.
	\end{equation}
\end{theorem}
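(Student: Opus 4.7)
The plan is to reduce this deterministic bound to the stochastic bound in Theorem~\ref{thm:PAC-Bayes} by a standard perturbation argument in the spirit of \citet{langford2003pac} and \citet{neyshabur2018pac}, adapted to our non-IID semi-supervised setup. Define the ``good'' set of classifiers
\[ \gH' := \{h \in \gH : \max_{i \in V_0 \cup V_m} \|h_i(X,G) - \tilde{h}_i(X,G)\|_\infty < \gamma/8\}, \]
which by the hypothesis on $Q$ satisfies $Q(\gH') > 1/2$. Let $Q'$ denote the restriction of $Q$ to $\gH'$, i.e., $Q'(A) = Q(A \cap \gH')/Q(\gH')$ for every measurable $A \subseteq \gH$.

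Next, I would establish two per-node perturbation inequalities, valid for every $h \in \gH'$ and every $i \in V_0 \cup V_m$, by triangle inequality on the outputs: (i) if $\tilde{h}$ misclassifies node $i$, then $h$ incurs margin loss at level $\gamma/4$ at $i$, because a coordinate-wise perturbation of size at most $\gamma/8$ shifts the margin $h_i[y_i] - \max_{k\neq y_i} h_i[k]$ by at most $\gamma/4$; and (ii) if $h$ incurs margin loss at level $\gamma/2$ at $i$, then $\tilde{h}$ incurs margin loss at level $\gamma$ at $i$, by the same argument. Averaging over $V_m$ (resp.\ $V_0$), then taking $\E_{h\sim Q'}$ followed by $\E_{y^m}$ (resp.\ $\E_{y^0}$), I obtain the sandwich inequalities $\risk_m(\tilde{h}) \le \Lgq_m(Q')$ and $\hLgh_0(Q') \le \hLg_0(\tilde{h})$.

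Having these, I would apply Theorem~\ref{thm:PAC-Bayes} to the stochastic classifier $Q'$ with the margin parameter replaced by $\gamma/2$. Since $P$ is independent of $y^0$ and $Q'$ is a legitimate posterior, this yields, with probability at least $1-\delta$ over the sample of $y^0$,
\[ \Lgq_m(Q') \le \hLgh_0(Q') + \frac{1}{\lambda}\Bigl[\KL(Q'\|P) + \ln\tfrac{1}{\delta} + \tfrac{\lambda^2}{4N_0} + D^{\gamma/2}_{m,0}(P;\lambda)\Bigr]. \]
Chaining this with the two sandwich inequalities above gives exactly the claimed form, except that $\KL(Q'\|P)$ appears in place of $2(\KL(Q\|P)+1)$.

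The main technical step, and the only place real care is needed, is controlling $\KL(Q'\|P)$ by $2(\KL(Q\|P)+1)$ using only $Q(\gH') > 1/2$. Starting from
\[ \KL(Q'\|P) = \frac{1}{Q(\gH')}\int_{\gH'} \ln\frac{dQ}{dP}\,dQ - \ln Q(\gH'), \]
I would lower-bound the missing part of the integral via the elementary inequality $x \ln x \ge x - 1$ applied pointwise to $f = dQ/dP$, which yields $\int_{\gH\setminus\gH'} \ln(dQ/dP)\,dQ \ge Q(\gH\setminus\gH') - P(\gH\setminus\gH') \ge -1$, and hence $\int_{\gH'}\ln(dQ/dP)\,dQ \le \KL(Q\|P) + 1$. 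Combined with $1/Q(\gH') < 2$ and $-\ln Q(\gH') < \ln 2$, this gives $\KL(Q'\|P) \le 2(\KL(Q\|P)+1)$ up to an absorbable constant, which is the only nonroutine piece. The remainder of the argument is perturbation bookkeeping plus a single invocation of Theorem~\ref{thm:PAC-Bayes}.
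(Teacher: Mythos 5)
Your proposal follows essentially the same route as the paper's proof: restrict $Q$ to the set $\gH_{\tilde{h}}$ of classifiers within $\gamma/8$ of $\tilde{h}$ on $V_0\cup V_m$, use the resulting $\gamma/4$ shift of pairwise logit differences to get the sandwich $\risk_m(\tilde{h}) \le \Lgq_m(Q')$ and $\hLgh_0(Q')\le \hLg_0(\tilde{h})$, invoke Theorem~\ref{thm:PAC-Bayes} at margin $\gamma/2$ with posterior $Q'$, and finally control $\KL(Q'\|P)$. The only place you diverge is the last step: the paper uses the exact decomposition $\KL(Q\|P) = Z_{Q'}\KL(Q'\|P) + (1-Z_{Q'})\KL(Q'^c\|P) - H(Z_{Q'})$ and drops the nonnegative term $\KL(Q'^c\|P)$, which yields $\KL(Q'\|P)\le \frac{1}{Z_{Q'}}(\KL(Q\|P)+1)\le 2(\KL(Q\|P)+1)$ exactly; your version via $x\ln x\ge x-1$ is sound but, as written, leaves an extra additive $-\ln Q(\gH')<\ln 2$ that is not actually absorbable into the stated constant. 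You can close this small gap either by switching to the paper's decomposition or by tightening your lower bound on the complement integral to $\int_{\gH\setminus\gH'}\ln(dQ/dP)\,dQ \ge Q(\gH\setminus\gH') - P(\gH\setminus\gH') \ge -Q(\gH')$, which gives $\KL(Q'\|P)\le \frac{1}{Q(\gH')}\KL(Q\|P) + 1 + \ln 2 \le 2(\KL(Q\|P)+1)$ as required. Everything else in your argument matches the paper.
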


\begin{proof}
	For simplicity, we write $h_i(X, G)$ and $\tilde{h}_i(X, G)$ as $h_i$ and $\tilde{h}_i$ in this proof. We first construct a distribution $Q'$ by restricting $Q$ on $\gH_{\tilde{h}}\subseteq \gH$, where
	\[\gH_{\tilde{h}} := \{h\in \gH\mid \max_{i\in V_0\cup V_m}\|h_i - \tilde{h}_i\|_{\infty} < \frac{\gamma}{8}\}.\]
	And $Q'$ is defined as
	\[Q'(h) = 
	\begin{cases}
		\frac{1}{Z_{Q'}}Q(h), & \text{if } h\in \gH_{\tilde{h}} \\
		0, & \text{otherwise}
	\end{cases},
	\]
	where $Z_{Q'} = \Pr_{h\sim Q}(h\in \gH_{\tilde{h}}) \ge \frac{1}{2}$ by the condition of the theorem.
	
	For any $h\in \gH_{\tilde{h}}$ and any sample $i\in V_0\cup V_m$, by definition of $\gH_{\tilde{h}}$, we have
	\[\max_{k,k'\in\{1,\ldots,K\}} |(\tilde{h}_i[k] - \tilde{h}_i[k']) - (h_i[k] - h_i[k'])| < \frac{\gamma}{4},\]
	which implies the following relationships:
	\[\risk_m(\tilde{h}) \le \Lgq_m(h), \quad \hLgh_0(h) \le \hLg_0(\tilde{h}).\]
	
	Therefore, with probability at least $1-\delta$ over the sample of $y^m$,
	\begin{align*}
		& \risk_m(\tilde{h}) \\
		\le & \E_{h\sim Q'} \Lgq_m(h) \\
		\le & \E_{h\sim Q'}\hLgh_0(h) + \frac{1}{\lambda}\sbra \KL(Q'\|P) + \ln\frac{1}{\delta} + \frac{\lambda^2}{4N_0} + D^{\gamma/2}_{m, 0}(P;\lambda) \sket \\
		\le & \hLg_0(\tilde{h}) + \frac{1}{\lambda}\sbra \KL(Q'\|P) + \ln\frac{1}{\delta} + \frac{\lambda^2}{4N_0} + D^{\gamma/2}_{m, 0}(P;\lambda) \sket,
	\end{align*}
	where the second inequality is due to the application of Theorem~\ref{thm:PAC-Bayes} by substituting $\gamma$ as $\gamma/2$ and $Q$ as $Q'$.
	
	Finally, to complete the proof, we only need to show 
	\[\KL(Q'\|P) \le 2(\KL(Q\|P) + 1).\] 
	Denote $\gH_{\tilde{h}}^c$ as the complement of $\gH_{\tilde{h}}$ and define $Q'^c$ as the distribution restricted to $\gH_{\tilde{h}}^c$ similarly as $Q'$. Define $H(x) :=-x\ln x - (1-x)\ln (1-x)$, which is the binary entropy function and we know $H(Z)\le 1$. Then
	\begin{align*}
		\KL(Q\|P) &= \int_{\gH_{\tilde{h}}} \ln\frac{dQ}{dP}dQ + \int_{\gH_{\tilde{h}}^c} \ln\frac{dQ}{dP}dQ \\
		&= Z_{Q'} \int_{\gH} \ln\frac{dQ'}{dP}dQ' + (1-Z_Q') \int_{\gH} \ln\frac{dQ'^c}{dP}dQ'^c - H(Z_{Q'}) \\
		&= Z_{Q'} \KL(Q'\|P) + (1-Z_Q') \KL(Q'^c\|P) - H(Z_{Q'}).
	\end{align*}
	So
	\[\KL(Q'\|P) = \frac{1}{Z_{Q'}}\sbra \KL(Q\|P) + H(Z_{Q'}) - (1-Z_Q') \KL(Q'^c\|P) \sket \le 2(\KL(Q\|P) + 1),\]
	since $\KL(Q'^c\|P) \ge 0$.
\end{proof}

\subsection{Proof of Lemma~\ref{lemma:D-bound}}
\label{appendix:proof-D-bound}

We first present the following Lemma~\ref{lemma:margin-loss-difference} that bounds the difference between the margin loss on $V_m$ and that on $V_0$ for a fixed GNN.
\begin{lemma} 
	\label{lemma:margin-loss-difference}
	Suppose an $L$-layer GNN classifier $h$ is associated with model parameters $W_1,\ldots,W_L$. Define $T_h :=\max_{l=1,\ldots,L}\|W_l\|_2$. Under Assumption~\ref{assump:label-smooth} and~\ref{assump:equal-near-set}, for any $0< m \le M$ and $\gamma \ge 0$, if $\epsilon_m T_h^L\le \frac{\gamma}{4}$, then
	\[\Lgh_m(h) - \Lg_0(h) \le cK\epsilon_m.\]
\end{lemma}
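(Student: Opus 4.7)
My plan is to compare, for each node $j$ in the near set $V_m^{(i)}$ of a training node $i$, the per-node $\gamma/2$-margin loss at $j$ against the per-node $\gamma$-margin loss at $i$, and then average over the partition provided by Assumption~\ref{assump:equal-near-set}. Writing $g_u := g_u(X,G)$ and $h_u := h_u(X,G)$ for brevity, Assumption~\ref{assump:label-smooth} lets me rewrite each per-node expected margin loss as a class-weighted sum,
\[\E_{y_u}\ind{h_u[y_u] \le \tau + \max_{k\neq y_u} h_u[k]} = \sum_{k=1}^K \eta_k(g_u)\,\ind{h_u[k] \le \tau + \max_{k'\neq k} h_u[k']},\]
so $\Lgh_m(h)$ and $\Lg_0(h)$ decompose into averages of such per-node quantities. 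The proof then reduces to pointwise control of both the weights $\eta_k(g_u)$ and the margin-violation indicators.

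The first ingredient is the Lipschitzness of the ReLU-activated MLP $f$ that defines the classifier. Since each linear layer has spectral norm at most $T_h$ and ReLU is $1$-Lipschitz, the standard composition bound gives $\|h_i - h_j\|_2 \le T_h^L \|g_i - g_j\|_2 \le T_h^L \epsilon_m \le \gamma/4$, where the second inequality uses $j \in V_m^{(i)}$ and the third is the hypothesis of the lemma. In particular $\|h_i - h_j\|_\infty \le \gamma/4$. A two-sided triangle inequality then upgrades a $\gamma/2$-margin violation at $j$ to a $\gamma$-margin violation at $i$: spending $\gamma/4$ of slack on $|h_i[k] - h_j[k]|$ and another $\gamma/4$ on $|\max_{k'\neq k} h_j[k'] - \max_{k'\neq k} h_i[k']|$ yields
\[\ind{h_j[k] \le \gamma/2 + \max_{k'\neq k} h_j[k']} \le \ind{h_i[k] \le \gamma + \max_{k'\neq k} h_i[k']}.\]

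The second ingredient is the $c$-Lipschitzness of each $\eta_k$ from Assumption~\ref{assump:label-smooth}, which gives $\eta_k(g_j) \le \eta_k(g_i) + c\epsilon_m$. Combining it with the indicator upgrade and summing over the $K$ classes yields, for every $j \in V_m^{(i)}$, a per-node bound in which the $\gamma/2$-loss at $j$ is at most the $\gamma$-loss at $i$ plus $cK\epsilon_m$ (the factor $K$ absorbs $c\epsilon_m$ across $K$ class-indicators, each bounded by $1$). Finally, Assumption~\ref{assump:equal-near-set} supplies the partition $V_m = \bigsqcup_{i\in V_0} V_m^{(i)}$ with $|V_m^{(i)}| = s_m$ and $N_m = s_m N_0$, so averaging the per-node bound over $V_m$ collapses to the uniform average over $V_0$ and gives $\Lgh_m(h) - \Lg_0(h) \le cK\epsilon_m$, as desired.

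I expect no serious obstacle: the MLP Lipschitz bound and the partition-averaging step are both routine. The most delicate point is the indicator-upgrade step, where the slack must be split into two halves of $\gamma/4$ to absorb the perturbation of $h[k]$ and of $\max_{k'\neq k} h[k']$ separately; this is exactly what forces the hypothesis $T_h^L \epsilon_m \le \gamma/4$ rather than a weaker threshold, and it is the one place where care in bookkeeping is required.
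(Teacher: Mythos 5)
Your proposal is correct and follows essentially the same route as the paper's proof: the same decomposition of the expected margin losses into class-weighted indicator sums via Assumption~\ref{assump:label-smooth}, the same MLP Lipschitz bound $\|h_i - h_j\|_\infty \le T_h^L\epsilon_m \le \gamma/4$ yielding the indicator upgrade $\gL^{\gamma/2}(h_j,k) \le \gL^{\gamma}(h_i,k)$, the same $c\epsilon_m$ control of $\eta_k(g_j)-\eta_k(g_i)$ summed over $K$ classes, and the same averaging over the partition from Assumption~\ref{assump:equal-near-set}. No gaps.
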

\begin{proof}
	For simplicity in this proof, for any $i\in V_0\cup V_m$ and $k=1,\ldots,K$, we use $h_i$ to denote $h_i(X, G)$ and use $\eta_k(i)$ to denote $\Pr(y_i=k\mid g_i(X, G))$. And define $\gL^{\gamma}(h_i, y_i) := \ind{h_i[y_i] \le \gamma + \max_{k\neq y_i}h_i[k]}$. Then we can write
	\begin{align*}
		& \Lgh_m(h) - \Lg_0(h) \\
		=& \E_{y^m}\left[ \frac{1}{N_m} \sum_{j\in V_m}\gL^{\gamma/2}(h_j, y_j) \right] - \E_{y^0}\left[ \frac{1}{N_0} \sum_{i\in V_0}\gL^{\gamma}(h_i, y_i) \right] \\
		=& \frac{1}{N_0} \E_{y^0,y^m} \sum_{i\in V_0} \frac{1}{s_m}\sbra\sum_{j\in V_m^{(i)}} \gL^{\gamma/2}(h_j, y_j)\sket - \gL^{\gamma}(h_i, y_i) \\
	\end{align*}
	where in the last step we have used Assumption~\ref{assump:equal-near-set}. Therefore,
	\begin{align}
		& \Lgh_m(h) - \Lg_0(h) \nonumber \\
		=& \frac{1}{N_0} \sum_{i\in V_0} \frac{1}{s_m}\sbra\sum_{j\in V_m^{(i)}} \E_{y_j}\gL^{\gamma/2}(h_j, y_j)\sket - \E_{y_i}\gL^{\gamma}(h_i, y_i) \nonumber \\
		=& \frac{1}{N_0} \sum_{i\in V_0} \frac{1}{s_m}\sbra\sum_{j\in V_m^{(i)}} \sum_{k=1}^K\eta_k(j)\gL^{\gamma/2}(h_j, k)\sket - \sum_{k=1}^K\Pr(y_i = k)\gL^{\gamma}(h_i, k) \nonumber \\
		=& \frac{1}{N_0} \sum_{i\in V_0} \frac{1}{s_m}\sum_{j\in V_m^{(i)}} \sum_{k=1}^K \sbra\eta_k(j)\gL^{\gamma/2}(h_j, k) - \eta_k(i)\gL^{\gamma}(h_i, k) \sket \nonumber \\
		=& \frac{1}{N_0} \sum_{i\in V_0} \frac{1}{s_m}\sum_{j\in V_m^{(i)}} \sum_{k=1}^K \sbra\eta_k(j)\sbra\gL^{\gamma/2}(h_j, k) - \gL^{\gamma}(h_i, k)\sket + \sbra \eta_k(j) - \eta_k(i)\sket\gL^{\gamma}(h_i, k) \sket \label{eq:same-eta} \\
		\le & \frac{1}{N_0} \sum_{i\in V_0} \frac{1}{s_m}\sum_{j\in V_m^{(i)}} \sum_{k=1}^K \sbra 1\cdot \sbra\gL^{\gamma/2}(h_j, k) - \gL^{\gamma}(h_i, k)\sket + \sbra \eta_k(j) - \eta_k(i)\sket\cdot 1 \sket, \label{eq:sample-wise-loss-diff}
	\end{align}
	where the last inequality utilizes the facts that both $\eta_k(j)$ and $\gL^{\gamma}(h_i, k)$ are upper-bounded by $1$.
	According to Assumption~\ref{assump:label-smooth} and~\ref{assump:equal-near-set}, 
	\[\eta_k(j) - \eta_k(i) \le c\|g_j(X, G) - g_i(X, G)\|_2\le c\epsilon_m.\]
	Further, as $h_i = f(g_i(X, G);W_1,\ldots,W_L)$ where $f$ is a ReLU-activated MLP, so 
	\[\|h_i - h_j\|_{\infty} \le \|g_i(X, G)-g_j(X, G)\|_2\prod_{l=1}^L\|W_l\|_2 \le \epsilon_m T_h^L \le \frac{\gamma}{4}.\]
	This implies that, for any $k=1,\ldots,K$, 
	\[\gL^{\gamma/2}(h_j, k) \le \gL^{\gamma}(h_i, k).\]
	So we have
	\begin{align*}
		& \Lgh_m(h) - \Lg_0(h) \\
		\le & \frac{1}{N_0} \sum_{i\in V_0} \frac{1}{s_m}\sum_{j\in V_m^{(i)}} \sum_{k=1}^K 0 + c\epsilon_m \\
		=& cK\epsilon_m.
	\end{align*}
\end{proof}

Then we can prove Lemma~\ref{lemma:D-bound}, which we re-state as Lemma~\ref{lemma:D-bound-appendix} below.
\begin{lemma} [Bound for $D^{\gamma}_{m, 0}(P;\lambda)$]
	\label{lemma:D-bound-appendix}
	Under Assumption~\ref{assump:label-smooth},~\ref{assump:equal-near-set}~and~\ref{assump:loss-diff}, for any $0< m \le M$, any $0 < \lambda \le N_0^{2\alpha}$ and $\gamma \ge 0$, assume the ``prior'' $P$ on $\gH$ is defined by sampling the vectorized MLP parameters from $\gN(0, \sigma^2 I)$ for some $\sigma^2 \le  \frac{\sbra \gamma/8\epsilon_m\sket^{2/L}}{2b(\lambda N_0^{-\alpha} + \ln 2bL)}$. We have
	\begin{equation}
	    \label{eq:D-bound-appendix}
	    D^{\gamma/2}_{m, 0}(P;\lambda) \le \ln 3 + \lambda cK\epsilon_m.
	\end{equation}
\end{lemma}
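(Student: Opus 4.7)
The plan is to bound the moment generating function $\E_{h\sim P} e^{\lambda \Delta(h)}$ by $3 e^{\lambda cK\epsilon_m}$, where $\Delta(h) := \Lgq_m(h) - \Lgh_0(h)$, and then take the logarithm. The decomposition will be driven by the event $A := \{h \in \gH : T_h^L \epsilon_m \le \gamma/8\}$ that controls when Lemma~\ref{lemma:margin-loss-difference} can be invoked. Specifically, I would write
\[
\E_{h\sim P} e^{\lambda \Delta(h)} = \E\bigl[\mathbf{1}_A\, e^{\lambda \Delta(h)}\bigr] + \E\bigl[\mathbf{1}_{A^c}\, e^{\lambda \Delta(h)}\bigr]
\]
and handle the two terms separately.

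On the event $A$, the condition $T_h^L \epsilon_m \le \gamma/8$ is exactly what is needed to apply Lemma~\ref{lemma:margin-loss-difference} with $\gamma$ replaced by $\gamma/2$, yielding the pointwise bound $\Delta(h) \le cK\epsilon_m$. Hence the first term is at most $e^{\lambda cK\epsilon_m}$. On $A^c$, I further split according to whether $\Delta(h)$ exceeds $N_0^{-\alpha} + cK\epsilon_m$, the threshold appearing in Assumption~\ref{assump:loss-diff}. On the ``typical'' side, $e^{\lambda \Delta(h)} \le e^{\lambda(cK\epsilon_m + N_0^{-\alpha})}$, so the contribution is at most $e^{\lambda cK\epsilon_m}\, e^{\lambda N_0^{-\alpha}}\, \Pr[A^c]$. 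On the ``atypical'' side, I use the crude bound $\Delta(h) \le 1$ together with Assumption~\ref{assump:loss-diff}'s tail estimate $e^{-N_0^{2\alpha}}$, so this piece is at most $e^{\lambda} e^{-N_0^{2\alpha}} \Pr[A^c] \le \Pr[A^c]$ because of the hypothesis $\lambda \le N_0^{2\alpha}$.

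The crucial remaining step is to bound $\Pr_{h\sim P}[A^c] = \Pr[\max_l \|W_l^h\|_2 > (\gamma/8\epsilon_m)^{1/L}]$. Since $P$ draws each weight matrix from $\gN(0,\sigma^2 I)$, I will use the standard Gaussian spectral-norm concentration (in the form used by \citet{neyshabur2018pac}), which gives $\max_l \|W_l^h\|_2 \le \sigma \sqrt{2b \ln(cbL/\delta)}$ with probability at least $1-\delta$ over $h\sim P$. The hypothesis $\sigma^2 \le \frac{(\gamma/8\epsilon_m)^{2/L}}{2b(\lambda N_0^{-\alpha}+ \ln 2bL)}$ was engineered precisely so that inverting this tail bound at the level $\sigma\sqrt{2b(\lambda N_0^{-\alpha} + \ln 2bL)} = (\gamma/8\epsilon_m)^{1/L}$ produces $\Pr[A^c] \le O(e^{-\lambda N_0^{-\alpha}})$. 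This is exactly what is needed to cancel the factor $e^{\lambda N_0^{-\alpha}}$ arising from the typical branch above, leaving a uniformly bounded constant.

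Putting the three pieces together yields $\E e^{\lambda \Delta(h)} \le (1 + O(1) + O(1))\, e^{\lambda cK\epsilon_m}$, which gives the claimed bound $\ln 3 + \lambda cK\epsilon_m$ after adjusting constants (and possibly tightening Neyshabur's constant). The main obstacle I anticipate is the delicate accounting in the last step: one must verify that the specific form of $\sigma^2$ is tight enough that $e^{\lambda N_0^{-\alpha}}\Pr[A^c]$ is bounded by a small absolute constant, and simultaneously that the atypical branch contributes at most $1$ thanks to $\lambda \le N_0^{2\alpha}$. Everything else reduces to careful bookkeeping of the exponential factors and appeal to results already in hand.
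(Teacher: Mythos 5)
Your proposal is correct and follows essentially the same route as the paper's proof: the same decomposition on the event $T_h^L\epsilon_m \le \gamma/8$, the same use of Lemma~\ref{lemma:margin-loss-difference} (with $\gamma$ halved) on the good event, the same typical/atypical split via Assumption~\ref{assump:loss-diff} on the complement, and the same Gaussian spectral-norm tail bound calibrated by the hypothesis on $\sigma^2$ so that $\Pr[A^c]e^{\lambda N_0^{-\alpha}} \le 1$ and $e^{\lambda - N_0^{2\alpha}} \le 1$, yielding $1 + 2e^{\lambda cK\epsilon_m} \le 3e^{\lambda cK\epsilon_m}$.
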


\begin{proof}
Recall that $D^{\gamma/2}_{m, 0}(P;\lambda) = \ln \E_{h\sim P} e^{\lambda \sbra \gL^{\gamma/4}_m(h) - \gL^{\gamma/2}_0(h)\sket}$. We prove the upper bound of $D^{\gamma/2}_{m, 0}(P;\lambda)$ by decomposing the space $\gH$ into the two regimes: a regime with bounded spectral norms of the model parameters required by Lemma~\ref{lemma:margin-loss-difference}, and its complement. Following Lemma~\ref{lemma:margin-loss-difference}, for any classifier $h$ with parameters $W_1, \ldots, W_L$, we define $T_h :=\max_{l=1,\ldots,L}\|W_l\|_2$. 

We first prove an upper bound on the probability $\Pr(T_h^L\epsilon_m > \frac{\gamma}{8})$ over the drawing of $h\sim P$. For any $h$, as its vectorized MLP parameters $\text{vec}(W_l)$, for each $l=1,\ldots,L$, is sampled from $\gN(0, \sigma^2 I)$, we have the following spectral norm bound~\citep{tropp2015introduction}, for any $t > 0$, 
\[\Pr(\|W_l\|_2 > t) \le 2be^{-\frac{t^2}{2b\sigma^2}},\]
where $b$ is the maximum width of all hidden layers of the MLP. Setting $t=\sbra\frac{\gamma}{8\epsilon_m}\sket^{1/L}$ and applying a union bound, we have that
\[\Pr(T_h^L\epsilon_m > \frac{\gamma}{8}) = \Pr(T_h > \sbra\frac{\gamma}{8\epsilon_m}\sket^{1/L}) \le 2bLe^{-\frac{\sbra \gamma/8\epsilon_m\sket^{2/L}}{2b\sigma^2}}\le e^{-\lambda N_0^{-\alpha}},\]
where the last inequality utilizes the condition $\sigma^2 \le  \frac{\sbra \gamma/8\epsilon_m\sket^{2/L}}{2b(\lambda N_0^{-\alpha} + \ln 2bL)}$. 

For any $h$ satisfying $T_h^L\epsilon_m \le \frac{\gamma}{8}$, by Lemma~\ref{lemma:margin-loss-difference}, we know that $e^{\lambda \sbra \gL^{\gamma/4}_m(h) - \gL^{\gamma/2}_0(h)\sket} \le e^{\lambda cK\epsilon_m}$.
For all $h$ such that $T_h^L\epsilon_m > \frac{\gamma}{8}$, by Assumption~\ref{assump:loss-diff}, with probability at least $1-e^{-N_0^{2\alpha}}$,
\[e^{\lambda \sbra \gL^{\gamma/4}_m(h) - \gL^{\gamma/2}_0(h)\sket} \le e^{\lambda N_0^{-\alpha} + \lambda cK\epsilon_m}.\]
Also note that $ \gL^{\gamma/4}_m(h) - \gL^{\gamma/2}_0(h) \le 1$ trivially holds for any $h$. Therefore we have
\begin{align*}
	& D^{\gamma/2}_{m, 0}(P;\lambda) \\
	=& \ln \E_{h\sim P} e^{\lambda \sbra \gL^{\gamma/4}_m(h) - \gL^{\gamma/2}_0(h)\sket} \nonumber \\
	\le & \ln\sbra \Pr(T_h^L\epsilon_m > \frac{\gamma}{8}) \sbra e^{-N_0^{2\alpha}} \cdot e^\lambda + (1-e^{-N_0^{2\alpha}})\cdot e^{\lambda N_0^{-\alpha} + \lambda cK\epsilon_m} \sket + \Pr(T_h^L\epsilon_m \le \frac{\gamma}{8}) e^{\lambda cK\epsilon_m} \sket \nonumber \\
	\le & \ln\sbra e^{\lambda - N_0^{2\alpha}} + \Pr(T_h^L\epsilon_m > \frac{\gamma}{8}) e^{\lambda N_0^{-\alpha}}e^{\lambda cK\epsilon_m} + e^{\lambda cK\epsilon_m} \sket \\
	\le & \ln\sbra 1 + e^{-\lambda N_0^{-\alpha}} e^{\lambda N_0^{-\alpha}} e^{\lambda cK\epsilon_m} + e^{\lambda cK\epsilon_m} \sket \\
	= & \ln\sbra 1 + 2e^{\lambda cK\epsilon_m} \sket \\
	\le & \ln 3 + \lambda cK\epsilon_m,
\end{align*}
since $1 + 2e^{\lambda cK\epsilon_m} \le 3e^{\lambda cK\epsilon_m}$.

\end{proof}

\subsection{Proof of Theorem~\ref{thm:main}}
\label{appendix:proof-main}
The proof of Theorem~\ref{thm:main} relies on the combination of Theorem~\ref{thm:PAC-Bayes-deterministic}, Lemma~\ref{lemma:D-bound}, and an intermediate result of the Theorem 1 in \citet{neyshabur2018pac} (which we state as Lemma~\ref{lemma:neyshabur} below). 

\begin{lemma} [\citet{neyshabur2018pac}]
	\label{lemma:neyshabur}
	Let $\tilde{h}$ be any classifier in $\gH$ with parameters $\{\widetilde{W}_l\}_{l=1}^L$. Define $\tilde{\beta} = \sbra\prod_{l=1}^L\|\widetilde{W}_l\|_2\sket^{1/L}$. Let $\{U_l\}_{l=1}^L$ be the random perturbation to be added to $\{\widetilde{W}_l\}_{l=1}^L$ and $\text{vec}(\{U_l\}_{l=1}^L)\sim \gN(0, \sigma^2 I)$. Define $B_m :=\max_{i\in V_0\cup V_m}\|g_i(X, G)\|_2$. If 
	\[\sigma \le \frac{\gamma}{84LB_m\beta^{L-1}\sqrt{b\ln(4bL)}},\]
	and $\beta$ is any constant satisfying $|\tilde{\beta} - \beta| \le \frac{\tilde{\beta}}{L}$, then with respect to the random draw of $\{U_l\}_{l=1}^L$,
	\[\Pr(\max_{i\in V_0\cup V_m}\|f(g_i(X, G); \{\widetilde{W}_l\}_{l=1}^L) - f(g_i(X, G); \{\widetilde{W}_l + U_l\}_{l=1}^L)\|_{\infty} < \frac{\gamma}{8} ) > \frac{1}{2}.\]
\end{lemma}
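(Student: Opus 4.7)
The plan is to combine two standard ingredients, following the argument of Neyshabur et al.\ (2018): a Gaussian concentration bound for the spectral norms of the perturbation matrices $\{U_l\}_{l=1}^L$, and a deterministic layerwise perturbation bound for a ReLU-activated MLP. The Gaussian step controls the size of each layer's noise with probability at least $1/2$, and the perturbation step then propagates those layerwise bounds into a bound on the change of the network's output at every input $g_i(X,G)$.

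First, I would use that each $U_l$ has i.i.d.\ $\gN(0,\sigma^2)$ entries, so a standard tail bound for the spectral norm of a Gaussian matrix of width at most $b$ yields
\[ \Pr\!\bigl(\|U_l\|_2 > \sigma\sqrt{2b\ln(4bL)}\bigr) \le \frac{1}{2L}. \]
A union bound over $l=1,\ldots,L$ shows that, with probability at least $1/2$, $\|U_l\|_2 \le \sigma\sqrt{2b\ln(4bL)}$ holds simultaneously for every layer. On this good event I would invoke the deterministic perturbation lemma of Neyshabur et al., which states that provided $\|U_l\|_2 \le \|\widetilde W_l\|_2/L$ for all $l$,
\[ \bigl\|f(x;\{\widetilde W_l+U_l\}) - f(x;\{\widetilde W_l\})\bigr\|_2 \;\le\; e\|x\|_2 \prod_{l=1}^L \|\widetilde W_l\|_2 \cdot \sum_{l=1}^L \frac{\|U_l\|_2}{\|\widetilde W_l\|_2}. \]
The hypothesis $|\tilde\beta-\beta|\le \tilde\beta/L$ lets me replace $\prod_l\|\widetilde W_l\|_2=\tilde\beta^L$ by $O(\beta^L)$ and each $\|\widetilde W_l\|_2$ in the denominator by $\Omega(\beta)$, up to bounded constants; this is exactly the role of the auxiliary $\beta$.

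Combining the two steps, specializing $x=g_i(X,G)$ with $\|x\|_2\le B_m$ and using $\|\cdot\|_\infty\le\|\cdot\|_2$, the maximum output perturbation over $i\in V_0\cup V_m$ is bounded by an absolute constant times $L B_m \beta^{L-1}\sigma\sqrt{b\ln(4bL)}$. The stated condition $\sigma \le \gamma/(84LB_m\beta^{L-1}\sqrt{b\ln(4bL)})$ is calibrated precisely to force this quantity below $\gamma/8$; the constant $84$ absorbs the factor $e\sqrt{2}$ from the concentration and perturbation inequalities together with the slack from approximating $\tilde\beta$ by $\beta$. The main subtlety is that the deterministic perturbation lemma requires the precondition $\|U_l\|_2\le\|\widetilde W_l\|_2/L$, which must itself be verified on the good event. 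The chosen $\sigma$ makes $\sigma\sqrt{2b\ln(4bL)}$ much smaller than $\beta/L$, and the closeness of $\beta$ to $\tilde\beta$ then upgrades this to the needed bound in terms of $\|\widetilde W_l\|_2$; this constant-tracking is the only non-mechanical part of the argument.
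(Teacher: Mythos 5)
You are proving a statement that the paper itself never proves: Lemma~4 is imported verbatim as an intermediate step of the proof of Theorem~1 in Neyshabur et al.\ (2018), so the benchmark is that argument, and your outline (Gaussian spectral-norm tail bound for each $U_l$, a union bound over layers giving a probability-$\tfrac12$ event, then the deterministic layerwise perturbation lemma, then calibration of $\sigma$) is indeed the same route. But two steps in your sketch have real gaps. First, you replace each $\|\widetilde{W}_l\|_2$ in the denominator of the perturbation bound by $\Omega(\beta)$ ``up to bounded constants.'' That is false in general: the hypothesis $|\tilde\beta-\beta|\le\tilde\beta/L$ only ties $\beta$ to the \emph{geometric mean} of the layer norms, and individual layers can be arbitrarily unbalanced (e.g.\ $\|\widetilde{W}_1\|_2$ tiny and $\|\widetilde{W}_2\|_2$ huge with the same product), in which case $\sum_l \|U_l\|_2/\|\widetilde{W}_l\|_2$ blows up and the precondition $\|U_l\|_2\le\|\widetilde{W}_l\|_2/L$ fails for the small layer. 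The missing ingredient is Neyshabur et al.'s normalization step: by positive homogeneity of ReLU one may rescale the layers so that $\|\widetilde{W}_l\|_2=\tilde\beta$ for every $l$ without changing the function $f$; only after this rebalancing does the chain $e B_m \tilde\beta^{L-1}\sum_l\|U_l\|_2 \le e^2 L B_m \beta^{L-1}\sigma\sqrt{2b\ln(4bL)} \le \gamma/8$ (using $\tilde\beta^{L-1}\le e\beta^{L-1}$) go through, and only then is the precondition a single scalar inequality.

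Second, your verification of that precondition --- ``the chosen $\sigma$ makes $\sigma\sqrt{2b\ln(4bL)}$ much smaller than $\beta/L$'' --- does not follow from the stated hypotheses. The $\sigma$-bound gives $\sigma\sqrt{2b\ln(4bL)}\le \sqrt{2}\,\gamma/(84 L B_m\beta^{L-1})$, and this is below $\tilde\beta/L$ only when $\beta^{L}\gtrsim \gamma/B_m$; if the weights are very small (take $\widetilde{W}_l\to 0$, so $\beta\to 0$ and the admissible $\sigma$ is huge) the claim cannot be salvaged, so some lower bound on $\tilde\beta$ is genuinely needed. In Neyshabur et al., and in this paper's proof of Theorem~3 where the lemma is actually invoked, this is supplied by restricting attention to $\tilde\beta\ge(\gamma/2B_m)^{1/L}$ and disposing of smaller $\tilde\beta$ separately (there $\hLg_0(\tilde h)=1$ and the final bound is trivial). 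Your proof should either import that restriction explicitly as the regime in which the lemma is applied, or flag that the constant-tracking requires it; as written, the step you yourself identify as the only non-mechanical one is exactly where the argument is incomplete.
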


Then we prove Theorem~\ref{thm:main} (re-stated as Theorem~\ref{thm:main-appendix} below).
\begin{theorem} [Subgroup Generalization Bound for GNNs]
	\label{thm:main-appendix}
	Let $\tilde{h}$ be any classifier in $\gH$ with parameters $\{\widetilde{W}_l\}_{l=1}^L$. Under Assumptions~\ref{assump:label-smooth},~\ref{assump:equal-near-set},~\ref{assump:loss-diff},~and~\ref{assump:bounded-feature}, for any $0< m \le M$, $\gamma \ge 0$, and large enough $N_0$, with probability at least $1-\delta$ over the sample of $y^0$, we have
	\begin{equation}
		\label{eq:main-appendix}
		\risk_m(\tilde{h}) \le \hLg_0(\tilde{h}) + O\sbra cK\epsilon_m + \frac{b\sum_{l=1}^L\|\widetilde{W}_l\|_F^2}{(\gamma/8)^{2/L}N_0^{\alpha}}(\epsilon_m)^{2/L} + \frac{1}{N_0^{1-2\alpha}} + \frac{1}{N_0^{2\alpha}} \ln\frac{LC(2B_m)^{1/L}}{\gamma^{1/L}\delta} \sket.
	\end{equation}
\end{theorem}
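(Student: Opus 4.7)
The plan is to apply the general deterministic bound (Theorem~\ref{thm:PAC-Bayes-deterministic}) with a Gaussian ``prior'' $P$ and ``posterior'' $Q$, use Lemma~\ref{lemma:D-bound} to control $D^{\gamma/2}_{m,0}(P;\lambda)$, and verify the concentration hypothesis on $Q$ via the MLP perturbation estimate of \citet{neyshabur2018pac} (Lemma~\ref{lemma:neyshabur}). Concretely, I would take $P = \gN(0, \sigma^2 I)$ and $Q = \gN(\mathrm{vec}(\{\widetilde W_l\}), \sigma^2 I)$ over the vectorized MLP parameters, so that $\KL(Q\|P) = \frac{1}{2\sigma^2}\sum_{l=1}^L \|\widetilde W_l\|_F^2$. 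I would set $\lambda = N_0^{2\alpha}$, the largest value allowed by Lemma~\ref{lemma:D-bound}, and set $\sigma^2$ at the upper boundary permitted there, so that $\sigma^{-2} = \Theta\sbra b N_0^{\alpha} \epsilon_m^{2/L}/(\gamma/8)^{2/L} \sket$ for $N_0$ large enough that $\ln 2bL$ is dominated by $\lambda N_0^{-\alpha} = N_0^{\alpha}$.

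Next, I would verify the hypothesis $\Pr_{h\sim Q}\sbra \max_{i\in V_0\cup V_m}\|h_i - \tilde h_i\|_\infty < \gamma/8 \sket > 1/2$ demanded by Theorem~\ref{thm:PAC-Bayes-deterministic}. Lemma~\ref{lemma:neyshabur} guarantees exactly this whenever $\sigma$ stays below $\gamma/(84 L B_m \beta^{L-1}\sqrt{b\ln 4bL})$ with $\beta = (\prod_l \|\widetilde W_l\|_2)^{1/L}$. Since the prior may not depend on the learned weights, I would remove the $\beta$-dependence by the standard discretization trick from \citet{neyshabur2018pac}: cover the non-trivial interval of $\beta$ values by an $O\sbra L \ln\frac{LC(2B_m)^{1/L}}{\gamma^{1/L}} \sket$ grid, where the upper endpoint is controlled via $\|\widetilde W_l\|_F \le C$ from Assumption~\ref{assump:bounded-feature} and the lower endpoint is where the perturbation bound becomes vacuous, and union bound over the grid, paying a logarithmic factor in $\delta$.

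Finally, I would assemble the bound by plugging everything into Theorem~\ref{thm:PAC-Bayes-deterministic}. The term $2\KL(Q\|P)/\lambda$ becomes $O\sbra b \epsilon_m^{2/L}\sum_l \|\widetilde W_l\|_F^2/((\gamma/8)^{2/L} N_0^{\alpha}) \sket$, matching the second summand of (\ref{eq:main}); $\lambda^2/(4 N_0 \lambda) = 1/(4 N_0^{1-2\alpha})$ gives the third summand; Lemma~\ref{lemma:D-bound} gives $D^{\gamma/2}_{m,0}(P;\lambda)/\lambda \le cK\epsilon_m + \ln 3/N_0^{2\alpha}$, delivering the first summand and a $1/N_0^{2\alpha}$ contribution; and $\ln(1/\delta)/\lambda$ together with the logarithmic factor from the $\beta$-grid produces the last summand $\frac{1}{N_0^{2\alpha}}\ln\frac{LC(2B_m)^{1/L}}{\gamma^{1/L}\delta}$.

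The main obstacle is reconciling the two ceilings on $\sigma$ while keeping $P$ data-independent: Lemma~\ref{lemma:D-bound} wants $\sigma^2 \lesssim (\gamma/(8\epsilon_m))^{2/L}/(bN_0^{\alpha})$, while Lemma~\ref{lemma:neyshabur} wants $\sigma \lesssim \gamma/(L B_m \beta^{L-1}\sqrt{b\ln bL})$, the latter coupling the variance to the learned weights through $\beta$. The $\beta$-coupling must be broken by the union-bound discretization, and one must also check that for ``large enough $N_0$'' the Neyshabur condition is automatically satisfied by our choice of $\sigma$ so that both conditions fit within the same $P$. A minor but careful bookkeeping issue is making sure $\epsilon_m$ in the variance is allowed in the prior: it depends only on $X$, $G$, $V_0$, $V_m$ and not on labels, so this is fine.
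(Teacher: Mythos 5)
Your proposal is correct and follows essentially the same route as the paper's proof: Gaussian prior/posterior with $\KL(Q\|P)=\frac{1}{2\sigma^2}\sum_l\|\widetilde W_l\|_F^2$, $\lambda=N_0^{2\alpha}$, $\sigma$ set to the smaller of the two ceilings (with the Lemma~\ref{lemma:D-bound} ceiling binding for large $N_0$), Lemma~\ref{lemma:neyshabur} to verify the hypothesis of Theorem~\ref{thm:PAC-Bayes-deterministic}, and a union bound over a grid of $\beta$ covering $[(\gamma/2B_m)^{1/L},C]$ with the below-range case handled by $\hLg_0(\tilde h)=1$. The only (immaterial) difference is that you use a multiplicative $\beta$-grid of size $O(L\ln\frac{LC(2B_m)^{1/L}}{\gamma^{1/L}})$ whereas the paper uses an additive grid of size $\frac{LC(2B_m)^{1/L}}{\gamma^{1/L}}$; both yield the stated logarithmic term.
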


\begin{proof}
There are two main steps in the proof. In the first step, for a given constant $\beta > 0$, we first define the ``prior'' $P$ and the ``posterior'' $Q$ on $\gH$ in a way complying the conditions in Lemma~\ref{lemma:D-bound} and Lemma~\ref{lemma:neyshabur}. Then for all classifiers with parameters satisfying $|\tilde{\beta} - \beta| \le \frac{\tilde{\beta}}{L}$, where $\tilde{\beta} = \sbra\prod_{l=1}^L\|\widetilde{W}_l\|_2\sket^{1/L}$, we can derive a generalization bound by applying Theorem~\ref{thm:PAC-Bayes-deterministic} and Lemma~\ref{lemma:D-bound}. In the second step, we investigate the number of $\beta$ we need to cover all possible relevant classifier parameters and apply a union bound to get the final bound. 
The second step is essentially the same as~\citet{neyshabur2018pac} while the first step differs by the need of incorporating Lemma~\ref{lemma:D-bound}.

We first show the first step. Given a choice of $\beta$ independent of the training data, let
\[\sigma = \min\sbra \frac{\sbra \gamma/8\epsilon_m\sket^{1/L}}{\sqrt{2b(\lambda N_0^{-\alpha} + \ln 2bL)}}, \frac{\gamma}{84LB_m\beta^{L-1}\sqrt{b\ln(4bL)}} \sket.\]
Assume the ``prior'' $P$ on $\gH$ is defined by sampling the vectorized MLP parameters from $\gN(0, \sigma^2 I)$; and the ``posterior'' $Q$ on $\gH$ is defined by first sampling a set of random perturbations $\{U_l\}_{l=1}^L$ with $\text{vec}(\{U_l\}_{l=1}^L)\sim \gN(0, \sigma^2 I)$ and then adding them to $\{\widetilde{W}_l\}_{l=1}^L$, the parameters of $\tilde{h}$. Then for any $\tilde{h}$ with $\{\widetilde{W}_l\}_{l=1}^L$ satisfying $|\tilde{\beta} - \beta| \le \frac{\tilde{\beta}}{L}$, by Lemma~\ref{lemma:neyshabur}, we have 
\[\Pr_{h\sim Q}\sbra \max_{i\in V_0\cup V_m}|h_i(X, G) - \tilde{h}_i(X, G)|_{\infty} < \frac{\gamma}{8}\sket > \frac{1}{2}.\]
Therefore, by applying Theorem~\ref{thm:PAC-Bayes-deterministic}, we know the bound~(\ref{eq:general-deterministic}) holds for $\tilde{h}$, i.e., with probability at least $1-\delta$,
\begin{align}
	& \risk_m(\tilde{h}) - \hLg_0(\tilde{h}) \nonumber \\
	\le & \frac{1}{\lambda}\sbra 2(\KL(Q\|P) + 1) + \ln\frac{1}{\delta} + \frac{\lambda^2}{4N_0} + D^{\gamma/2}_{m, 0}(P;\lambda) \sket \nonumber \\
	\le & \frac{1}{\lambda}\sbra 2(\KL(Q\|P) + 1) + \ln\frac{1}{\delta} + \frac{\lambda^2}{4N_0} + \ln 3 + \lambda cK\epsilon_m \sket \label{eq:lemma-D-bound} \\
	\le & \frac{2}{N_0^{2\alpha}}\KL(Q\|P) + \frac{1}{N_0^{2\alpha}}\sbra \ln\frac{3}{\delta} + 2 \sket + \frac{1}{4N_0^{1-2\alpha}} +  cK\epsilon_m , \label{eq:sqrt-N}
\end{align}
where in~(\ref{eq:lemma-D-bound}) we have applied Lemma~\ref{lemma:D-bound} to bound $D^{\gamma/2}_{m, 0}(P;\lambda)$ under Assumptions~\ref{assump:label-smooth},~\ref{assump:equal-near-set},~and~\ref{assump:loss-diff}; and in~(\ref{eq:sqrt-N}) we have set $\lambda =N_0^{2\alpha}$. 

Moreover, since both $P$ and $Q$ are normal distributions, we know that 
\[\KL(Q\|P) \le \frac{\sum_{l=1}^L \|\widetilde{W}_l\|_F^2}{2\sigma^2}.\]

By Assumption~\ref{assump:bounded-feature}, both $B_m$ and $C$ are constant with respect to $N_0$. Later we will show that we only need $\beta \le C$. Therefore, for large enough $N_0$, we can have
\[ \frac{\sbra \gamma/8\epsilon_m\sket^{1/L}}{\sqrt{2b(N_0^{\alpha} + \ln 2bL)}} < \frac{\gamma}{84LB_m\beta^{L-1}\sqrt{b\ln(4bL)}}, \]
which implies, 
\[ \sigma = \frac{\sbra \gamma/8\epsilon_m\sket^{1/L}}{\sqrt{2b(N_0^{\alpha} + \ln 2bL)}},\]
and hence,
\begin{align}
	\KL(Q\|P) \le \frac{b(N_0^{\alpha} + \ln 2bL)\sum_{l=1}^L \|\widetilde{W}_l\|_F^2}{(\gamma/8)^{2/L}} (\epsilon_m)^{2/L}. \label{eq:KL-upper-bound}
\end{align}

Therefore, with probability at least $1-\delta$,
\begin{align}
	& \risk_m(\tilde{h}) - \hLg_0(\tilde{h}) \nonumber \\
	\le & cK\epsilon_m + \frac{2}{N_0^{2\alpha}}\KL(Q\|P) + \frac{1}{N_0^{2\alpha}}\sbra \ln\frac{3}{\delta} + 2 \sket + \frac{1}{4N_0^{1-2\alpha}} \nonumber \\
	\le & O\sbra cK\epsilon_m + \frac{b\sum_{l=1}^L\|\widetilde{W}_l\|_F^2}{(\gamma/8)^{2/L}N_0^{\alpha}}(\epsilon_m)^{2/L} + \frac{1}{N_0^{1-2\alpha}} + \frac{1}{N_0^{2\alpha}} \ln\frac{1}{\delta} \sket. \label{eq:single-beta}
\end{align}
 
Then we show the second step, i.e., finding out the number of $\beta$ we need to cover all possible relevant classifier parameters. Similarly as \citet{neyshabur2018pac}, we will show that we only need to consider $(\frac{\gamma}{2B_m})^{1/L}\le \tilde{\beta} \le C$ (recall that $\|\widetilde{W}_l\|_F\le C, l=1,\ldots,L$).
For any $\tilde{\beta}$ outside this range, the bound~(\ref{eq:main-appendix}) automatically holds.
If $\tilde{\beta} < (\frac{\gamma}{2B_m})^{1/L}$, then for any node $i\in V_0$, $\|\tilde{h}_i(X, G)\|_{\infty} < \frac{\gamma}{2}$, which implies $\hLg_0(\tilde{h}) = 1$ as the difference between any two output logits for any training node is smaller than $\gamma$. Also noticing that $\risk_m(\tilde{h}) \le 1$ by definition, so the bound~(\ref{eq:main-appendix}) trivially holds. And for $\tilde{\beta}$ in this range, $|\beta - \tilde{\beta}|\le \frac{1}{L}(\frac{\gamma}{2B_m})^{1/L}$ is a sufficient condition for $\beta$ to satisfy $|\tilde{\beta} - \beta| \le \frac{\tilde{\beta}}{L}$, and we need at most $\frac{LC(2B_m)^{1/L}}{\gamma^{1/L}}$ of $\beta$ to cover all $\tilde{\beta}$ in the above range. Taking a union bound on all such $\beta$, which is equivalent to replace $\delta$ with $\frac{\delta}{\frac{LC(2B_m)^{1/L}}{\gamma^{1/L}}}$ in~(\ref{eq:single-beta}), it gives us the final result: with probability at least $1-\delta$,
\[ \risk_m(\tilde{h}) - \hLg_0(\tilde{h}) \le O\sbra cK\epsilon_m + \frac{b\sum_{l=1}^L\|\widetilde{W}_l\|_F^2}{(\gamma/8)^{2/L}N_0^{\alpha}}(\epsilon_m)^{2/L} + \frac{1}{N_0^{1-2\alpha}} + \frac{1}{N_0^{2\alpha}} \ln\frac{LC(2B_m)^{1/L}}{\gamma^{1/L}\delta} \sket. \]

\end{proof}

\subsection{Discussion on Assumption~\ref{assump:loss-diff}}
\label{appendix:assump-loss-diff}
To better understand Assumption~\ref{assump:loss-diff}, we show a simplified scenario where this assumption holds. 

We discuss in the context where the classification problem has binary labels and the MLP of the classifier $h$ only consists of a linear layer with parameters $W\in \reals^{D'\times 2}$. In this case, the distribution $P$ on $\gH$ in Assumption~\ref{assump:loss-diff} is defined by sampling the vectorized parameters $\text{vec}(W)\sim \gN(0, \sigma^2 I_{2D'})$. Under Assumption~\ref{assump:equal-near-set}, each training sample in $V_0$ has a near set in $V_m$ with the same size $s_m$. For simplicity, we consider the case where $s_m=1$. Let $Z^{(0)}, Z^{(m)}\in \reals^{N_0\times D'}$ be the aggregated node features of $V_0$ and $V_m$ respectively. Without loss of generality, assume for each $i=1,\ldots,N_0$, the closest point in $Z^{(0)}$ for $Z^{(m)}_i$ is $Z^{(0)}_i$. To simplify the notations, we define $Z := Z^{(0)}$ and $\varepsilon := Z^{(m)} - Z^{(0)}$. We always treat $M_i$ for any matrix $M$ as the transpose of the $i$-th row of $M$ and define $M_{(i)}$ as the $i$-th column vector of $M$.

Following the proof of Lemma~\ref{lemma:margin-loss-difference}, and in particular, Eq.~(\ref{eq:same-eta}), it is easy to show that, for any $h\in \gH$ with parameters $W$,
\begin{align}
	& \gL^{\gamma/4}_m(h) - \gL^{\gamma/2}_0(h) \nonumber \\
	\le & \frac{1}{N_0} \sum_{i=1}^{N_0} \sum_{k=1}^2 \eta_k(Z^{(m)}_i)\sbra \gL^{\gamma/4}((Z^{(m)}\cdot W)_i, k) - \gL^{\gamma/2}((Z^{(0)}\cdot W)_i, k) \sket + c\epsilon_m \nonumber \\
	=& 2c\epsilon_m + \frac{1}{N_0} \sum_{i=1}^{N_0} \sum_{k=1}^2 \eta_k(Z^{(m)}_i) \sbra \ind{W_{(k)}^T(Z_i + \varepsilon_i) < \frac{\gamma}{4} + W_{(3-k)}^T(Z_i + \varepsilon_i)} - \ind{W_{(k)}^T Z_i < \frac{\gamma}{2} + W_{(3-k)}^T Z_i} \sket. \label{eq:binary-indicator-loss}
\end{align}
For Assumption~\ref{assump:loss-diff} to hold, a sufficient condition is to have the second term in Eq.~(\ref{eq:binary-indicator-loss}) smaller than $N^{\alpha}$ for any $h$. Below we will investigate when this sufficient condition holds.

To further simplify the notations, we define $\Delta_W := W_{(1)} - W_{(2)}$, $\Delta_Z := Z\Delta_W$, $\Delta_{\varepsilon} := \varepsilon \Delta_W$, and $\eta_k^i := \eta_k(Z^{(m)}_i)$. Then
\begin{align}
	& \gL^{\gamma/4}_m(h) - \gL^{\gamma/2}_0(h) \nonumber \\
	\le & 2c\epsilon_m + \frac{1}{N_0} \sum_{i=1}^{N_0} \sum_{k=1}^2 \eta_k^i \sbra \ind{(-1)^{k+1}(\Delta_Z + \Delta_{\varepsilon})_i < \frac{\gamma}{4}} - \ind{(-1)^{k+1} {\Delta_Z}_i < \frac{\gamma}{2}} \sket. \label{eq:binary-indicator-loss-delta}
\end{align}
Note that since $\text{vec}(W)\sim \gN(0, \sigma^2 I_{2D'})$, we have $\Delta_W \sim \gN(0, 2\sigma^2 I_{D'})$. And the second term in~(\ref{eq:binary-indicator-loss-delta}) depends on $W$ only through $\Delta_W$.

\begin{figure}
	\centering
	\includegraphics[width=0.7\textwidth]{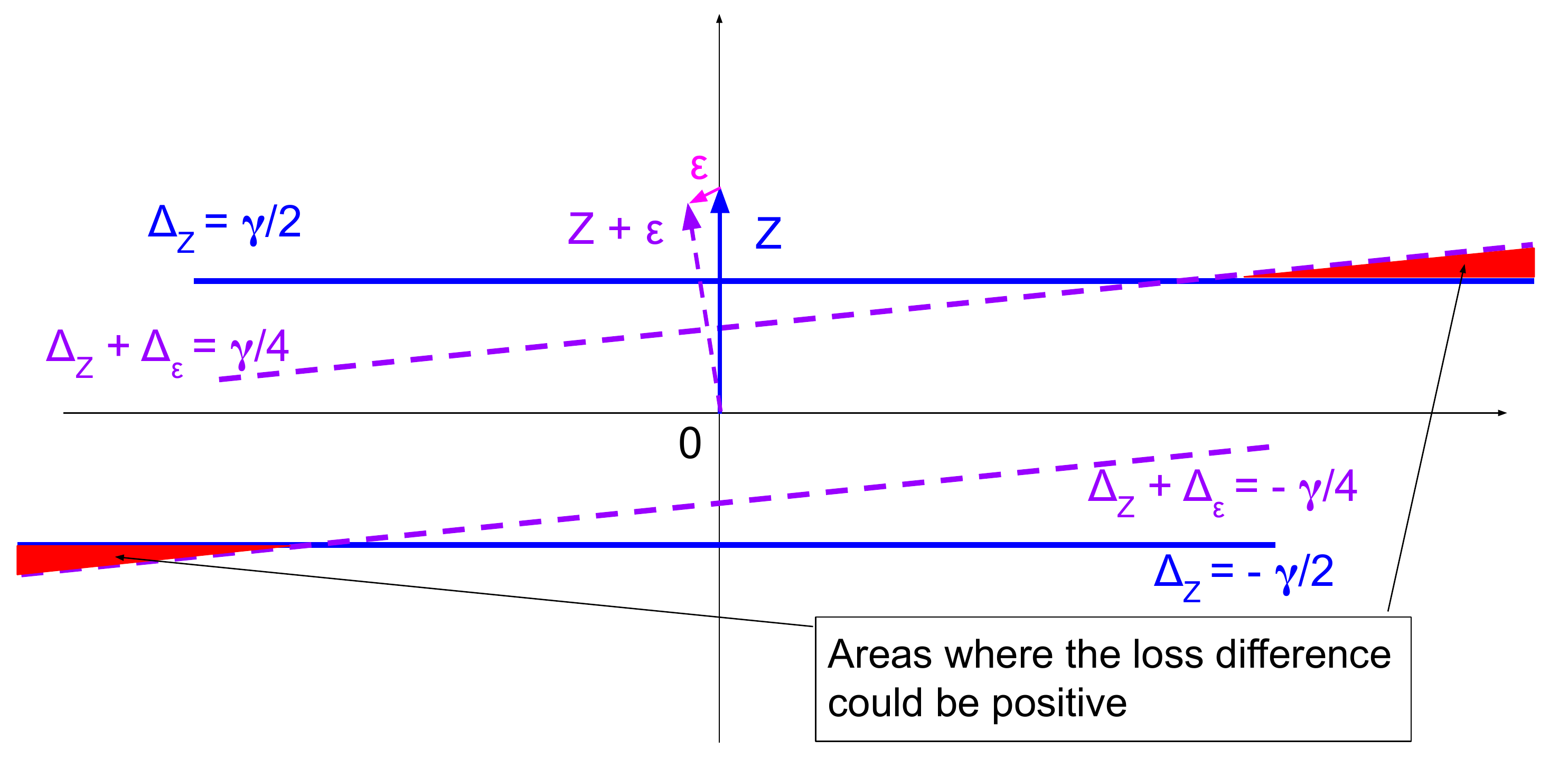}
	\caption{An illustrative example of areas in the space of $\Delta_W$ where the loss difference term for an index $i$ could be positive. For visual simplicity in the figure, we have used $Z$ and $\varepsilon$ to represent $Z_i$ and $\varepsilon_i$.}
	\label{fig:linear-subspace}
\end{figure}

\begin{table}[h]
	\caption{Possible values of the loss difference term for each index $i=1,\ldots, N_0$.}
	\label{tab:loss-diff-cases}
	\centering
	\begin{tabular}{l|c|c|c}
	\toprule
		& ${\Delta_Z}_i > \frac{\gamma}{2}$ & ${\Delta_Z}_i < -\frac{\gamma}{2}$ & $-\frac{\gamma}{2}\le {\Delta_Z}_i \le \frac{\gamma}{2}$ \\
	\hline
	$(\Delta_Z + \Delta_{\varepsilon})_i > \frac{\gamma}{4}$ & $0$ & $\eta_2^i - \eta_1^i$ & $-\eta_1^i$\\
	\hline
	$(\Delta_Z + \Delta_{\varepsilon})_i < -\frac{\gamma}{4}$ & $\eta_1^i - \eta_2^i$ & $0$ & $-\eta_2^i$\\
	\hline
	$ -\frac{\gamma}{4}\le (\Delta_Z + \Delta_{\varepsilon})_i \le \frac{\gamma}{4}$ & $\eta_1^i$ & $\eta_2^i$ & $0$\\
	\bottomrule
	\end{tabular}
\end{table}

\begin{figure}
	\centering
	\includegraphics[width=0.7\textwidth]{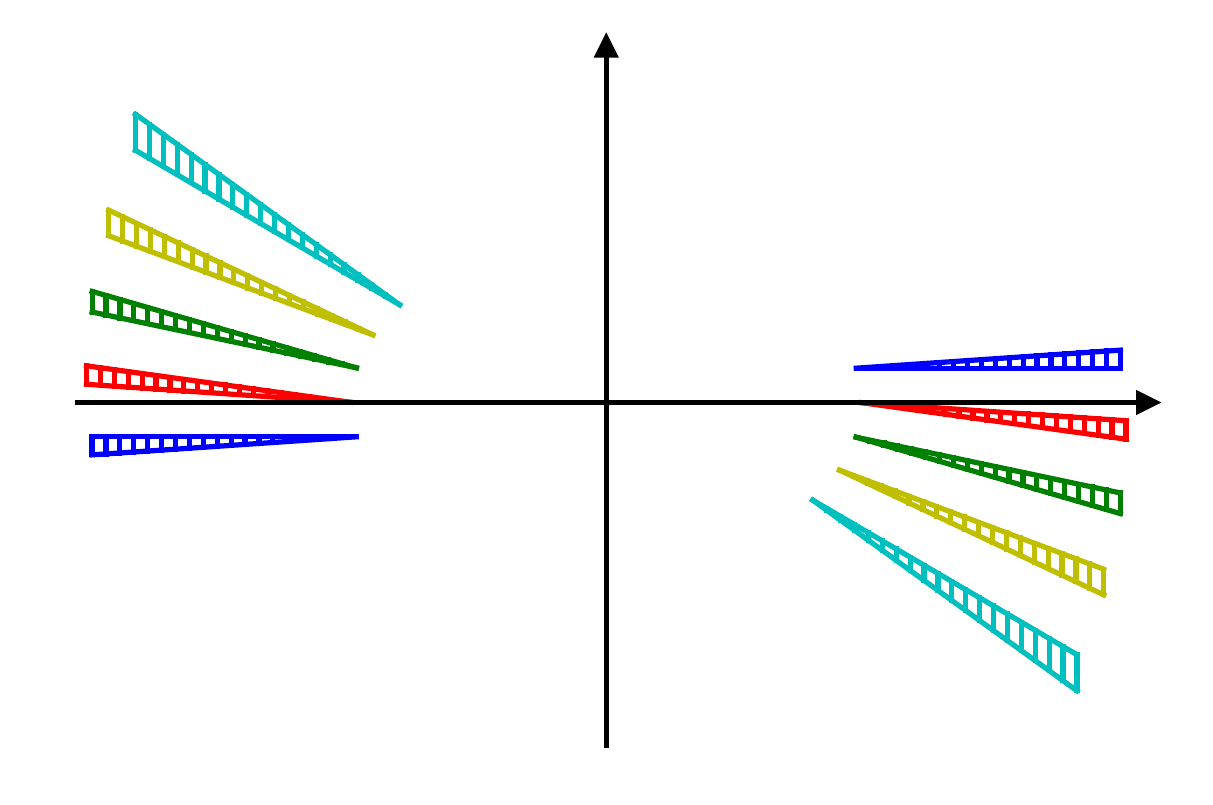}
	\caption{An illustrative example of data points with no intersected positive areas. Only the positive areas as shown in Figure~\ref{fig:linear-subspace} are visualized. Each color corresponds to a unique index $i$. As shown in the figure, there are no intersections among the areas of different data points when the $Z_i$'s are nicely scattered and $\varepsilon_i$'s are small.}
	\label{fig:no-intersection}
\end{figure}

For each $i=1,\ldots, N_0$, the term $\sum_{k=1}^2 \eta_k^i \sbra \ind{(-1)^{k+1}(\Delta_Z + \Delta_{\varepsilon})_i < \frac{\gamma}{4}} - \ind{(-1)^{k+1} {\Delta_Z}_i < \frac{\gamma}{2}} \sket$ in~(\ref{eq:binary-indicator-loss-delta}) has only a few possible values, which can be summarized in the following 9 cases in Table~\ref{tab:loss-diff-cases}. As can be seen, this loss difference term could be positive only when (1) ${\Delta_Z}_i > \frac{\gamma}{2}$ and $(\Delta_Z + \Delta_{\varepsilon})_i \le \frac{\gamma}{4}$ or (2) ${\Delta_Z}_i < - \frac{\gamma}{2}$ and $(\Delta_Z + \Delta_{\varepsilon})_i \ge -\frac{\gamma}{4}$. This implies that, for fixed $Z_i$ and $\varepsilon_i$, there are two linear subspaces in the space of $\Delta_W$ where the loss difference for index $i$ could be positive. In Figure~\ref{fig:linear-subspace}, we provide an illustrative example of such linear subspaces in the case $\Delta_W \in \reals^2$, such that we can visualize it. Qualitatively, when $\|\varepsilon_i\|_2$ is much smaller than $\|Z_i\|_2$ (which is often the case by their constructions), the areas that the loss difference term being positive will be very small. 

For a classifier $h$ to have $\gL^{\gamma/4}_m(h) - \gL^{\gamma/2}_0(h) > cK\epsilon_m + N_0^{-\alpha}$, a necessary condition is that its corresponding $\Delta_W$ lies in the intersection of positive areas of at least $N_0^{1-\alpha}$ samples. Conversely, if $\varepsilon_i$'s are small and $Z_i$'s are nicely scattered such that the $N_0$ samples can be divided into $N_0^{\alpha}$ groups where the positive areas of any two points from different groups do not intersect, then we know $\gL^{\gamma/4}_m(h) - \gL^{\gamma/2}_0(h) \le cK\epsilon_m + N_0^{-\alpha}$ for any $h$. And hence this is a sufficient condition for Assumption~\ref{assump:loss-diff} to hold. Figure~\ref{fig:no-intersection} provides an illustrative example of data points with no intersected positive areas on a 2-dimensional surface. When $D' > 2$, it might be difficult to completely avoid intersections of the positive areas. However, what Assumption~\ref{assump:loss-diff} requires is that the areas where a large number of data points intersect are small.

\section{More Details of Experiment Setup}
In this section, we describe more details of our experiment setup that are omitted in the main paper due to the space limit.

\subsection{Detailed Training Setup}

We use the default setting in Deep Graph Library~\citep{wang2019deep}\footnote{Apache License 2.0.} for model hyper-parameters. We use the Adam optimizer with an initial learning rate of 0.01 and weight decay of 5e-4 to train all models for 400 epochs by minimizing the cross-entropy loss, with early stopping on the validation set.

\begin{figure}
\centering
\subfloat[Amazon Computers.]{
\begin{minipage}[t]{0.4\linewidth}
\centering
\includegraphics[width=\linewidth]{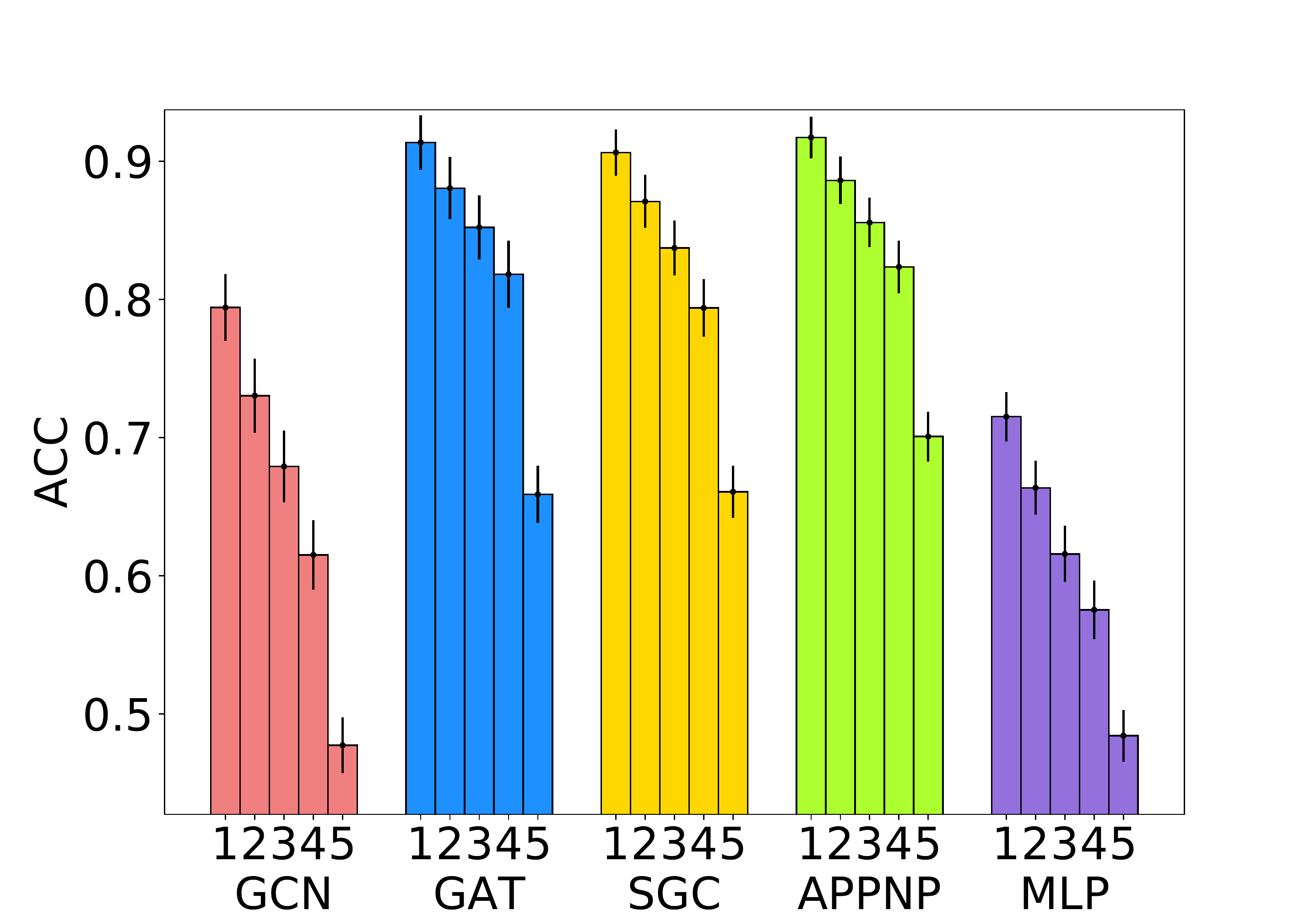}
\end{minipage}%
}%
\subfloat[Amazon Photo.]{
\begin{minipage}[t]{0.4\linewidth}
\centering
\includegraphics[width=\linewidth]{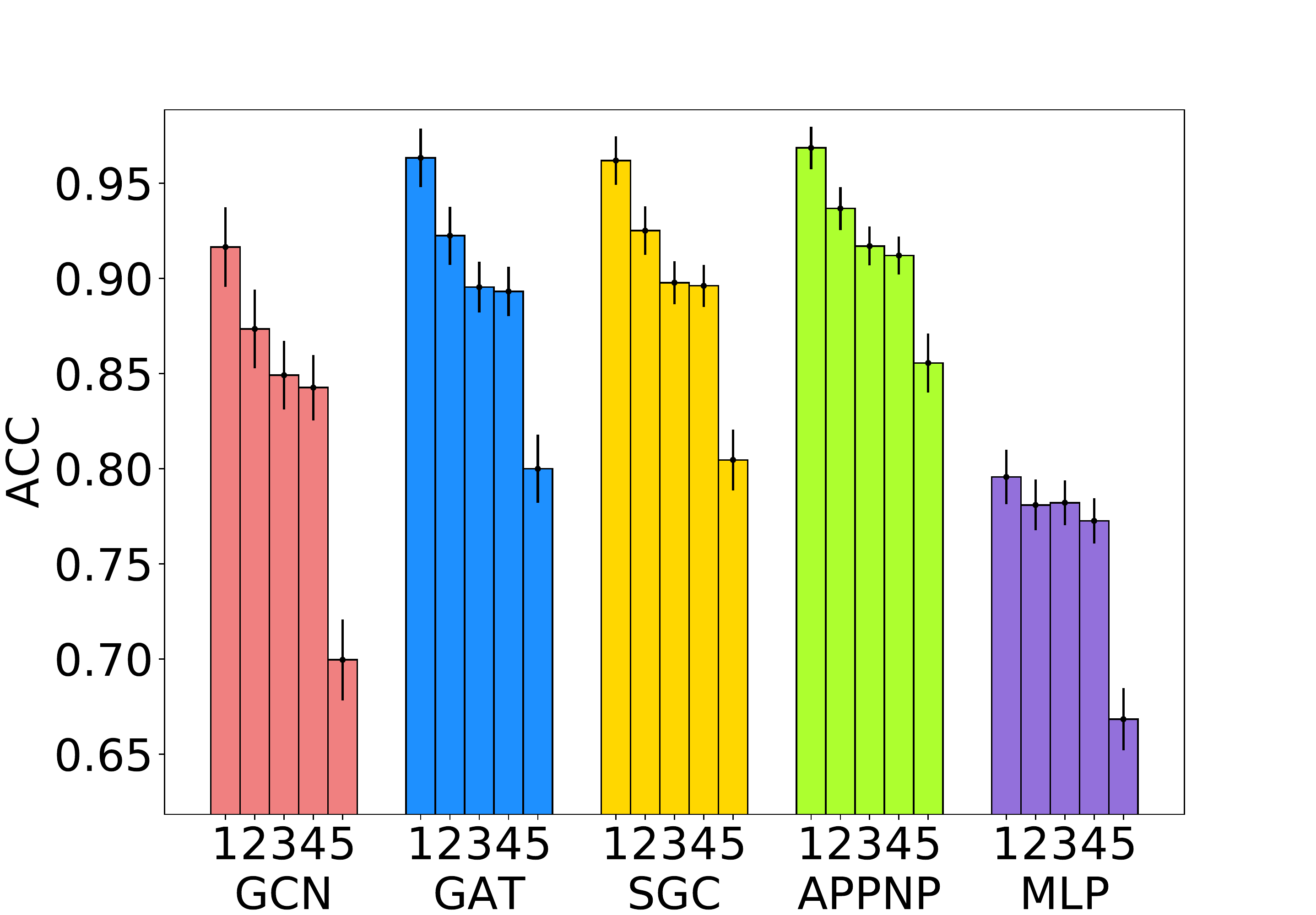}
\end{minipage}%
}%
\caption{Test accuracy disparity across subgroups by aggregated-feature distance. Extra experiments on Amazon-Computers and Amazon-Photo datasets. The experiment and plot settings are the same as Figure~\ref{fig:agg}.}
\label{fig:agg-amazon}
\end{figure}

\begin{figure}
\centering
\subfloat[Amazon Computers.]{
\begin{minipage}[t]{0.4\linewidth}
\centering
\includegraphics[width=\linewidth]{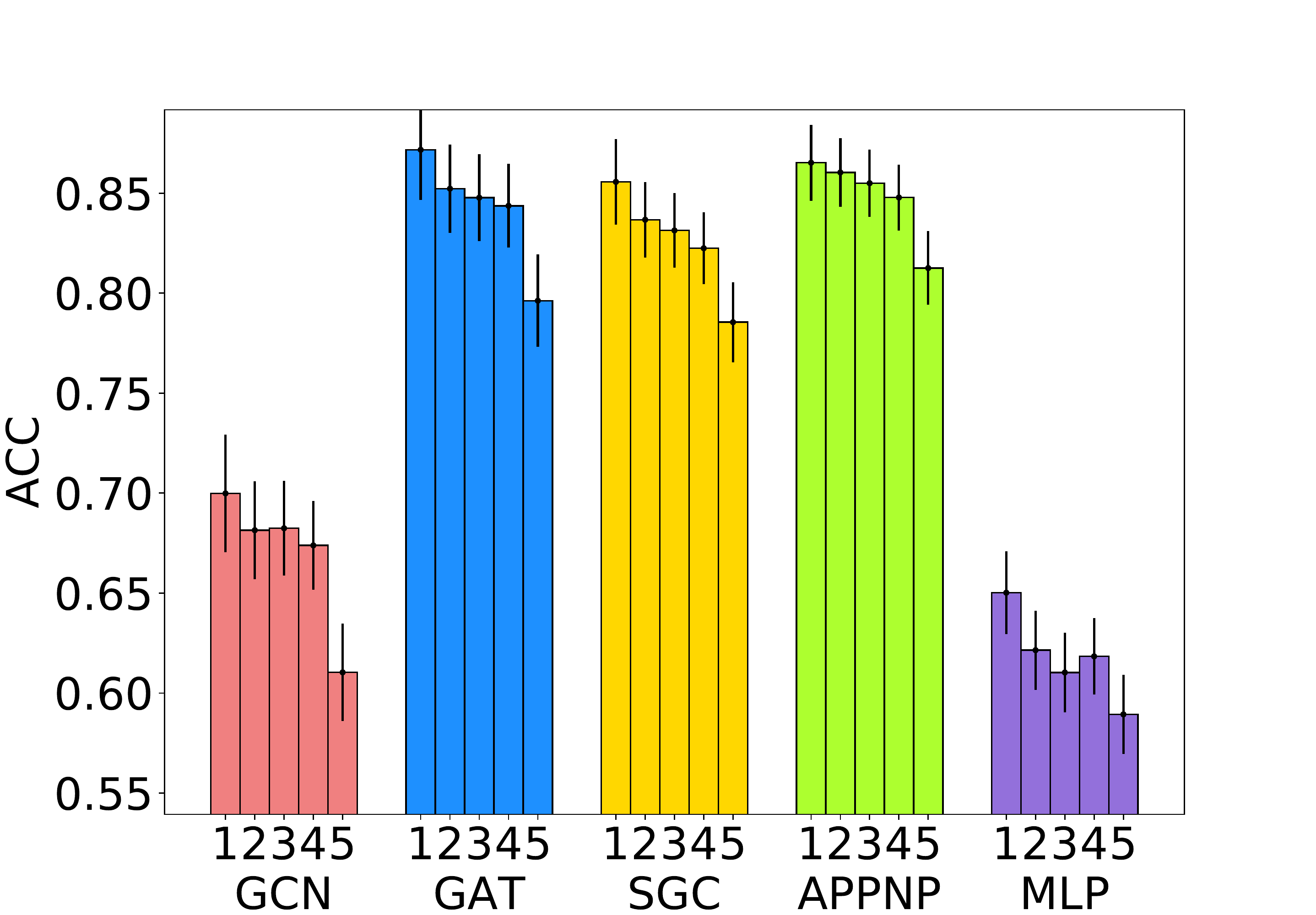}
\end{minipage}%
}%
\subfloat[Amazon Photo.]{
\begin{minipage}[t]{0.4\linewidth}
\centering
\includegraphics[width=\linewidth]{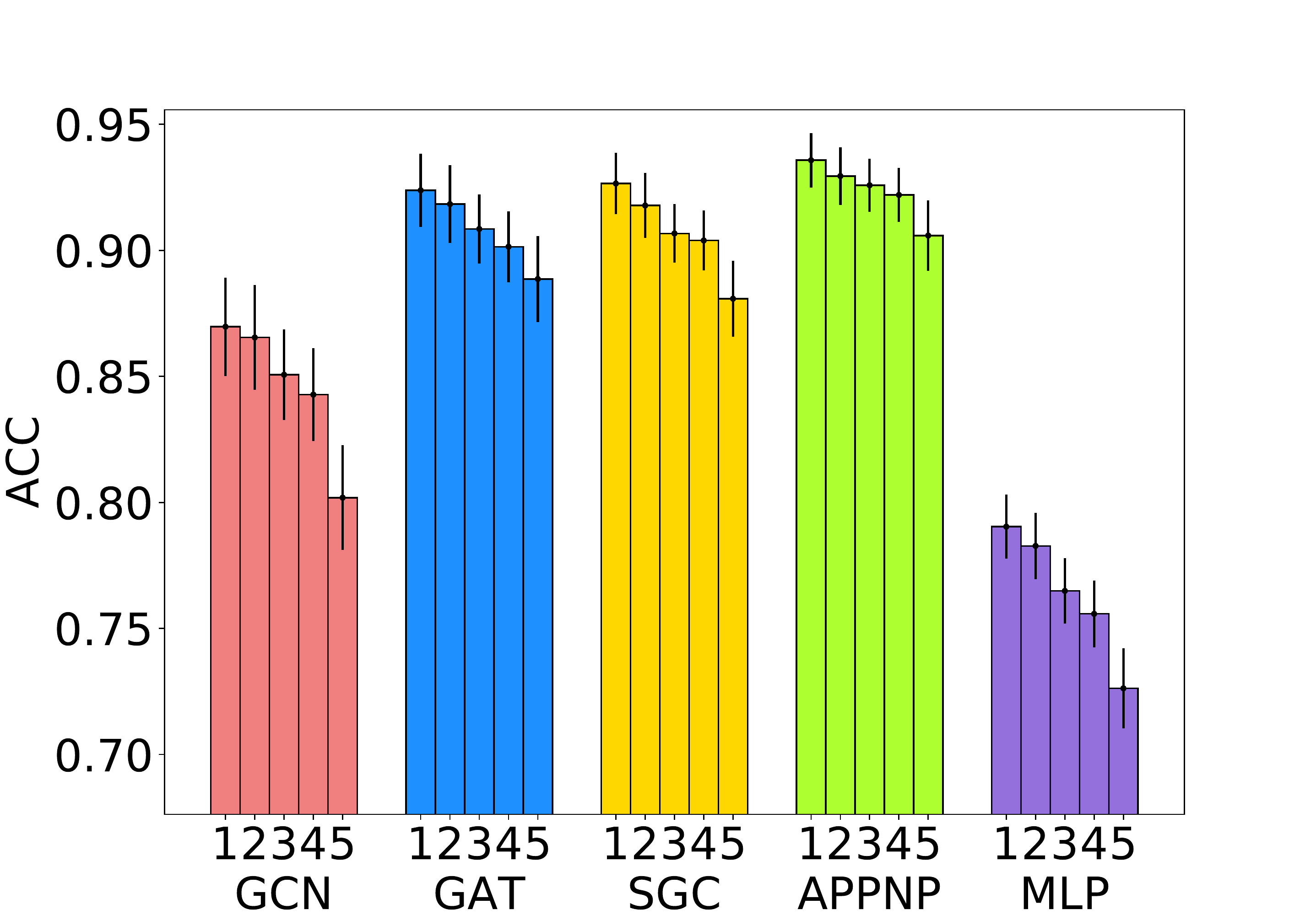}
\end{minipage}%
}%
\caption{Test accuracy disparity across subgroups by geodesic distance. Extra experiments on Amazon-Computers and Amazon-Photo datasets. The experiment and plot settings are the same as Figure~\ref{fig:raw_distance}.}
\label{fig:raw_distance-amazon}
\end{figure}

\subsection{Detailed Setup of the Noisy Feature Experiment}
\label{appendix:reudced-homophily}
In this experiment (corresponding to Figure~\ref{fig:agg_rand}), we make the node features less homophilous by adding random noises to each node independently. Specifically, we use noisy features $\tilde{X}=X+\alpha \frac{\|X\|_F}{\|U\|_F}U$, where $X\in \reals^{N\times D}$ is the original feature matrix, and $U\in \reals^{N\times D}$ is a random matrix with each element independently and uniformly sampled from $[0, 1]$. And we set $\alpha=5$. In this way, the magnitude of the noise is slightly larger than the original feature to significantly reduce the homophily property. All other experiment settings are the same as those corresponding to Figure~\ref{fig:agg}.

\subsection{Detailed Setup of the Biased Training Node Selection Experiment}
In this experiment (corresponding to Section~\ref{sec:exp-biased}), we investigate the impact of biased training node selection. As briefly described in Section~\ref{sec:exp-biased}, we choose a ``dominant class'' and construct a manipulated training set. For each class, we still sample 20 training nodes but in a biased way. Specifically, given one choice of the four node centrality metrics (degree, closeness, betweenness, and PageRank), the training set is sampled as follows.
\begin{enumerate}
	\item For the dominant class, uniformly sample 15 nodes from the 10\% of the nodes with highest node centrality, and uniformly sample 5 nodes from the remaining.
	\item For each of the other classes, uniformly sample 15 nodes from the 10\% of the nodes with lowest node centrality, and uniformly sample 5 nodes from the remaining.
\end{enumerate}
In this way, the training nodes of the dominant class are biased towards high-centrality nodes while the training nodes of the other classes are biased towards low-centrality nodes. 

After the biased training set is constructed, we randomly sample 500 validation nodes and 1000 test nodes from the remaining nodes and perform the model training following the standard setup as the previous experiments.

\section{Extra Experiment Results}
\label{appendix:exp}
\subsection{Accuracy Disparity on Amazon Datasets}
In addition to the commonly used citation network benchmarks, Cora, Citeseer, and Pubmed~\citep{sen2008collective,yang2016revisiting}, we also provide results of the test accuracy disparity experiments of subgroups by aggregated-feature distance and geodesic distance on Amazon-Computers and Amazon-Photo datasets~\citep{pitfall}, which have a similar scale but a different network type compared to the three citation networks. 

For Amazon-Computers and Amazon-Photo, we follow exactly the same experiment procedure as for Cora, Citeseer, and Pubmed. The results of subgroups by aggregated-feature distance are shown in Figure~\ref{fig:agg-amazon} and the results of subgroups by geodesic distance are shown in Figure~\ref{fig:raw_distance-amazon}. The results are respectively similar as those in Figure~\ref{fig:agg} and Figure~\ref{fig:raw_distance}.

\begin{figure}
\centering
\subfloat[GCN.]{
\begin{minipage}[t]{0.33\linewidth}
\centering
\includegraphics[width=\linewidth]{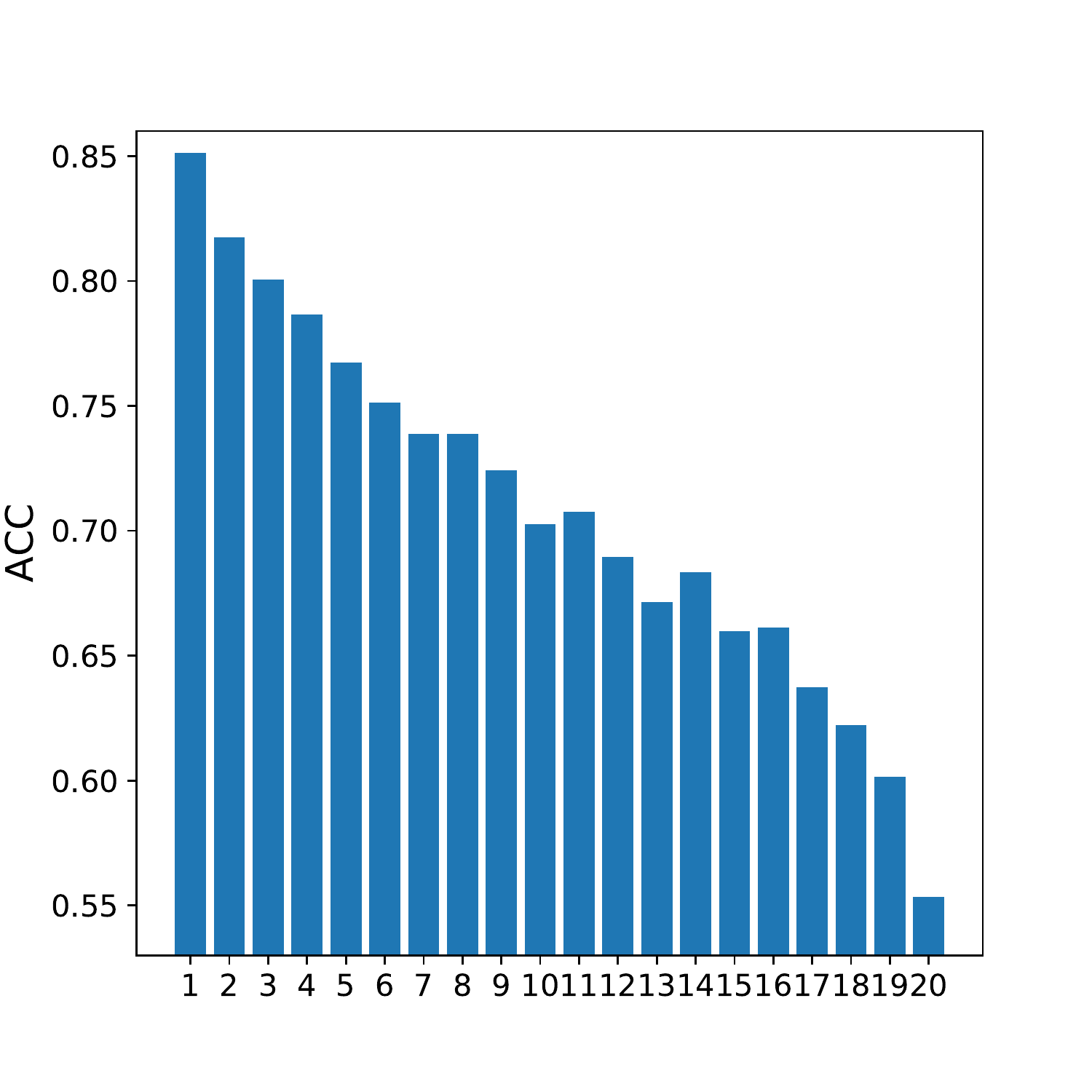}
\end{minipage}%
}%
\subfloat[GraphSAGE.]{
\begin{minipage}[t]{0.33\linewidth}
\centering
\includegraphics[width=\linewidth]{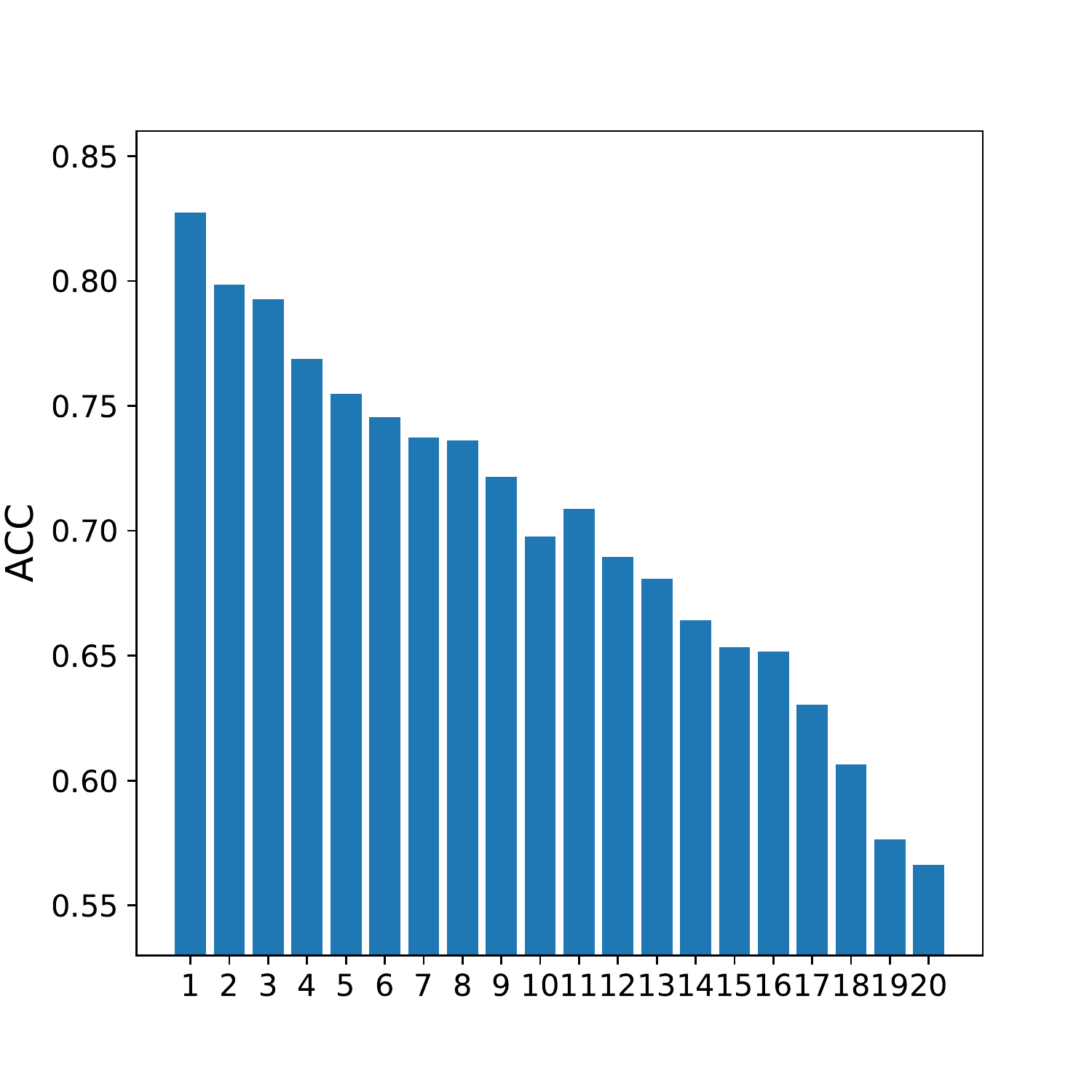}
\end{minipage}%
}%
\subfloat[MLP.]{
\begin{minipage}[t]{0.33\linewidth}
\centering
\includegraphics[width=\linewidth]{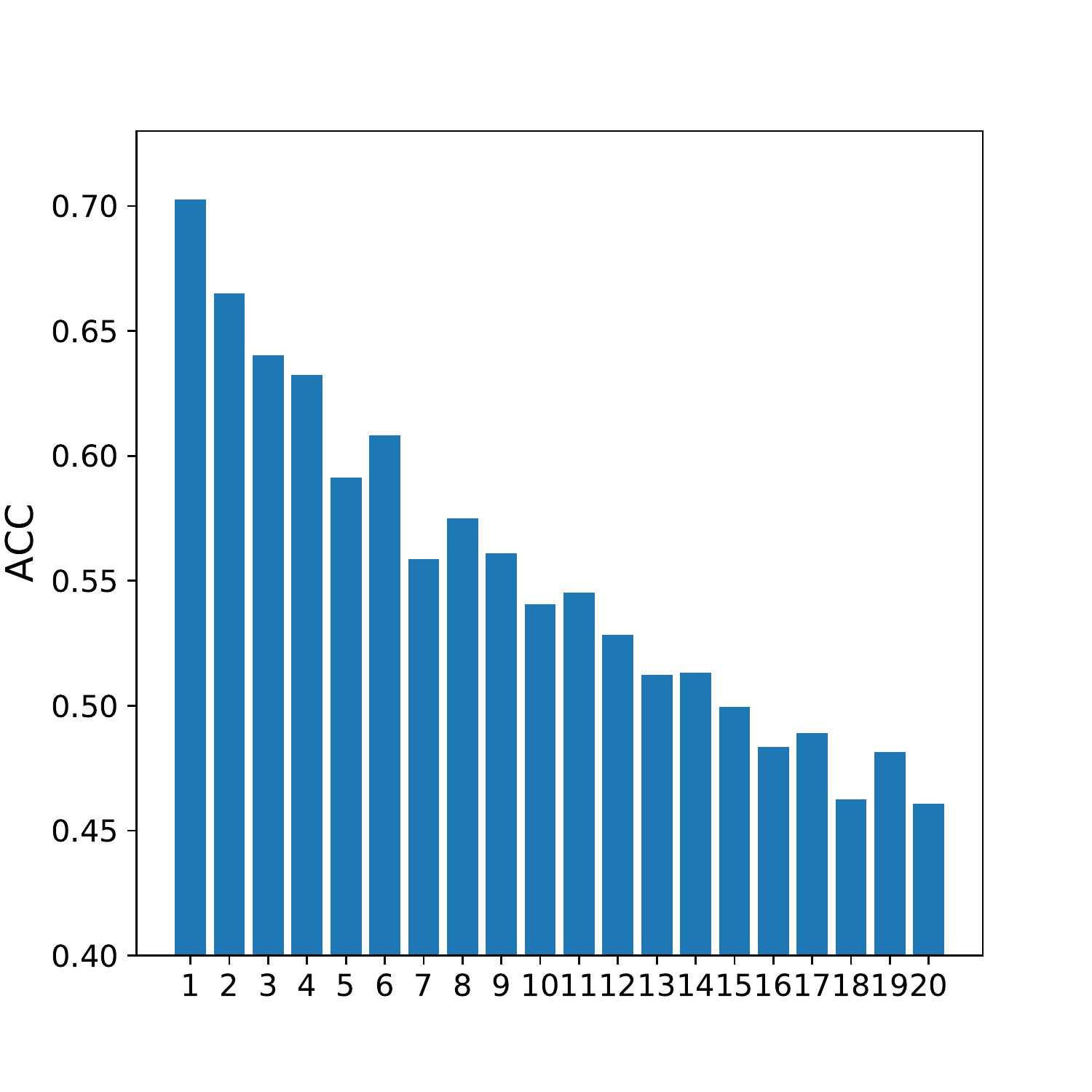}
\end{minipage}%
}%
\caption{Results on OGBN-Arxiv. Test accuracy disparity across subgroups by aggregated-feature distance. Each figure corresponds to a model. Bars labeled 1 to 20 represent subgroups with increasing distance to training set.}
\label{fig:agg-arxiv}
\end{figure}

\begin{figure}
\centering
\subfloat[GCN.]{
\begin{minipage}[t]{0.33\linewidth}
\centering
\includegraphics[width=\linewidth]{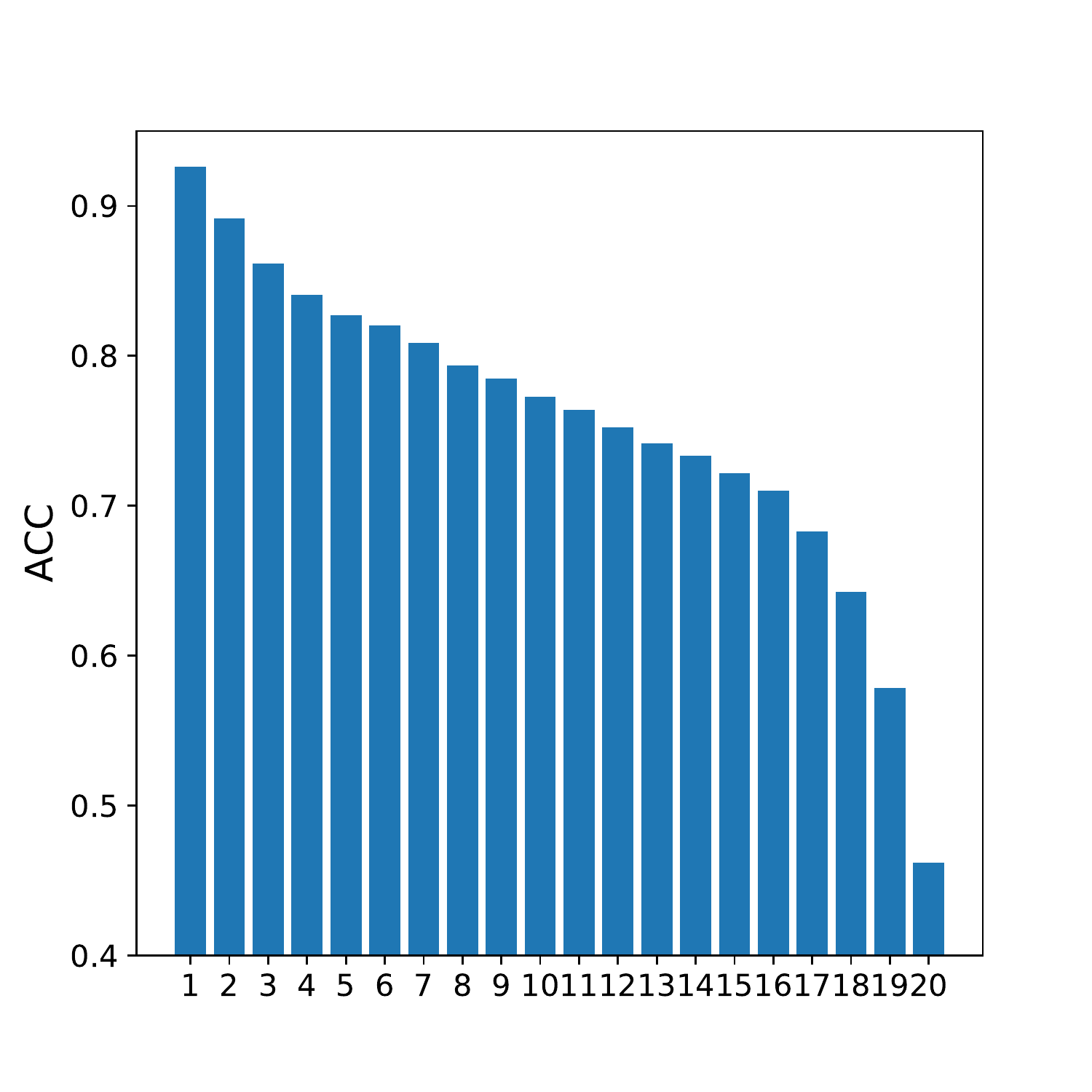}
\end{minipage}%
}%
\subfloat[GraphSAGE.]{
\begin{minipage}[t]{0.33\linewidth}
\centering
\includegraphics[width=\linewidth]{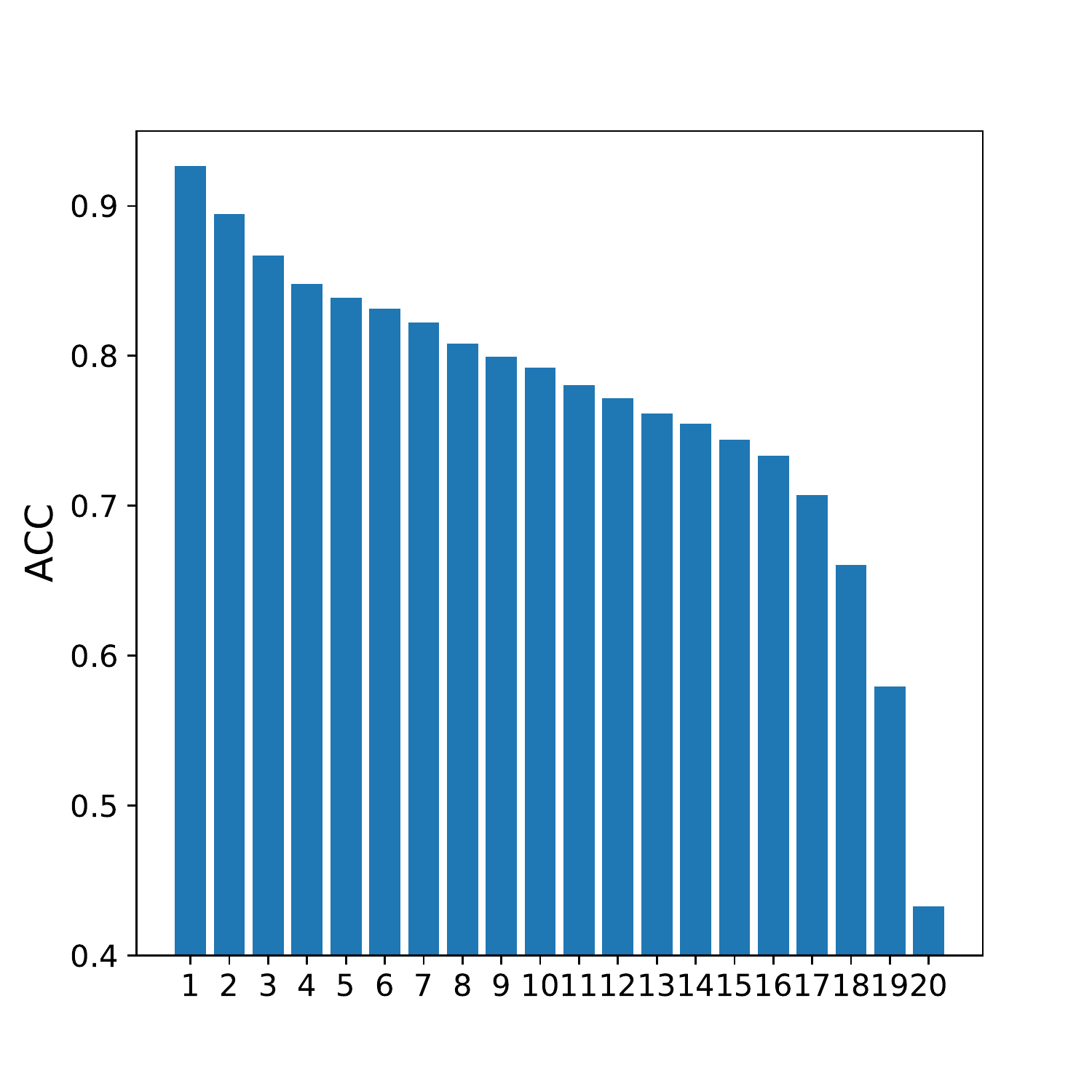}
\end{minipage}%
}%
\subfloat[MLP.]{
\begin{minipage}[t]{0.33\linewidth}
\centering
\includegraphics[width=\linewidth]{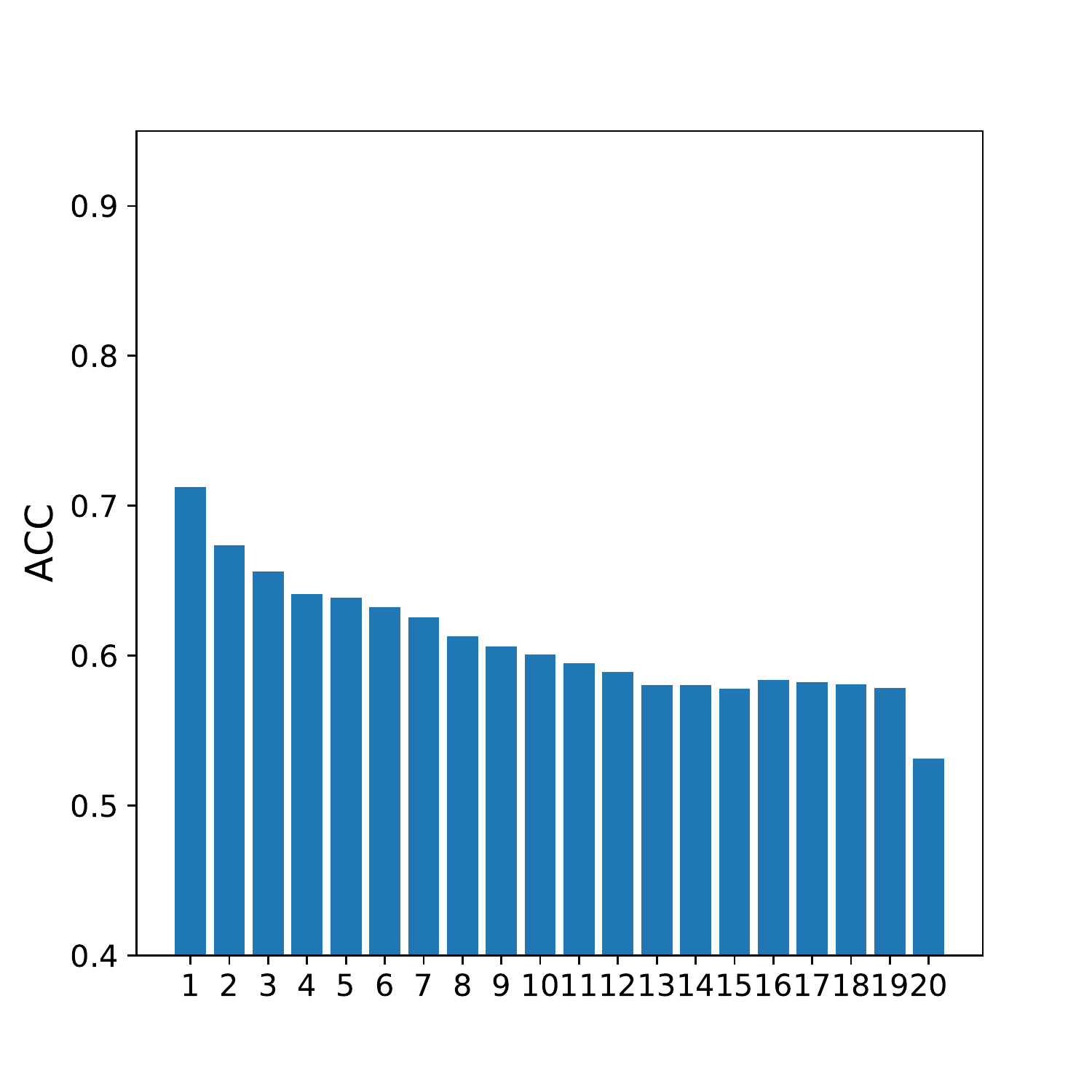}
\end{minipage}%
}%
\caption{Results on OGBN-Products. Test accuracy disparity across subgroups by aggregated-feature distance. The experiment and plot settings are the same as Figure~\ref{fig:agg-arxiv}.}
\label{fig:agg-products}
\end{figure}

\begin{figure}
\centering
\subfloat[GCN.]{
\begin{minipage}[t]{0.33\linewidth}
\centering
\includegraphics[width=\linewidth]{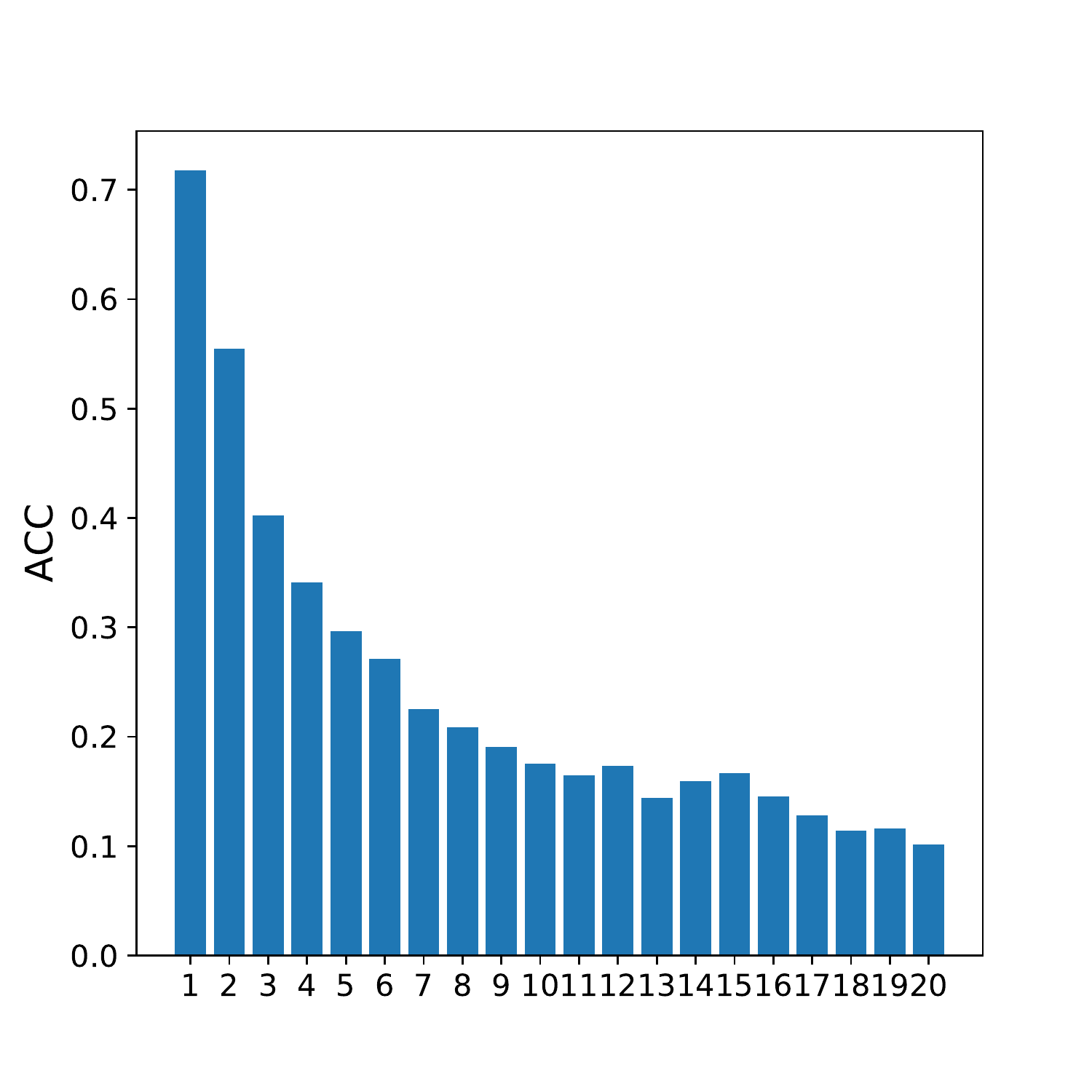}
\end{minipage}%
}%
\subfloat[GraphSAGE.]{
\begin{minipage}[t]{0.33\linewidth}
\centering
\includegraphics[width=\linewidth]{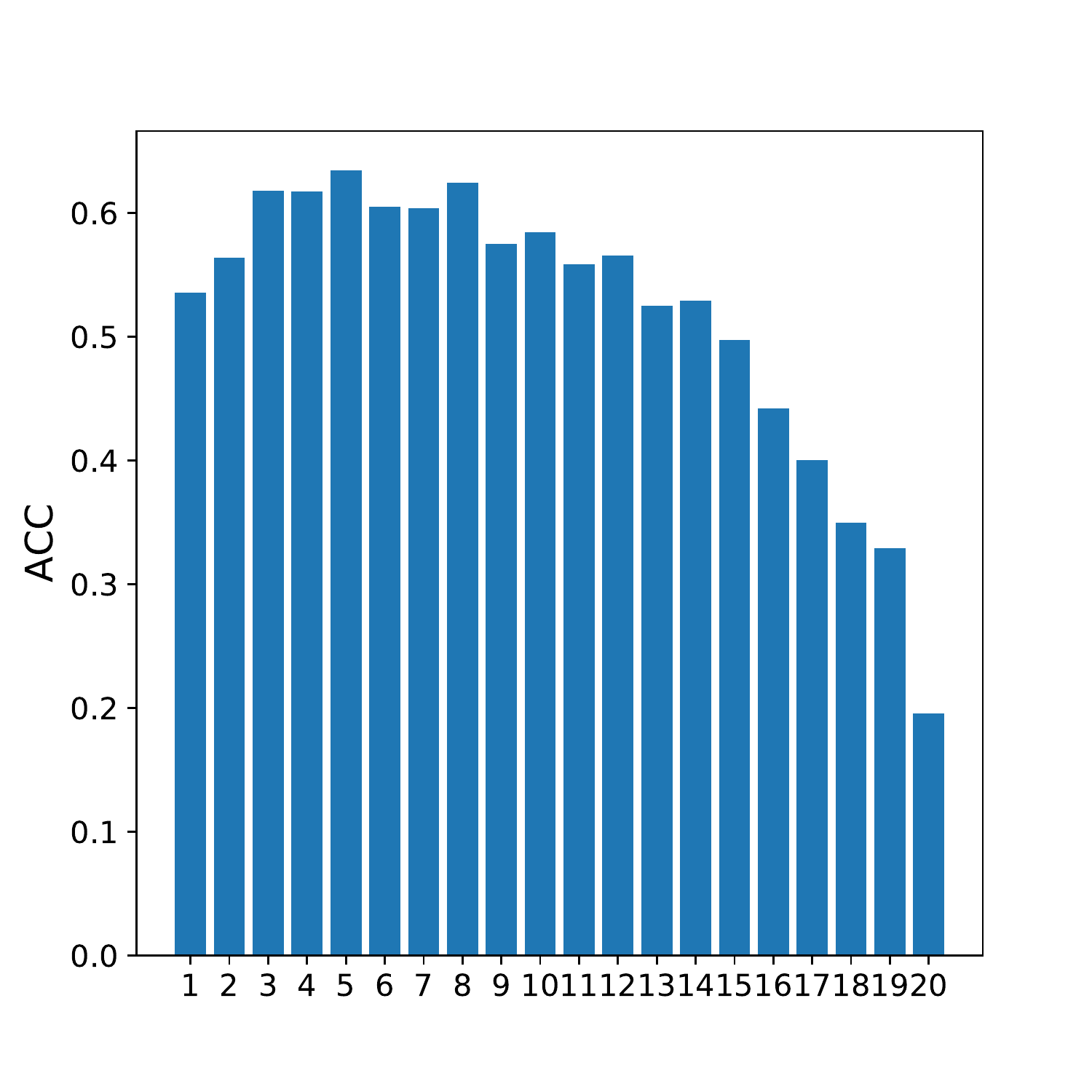}
\end{minipage}%
}%
\subfloat[MLP.]{
\begin{minipage}[t]{0.33\linewidth}
\centering
\includegraphics[width=\linewidth]{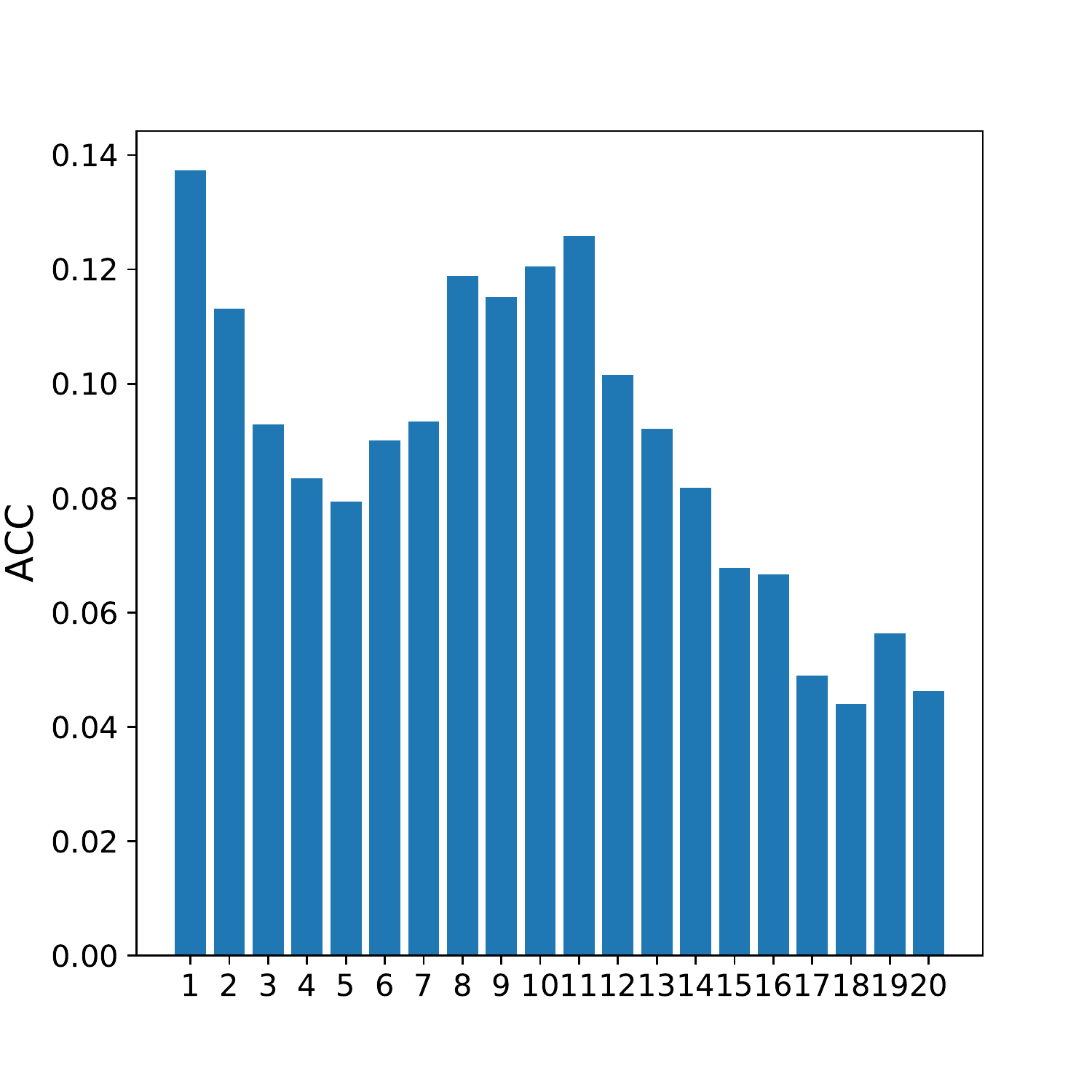}
\end{minipage}%
}%
\caption{Results on OGBN-Arxiv. Test accuracy disparity across subgroups by aggregated-feature distance, experimented with noisy features. The experiment and plot settings are the same as Figure~\ref{fig:agg-arxiv}, except the node features are perturbed by independent noises to reduce homophily.}
\label{fig:agg-arxiv-rand}
\end{figure}

\begin{figure}
\centering
\subfloat[GCN.]{
\begin{minipage}[t]{0.33\linewidth}
\centering
\includegraphics[width=\linewidth]{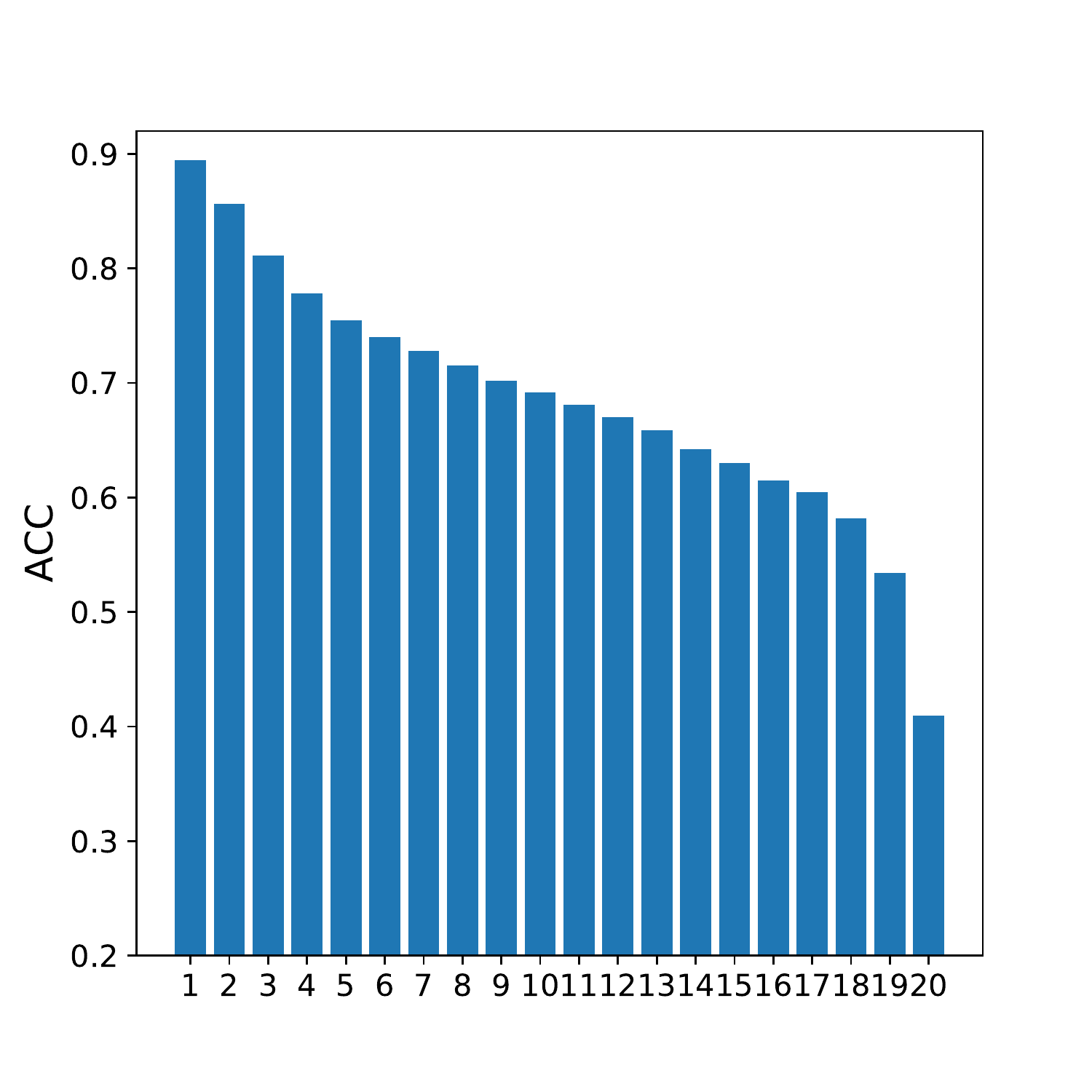}
\end{minipage}%
}%
\subfloat[GraphSAGE.]{
\begin{minipage}[t]{0.33\linewidth}
\centering
\includegraphics[width=\linewidth]{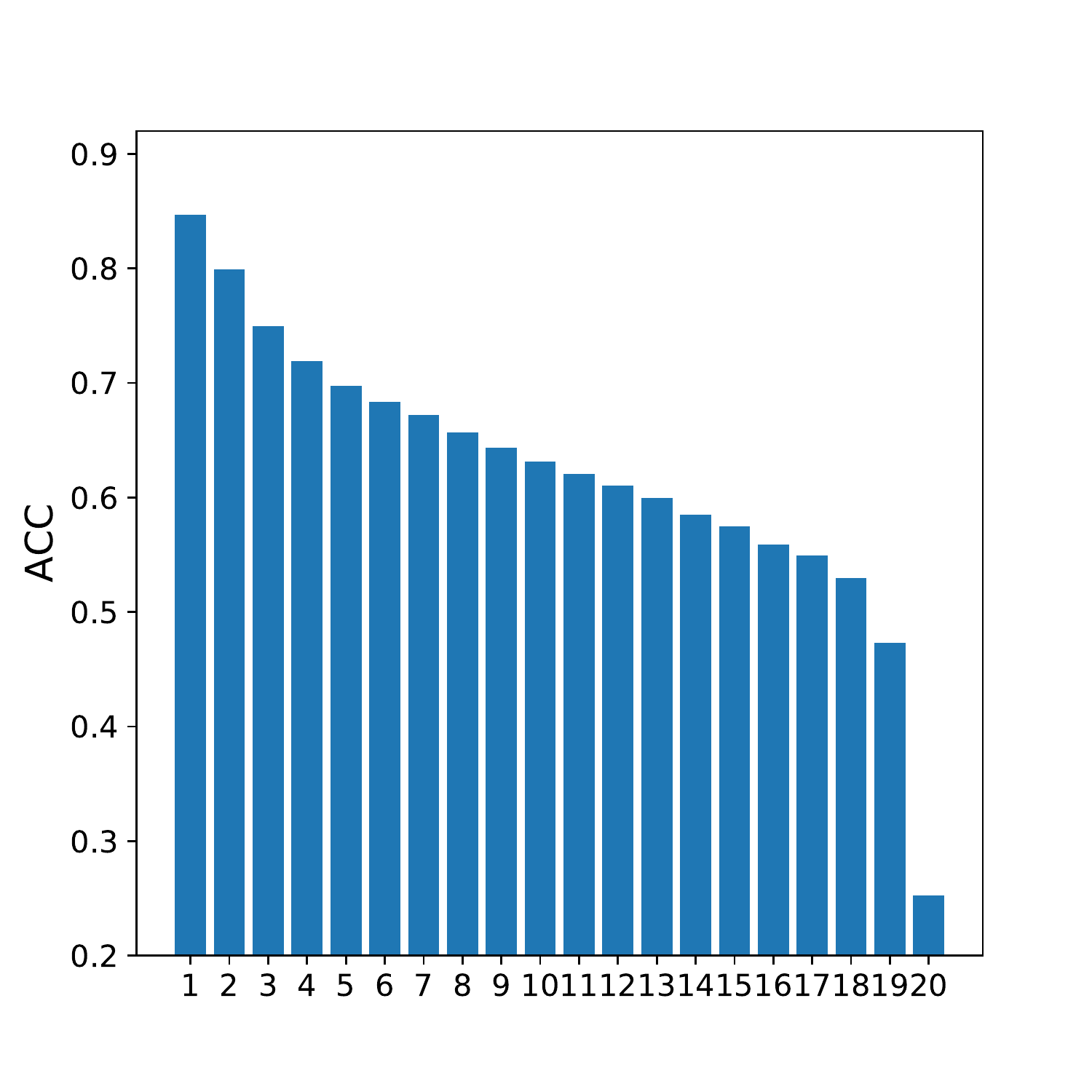}
\end{minipage}%
}%
\subfloat[MLP.]{
\begin{minipage}[t]{0.33\linewidth}
\centering
\includegraphics[width=\linewidth]{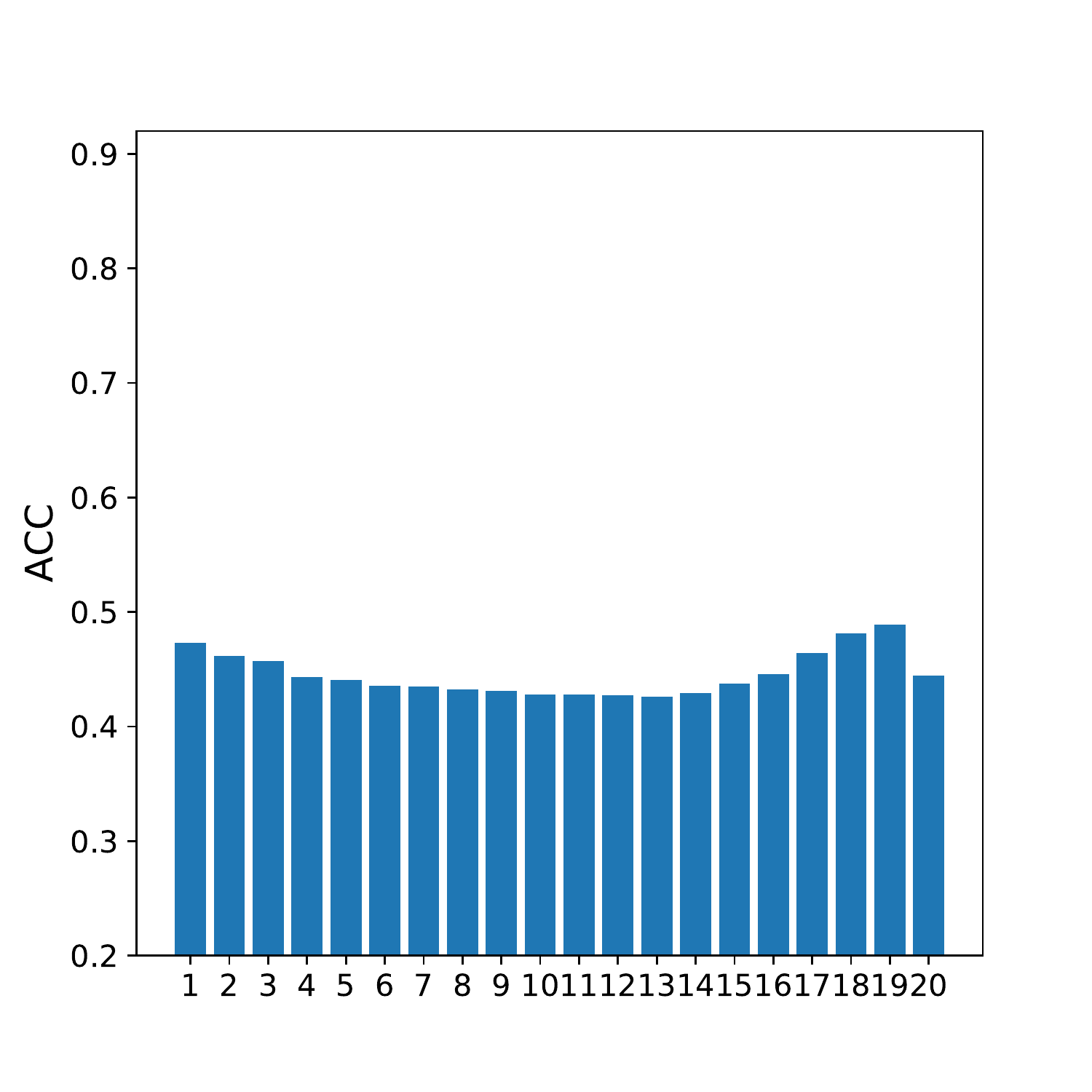}
\end{minipage}%
}%
\caption{Results on OGBN-Products. Test accuracy disparity across subgroups by aggregated-feature distance, experimented with noisy features. The experiment and plot settings are the same as Figure~\ref{fig:agg-products}, except for the node features are perturbed by independent noises to reduce homophily.}
\label{fig:agg-products-rand}
\end{figure}

\subsection{Accuracy Disparity on Open Graph Benchmarks}
We further provide experiment results on two large-scale datasets from Open Graph Benchmark~\citep{hu2020open}, OGBN-Arxiv and OGBN-Products.
 
For OGBN-Arxiv and OGBN-Products, we first follow the standard training procedure suggested by Open Graph Benchmark~\citep{hu2020open} to train a GCN, a GraphSAGE, and an MLP. And we split the test groups into 20 groups in terms of the aggregated feature distance. As there are more test nodes available, we can afford the split of more groups better resolution. The results on OGBN-Arxiv and OGBN-Products are respectively shown in Figure~\ref{fig:agg-arxiv} and Figure~\ref{fig:agg-products}, where we observe a similar decreasing pattern of test accuracy as in Figure~\ref{fig:agg} (on the citation networks). Since there is also a decreasing pattern for MLP, following the experiments shown in Figure~\ref{fig:agg_rand}, we further inject independent noises to node features to reduce the homophily of the OGBN-Arxiv and OGBN-Products dataset and repeat the experiments in Figure~\ref{fig:agg-arxiv} and Figure~\ref{fig:agg-products}. The results are respectively shown in Figure~\ref{fig:agg-arxiv-rand} and Figure~\ref{fig:agg-products-rand}, where, similar as Figure~\ref{fig:agg_rand}, the decreasing pattern largely remains for GNNs but disappears for the MLP.

We also experiment on subgroups split in terms of geodesic distance and node centrality metrics. The results of these experiments are slightly different on the large-scale datasets compared to those on the smaller benchmark datasets. 

For geodesic distance (Figure~\ref{fig:distance-arxiv} for OGBN-Arxiv and Figure~\ref{fig:distance-products} for OGBN-Products), there is not a descending trend of test accuracy until the last few groups. This is because the size of training set is large such that most test nodes are 1-hop neighbors of some training nodes. Therefore most groups are random split of such 1-hop neighbors and there will not be a descending accuracy among these subgroups. This problem is especially obvious for OGBN-Arxiv, where 60\% of the nodes are in the training set. So we only see an accuracy drop on the last two subgroups. The size of training set is relatively smaller on OGBN-Products but still more than 60\% of the test nodes are 1-hop neighbors of some training nodes. It is worth-noting that the plots in Figure~\ref{fig:distance-products} have stair patterns, showing a clear descending trend with respect to the geodesic distance. The fluctuation of early subgroups is larger on OGBN-Arxiv because there are fewer test nodes in OGBN-Arxiv than in OGBN-Products. Overall, there is still a clear descending trend with respect to increasing geodesic distance. But the nodes are less distinguishable in terms of geodesic distance than aggregated-feature distance, especially when the size of training set is large (more discussions in Appendix~\ref{sec:agg-geodesic}).

For node centrality metrics, we report experiments on degree and PageRank, and omit the betweenness and closeness metrics due to their high computation cost on large-scale graphs. The results on OGBN-Arxiv are shown in Figure~\ref{fig:centrality-arxiv}. It is intriguing that there is a descending trend with respect to the degree and PageRank metrics on this particular experiment setting, though the descending trend is not as sharp as the one in Figure~\ref{fig:agg-arxiv} (experiments on aggregated-feature distance). It is possible that, when there is a very large training set (60\% in this case), the node centrality metrics become related to the aggregated-feature distance. However, node centrality metrics again fail to capture the descending trend on OGBN-Products, as shown in Figure~\ref{fig:centrality-products}. In future work, we plan to further explore the relationship between the theoretically derived aggregated-feature distance and various more intuitive graph metrics on different graph data.

\begin{figure}[htbp]
\centering
\subfloat[GCN.]{
\begin{minipage}[t]{0.4\linewidth}
\centering
\includegraphics[width=\linewidth]{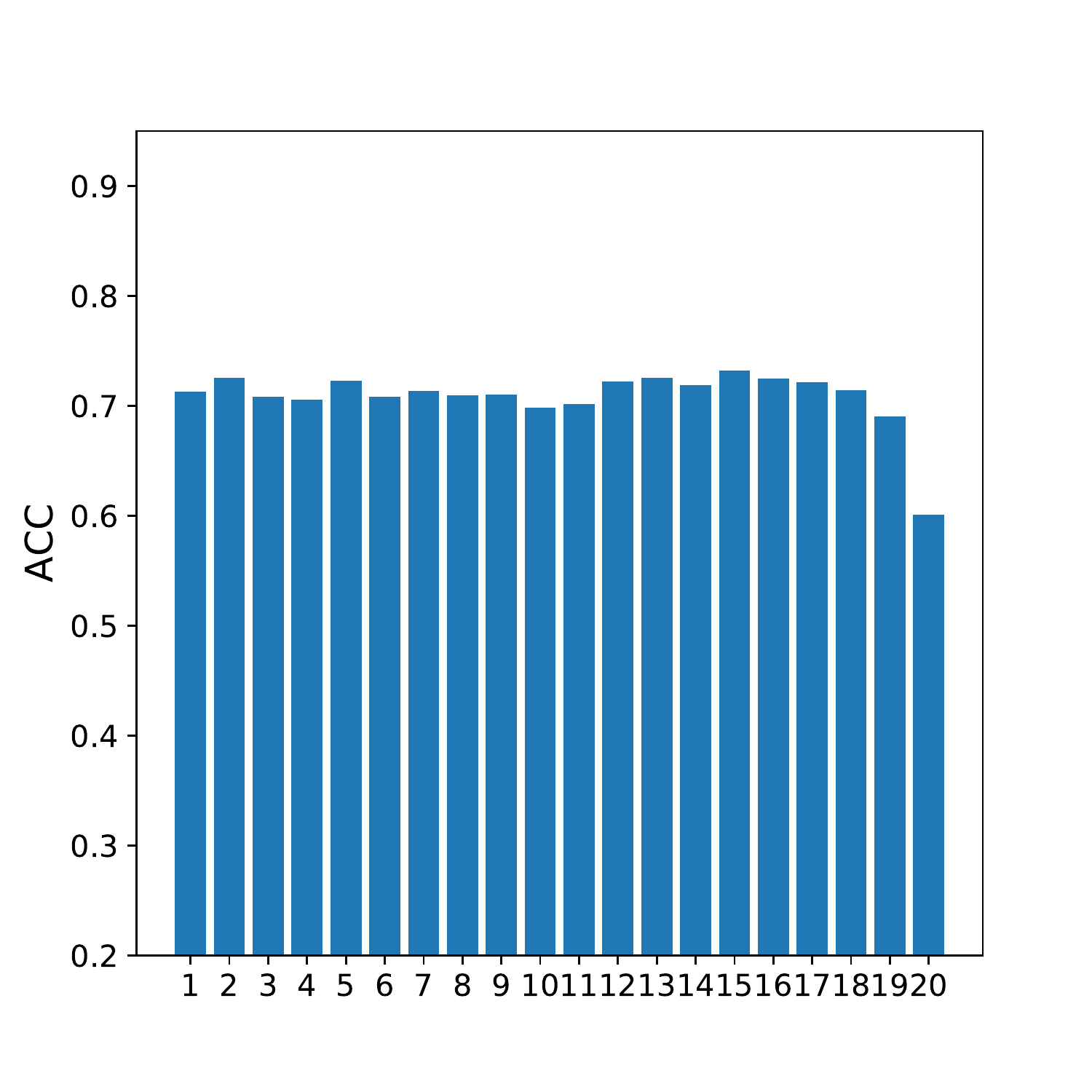}
\end{minipage}%
}%
\subfloat[GraphSAGE.]{
\begin{minipage}[t]{0.4\linewidth}
\centering
\includegraphics[width=\linewidth]{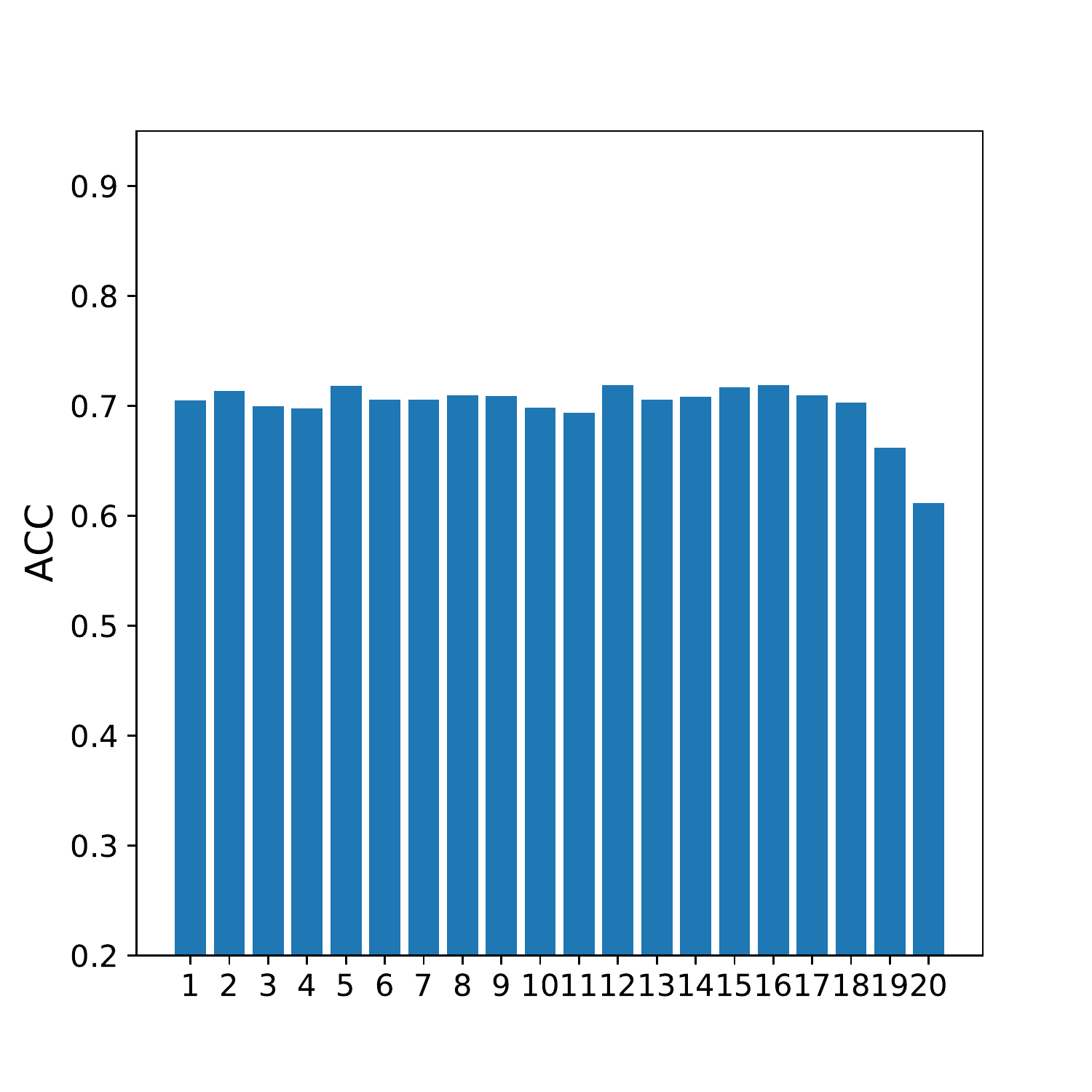}
\end{minipage}%
}%
\caption{Results on OGBN-Arxiv. Test accuracy disparity across subgroups by geodesic distance. The experiment and plot settings are the same as Figure~\ref{fig:agg-arxiv}, except for the aggregated-feature distance is replaced by the geodesic distance.}
\label{fig:distance-arxiv}
\end{figure}

\begin{figure}[htbp]
\centering
\subfloat[GCN.]{
\begin{minipage}[t]{0.4\linewidth}
\centering
\includegraphics[width=\linewidth]{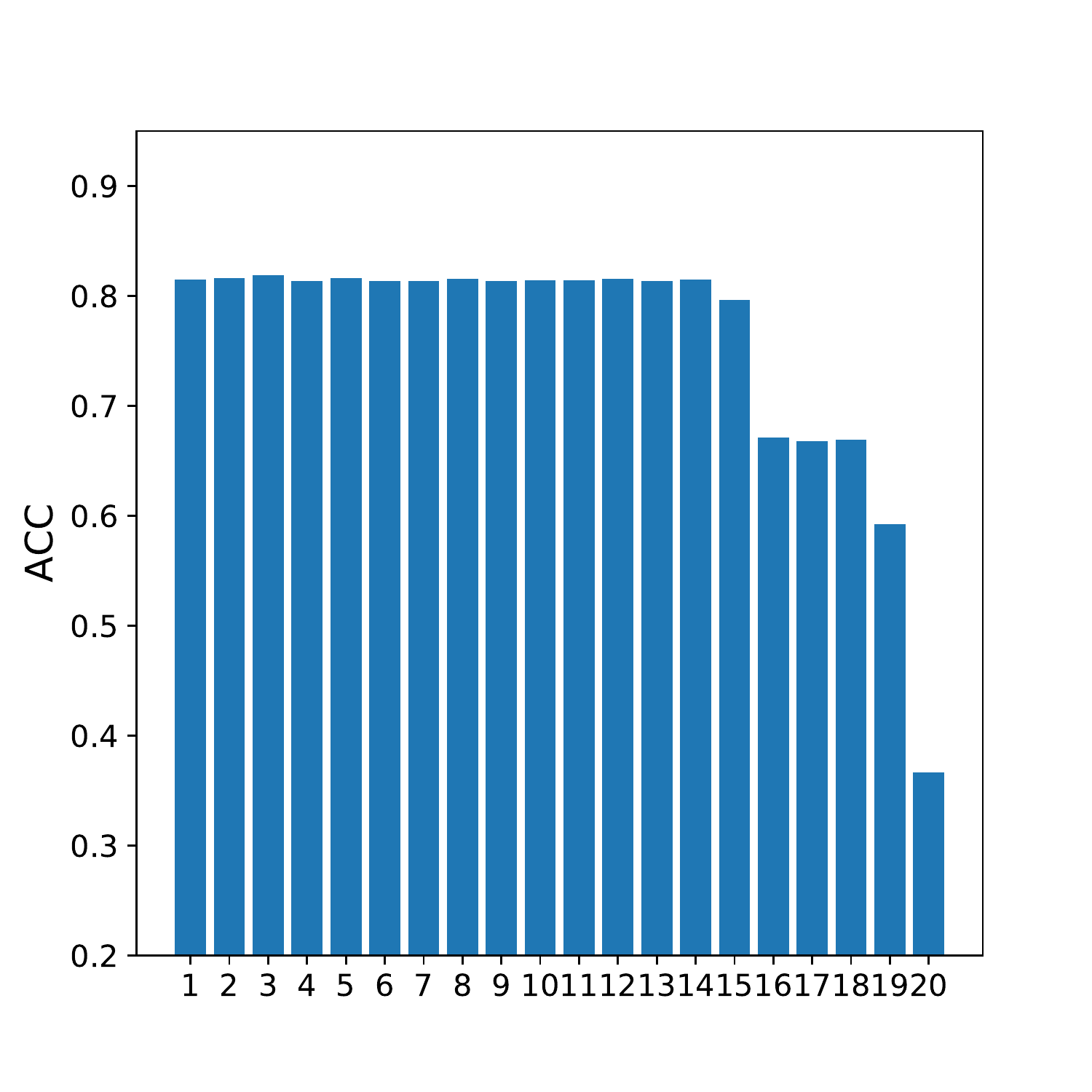}
\end{minipage}%
}%
\subfloat[GraphSAGE.]{
\begin{minipage}[t]{0.4\linewidth}
\centering
\includegraphics[width=\linewidth]{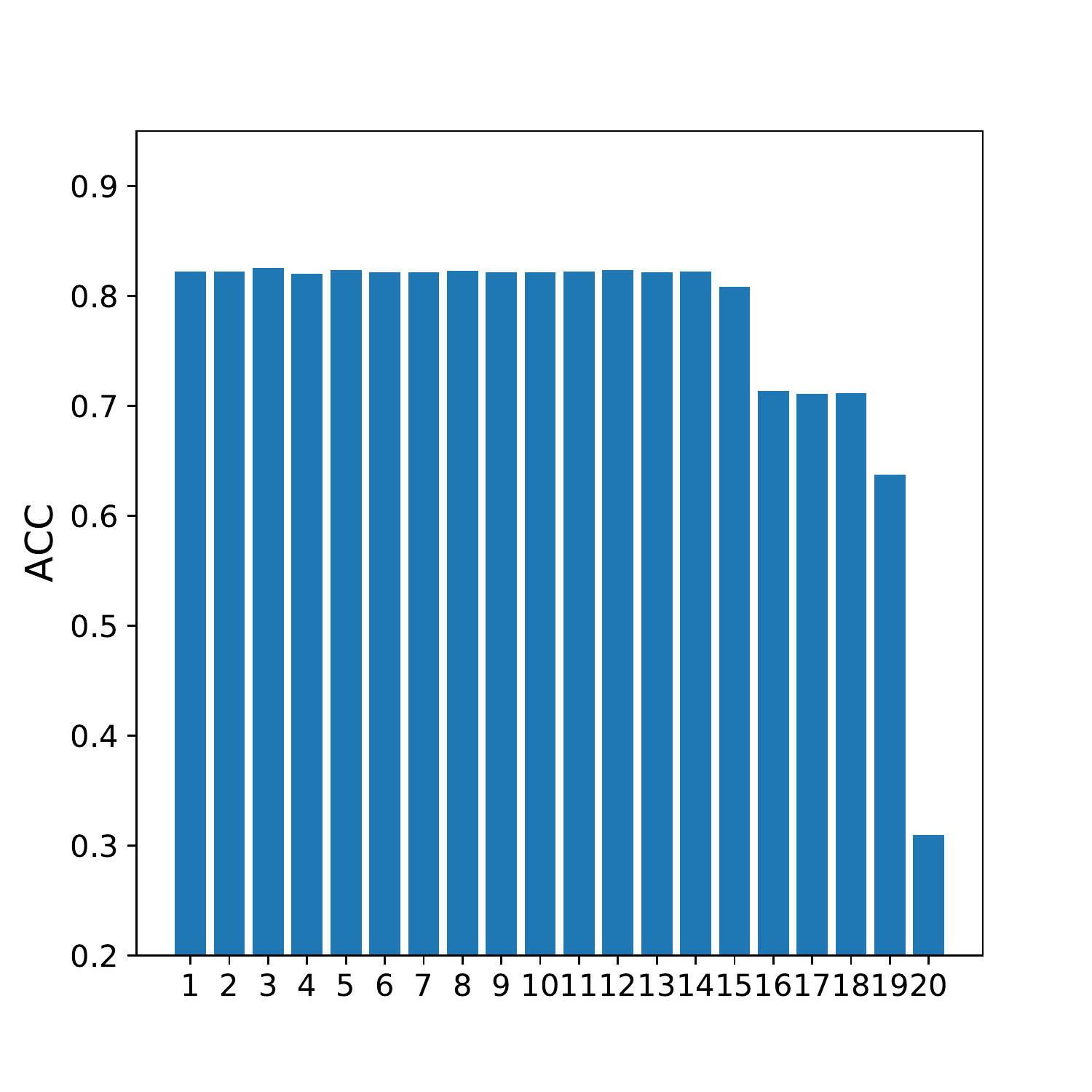}
\end{minipage}%
}%
\caption{Results on OGBN-Products. Test accuracy disparity across subgroups by geodesic distance. The experiment and plot settings are the same as Figure~\ref{fig:distance-arxiv}.}
\label{fig:distance-products}
\end{figure}

\begin{figure} 
\centering 
\subfloat[GCN. Subgroup by degree.]{\label{fig:centrality-arxiv:a}
\includegraphics[width=0.4\linewidth]{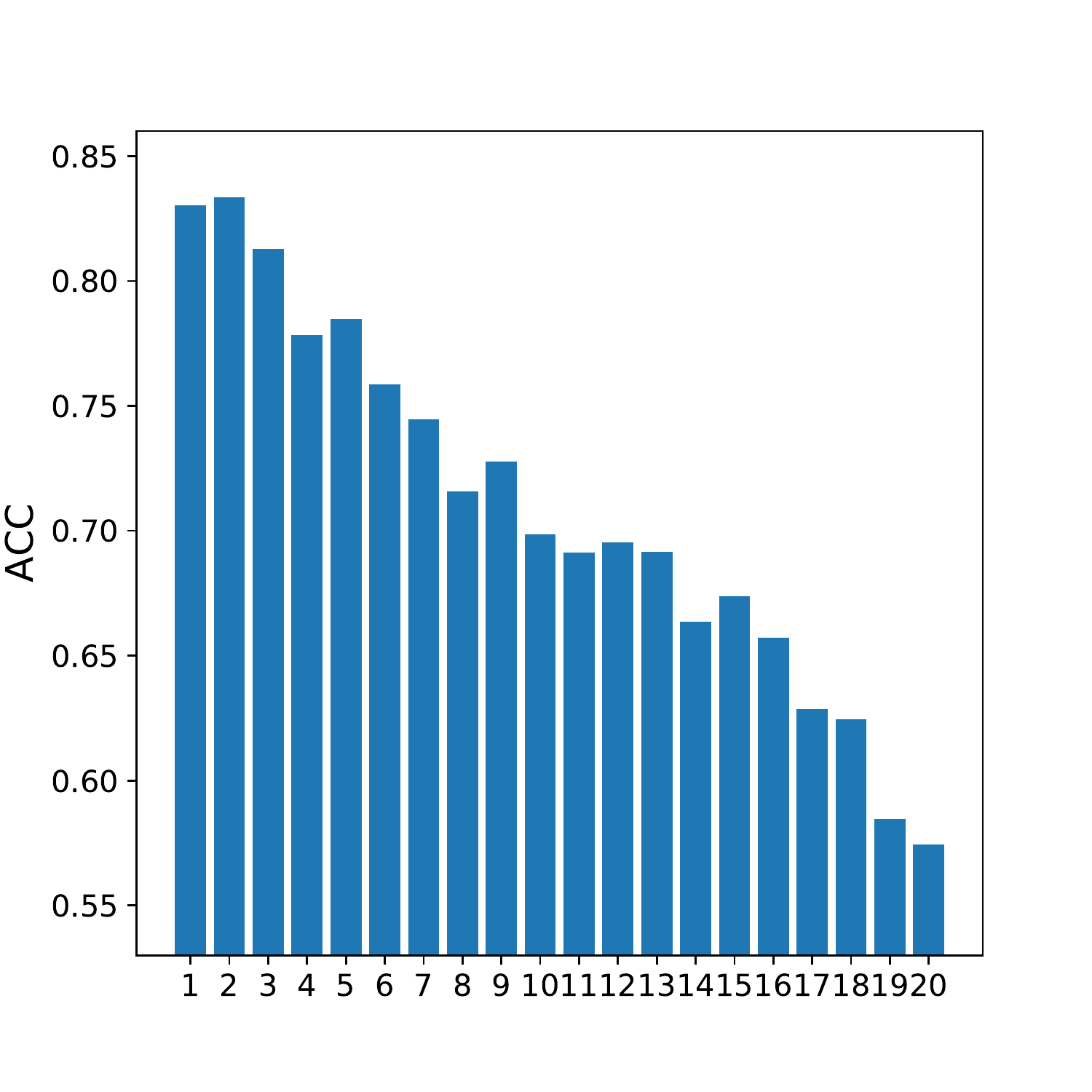}}
\hspace{0.01\linewidth}
\subfloat[GraphSAGE. Subgroup by degree.]{\label{fig:centrality-arxiv:b}
\includegraphics[width=0.4\linewidth]{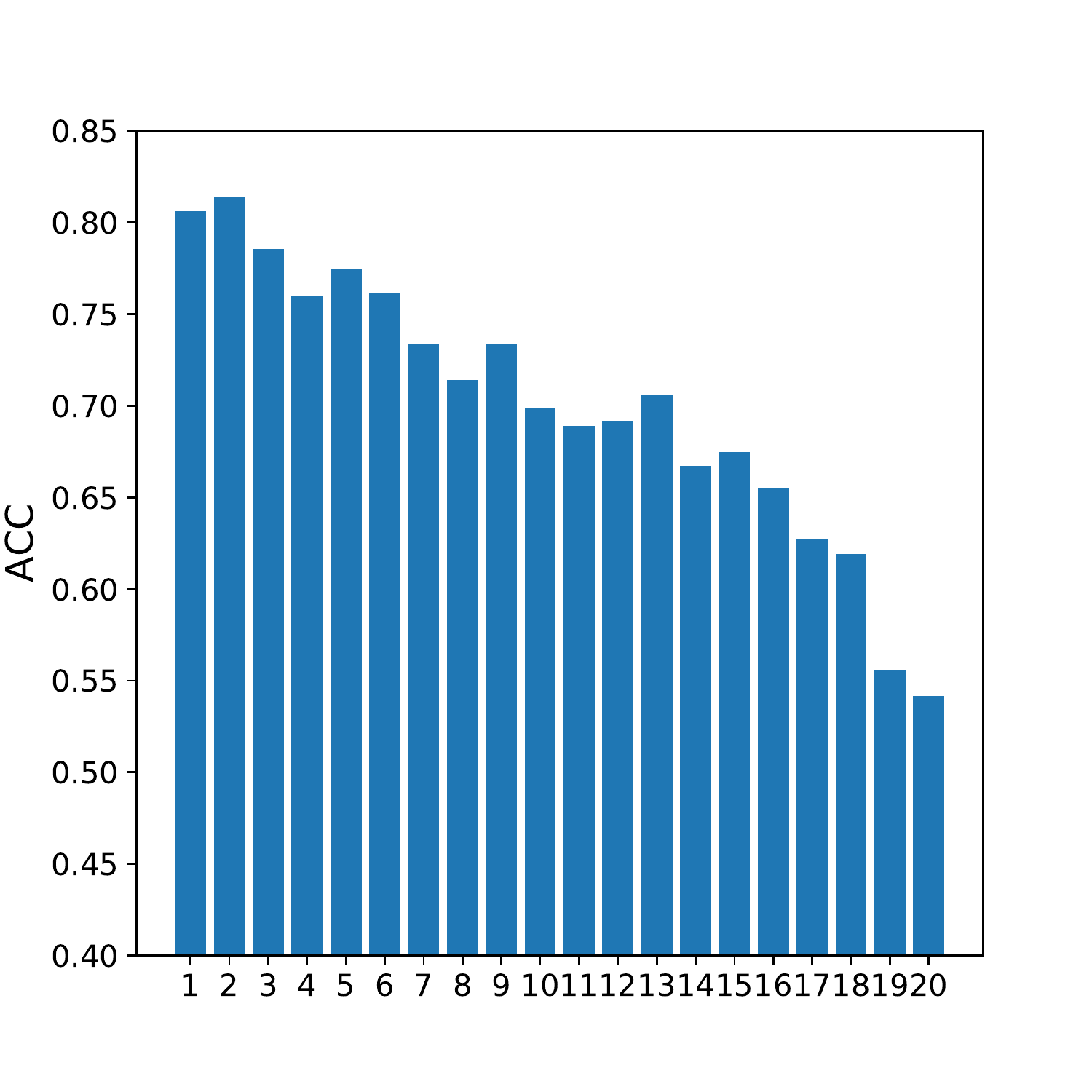}} \\[-2ex]
\subfloat[GCN. Subgroup by PageRank.]{\label{fig:centrality-arxiv:c}\includegraphics[width=0.4\linewidth]{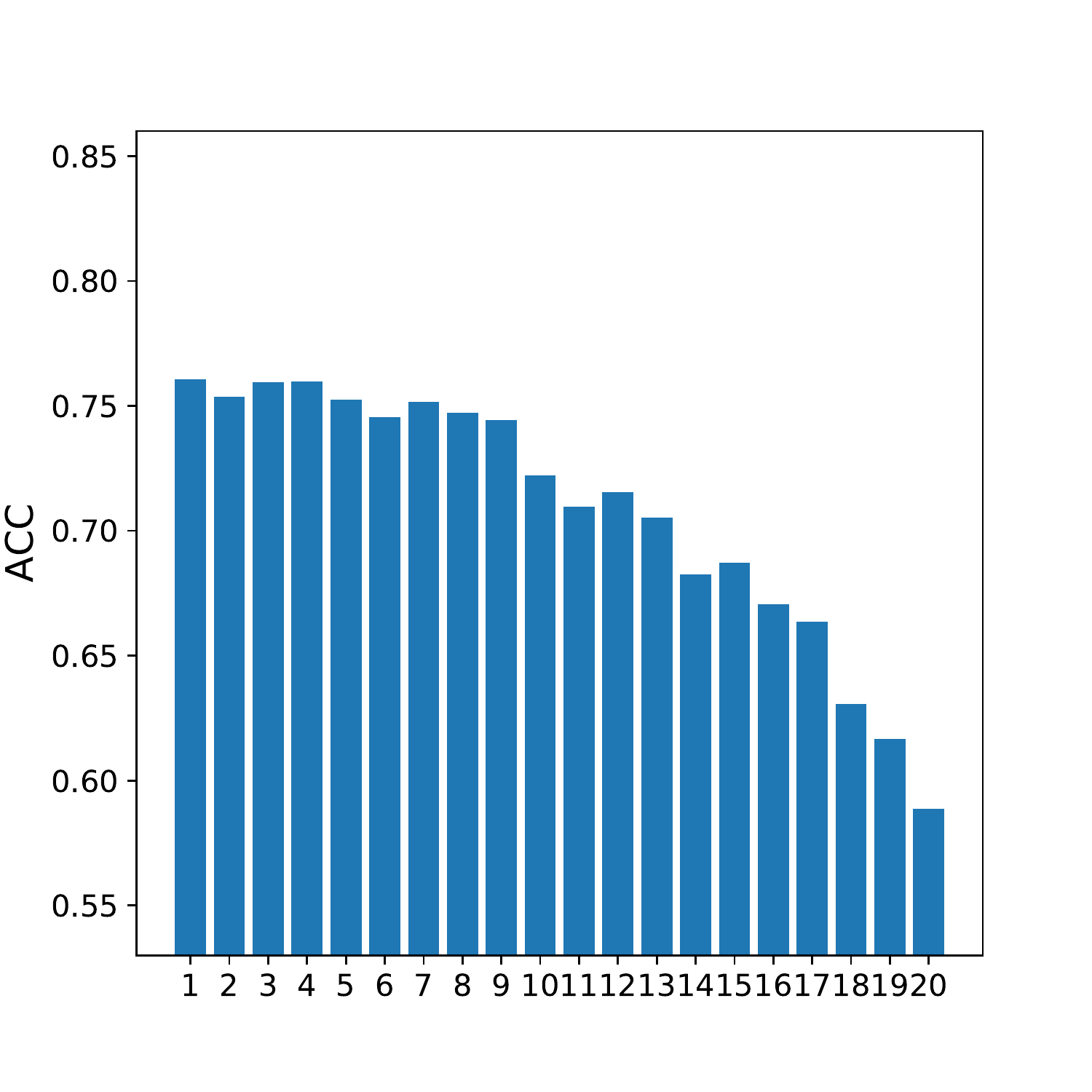}}
\hspace{0.01\linewidth}
\subfloat[GraphSAGE. Subgroup by PageRank.]{\label{fig:centrality-arxiv:d}
\includegraphics[width=0.4\linewidth]{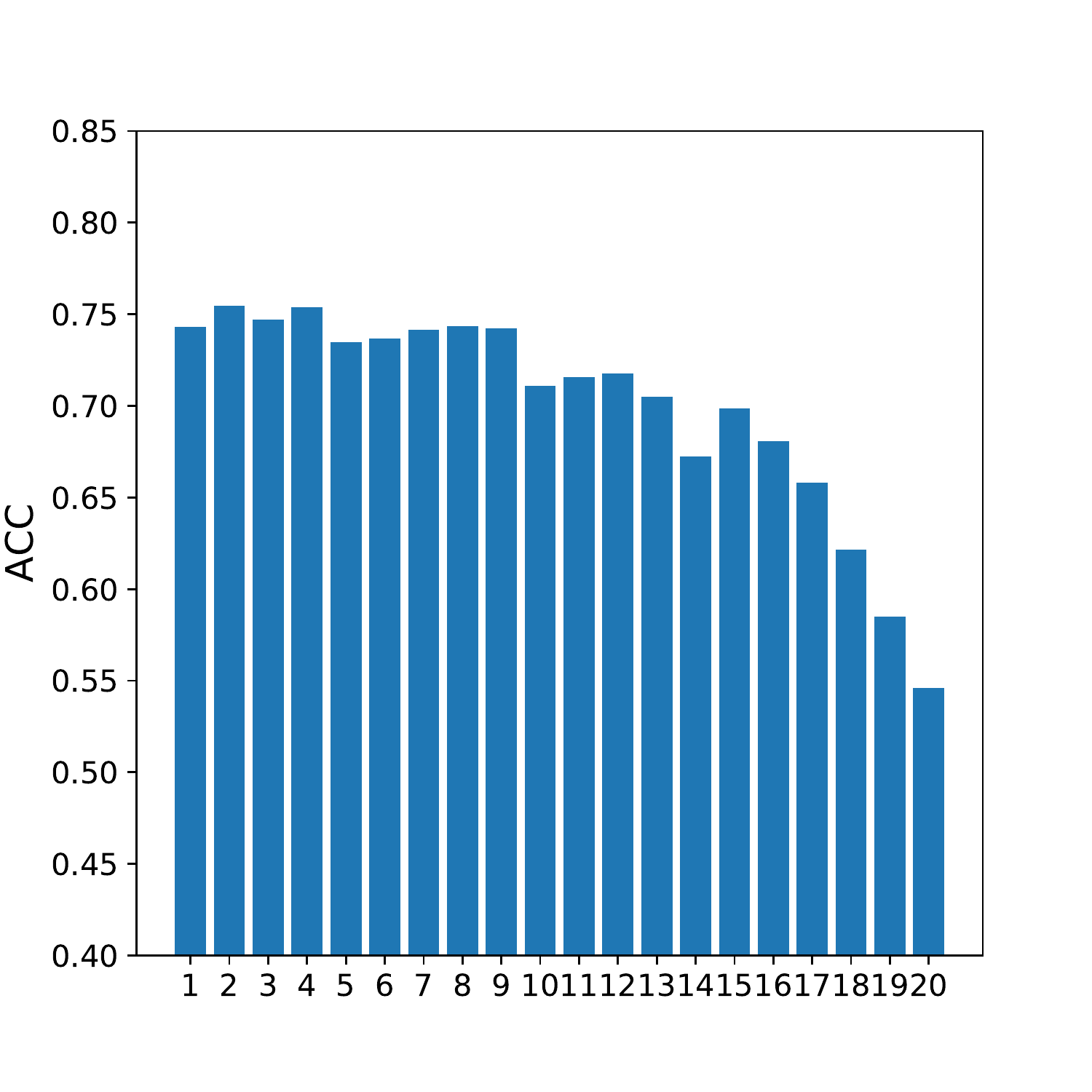}}
\caption{Results on OGBN-Arxiv. Test accuracy disparity across subgroups by node centrality metrics. The experiment and plot settings are the same as Figure~\ref{fig:agg-arxiv}.}
\label{fig:centrality-arxiv}
\vskip -10pt
\end{figure}

\begin{figure} 
\centering 
\subfloat[GCN. Subgroup by degree.]{\label{fig:subfig:a}
\includegraphics[width=0.4\linewidth]{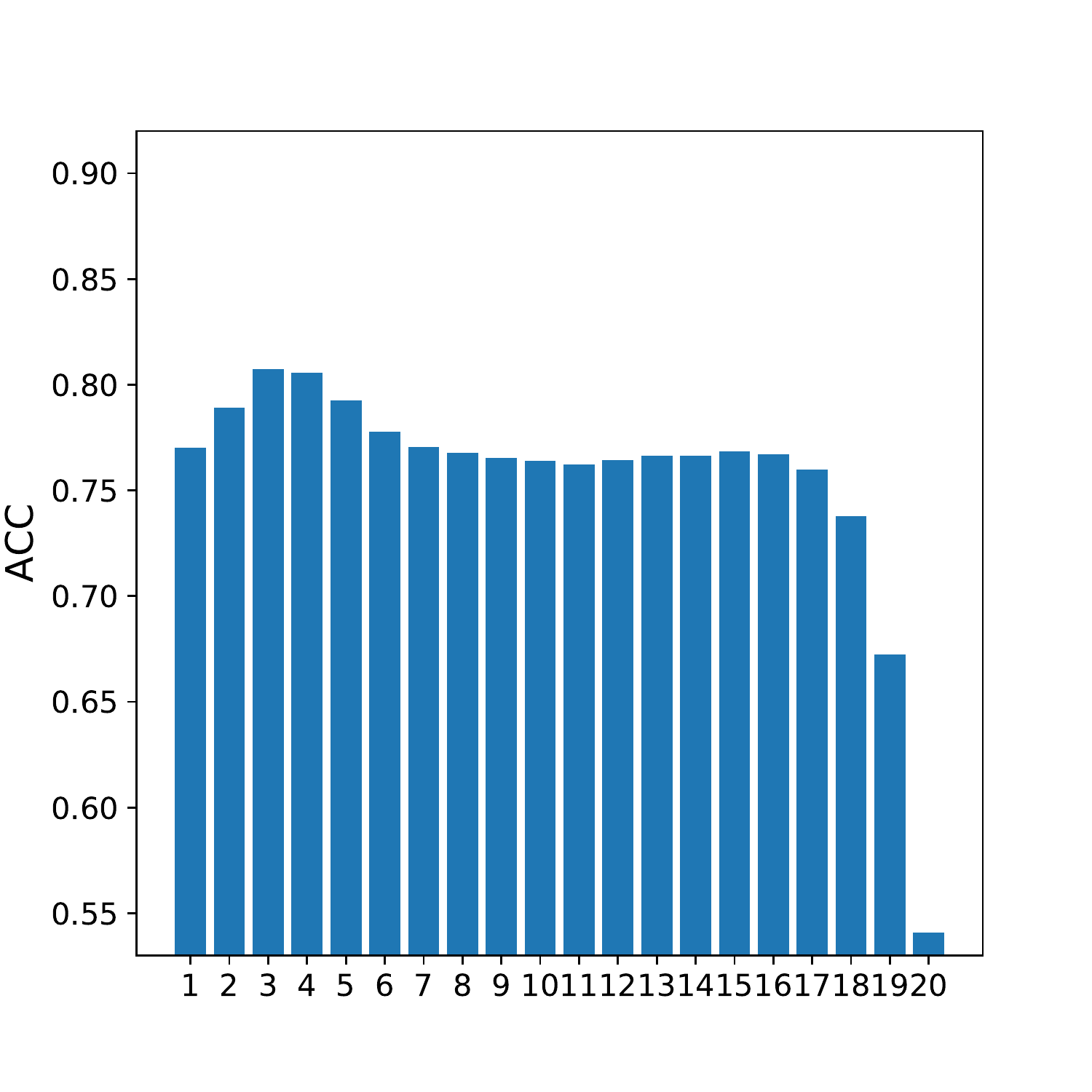}}
\hspace{0.01\linewidth}
\subfloat[GraphSAGE. Subgroup by degree.]{\label{fig:subfig:b}
\includegraphics[width=0.4\linewidth]{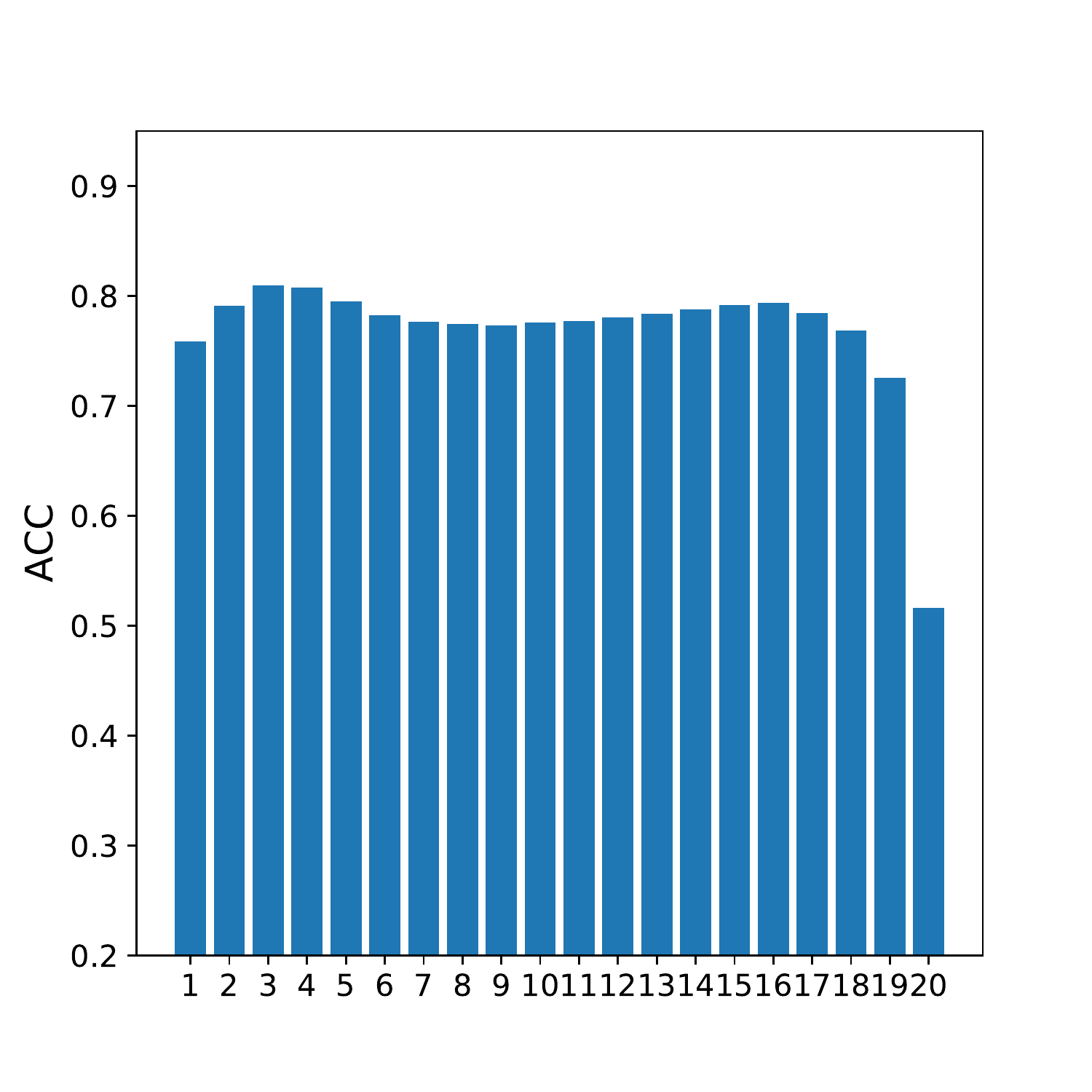}} \\[-2ex]
\subfloat[GCN. Subgroup by PageRank.]{\label{fig:centrality-products:c}
\includegraphics[width=0.4\linewidth]{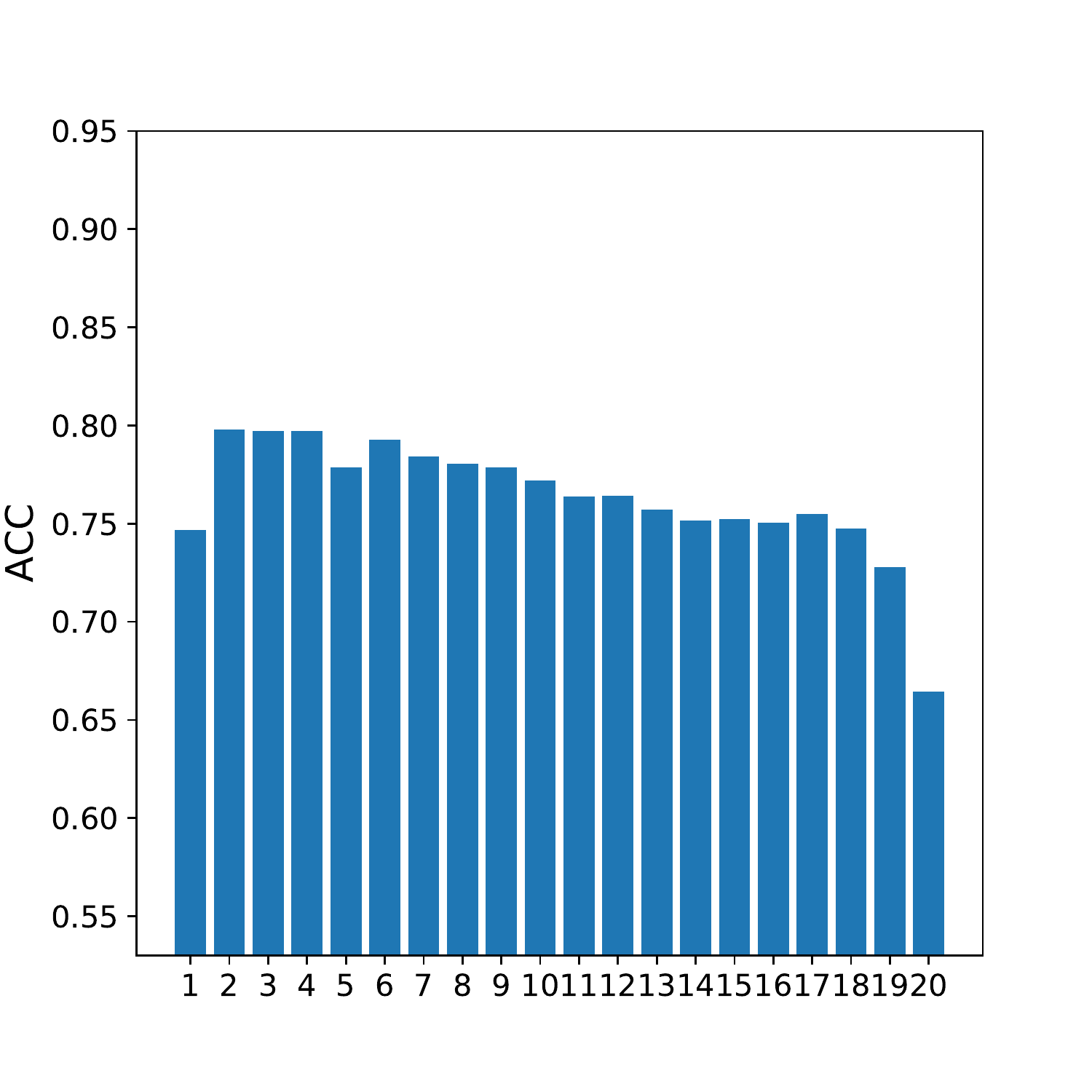}}
\hspace{0.01\linewidth}
\subfloat[GraphSAGE. Subgroup by PageRank.]{\label{fig:centrality-products:d}
\includegraphics[width=0.4\linewidth]{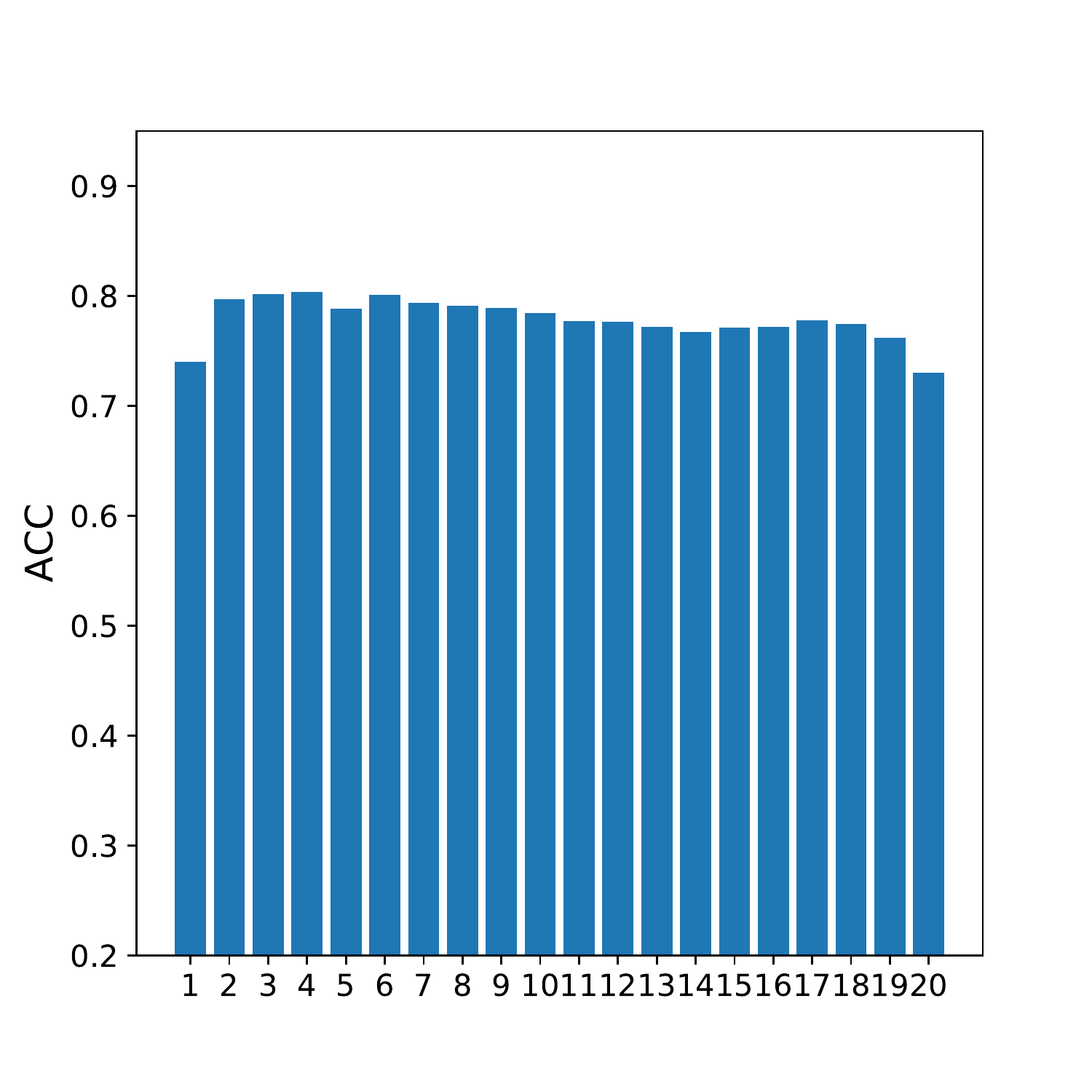}}
\caption{Results on OGBN-Products. Test accuracy disparity across subgroups by node centrality metrics. The experiment and plot settings are the same as Figure~\ref{fig:agg-products}.}
\label{fig:centrality-products}
\vskip -10pt
\end{figure}

\subsection{More Results of Biased Training Node Selection}

In Figure~\ref{fig:label} of Section~\ref{sec:exp-biased}, we have shown that the learned GNN models will be biased towards the labels of training nodes of higher centrality (while the learned MLP models do not show a similar trend). Due to the space limit, in the main paper, we are only able to report the experiment results on Cora with a particular class selected as the ``dominant'' class. Here we report the full experiment results on three datasets, with each class selected as the ``dominant'' class. The results on Cora, Citeseer, and Pubmed are respectively shown in Figures~\ref{fig:full-label-Cora},~\ref{fig:full-label-Citeseer},~and~\ref{fig:full-label-Pubmed}. As can be seen from the figures, the observed phenomenon is consistent over almost all settings.

\section{Discussions}

\subsection{Relationship Between Aggregated-Feature Distance and Geodesic Distance}
\label{sec:agg-geodesic}
We discuss two scenarios where the aggregated-feature distance and the geodesic distance are likely to be related.

\emph{Smoothing effect of feature aggregation in GNNs.} Many existing GNN models are known to have a smoothing effect on the aggregated node features~\citep{li2018deeper}. As a result, nodes with a shorter geodesic distance are likely to have more similar aggregated features.

\emph{Homophily.} Many real-world graph-structured data exhibit a homophily property~\citep{mcpherson2001birds}, i.e., connected nodes tend to share similar attributes. In this case, again, nodes with a shorter geodesic distance on the graph tend to have more similar aggregated features.

However, the geodesic distance is usually coarser-grained than the aggregated-feature distance due to its discrete nature. When the graph is a ``small world''~\citep{watts1998collective} and the number of training nodes is large, the geodesic distance from most test nodes to the set of training nodes will concentrate on 1 or 2 hops, making the test nodes indistinguishable with respect to this metric.

It is an interesting future direction to explore interpretable graph metrics that may better relate to the aggregated-feature distance.

\subsection{Implications for GNN Generalization under Non-Homophily}

A number of recent studies suggest that classical GNNs (e.g., GCN~\citep{kipf2016semi}) can only work well when the labels of connected nodes are similar~\citep{nt2019revisiting,hou2019measuring,zhu2020beyond}, which is now commonly referred as homophily~\citep{mcpherson2001birds}. However, homophily is not a necessary condition to have small generalization errors in our analysis. Instead, good generalization can be achieved when the aggregated features of test nodes are close to those of some training nodes. Interestingly, a concurrent work~\citep{ma2021homophily} of this paper observes a similar phenomenon with empirical evidence. 

The new results by \citet{ma2021homophily} and our work suggest that the space of non-homophilous data can be further dissected into more fine-grained categories, which may motivate designs of new GNN models tailored for each category.

\subsection{Limitations of the Analysis}

To our best knowledge, this work is one of the first attempts\footnote{The only other work we are aware of is by \citet{baranwal2021graph}, where strong assumptions (CSBM) on the data generating mechanisms are made.} to theoretically analyze the generalization ability of GNNs under non-IID node-level semi-supervised learning tasks. While we believe this work presents non-trivial contributions towards the theoretical understanding of generalization and fairness of GNNs with supportive empirical evidence, there are a few limitations of the current analysis which we hope to improve in future work.

The first limitation is that the derived generalization bounds do not yet match the practical performances of GNNs. This limitation is partly inherited from the mismatch between the theories and the practices of deep learning in general, as we utilize the results by \citet{neyshabur2018pac} to illustrate the characteristics of the neural-network part of GNNs. In future work, we hope to adapt stronger PAC-Bayesian bounds for neural networks under IID setup~\citep{zhou2018non,dziugaite2021role} to the non-IID setup for GNNs.

Another limitation is that we have assumed a particular form of GNNs similar as SGC~\citep{wu2019simplifying} or APPNP~\citep{klicpera2019predict}. This form of GNNs simplifies the analysis but does not include some common GNNs such as GCN~\citep{kipf2016semi} and GAT~\citep{velivckovic2018graph}. We notice that the key characteristics of GNNs we need for the analysis are that the change of outputs of GNNs under certain perturbations needs to be bounded. A recent work~\citep{liao2021pac} has shown that some more general forms of GNNs (including GCN) indeed have bounded output changes under perturbations. So the analysis in this work can be potentially adapted to more general forms of GNNs by utilizing such perturbation bounds. Empirically, we have demonstrated that the accuracy disparity phenomenon predicted by our theoretical analysis indeed appears in experiments on GCN, GAT, and GraphSAGE.

Finally, our analysis requires some assumptions on the relationship between the training set and the target test subgroup. While, not surprisingly, we have to make some assumptions about this relationship to expect good generalization to the target subgroup, it is an interesting future direction to explore more relaxed assumptions than the ones used in this work.

\subsection{Societal Impacts}

As GNNs have been deployed in human-related real-world applications such as recommender systems~\citep{ying2018graph}, understanding the fairness issues of GNNs may have direct societal impacts. On the positive side, understanding the systematic biases embedded in the GNN models and the graph-structured data helps researchers and practitioners come up with solutions that mitigate the potential harms resulted by such biases. On the negative side, however, such understanding may also be used for malicious purposes: e.g., performing adversarial attacks on GNNs that utilizes systematic biases. Nevertheless, we believe the theoretical understandings resulting from this work contribute to a small step towards making the GNN models more transparent to the research community, which may motivate the design of better and fairer models.

\begin{figure}[htbp]
\centering

\subfloat[Model: GCN. Class: 1.]{
\begin{minipage}[t]{0.33\linewidth}
\centering
\includegraphics[width=\linewidth]{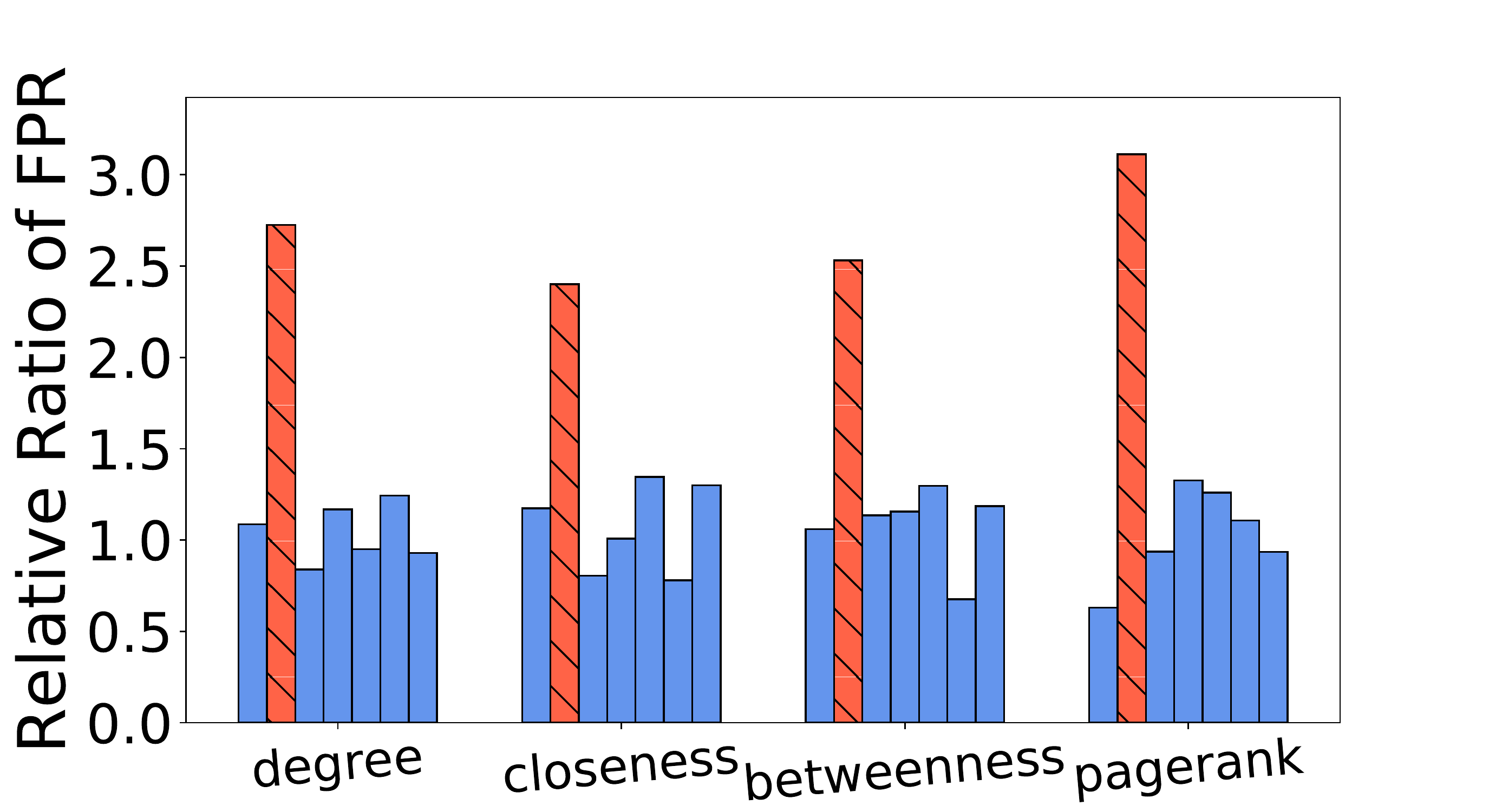}
\end{minipage}%
}%
\subfloat[Model: GAT. Class: 1.]{
\begin{minipage}[t]{0.33\linewidth}
\centering
\includegraphics[width=\linewidth]{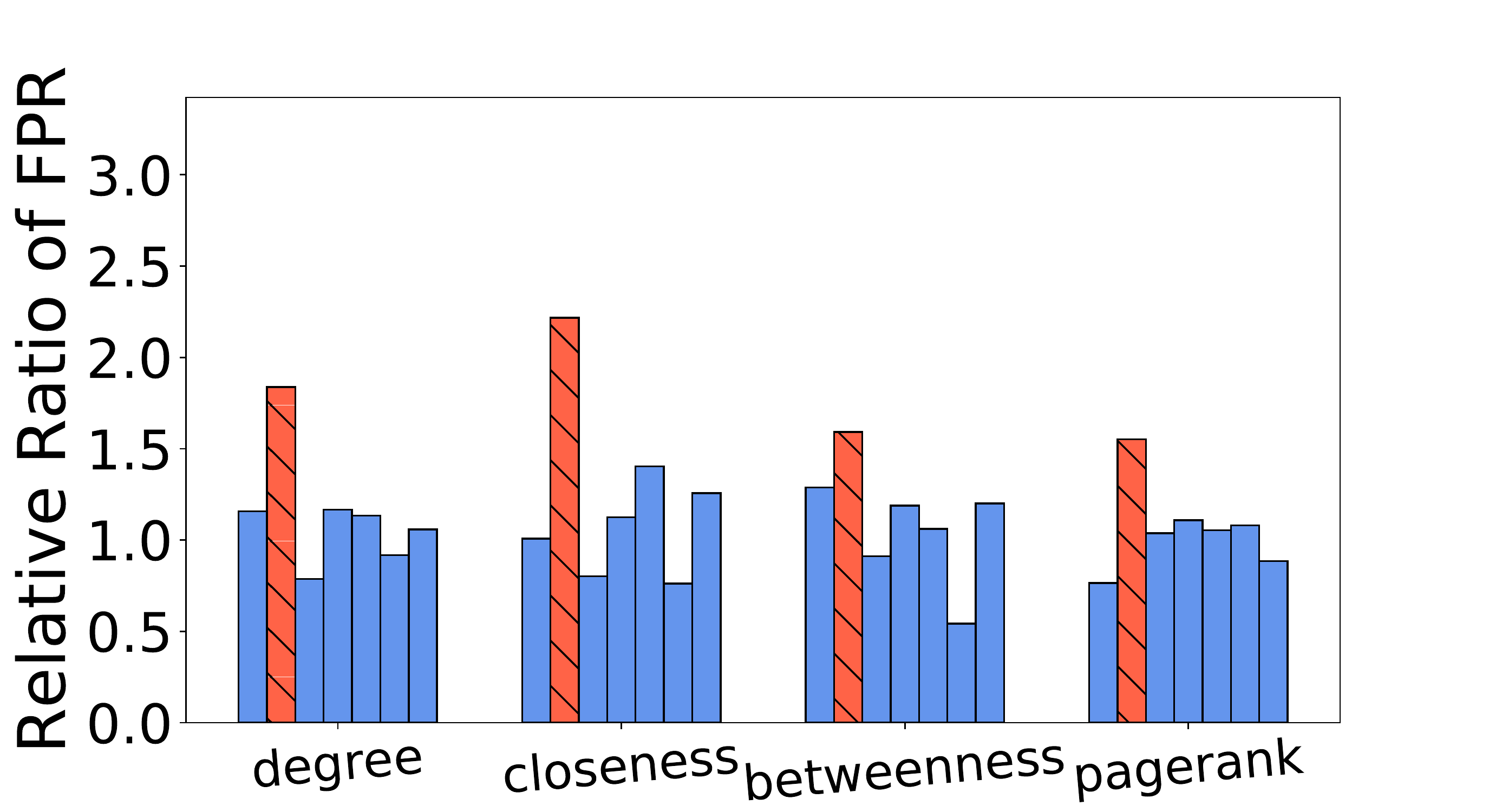}
\end{minipage}%
}%
\subfloat[Model: MLP. Class: 1.]{
\begin{minipage}[t]{0.33\linewidth}
\centering
\includegraphics[width=\linewidth]{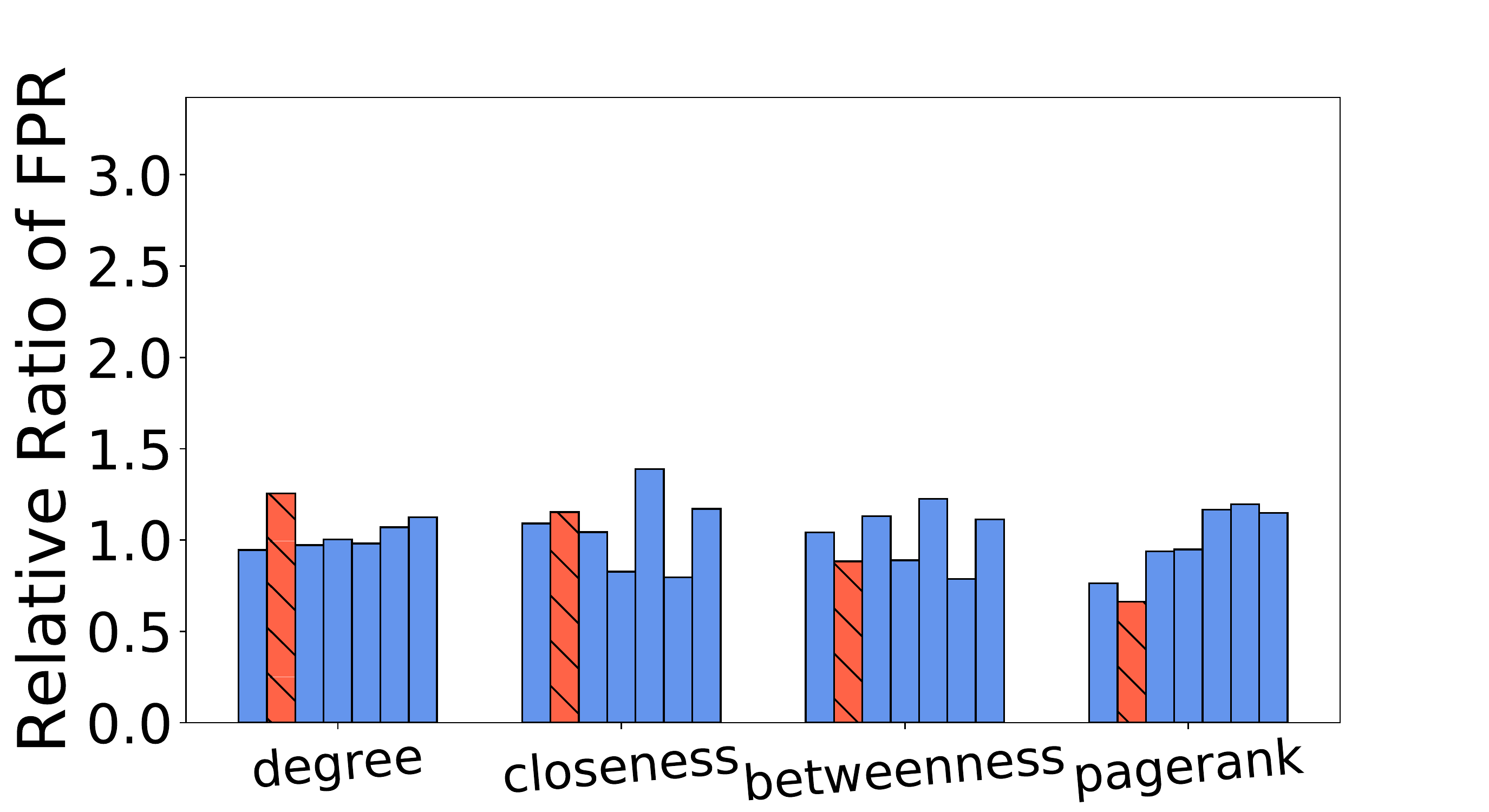}
\end{minipage}%
}%

\subfloat[Model: GCN. Class: 2.]{
\begin{minipage}[t]{0.33\linewidth}
\centering
\includegraphics[width=\linewidth]{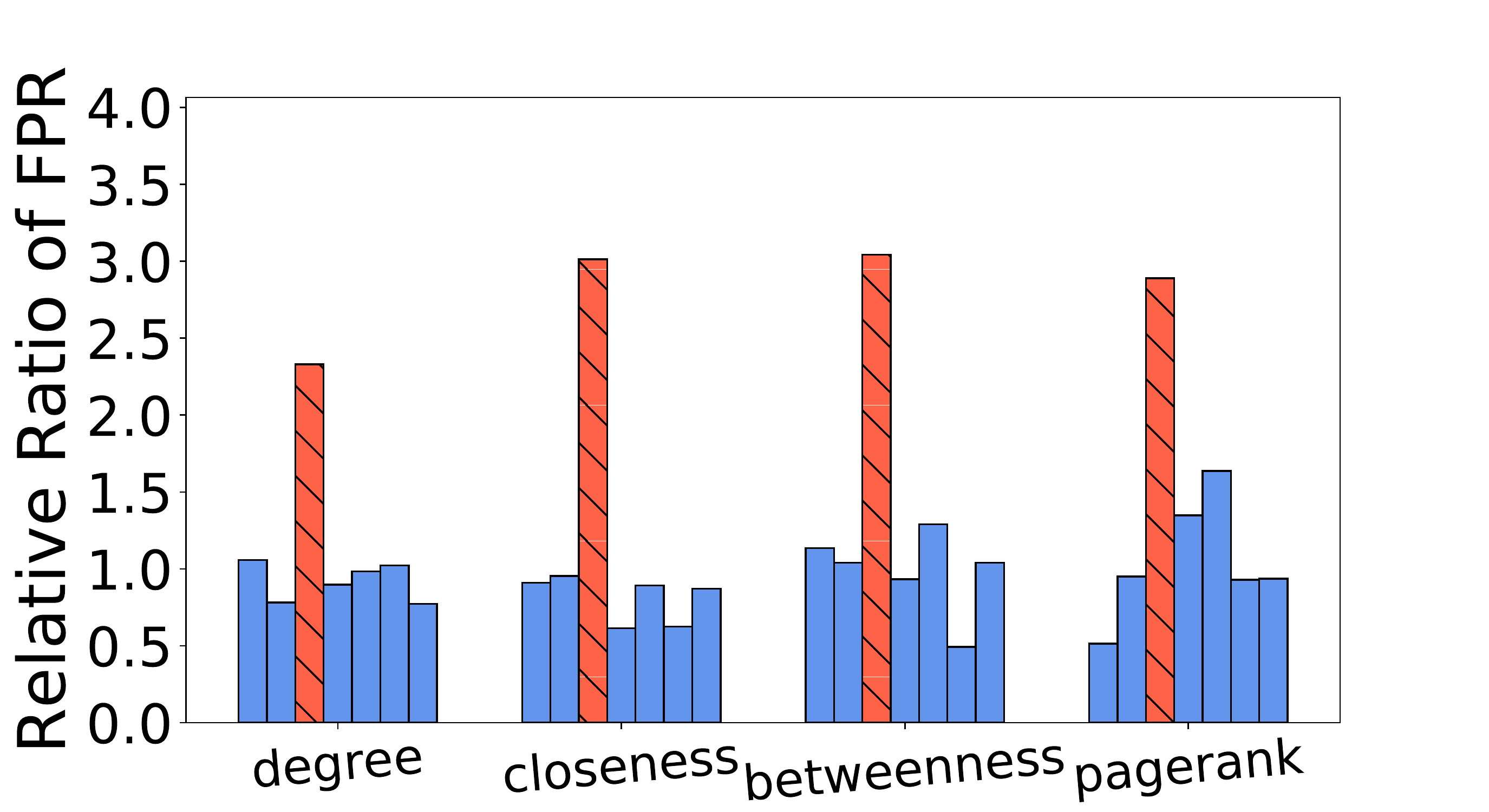}
\end{minipage}%
}%
\subfloat[Model: GAT. Class: 2.]{
\begin{minipage}[t]{0.33\linewidth}
\centering
\includegraphics[width=\linewidth]{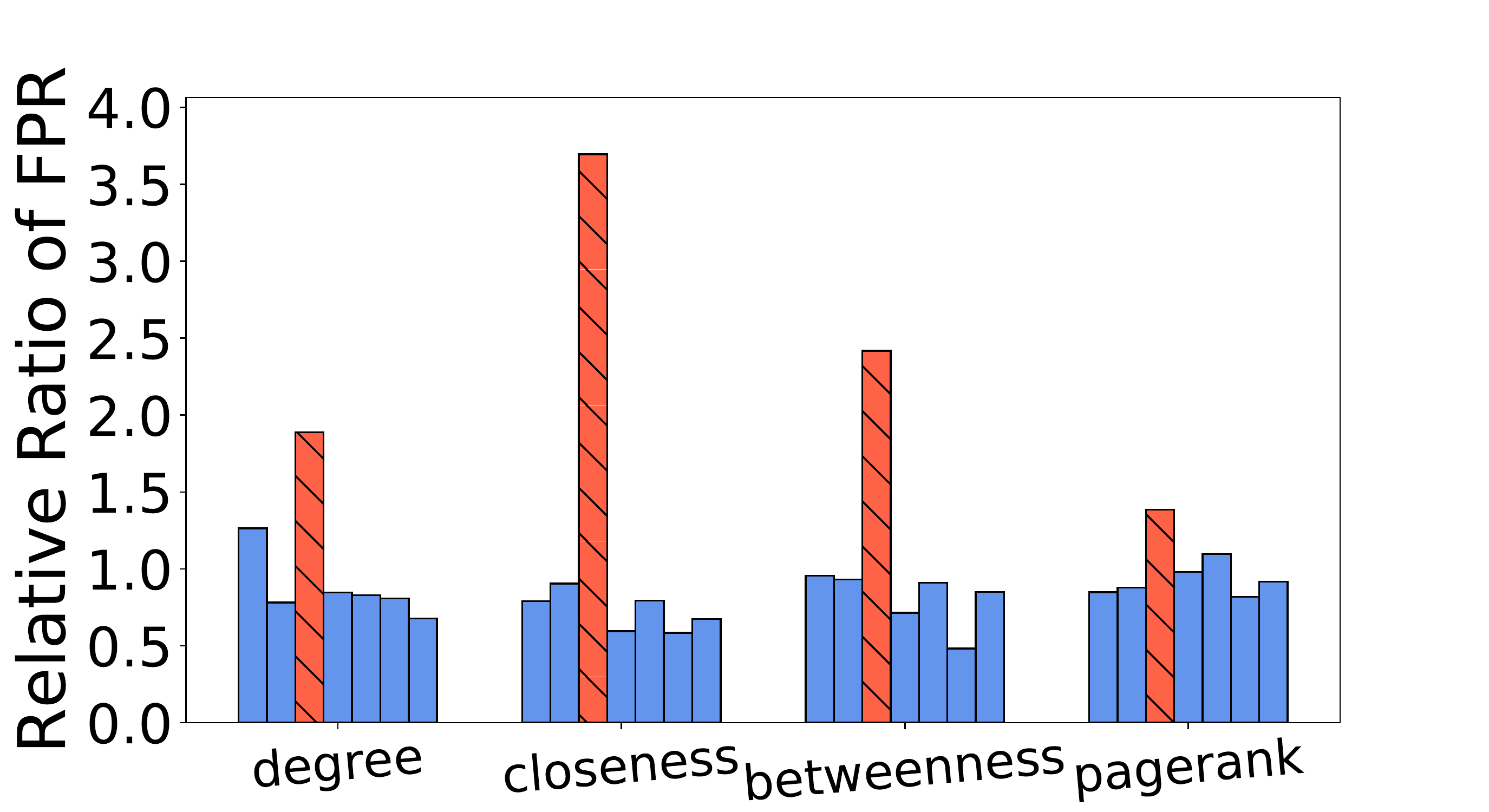}
\end{minipage}%
}%
\subfloat[Model: MLP. Class: 2.]{
\begin{minipage}[t]{0.33\linewidth}
\centering
\includegraphics[width=\linewidth]{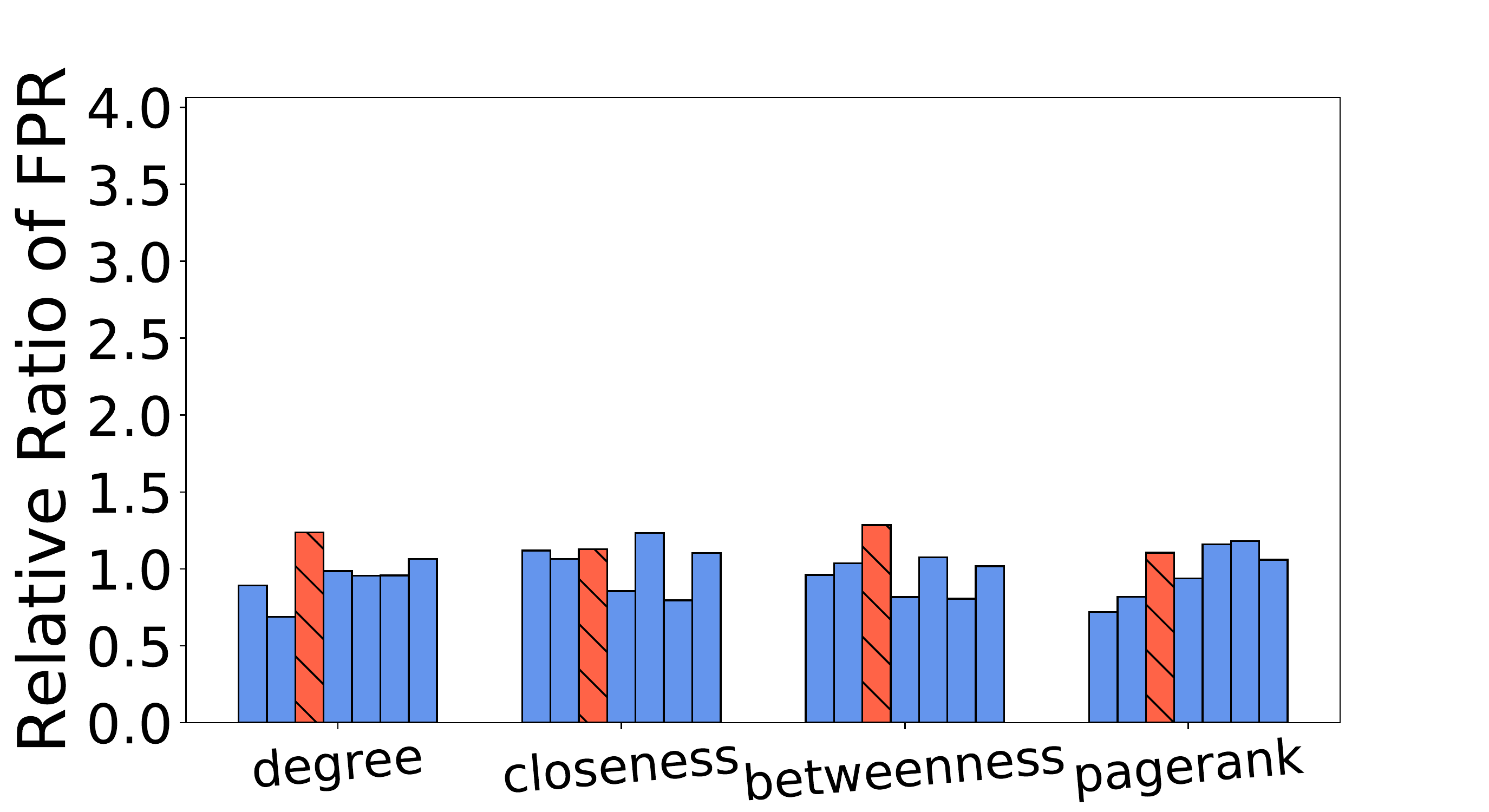}
\end{minipage}%
}%

\subfloat[Model: GCN. Class: 3.]{
\begin{minipage}[t]{0.33\linewidth}
\centering
\includegraphics[width=\linewidth]{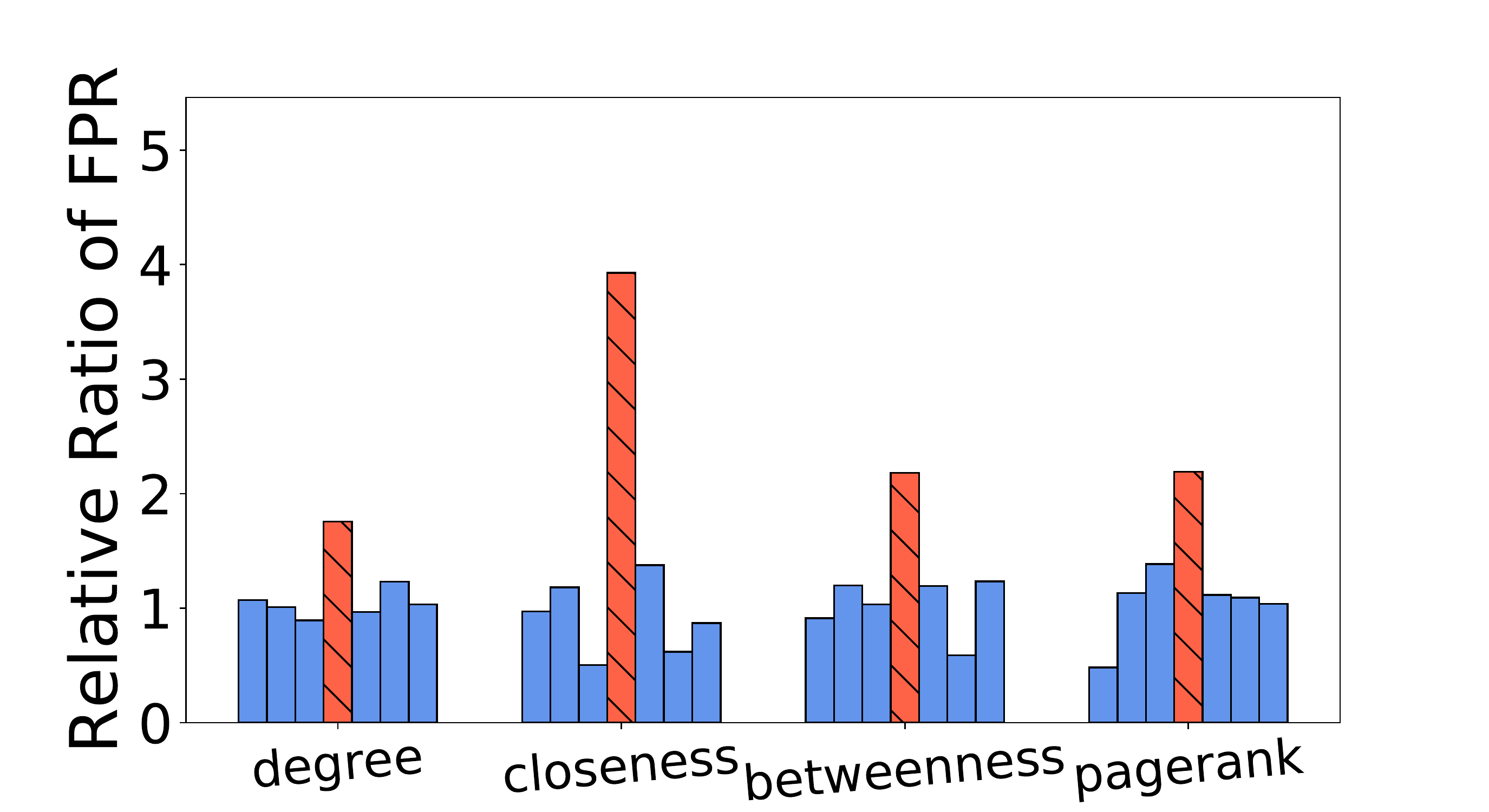}
\end{minipage}%
}%
\subfloat[Model: GAT. Class: 3.]{
\begin{minipage}[t]{0.33\linewidth}
\centering
\includegraphics[width=\linewidth]{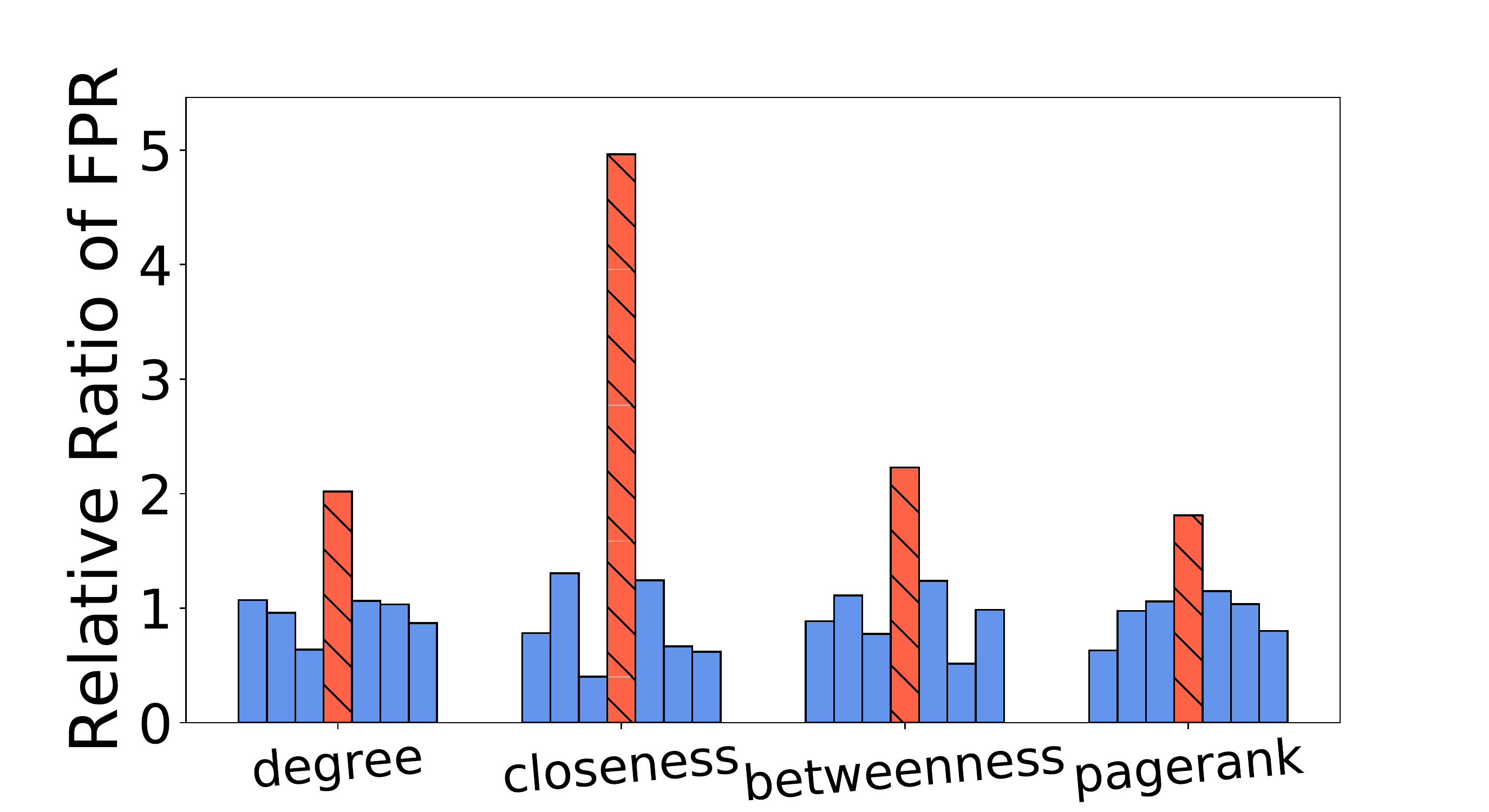}
\end{minipage}%
}%
\subfloat[Model: MLP. Class: 3.]{
\begin{minipage}[t]{0.33\linewidth}
\centering
\includegraphics[width=\linewidth]{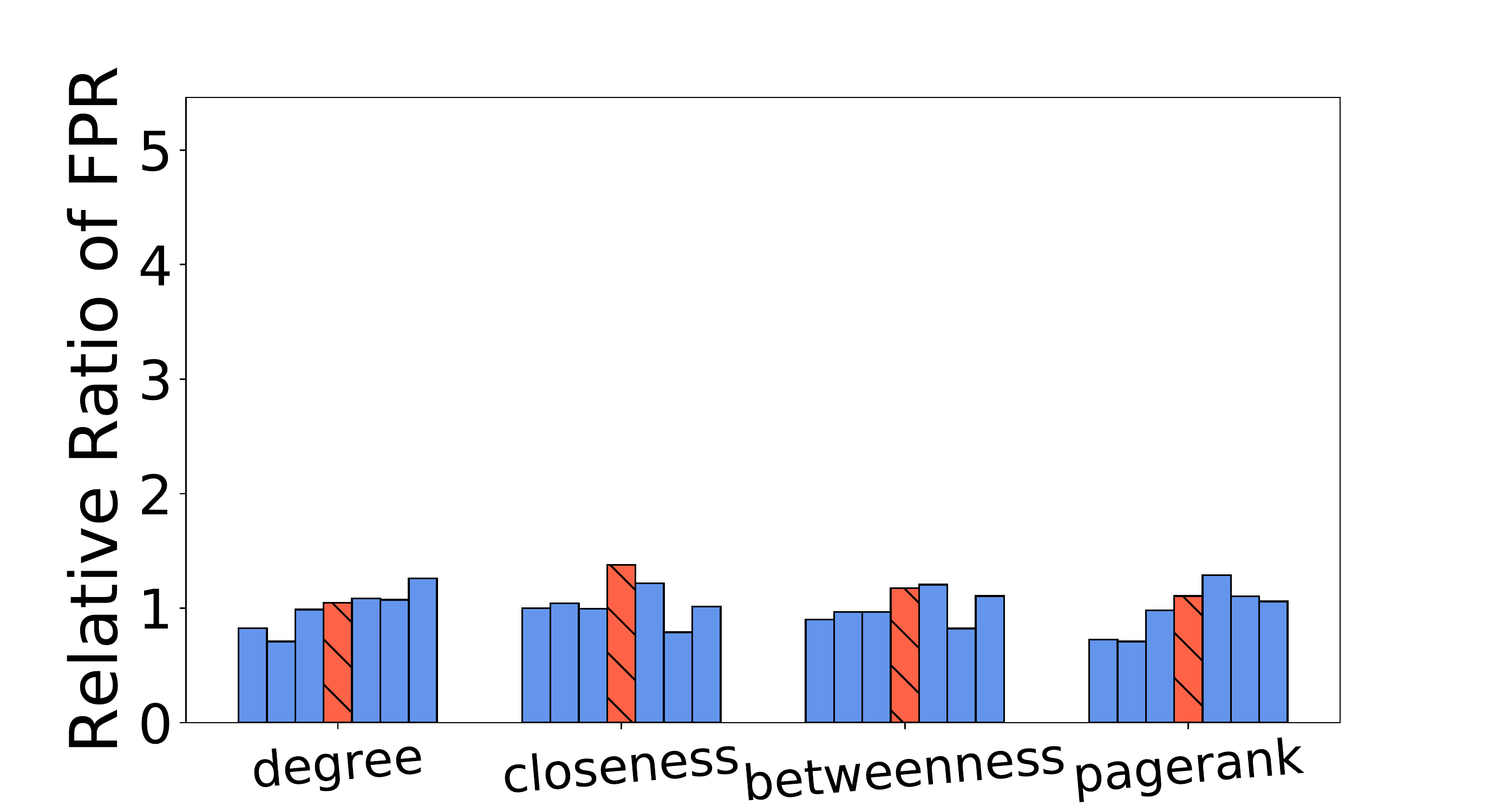}
\end{minipage}%
}%

\subfloat[Model: GCN. Class: 4.]{
\begin{minipage}[t]{0.33\linewidth}
\centering
\includegraphics[width=\linewidth]{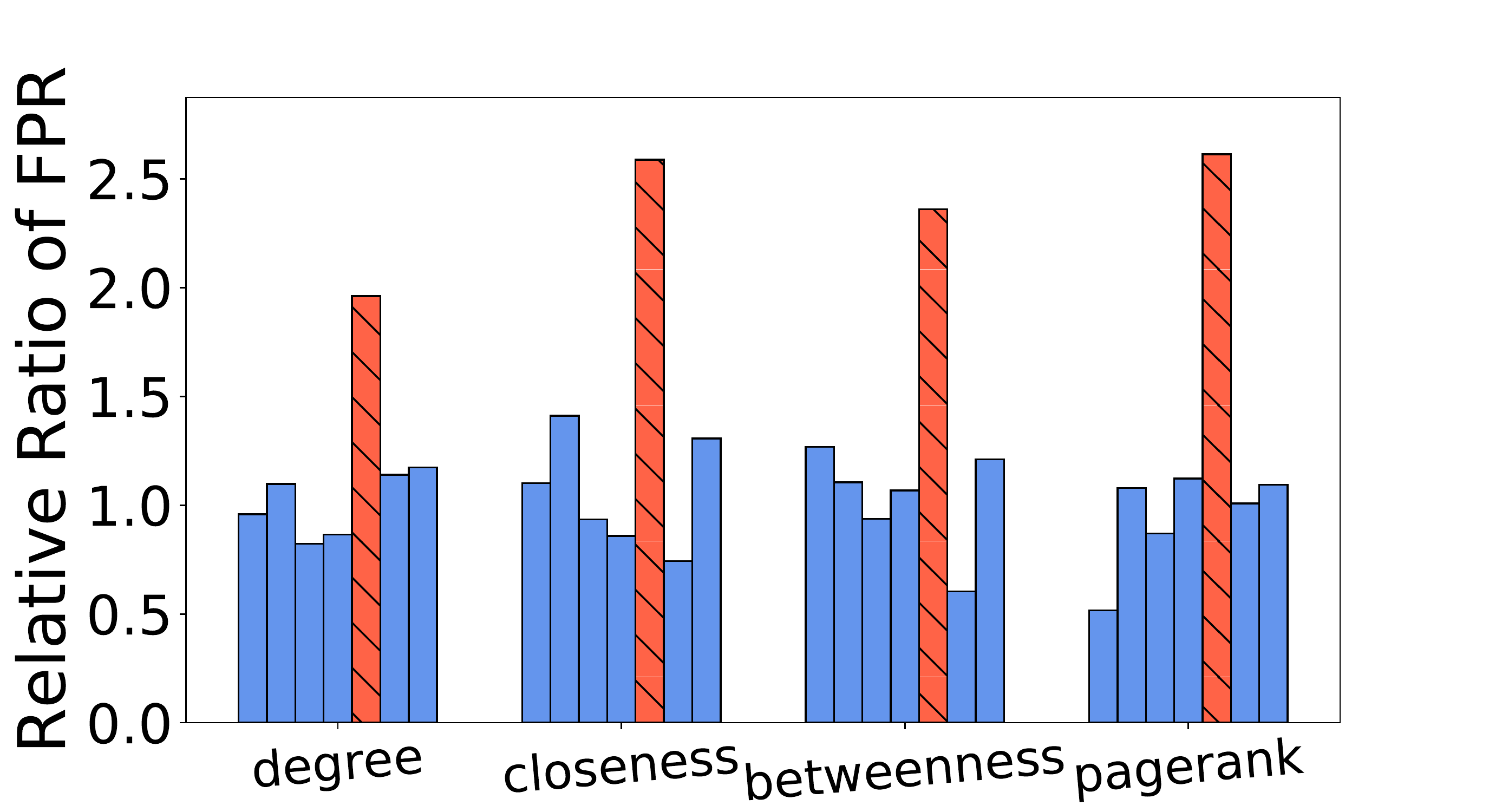}
\end{minipage}%
}%
\subfloat[Model: GAT. Class: 4.]{
\begin{minipage}[t]{0.33\linewidth}
\centering
\includegraphics[width=\linewidth]{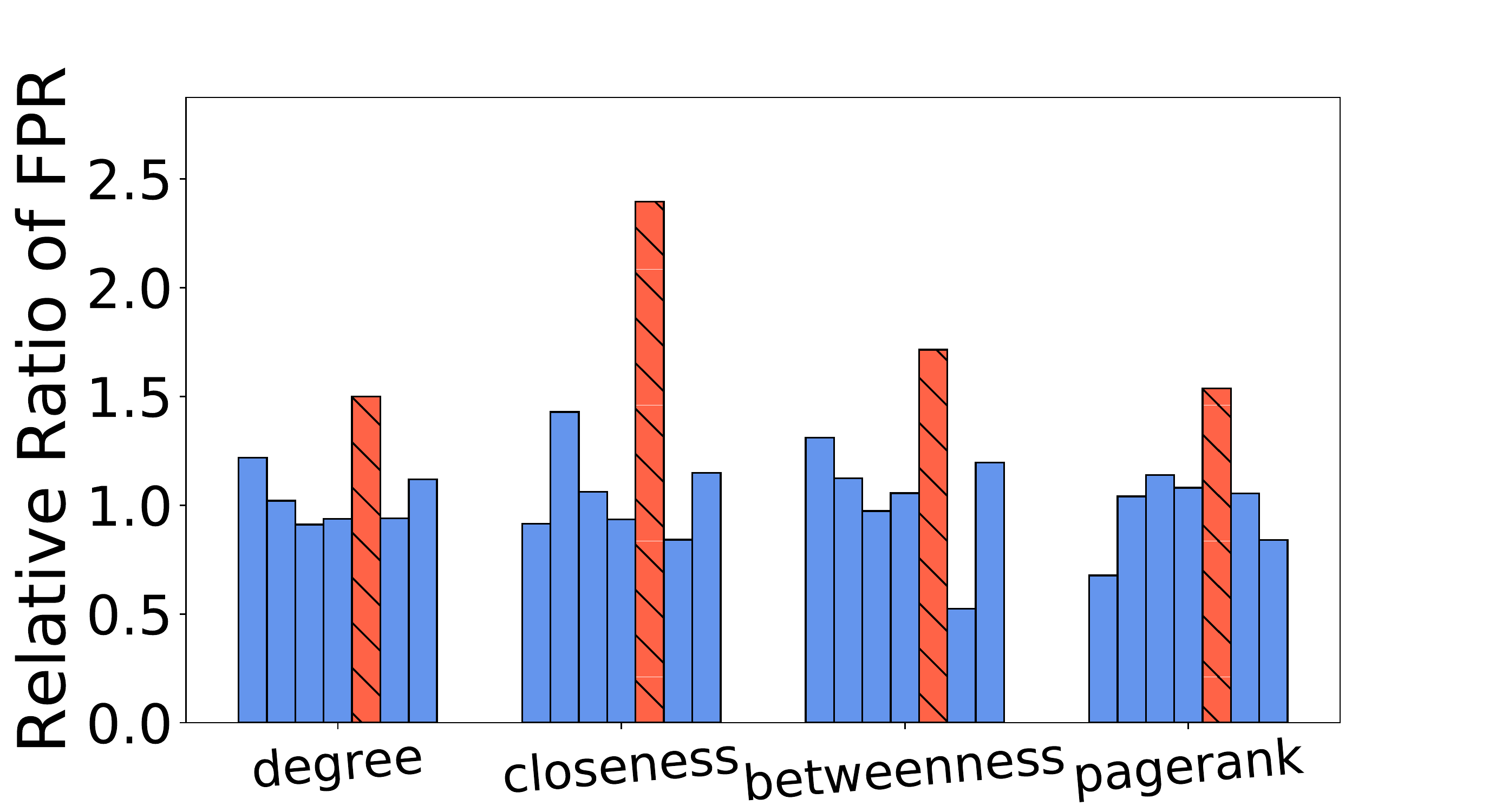}
\end{minipage}%
}%
\subfloat[Model: MLP. Class: 4.]{
\begin{minipage}[t]{0.33\linewidth}
\centering
\includegraphics[width=\linewidth]{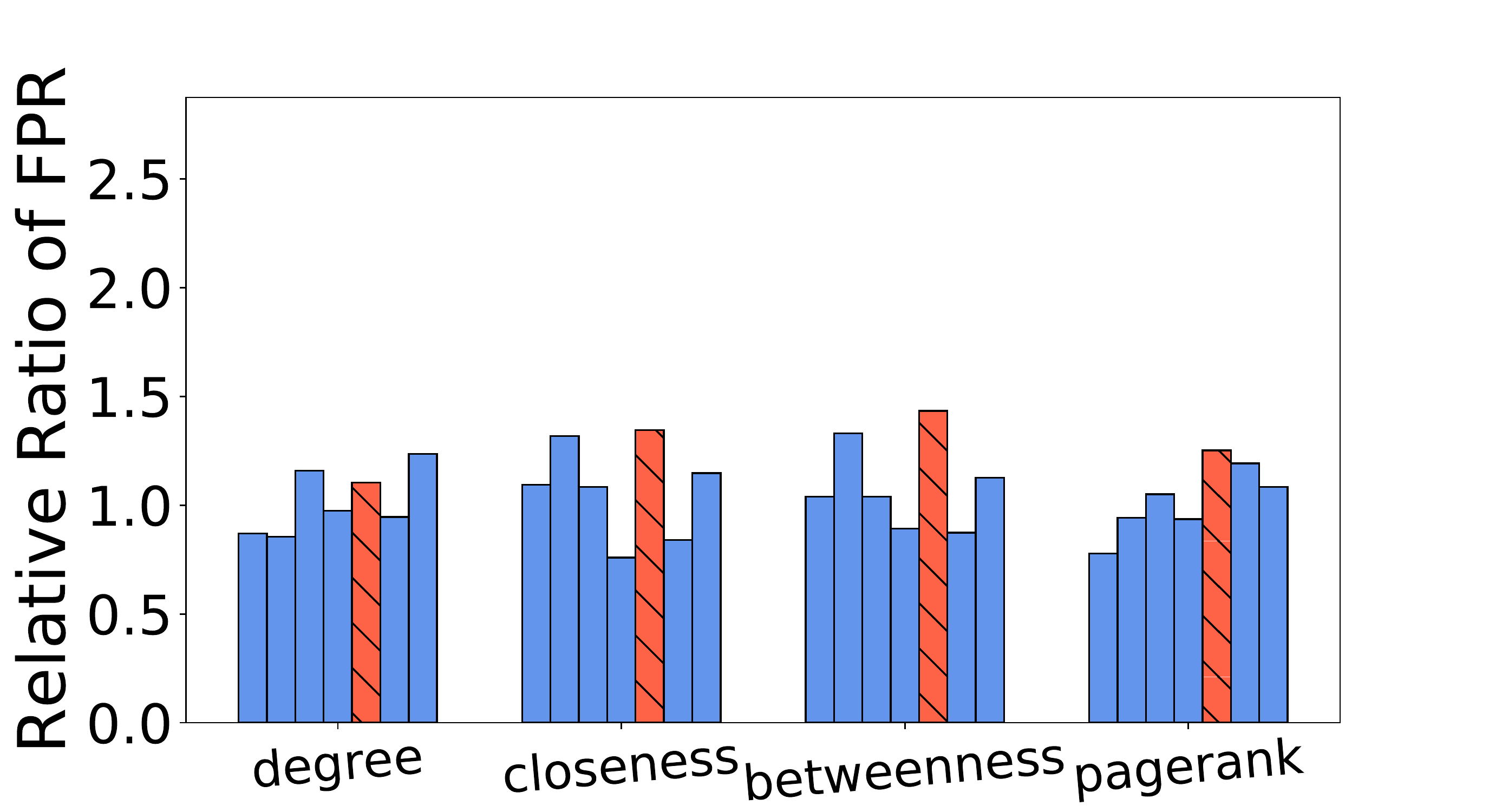}
\end{minipage}%
}%

\subfloat[Model: GCN. Class: 5.]{
\begin{minipage}[t]{0.33\linewidth}
\centering
\includegraphics[width=\linewidth]{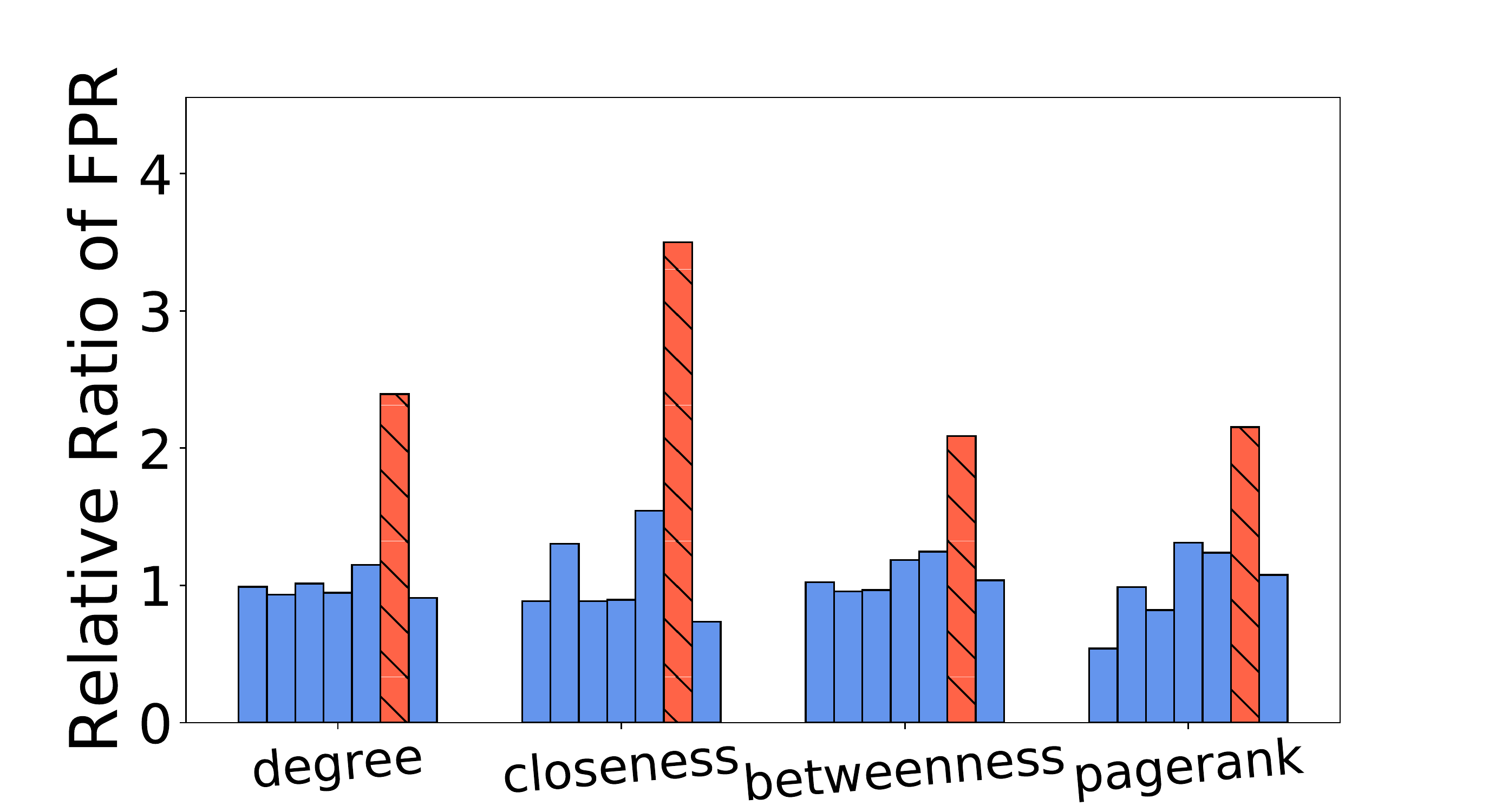}
\end{minipage}%
}%
\subfloat[Model: GAT. Class: 5.]{
\begin{minipage}[t]{0.33\linewidth}
\centering
\includegraphics[width=\linewidth]{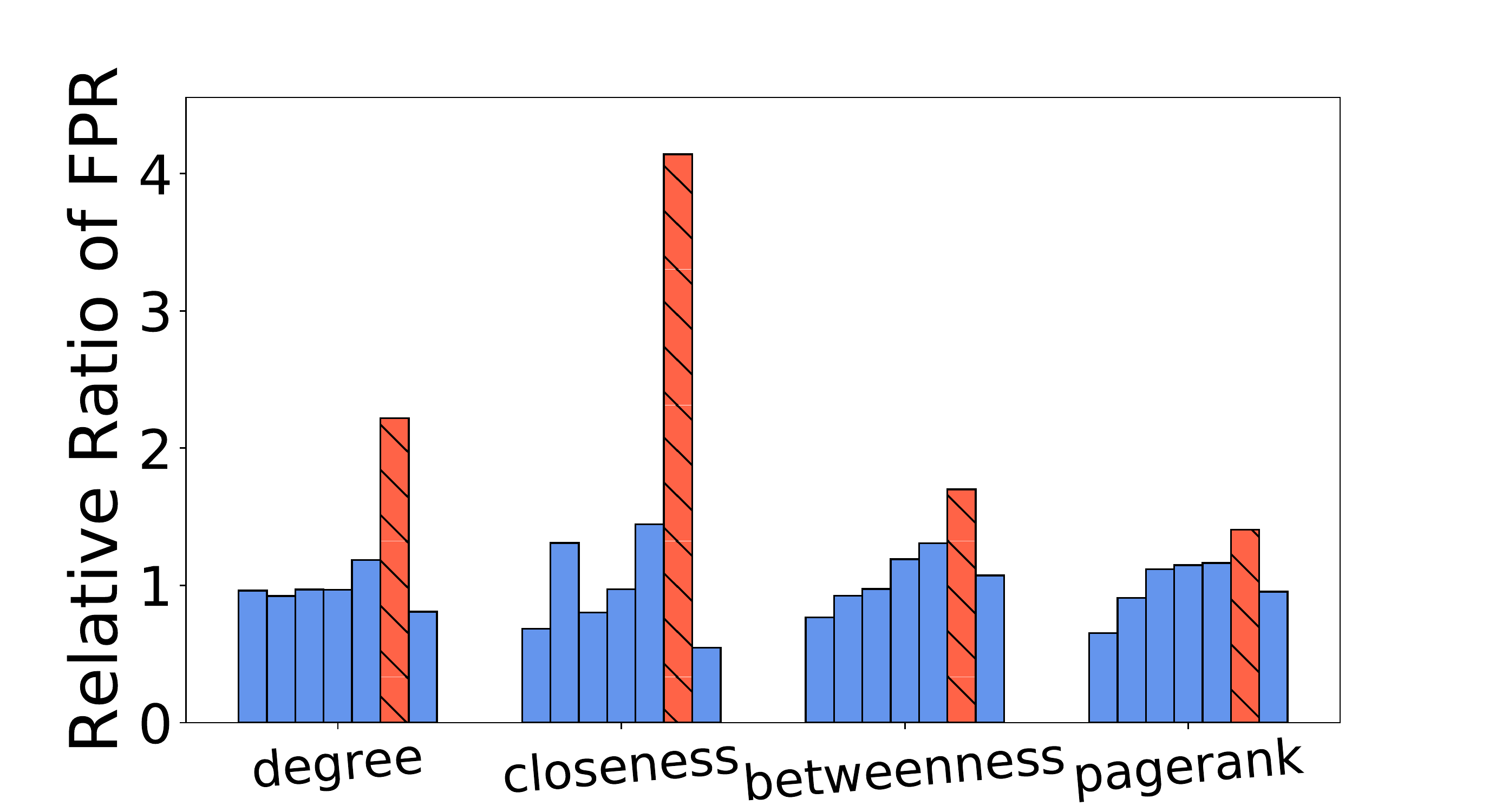}
\end{minipage}%
}%
\subfloat[Model: MLP. Class: 5.]{
\begin{minipage}[t]{0.33\linewidth}
\centering
\includegraphics[width=\linewidth]{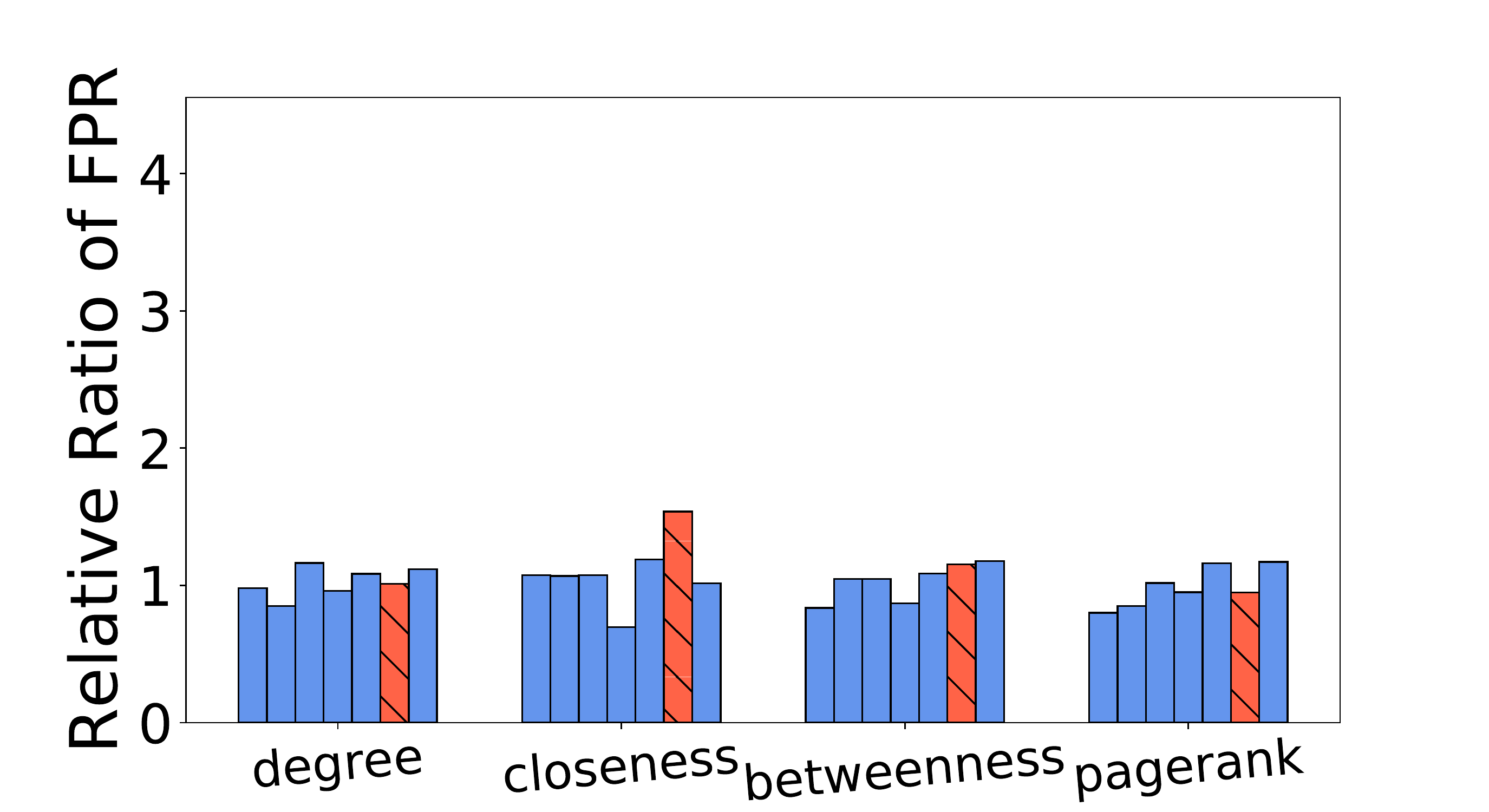}
\end{minipage}%
}%

\subfloat[Model: GCN. Class: 6.]{
\begin{minipage}[t]{0.33\linewidth}
\centering
\includegraphics[width=\linewidth]{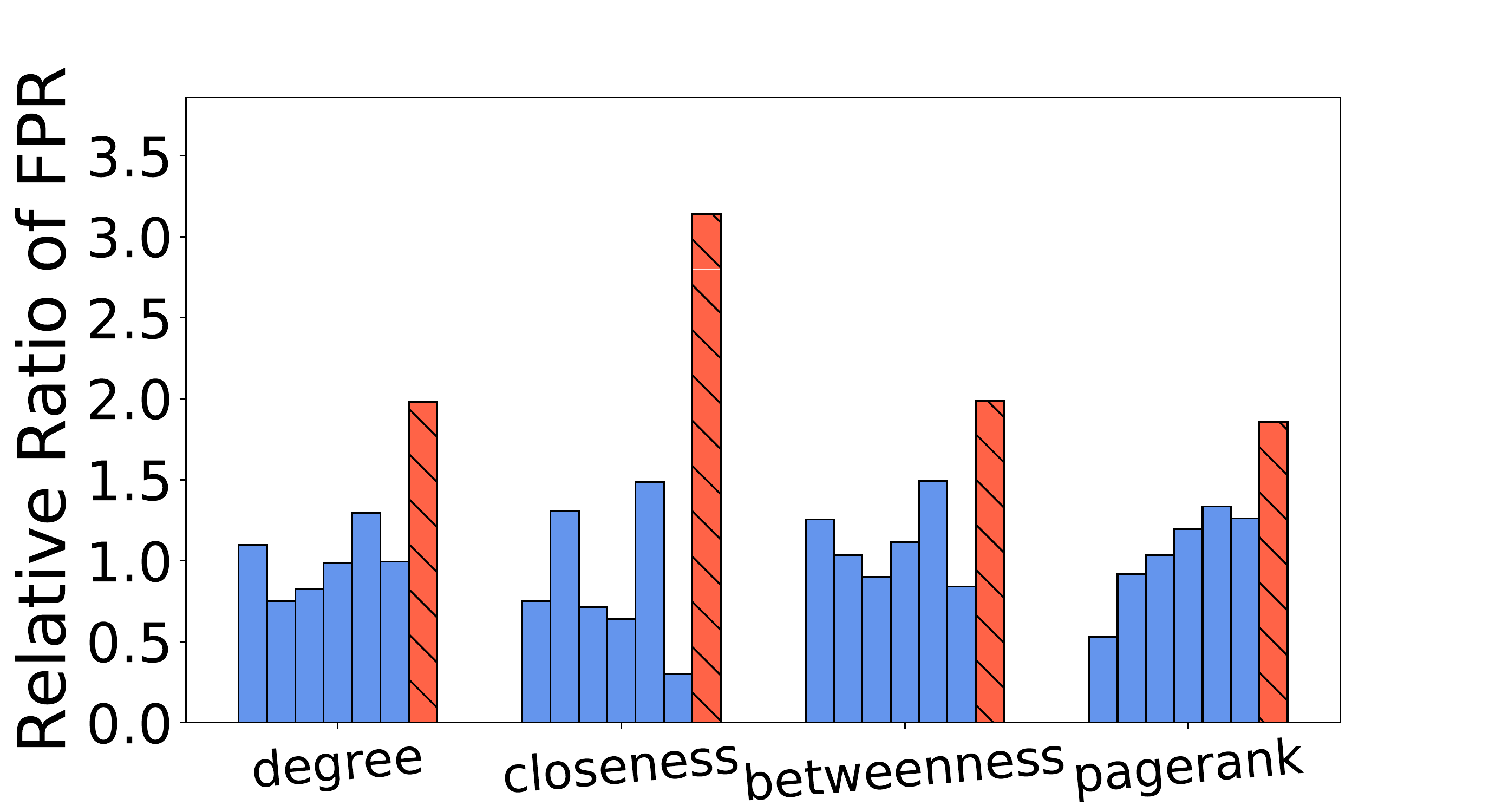}
\end{minipage}%
}%
\subfloat[Model: GAT. Class: 6.]{
\begin{minipage}[t]{0.33\linewidth}
\centering
\includegraphics[width=\linewidth]{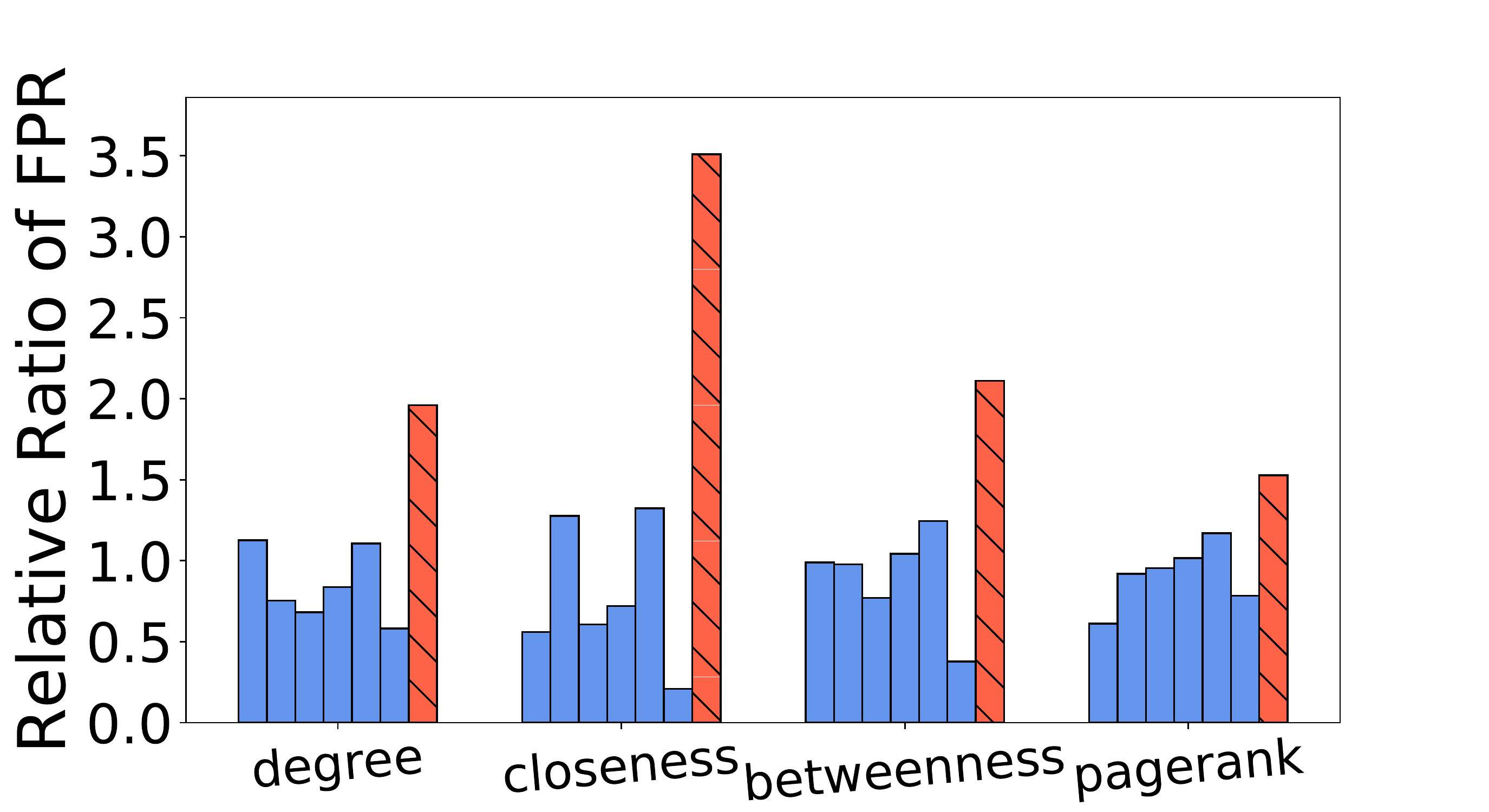}
\end{minipage}%
}%
\subfloat[Model: MLP. Class: 6.]{
\begin{minipage}[t]{0.33\linewidth}
\centering
\includegraphics[width=\linewidth]{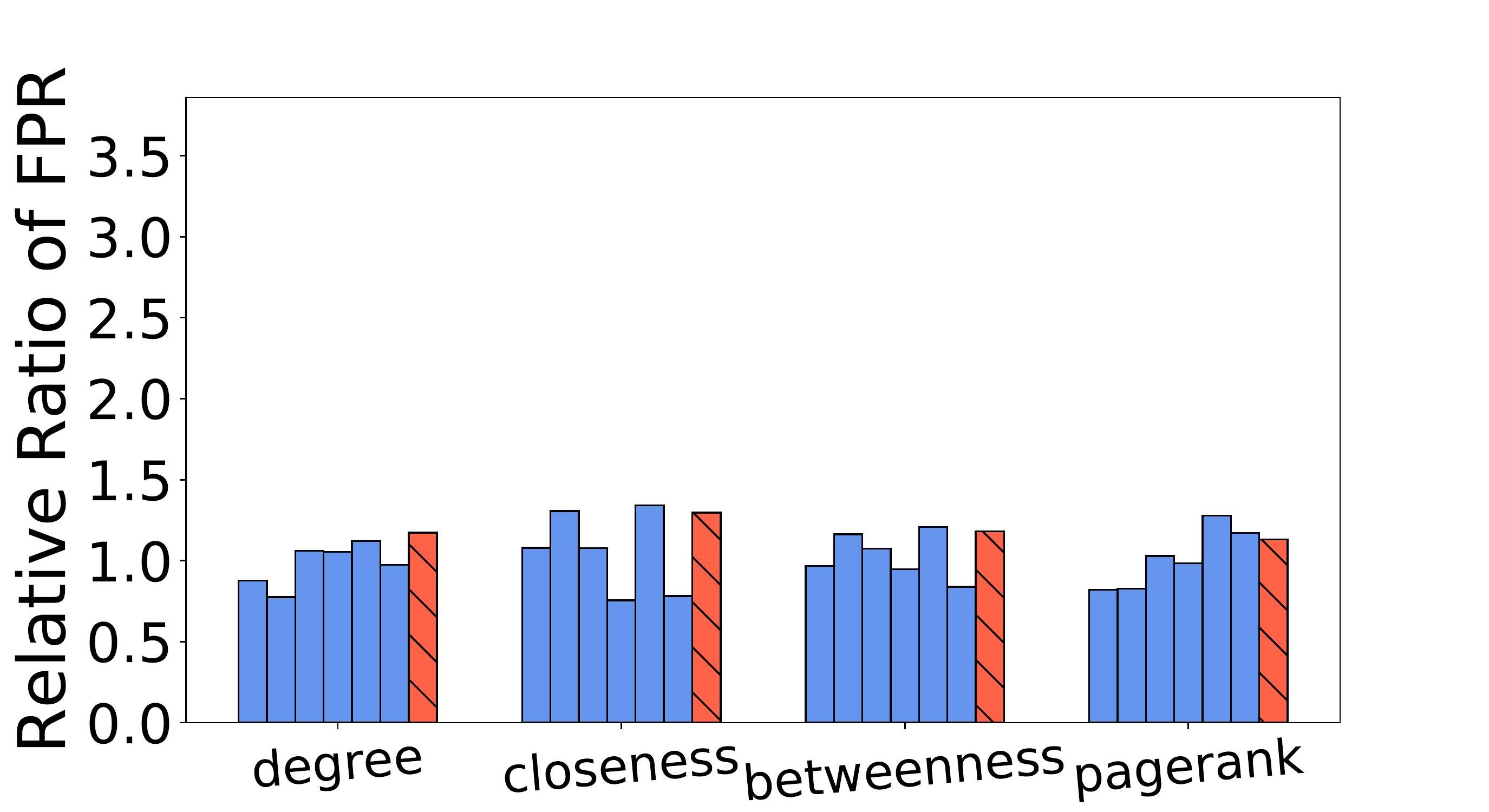}
\end{minipage}%
}%

\caption{Relative ratio of FPR in the biased training node selection experiment. Remaining results on Cora besides Figure~\ref{fig:label}. Each row corresponds to a different dominant class of choice. See Figure~\ref{fig:label} for the plot settings.}
\label{fig:full-label-Cora}
\end{figure}

\begin{figure}[htbp]
\centering
\subfloat[Model: GCN. Class: 0.]{
\begin{minipage}[t]{0.33\linewidth}
\centering
\includegraphics[width=\linewidth]{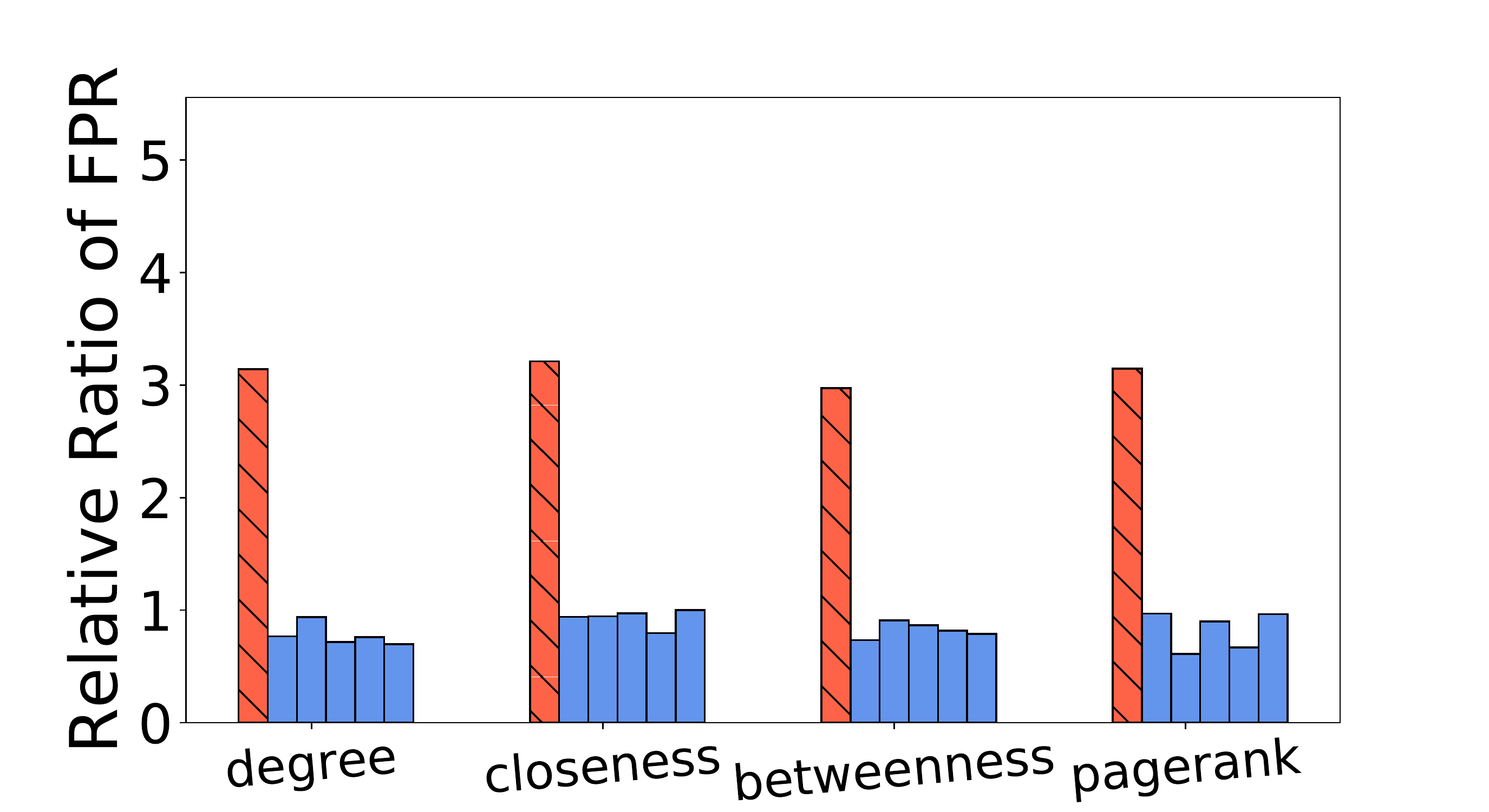}
\end{minipage}%
}%
\subfloat[Model: GAT. Class: 0.]{
\begin{minipage}[t]{0.33\linewidth}
\centering
\includegraphics[width=\linewidth]{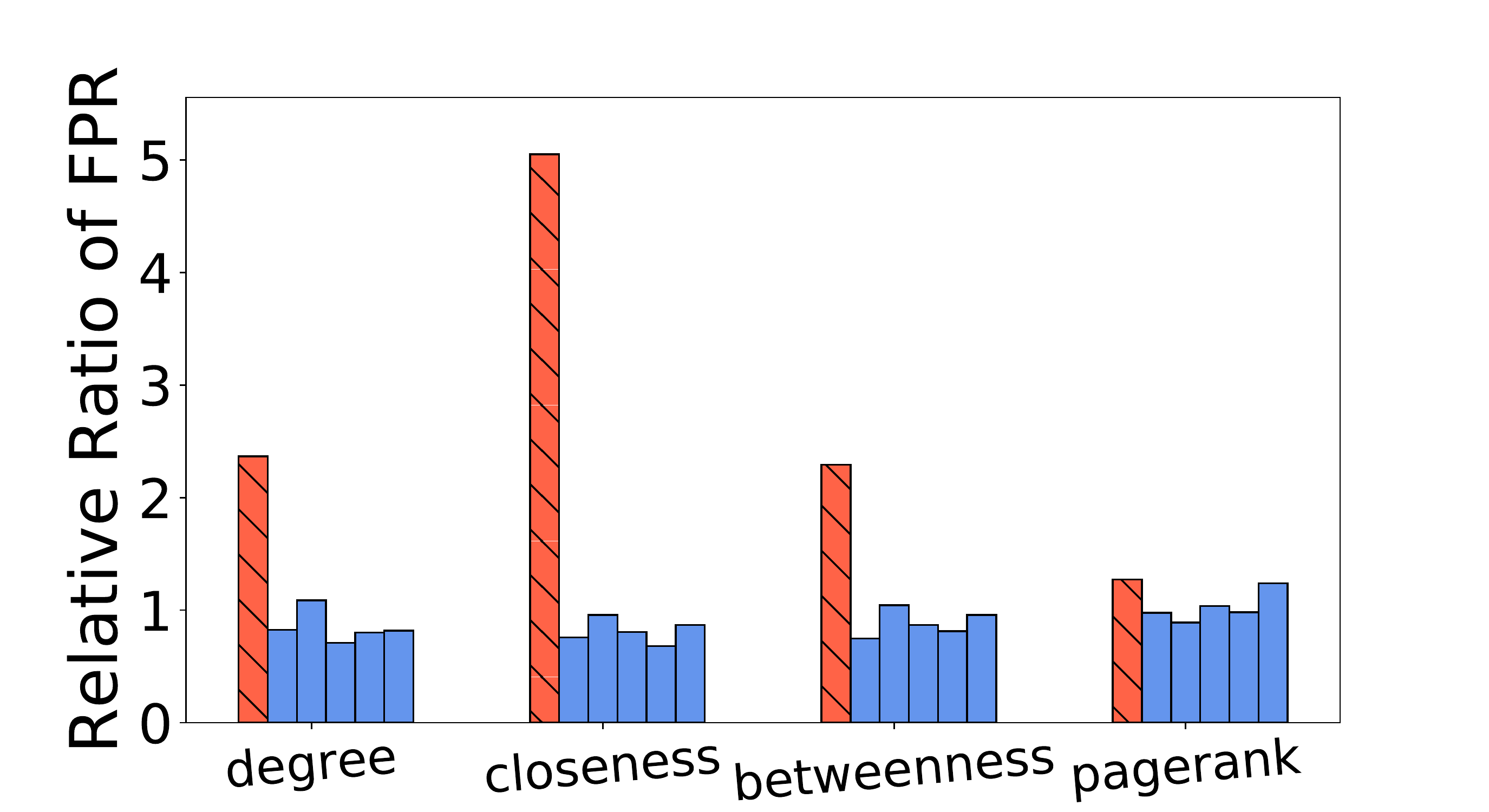}
\end{minipage}%
}%
\subfloat[Model: MLP. Class: 0.]{
\begin{minipage}[t]{0.33\linewidth}
\centering
\includegraphics[width=\linewidth]{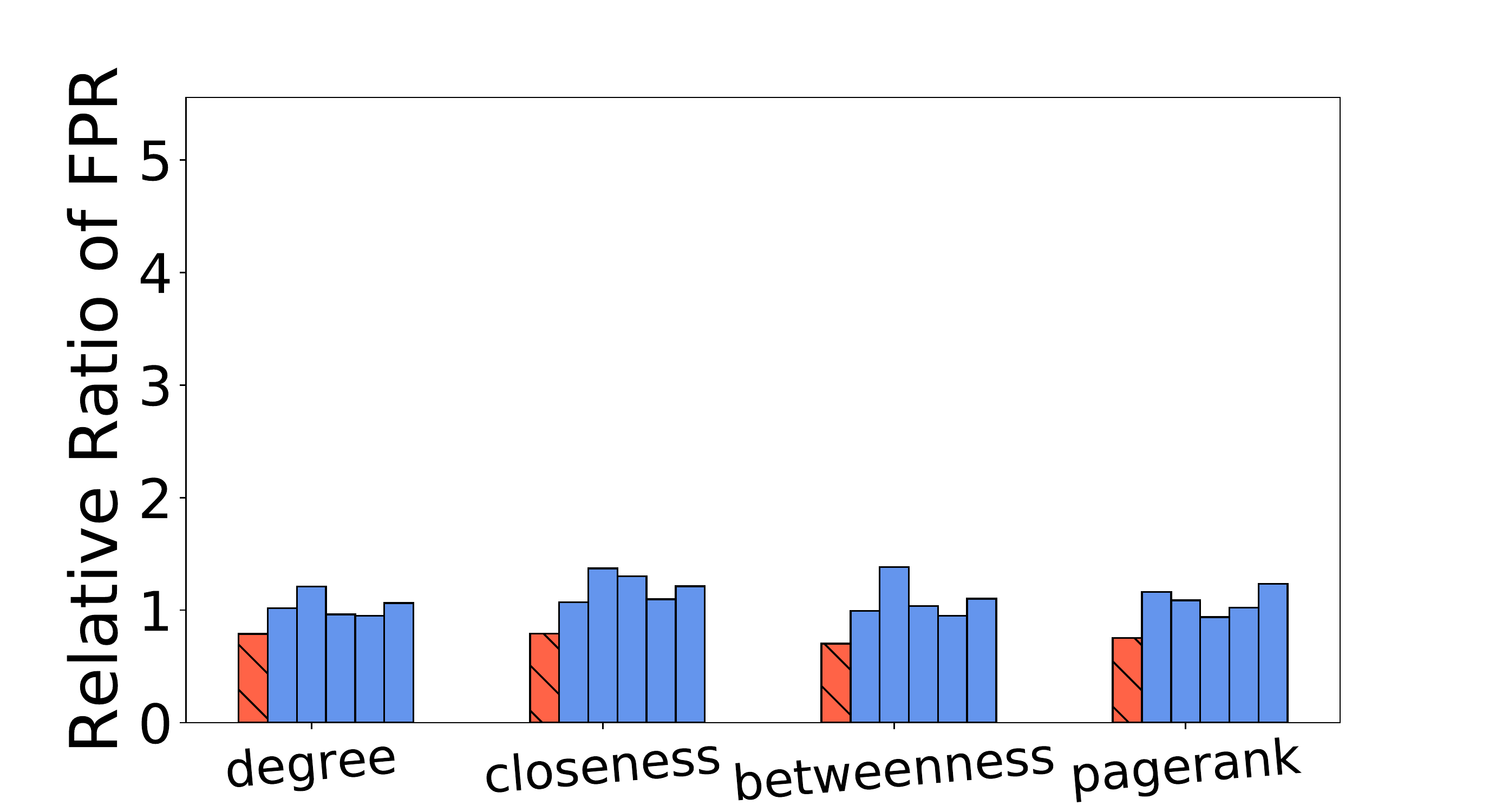}
\end{minipage}%
}%

\subfloat[Model: GCN. Class: 1.]{
\begin{minipage}[t]{0.33\linewidth}
\centering
\includegraphics[width=\linewidth]{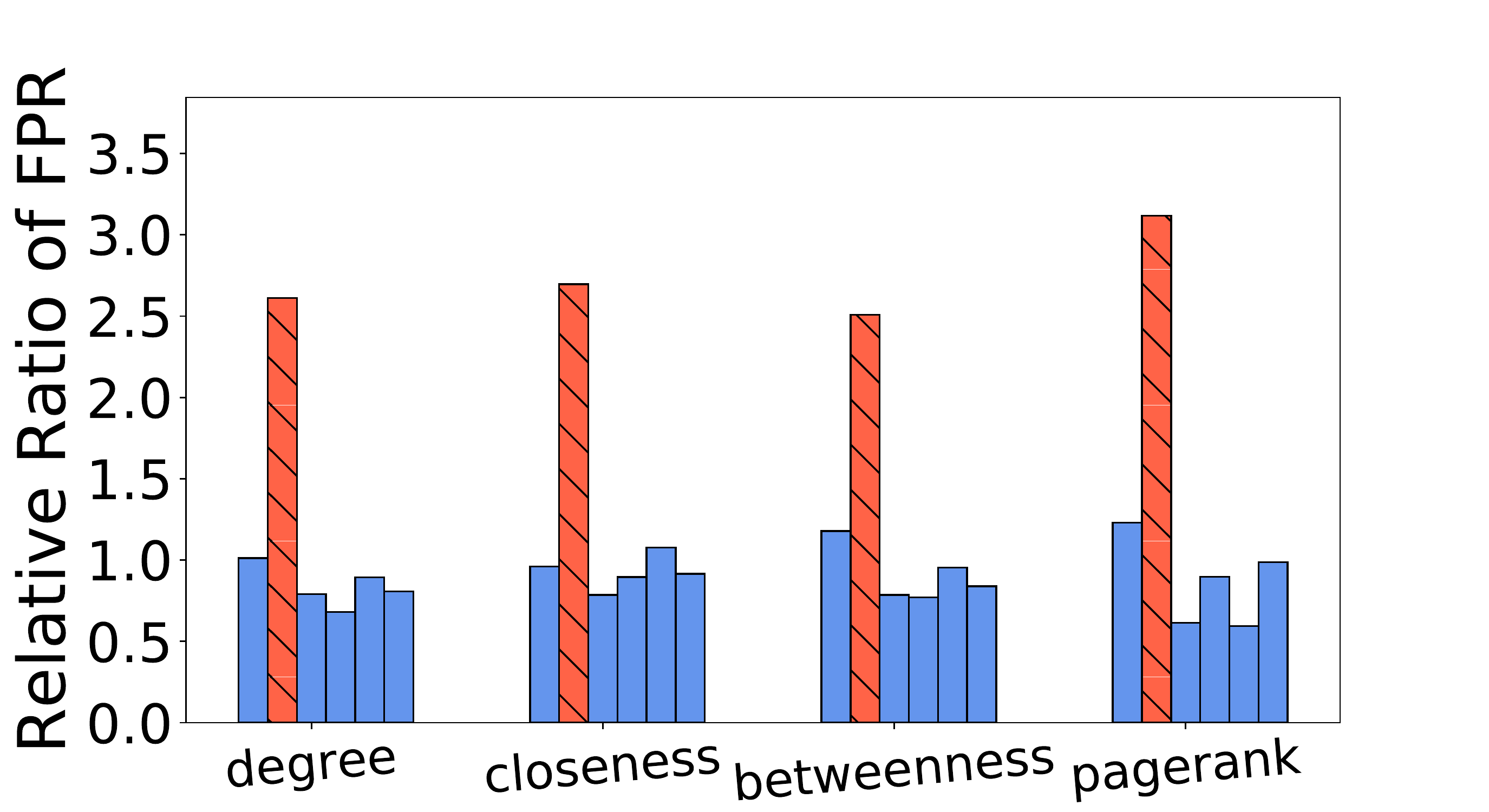}
\end{minipage}%
}%
\subfloat[Model: GAT. Class: 1.]{
\begin{minipage}[t]{0.33\linewidth}
\centering
\includegraphics[width=\linewidth]{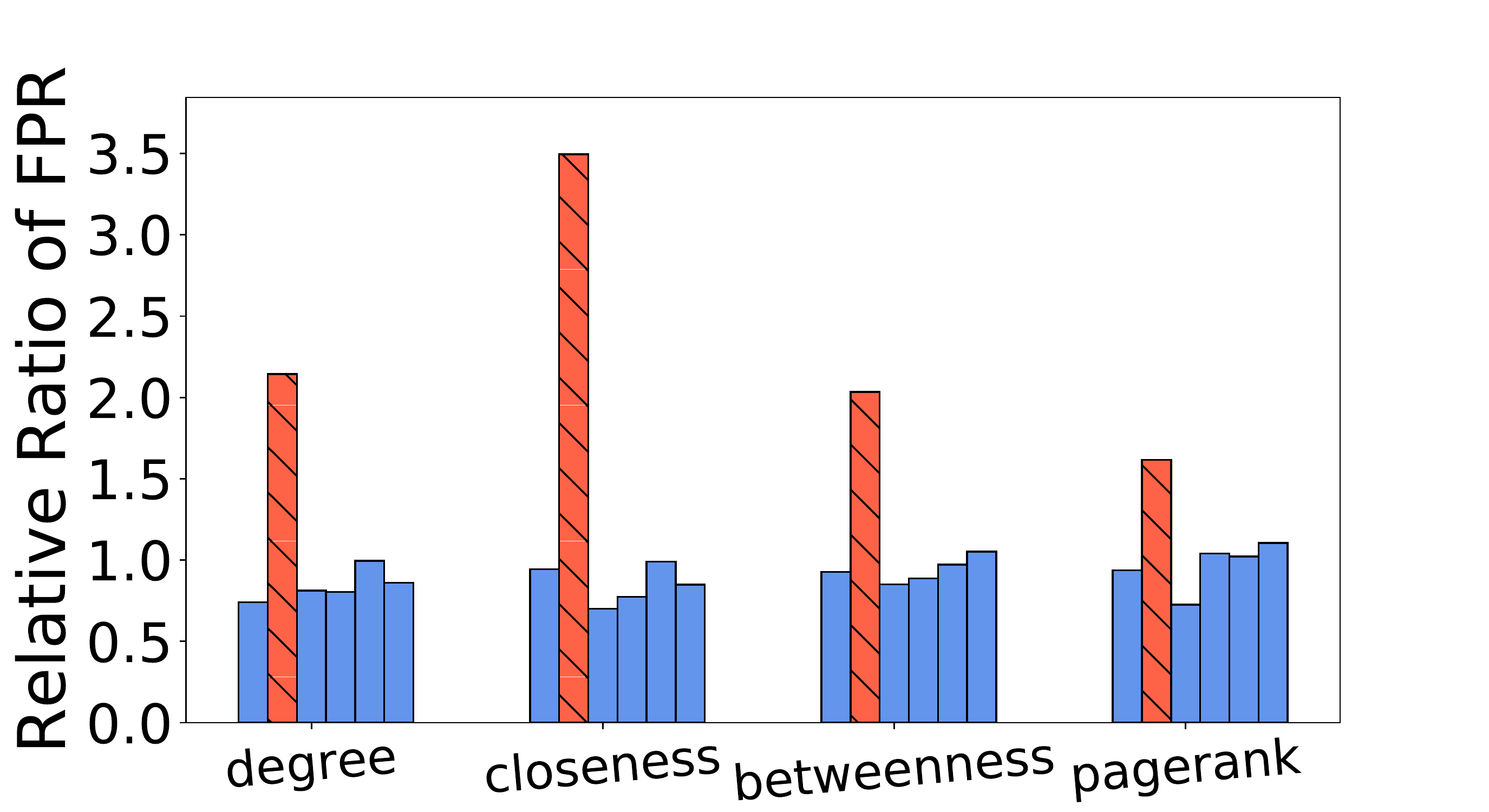}
\end{minipage}%
}%
\subfloat[Model: MLP. Class: 1.]{
\begin{minipage}[t]{0.33\linewidth}
\centering
\includegraphics[width=\linewidth]{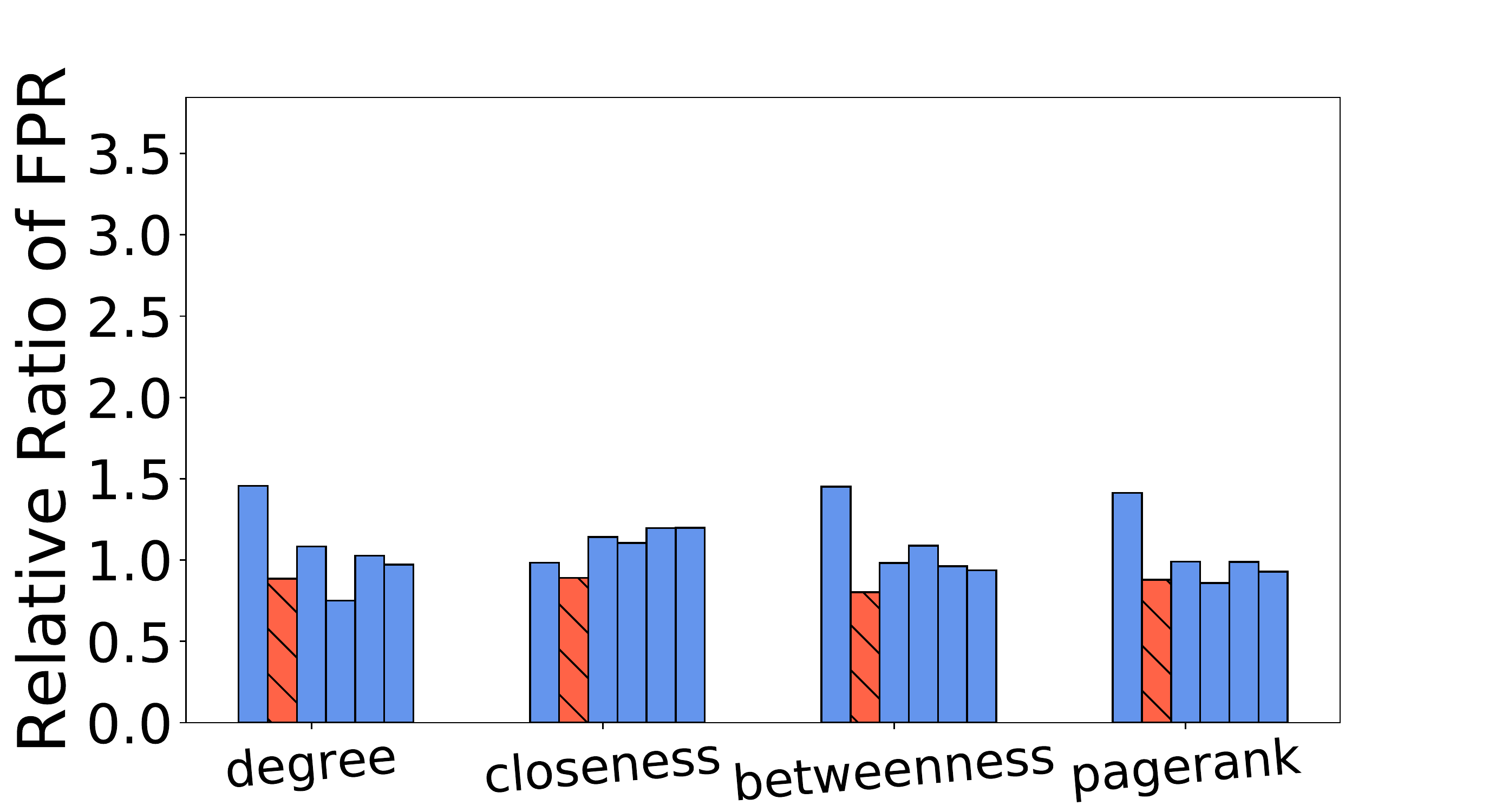}
\end{minipage}%
}%

\subfloat[Model: GCN. Class: 2.]{
\begin{minipage}[t]{0.33\linewidth}
\centering
\includegraphics[width=\linewidth]{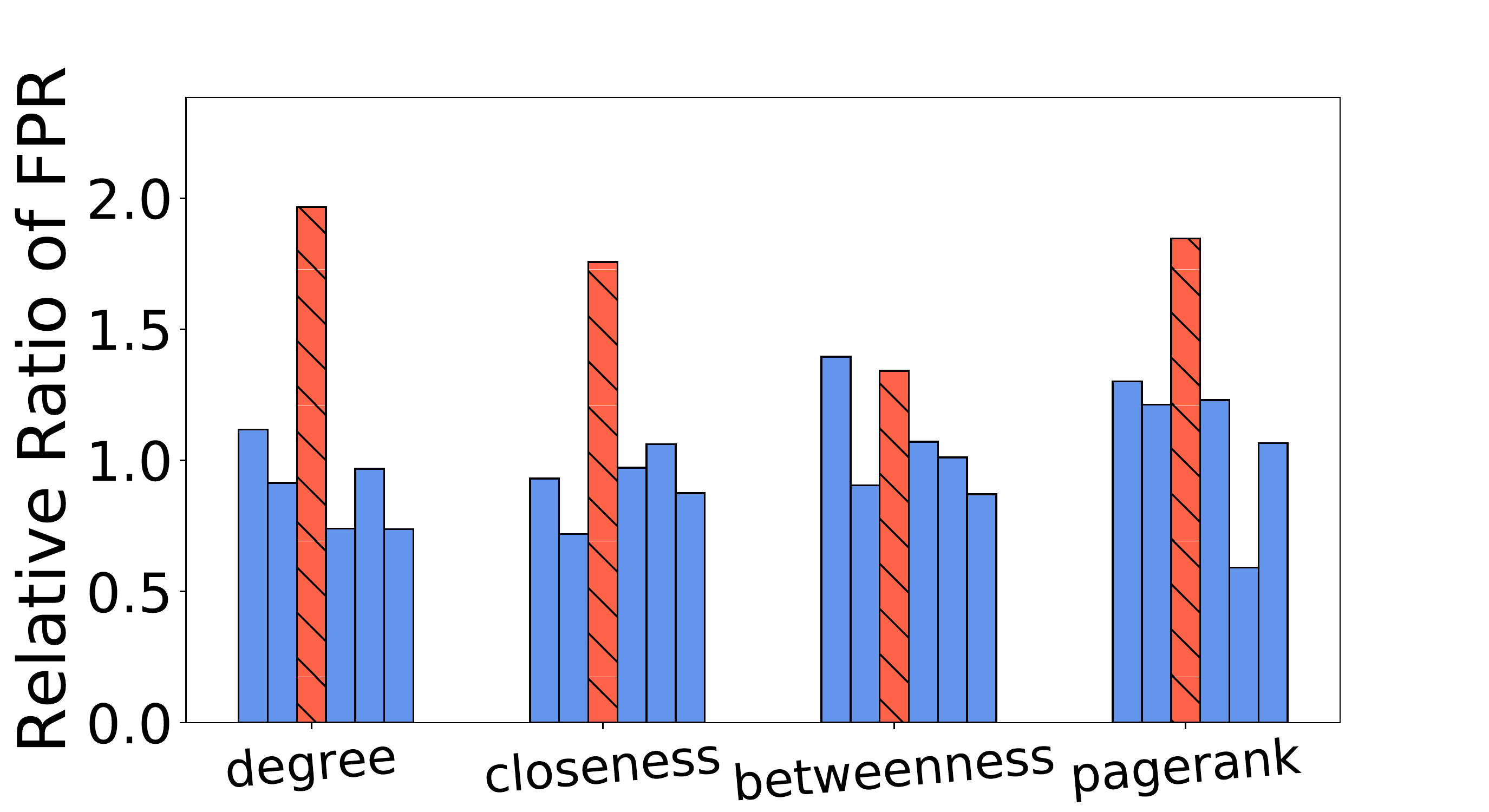}
\end{minipage}%
}%
\subfloat[Model: GAT. Class: 2.]{
\begin{minipage}[t]{0.33\linewidth}
\centering
\includegraphics[width=\linewidth]{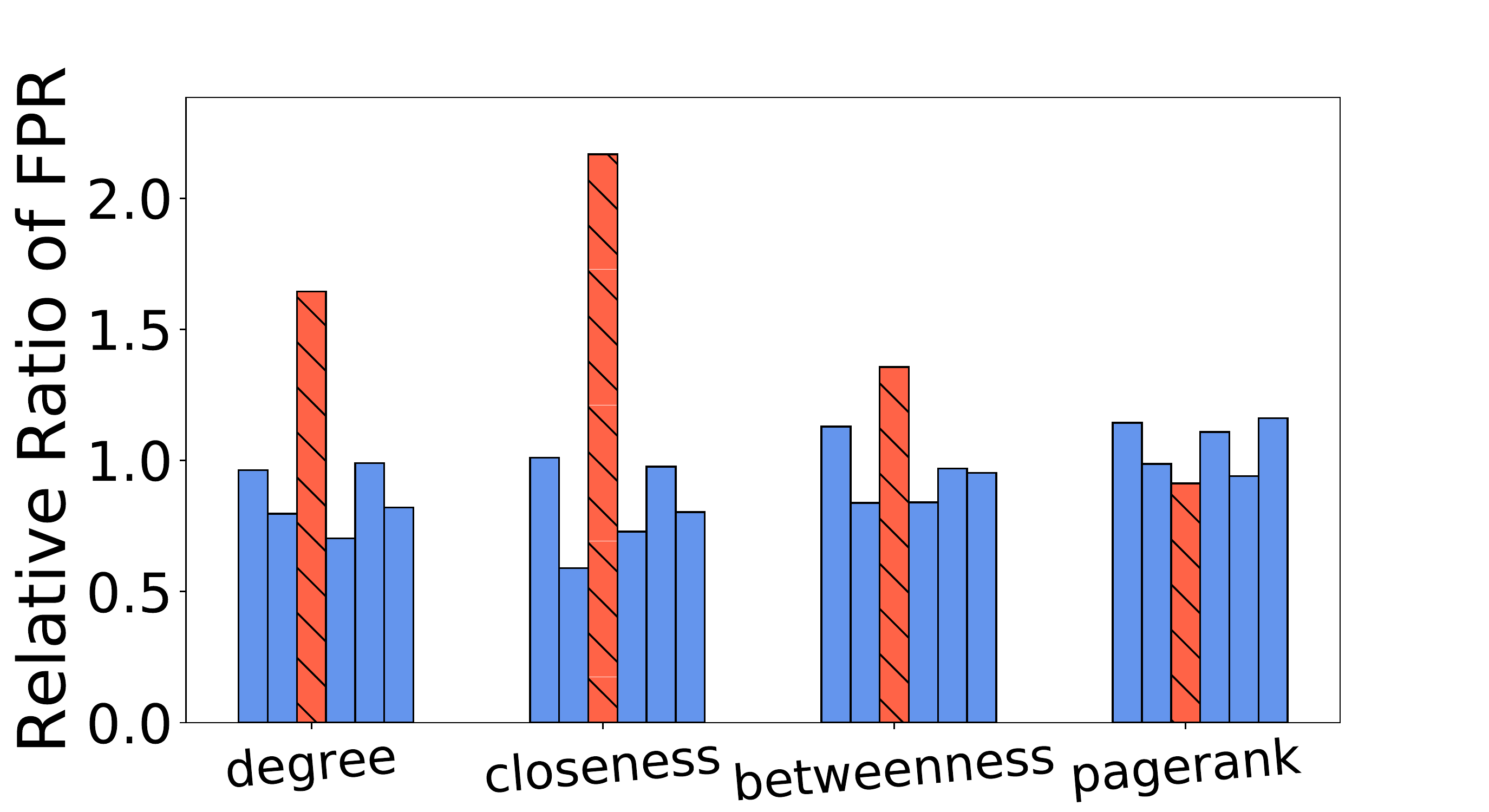}
\end{minipage}%
}%
\subfloat[Model: MLP. Class: 2.]{
\begin{minipage}[t]{0.33\linewidth}
\centering
\includegraphics[width=\linewidth]{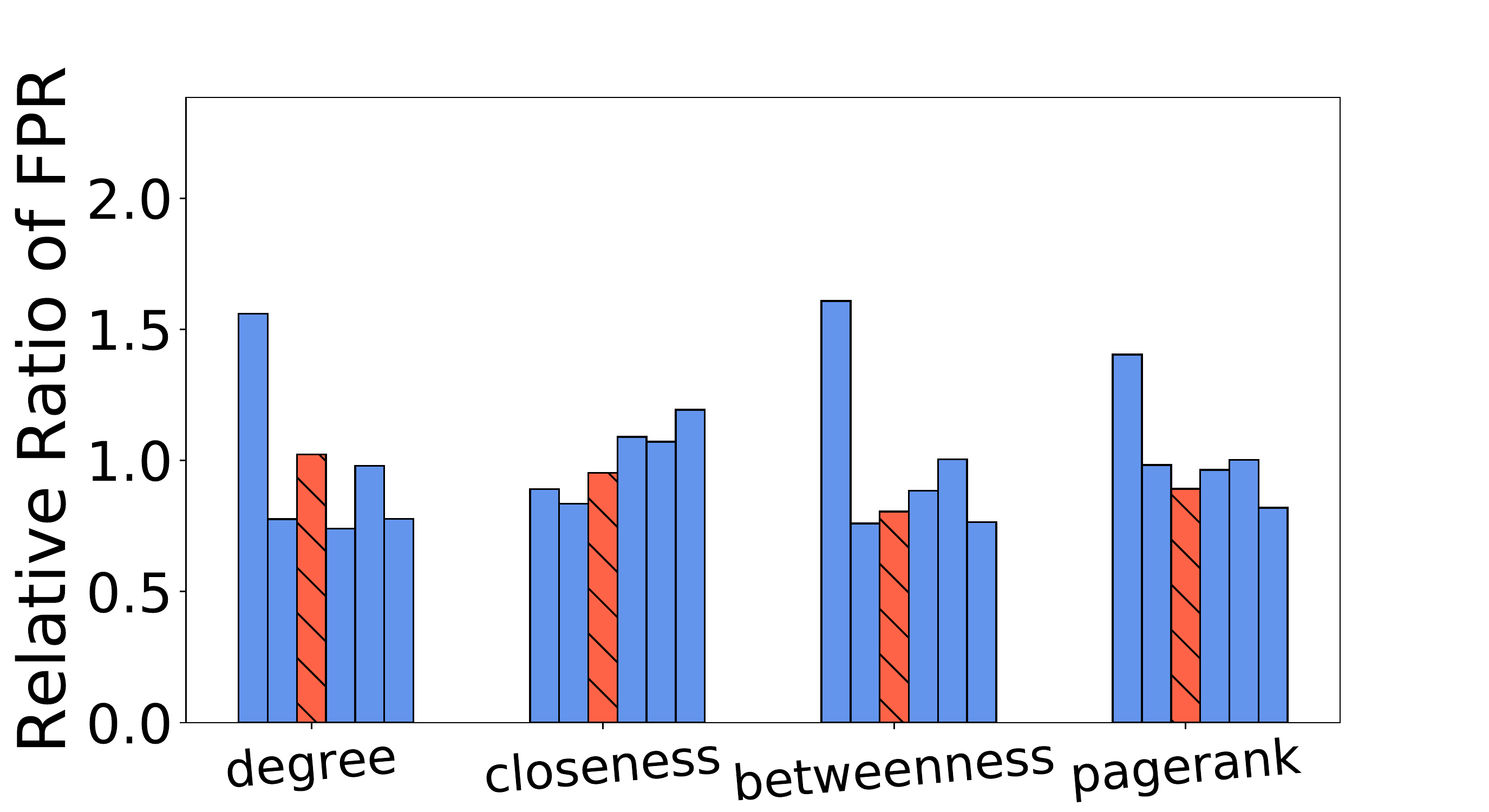}
\end{minipage}%
}%

\subfloat[Model: GCN. Class: 3.]{
\begin{minipage}[t]{0.33\linewidth}
\centering
\includegraphics[width=\linewidth]{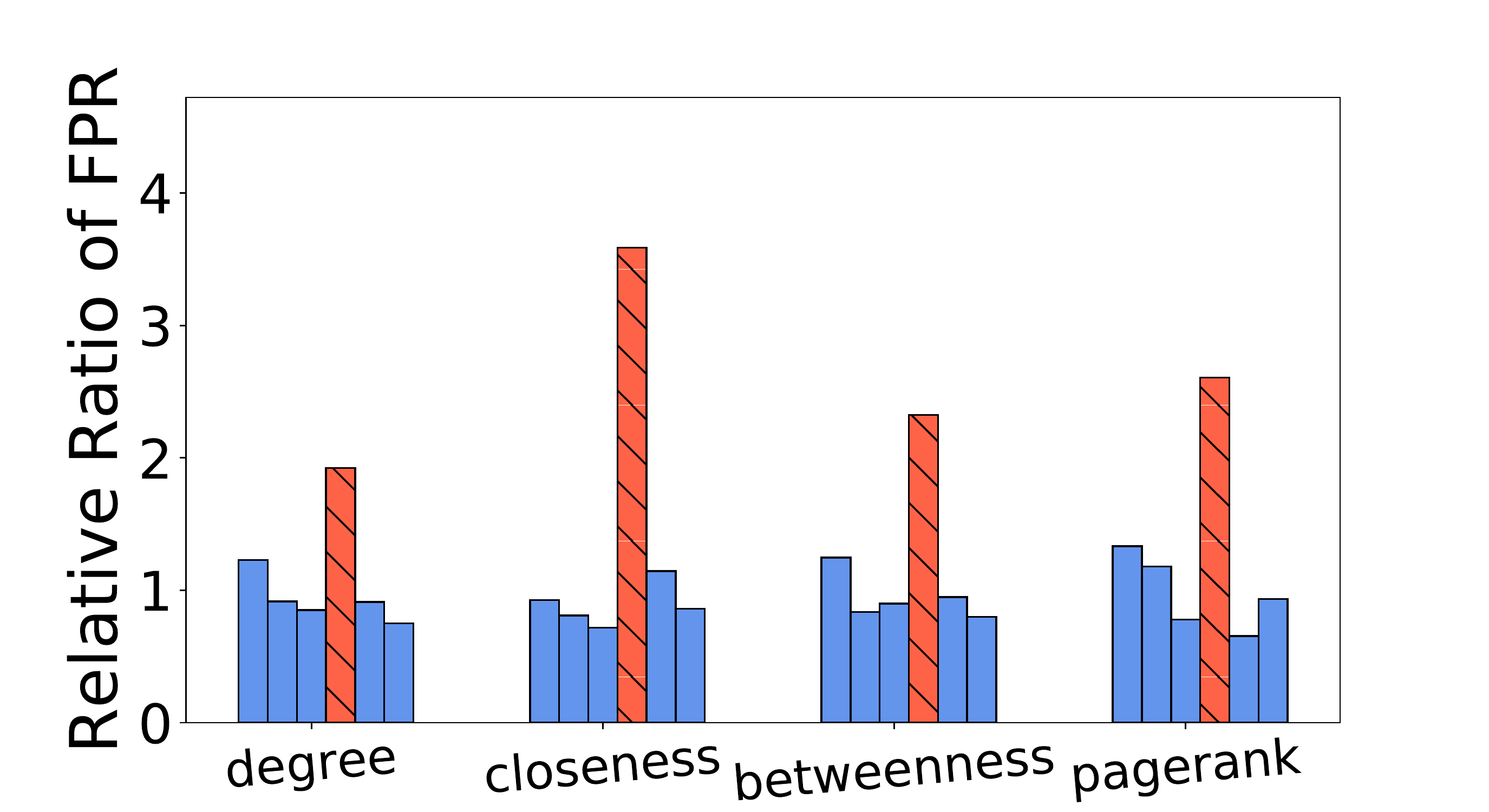}
\end{minipage}%
}%
\subfloat[Model: GAT. Class: 3.]{
\begin{minipage}[t]{0.33\linewidth}
\centering
\includegraphics[width=\linewidth]{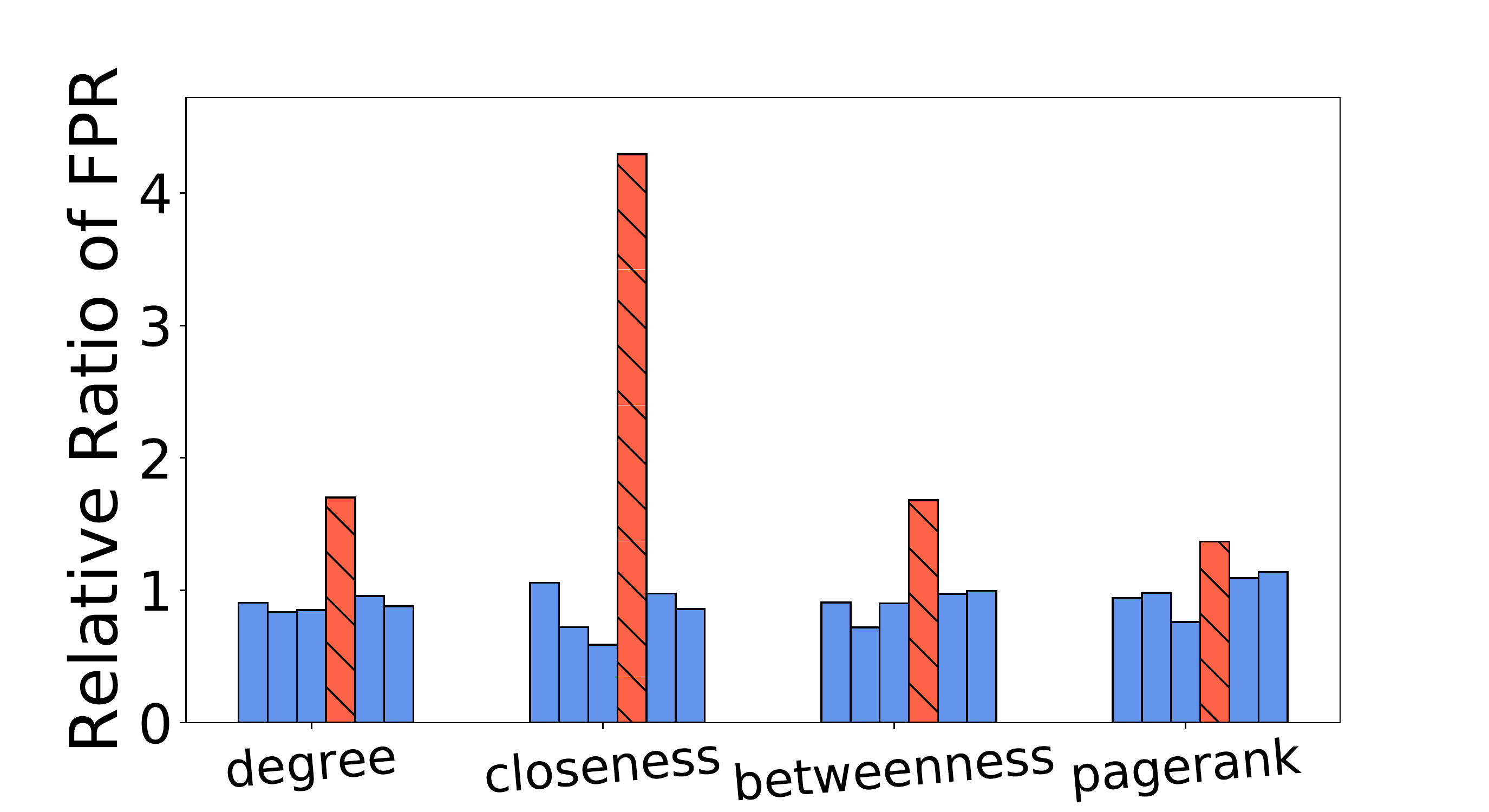}
\end{minipage}%
}%
\subfloat[Model: MLP. Class: 3.]{
\begin{minipage}[t]{0.33\linewidth}
\centering
\includegraphics[width=\linewidth]{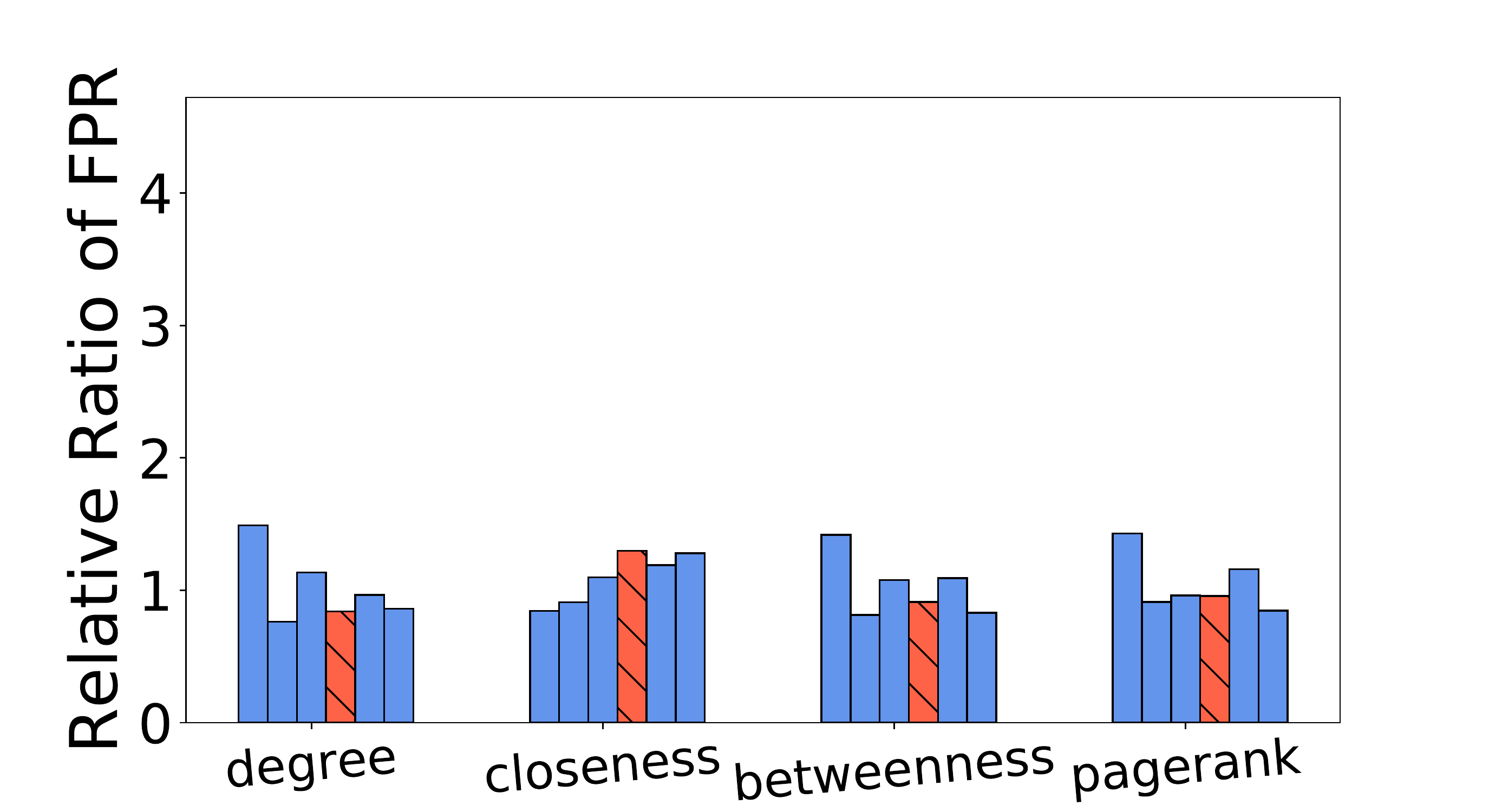}
\end{minipage}%
}%

\subfloat[Model: GCN. Class: 4.]{
\begin{minipage}[t]{0.33\linewidth}
\centering
\includegraphics[width=\linewidth]{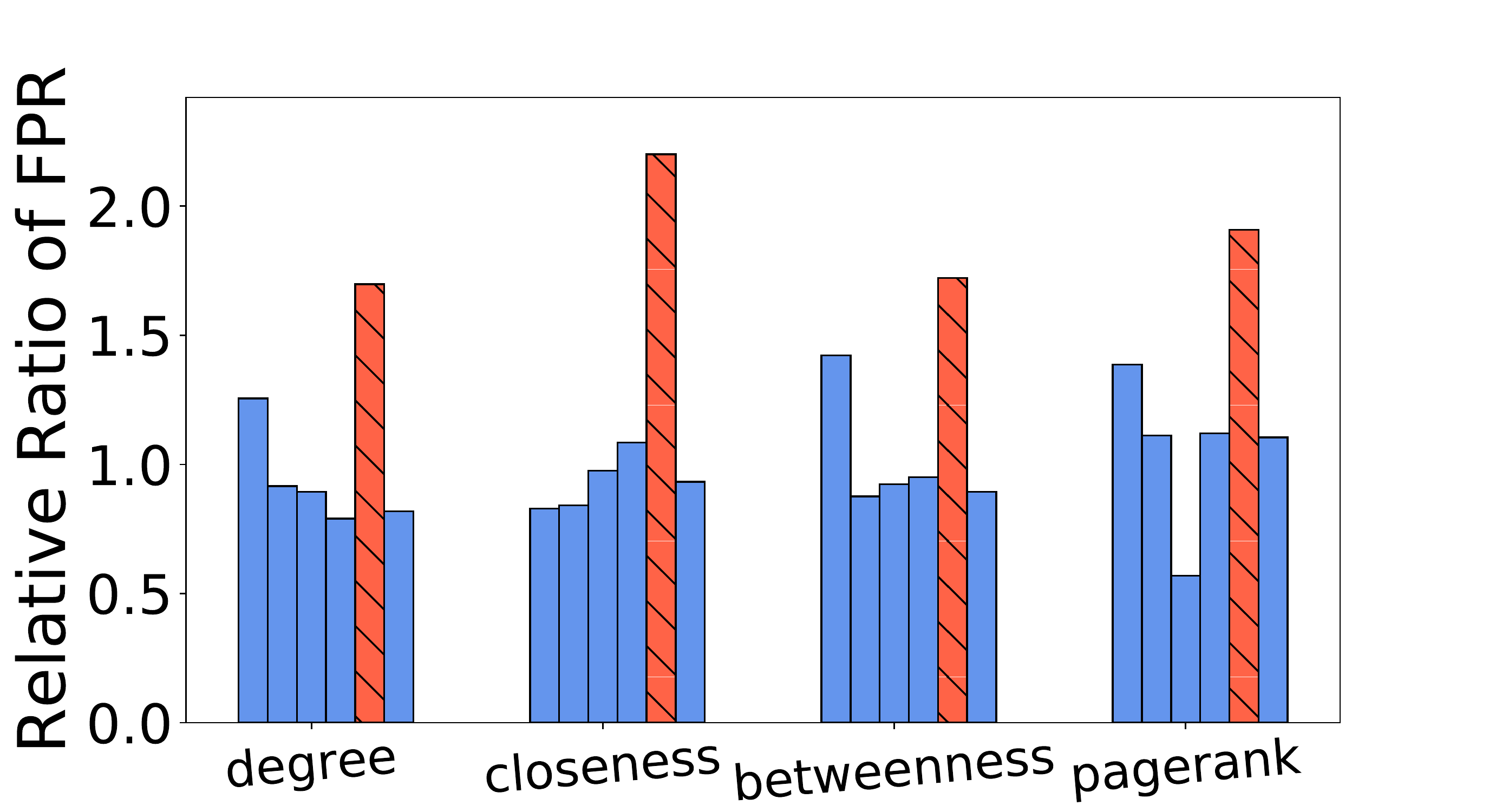}
\end{minipage}%
}%
\subfloat[Model: GAT. Class: 4.]{
\begin{minipage}[t]{0.33\linewidth}
\centering
\includegraphics[width=\linewidth]{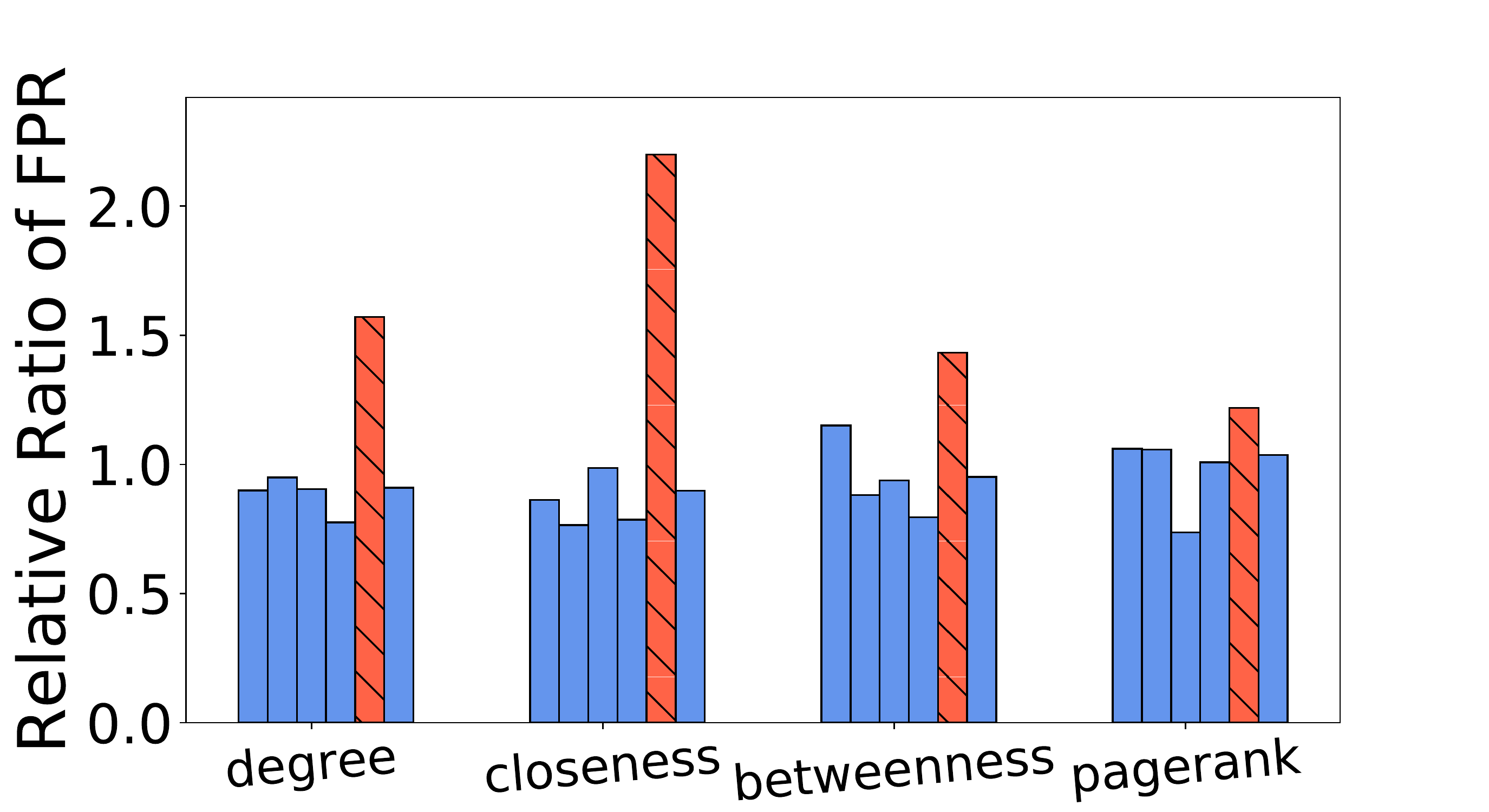}
\end{minipage}%
}%
\subfloat[Model: MLP. Class: 4.]{
\begin{minipage}[t]{0.33\linewidth}
\centering
\includegraphics[width=\linewidth]{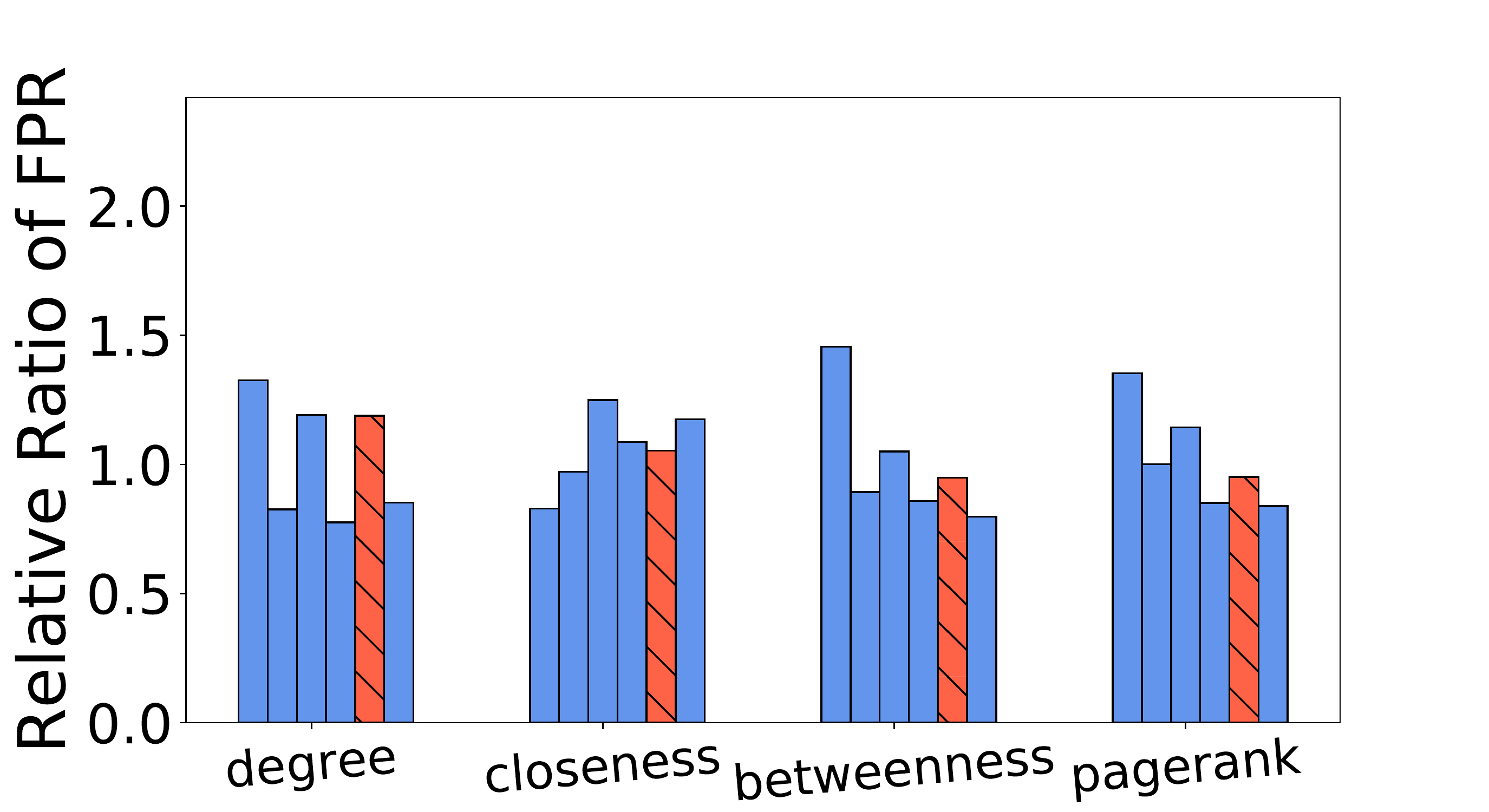}
\end{minipage}%
}%

\subfloat[Model: GCN. Class: 5.]{
\begin{minipage}[t]{0.33\linewidth}
\centering
\includegraphics[width=\linewidth]{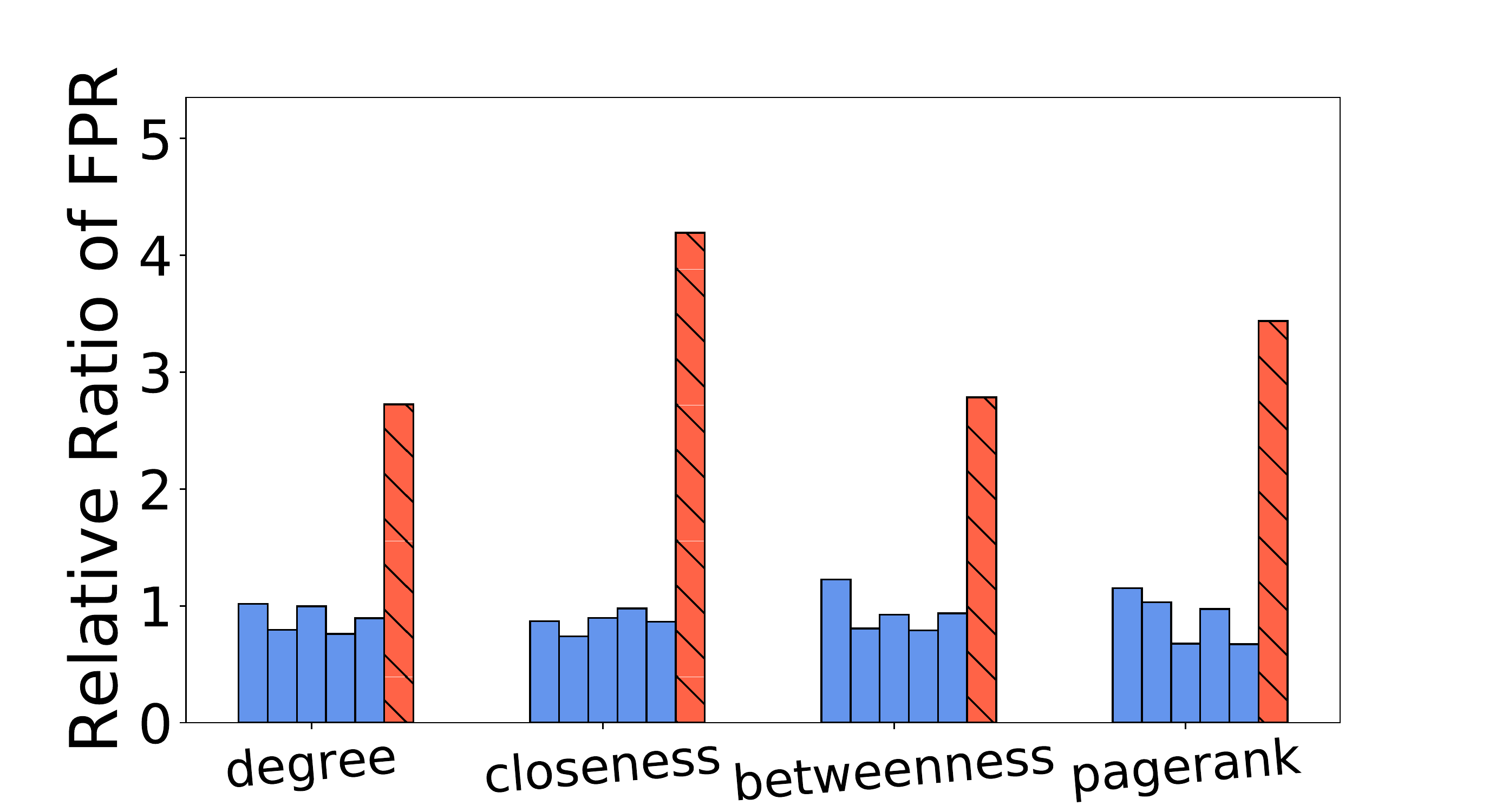}
\end{minipage}%
}%
\subfloat[Model: GAT. Class: 5.]{
\begin{minipage}[t]{0.33\linewidth}
\centering
\includegraphics[width=\linewidth]{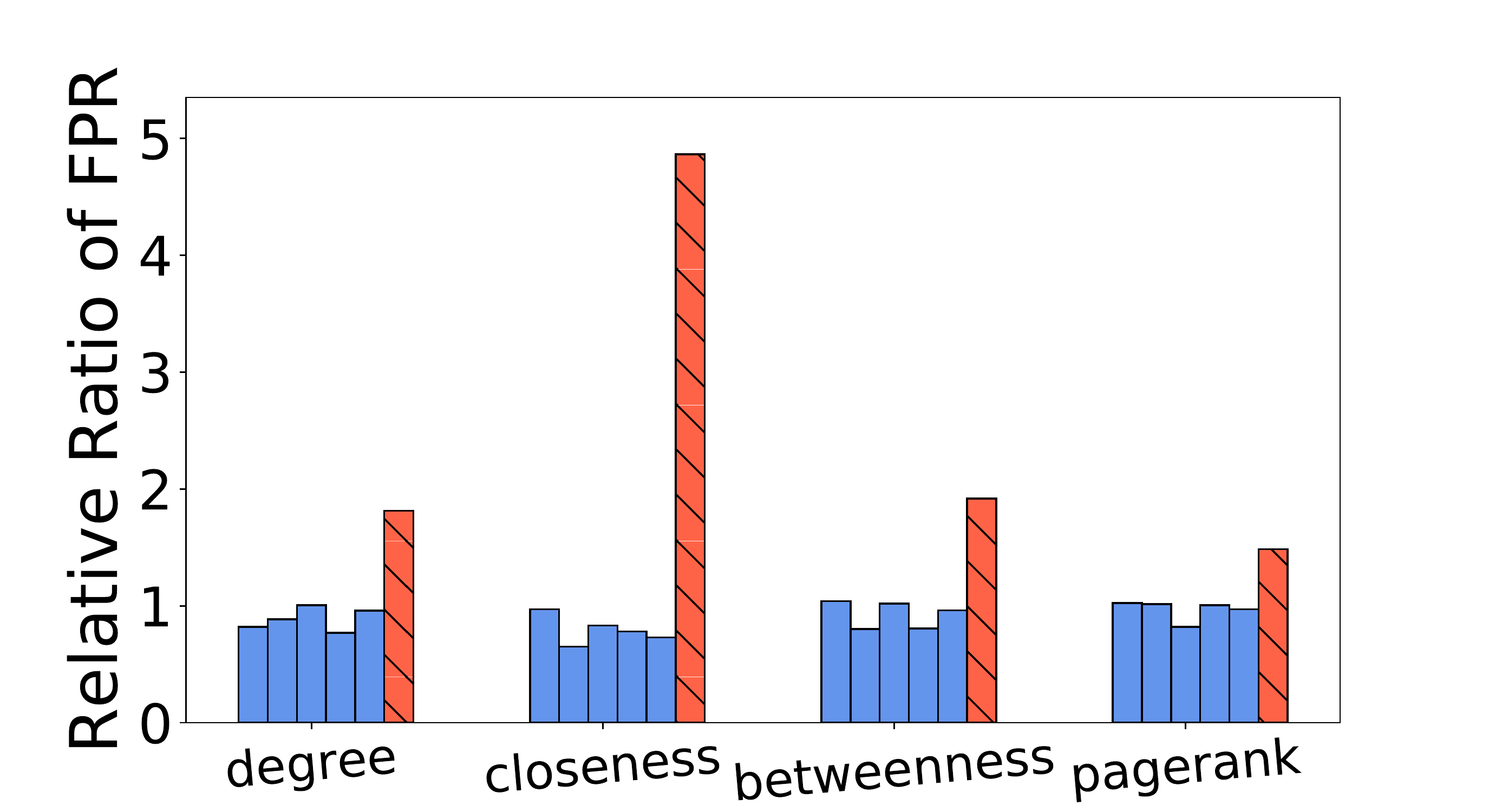}
\end{minipage}%
}%
\subfloat[Model: MLP. Class: 5.]{
\begin{minipage}[t]{0.33\linewidth}
\centering
\includegraphics[width=\linewidth]{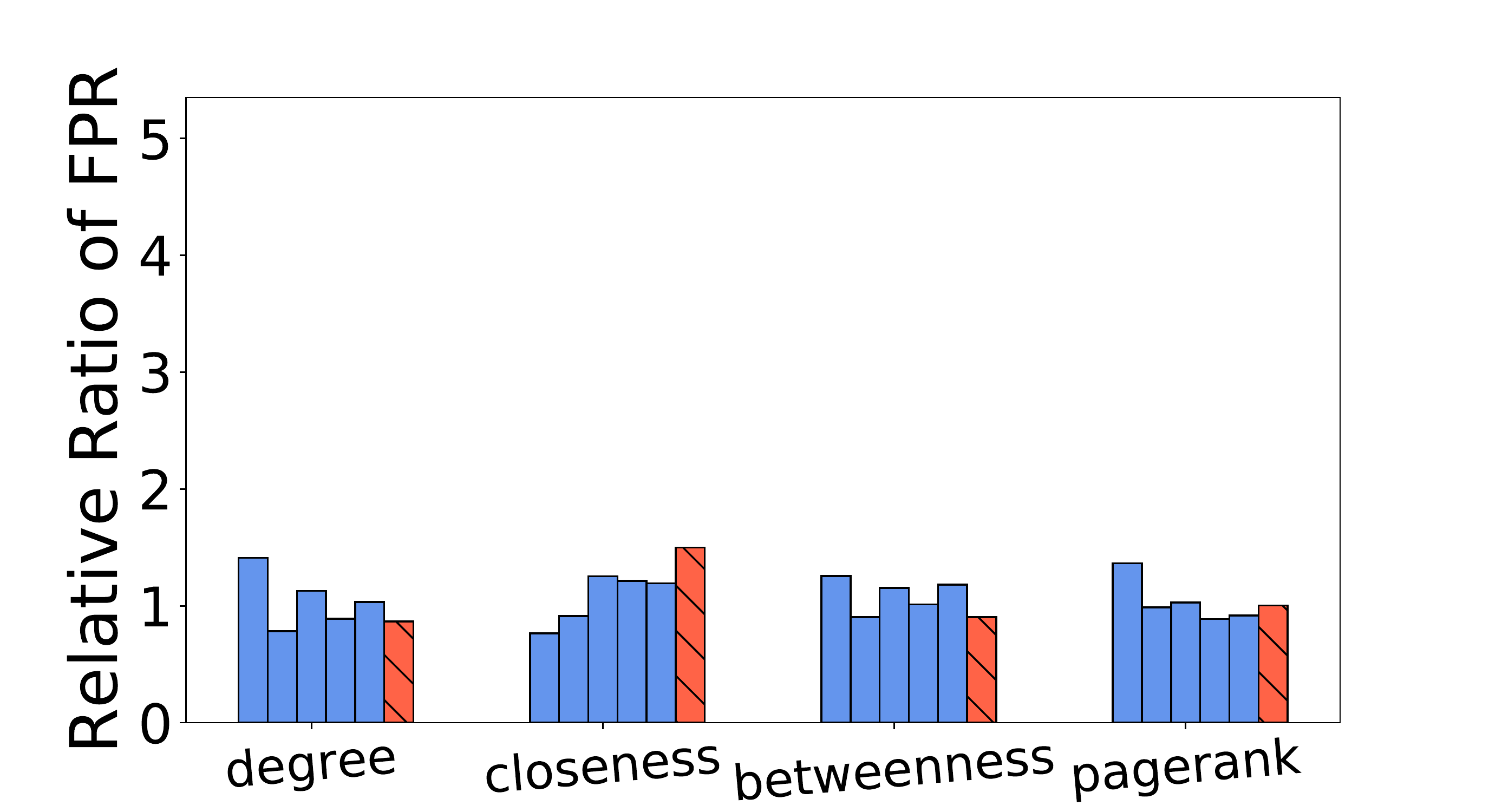}
\end{minipage}%
}%

\caption{Relative ratio of FPR in the biased training node selection experiment. Full results on Citeseer. Each row corresponds to a different dominant class of choice. See Figure~\ref{fig:label} for the plot settings.}
\label{fig:full-label-Citeseer}
\end{figure}

\begin{figure}[htbp]
\centering
\subfloat[Model: GCN. Class: 0.]{
\begin{minipage}[t]{0.33\linewidth}
\centering
\includegraphics[width=\linewidth]{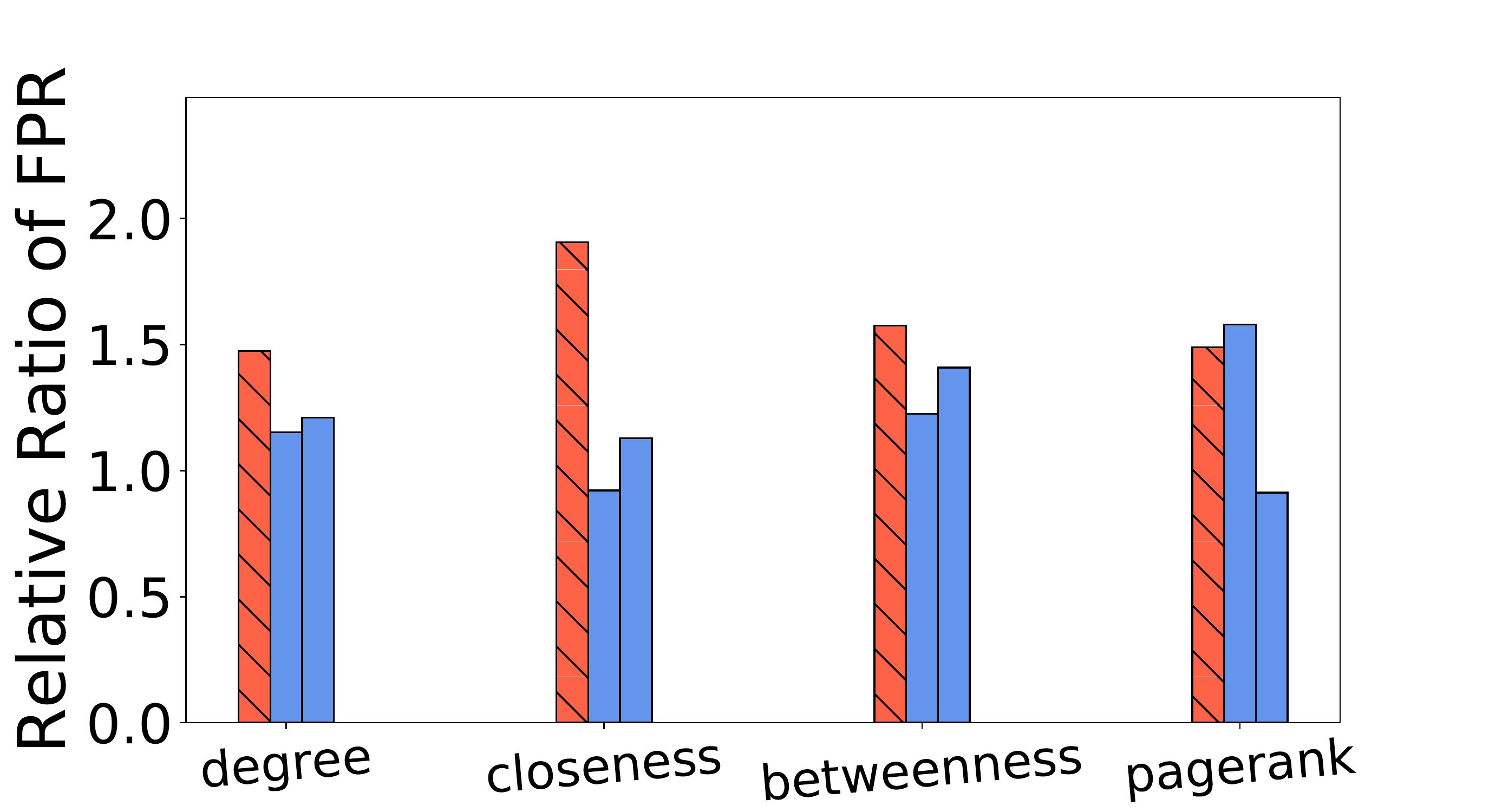}
\end{minipage}%
}%
\subfloat[Model: GAT. Class: 0.]{
\begin{minipage}[t]{0.33\linewidth}
\centering
\includegraphics[width=\linewidth]{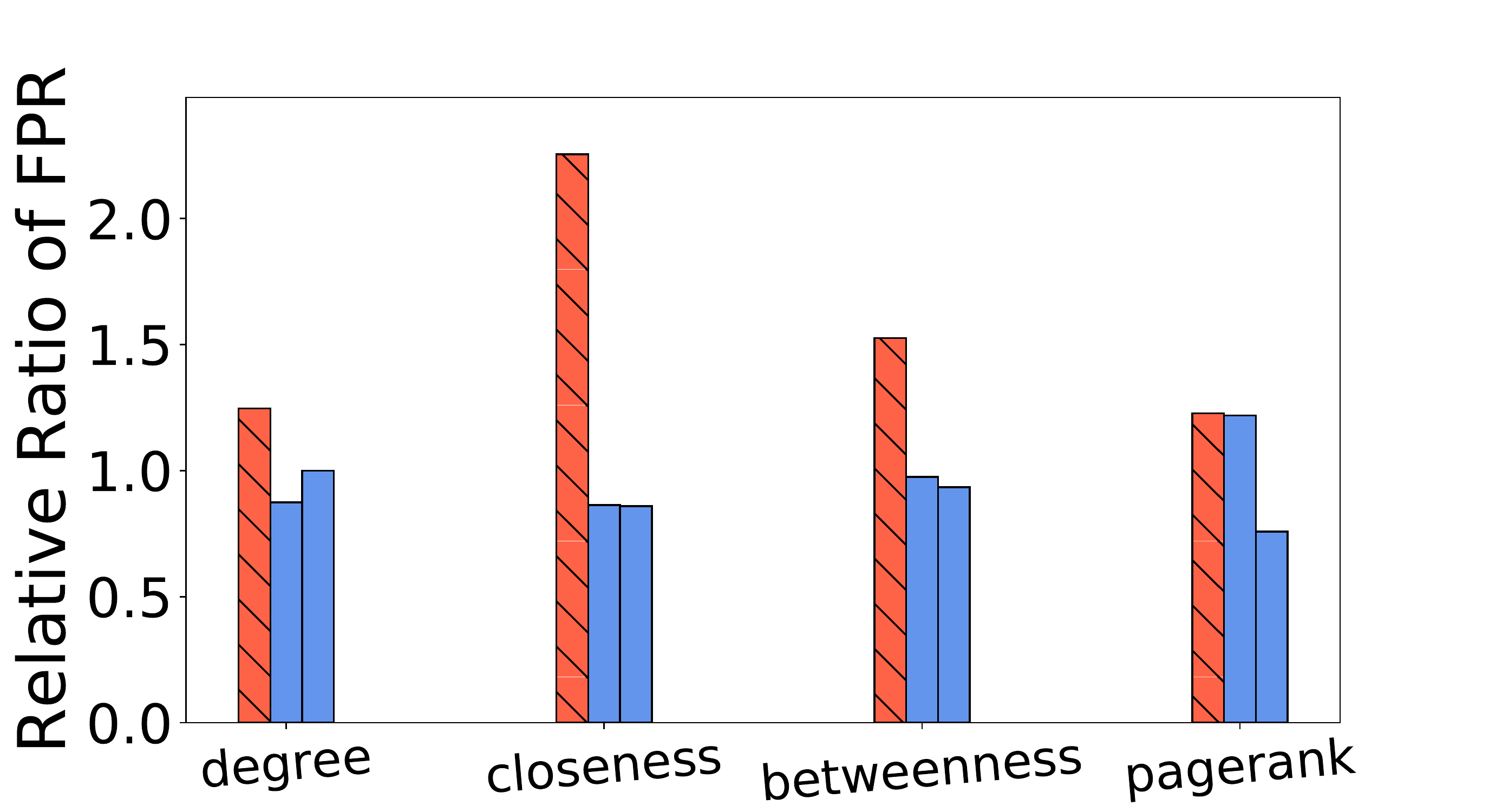}
\end{minipage}%
}%
\subfloat[Model: MLP. Class: 0.]{
\begin{minipage}[t]{0.33\linewidth}
\centering
\includegraphics[width=\linewidth]{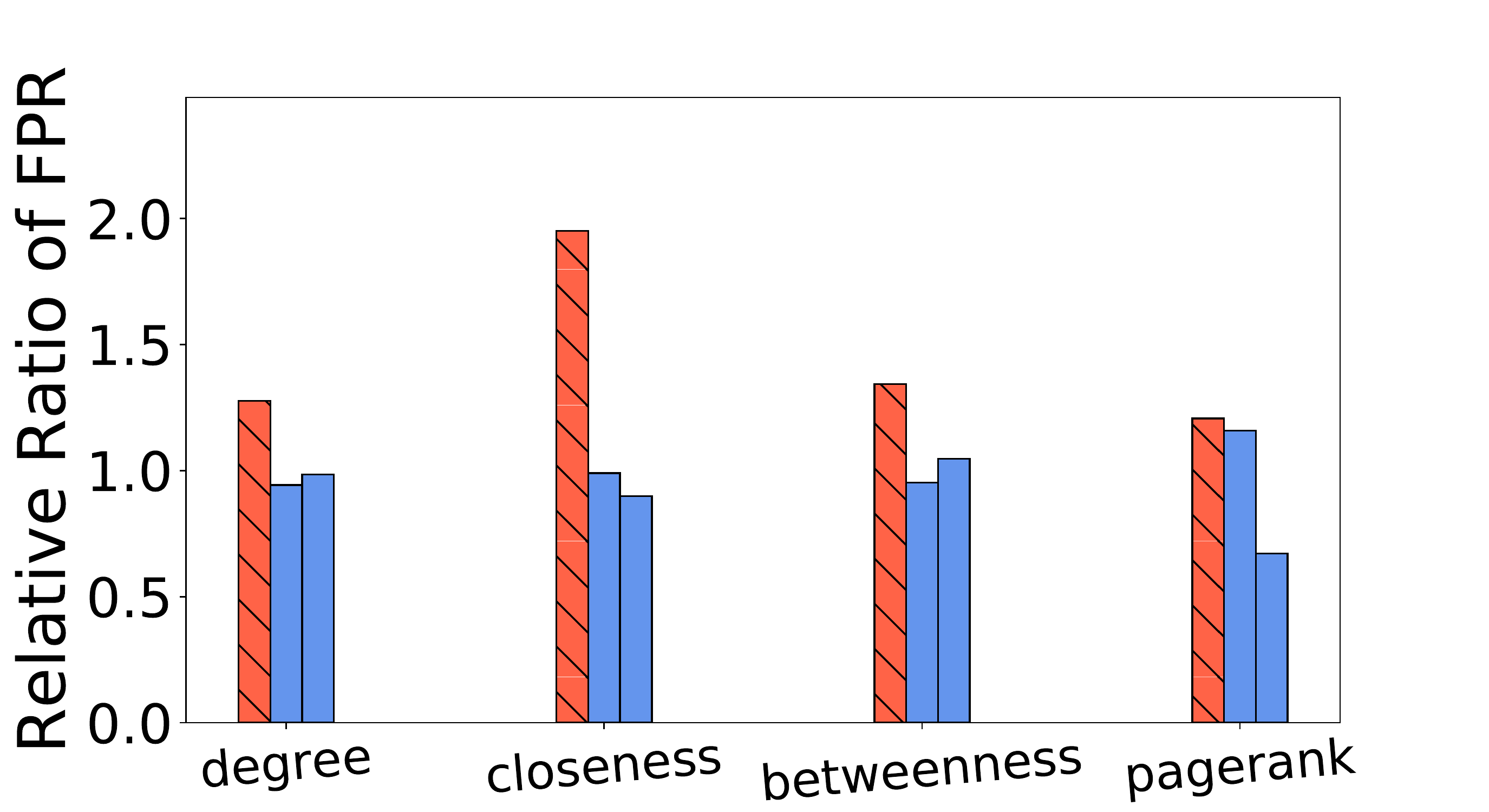}
\end{minipage}%
}%

\subfloat[Model: GCN. Class: 1.]{
\begin{minipage}[t]{0.33\linewidth}
\centering
\includegraphics[width=\linewidth]{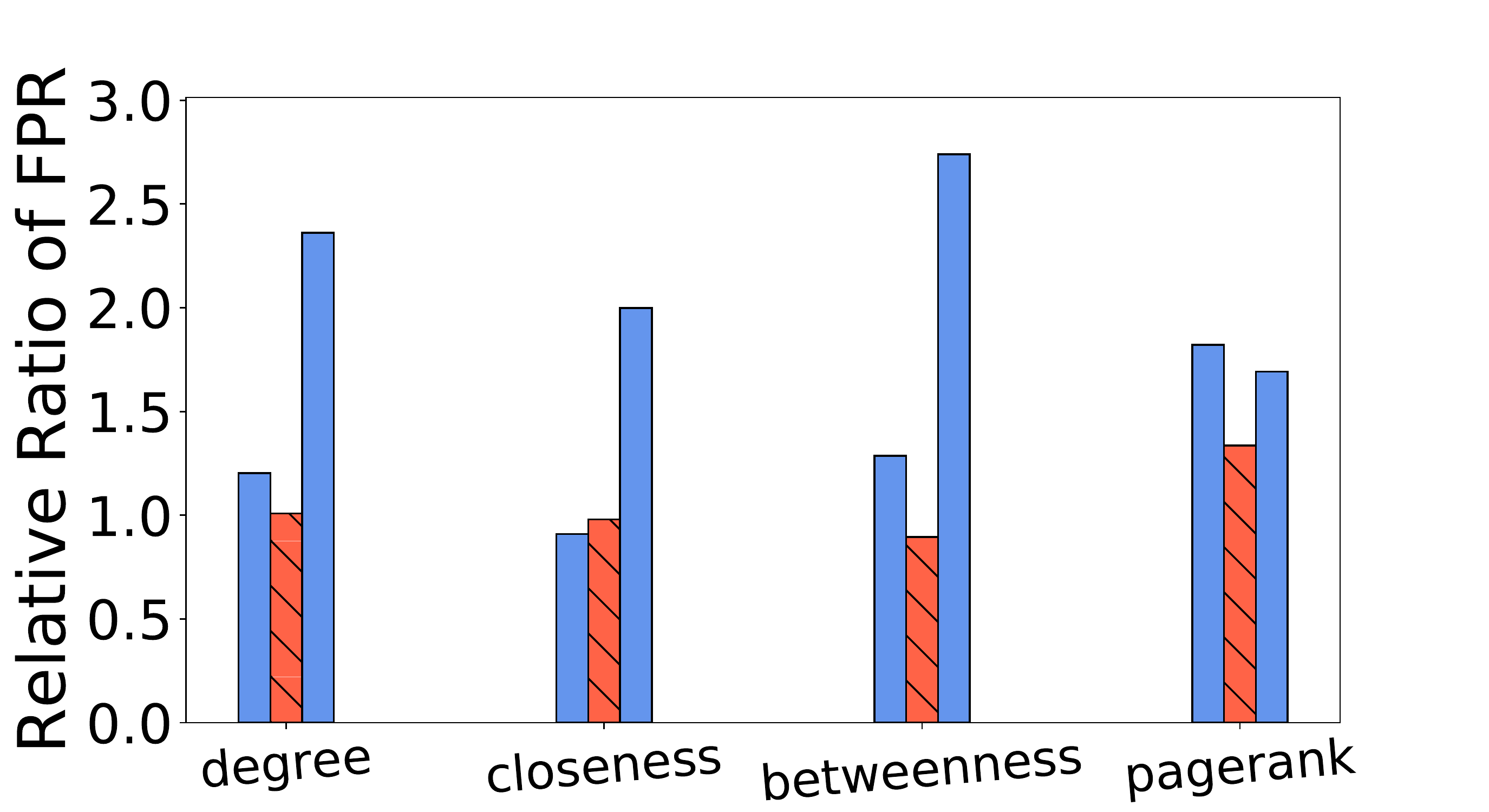}
\end{minipage}%
}%
\subfloat[Model: GAT. Class: 1.]{
\begin{minipage}[t]{0.33\linewidth}
\centering
\includegraphics[width=\linewidth]{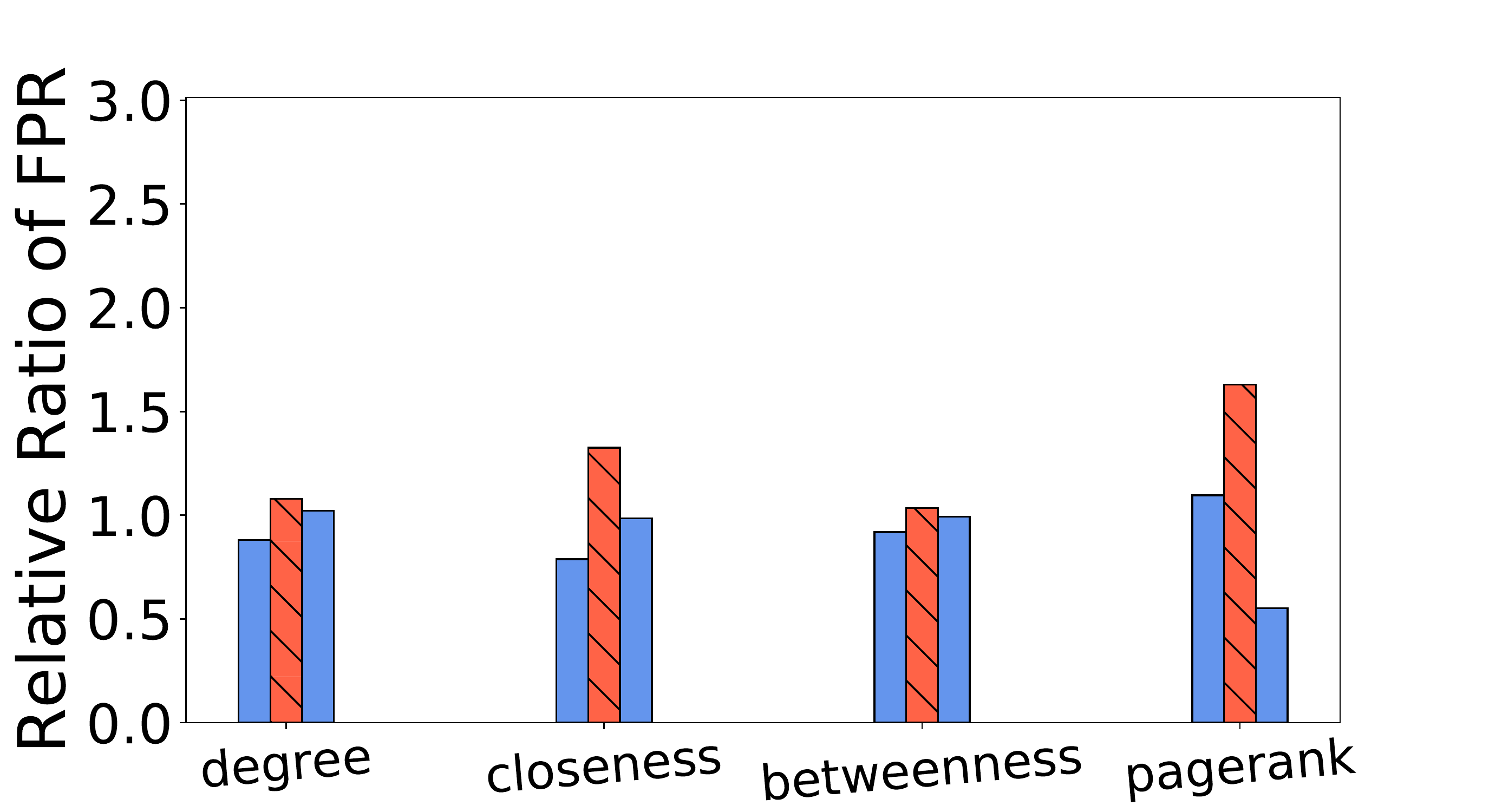}
\end{minipage}%
}%
\subfloat[Model: MLP. Class: 1.]{
\begin{minipage}[t]{0.33\linewidth}
\centering
\includegraphics[width=\linewidth]{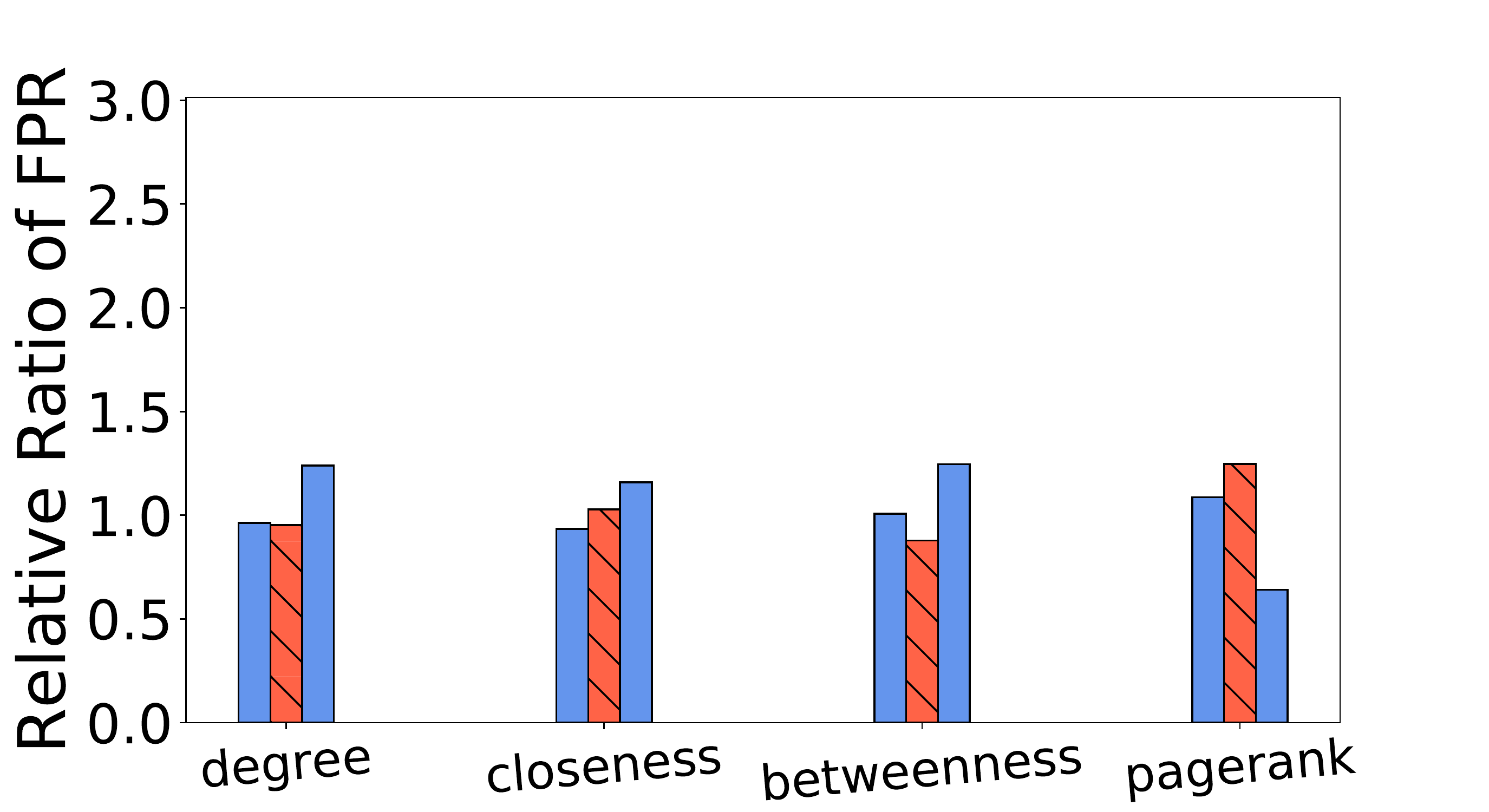}
\end{minipage}%
}%

\subfloat[Model: GCN. Class: 2.]{
\begin{minipage}[t]{0.33\linewidth}
\centering
\includegraphics[width=\linewidth]{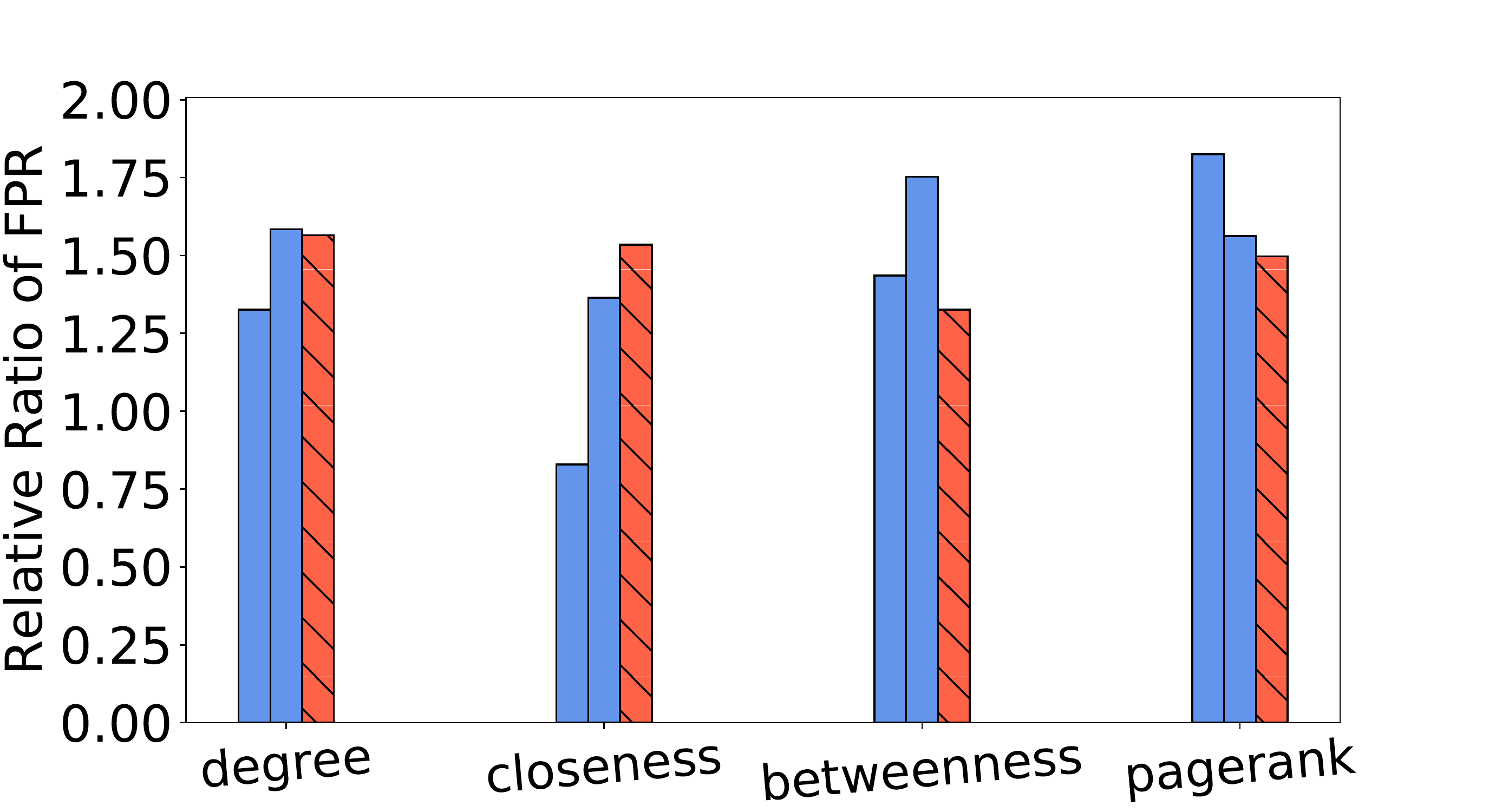}
\end{minipage}%
}%
\subfloat[Model: GAT. Class: 2.]{
\begin{minipage}[t]{0.33\linewidth}
\centering
\includegraphics[width=\linewidth]{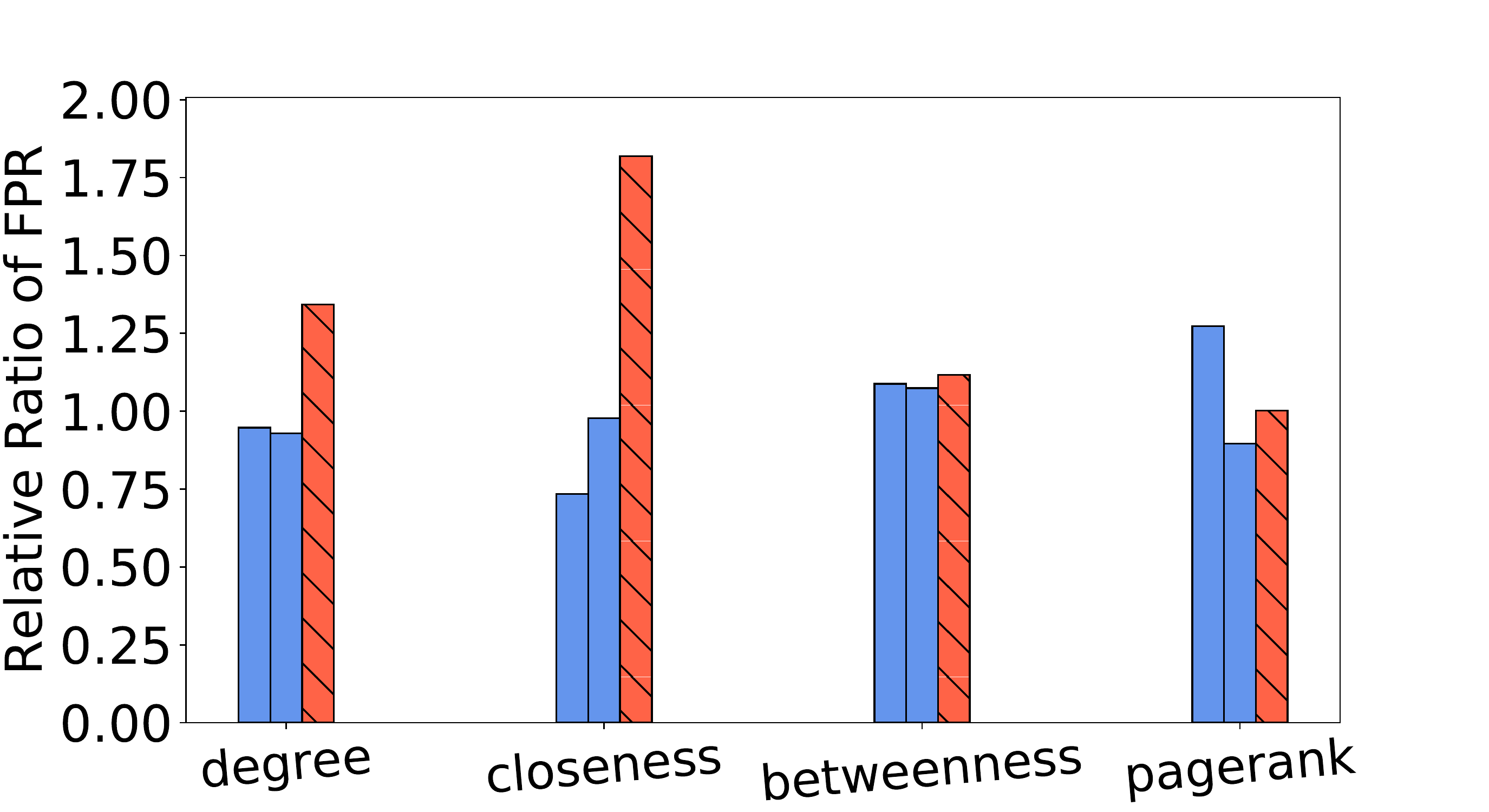}
\end{minipage}%
}%
\subfloat[Model: MLP. Class: 2.]{
\begin{minipage}[t]{0.33\linewidth}
\centering
\includegraphics[width=\linewidth]{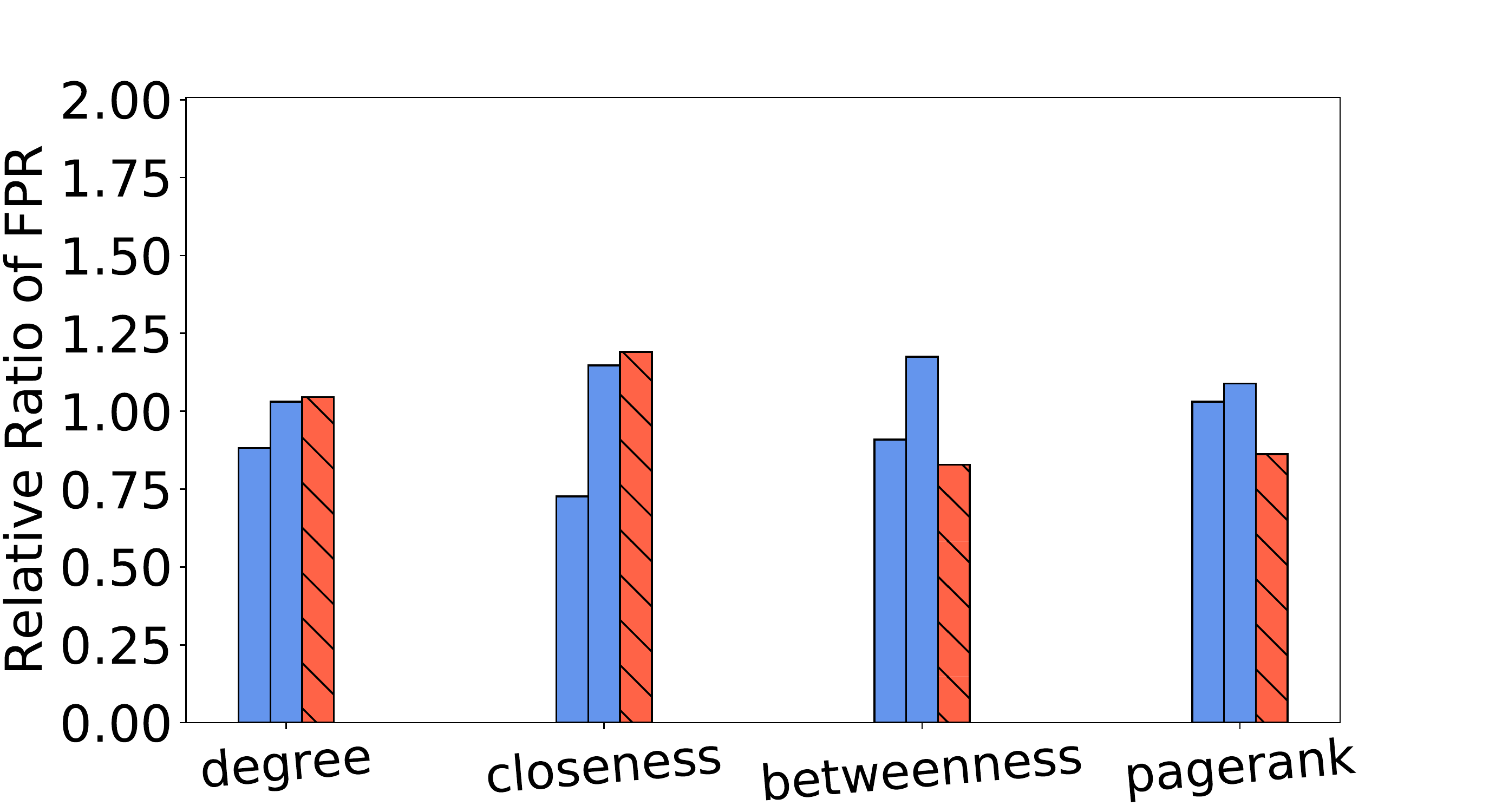}
\end{minipage}%
}%

\caption{Relative ratio of FPR in the biased training node selection experiment. Full results on Pubmed. Each row corresponds to a different dominant class of choice. See Figure~\ref{fig:label} for the plot settings.}
\label{fig:full-label-Pubmed}
\end{figure}

\end{document}